\pgfplotsset{compat=1.15}
\lstdefinestyle{modelStyle}{
    backgroundcolor=\color{white},
    basicstyle=\ttfamily\footnotesize,
    breaklines=true,
    frame=single,
    rulecolor=\color{black},
    keywordstyle=\color{blue},
    commentstyle=\color{gray},
    stringstyle=\color{red},
    numbers=left,
    numberstyle=\tiny\color{gray},
    captionpos=b,
    language=Python
}
\DeclareMathOperator*{\argmin}{arg\,min}
\newtheorem{theorem}{Theorem}[section]
\newtheorem{definition}[theorem]{Definition}
\newtheorem{lemma}[theorem]{Lemma}
\newtheorem{remark}[theorem]{Remark}
\newtheorem*{takeaway}{Takeaway}
\newtheorem*{contriback}{Contribution Acknowledgment}
\newtheorem*{paperref}{Paper Reference}
\newtheorem{corollary}[theorem]{Corollary}
\newcommand{\boxedblock}[1] {
\begin{tcolorbox}[colback=teal!5,colframe=teal!90!black]
\centering\emph{#1}
\end{tcolorbox}
}
\newcommand{\algcolor}[2]{%
  \hskip-\ALG@thistlm\colorbox{#1}{\parbox{\dimexpr\linewidth-2\fboxsep}{\hskip\ALG@thistlm\relax #2}}%
}
\newcommand{\algemph}[1]{\algcolor{yellow!20!white}{#1}}
\author{Stephan Rabanser}
\title{Uncertainty-Driven Reliability: Selective Prediction and Trustworthy Deployment in Modern Machine Learning}
\begin{document}
  \frontmatter
    \maketitle
    \begin{abstract}

\noindent Machine Learning (ML) systems are increasingly deployed in high-stakes domains where reliability is paramount. As these systems transition from research prototypes to real-world decision-makers, their ability to recognize and respond to uncertainty becomes essential. This thesis investigates how uncertainty estimation can enhance the safety and trustworthiness of ML, with a particular focus on selective prediction—a paradigm where models abstain from predicting when confidence is low.

We first show that a model's training trajectory contains rich uncertainty signals that can be exploited without altering its architecture or loss. By ensembling predictions from intermediate checkpoints, we propose a lightweight, post-hoc abstention method that works across tasks, avoids the cost of deep ensembles, and achieves state-of-the-art selective prediction performance. Crucially, this approach is fully compatible with differential privacy (DP), allowing us to study how privacy noise affects uncertainty quality. We find that while many methods degrade under DP, our trajectory-based approach remains robust, and we introduce a framework for isolating the privacy-uncertainty trade-off. Next, we then develop a finite-sample decomposition of the selective classification gap -- the deviation from the oracle accuracy-coverage curve -- identifying five interpretable error sources and clarifying which interventions can close the gap. This explains why calibration alone cannot fix ranking errors, motivating methods that improve uncertainty ordering. Finally, we show that uncertainty signals can be adversarially manipulated to hide errors or deny service while maintaining high accuracy, and we design defenses combining calibration audits with verifiable inference.

Together, these contributions chart a path toward more reliable ML by studying how uncertainty can be estimated, evaluated, and safeguarded. The resulting systems not only make accurate predictions—but also know when to say “I do not know”.
    \end{abstract}
    \begin{acknowledgements}
      \noindent I would first like to express my deepest gratitude to my advisor, \textbf{Nicolas Papernot}, for his unwavering support, insightful guidance, and for fostering an incredibly vibrant and social lab culture. Working under your supervision has been a truly formative experience that has shaped both my academic and personal growth. Nicolas consistently empowered me to follow my curiosity, giving me the freedom to explore research directions I was genuinely passionate about. Your openness to new ideas created an environment where I felt trusted and encouraged to take intellectual risks. What I appreciated most was that our mentorship was a true dialogue—Nicolas was not only generous in sharing his expertise but also genuinely curious to learning from me, which made our collaboration feel deeply mutual and respectful. Beyond research, the lab culture you cultivated stands out as something truly special. Nicolas built a space that was intellectually stimulating and also socially connected—a place where collaboration, support, and friendship were the norm. When I share stories of how our lab operates with students from other institutions, they often remark on how rare and enviable such a community is. I feel incredibly fortunate to have been part of it.

I am also profoundly thankful to my supervisory committee professors—\textbf{Rahul G. Krishnan}, \textbf{Roger Grosse}, \textbf{David Duvenaud}, and \textbf{Zachary C. Lipton}—for their continued high-quality feedback and for challenging me to refine my research directions. Their expertise and generous mentorship have been integral to the progress of my work. I would also like to thank Professor \textbf{Aaron Roth} for serving as my external examiner on this thesis, and Professors \textbf{Chris J. Maddison} and \textbf{Roman Genov} for completing my final examination committee.

I am equally indebted to many current/former \textbf{members of the CleverHans Lab}, in particular \textbf{Mohammad Yaghini}, \textbf{Jonas Guan}, \textbf{Nathalie Dullerud}, \textbf{Sierra Wyllie}, \textbf{Anvith Thudi}, \textbf{Ilia Shumailov}, \textbf{David Glukhov}, \textbf{Nick Jia}, \textbf{Emmy Fang}, \textbf{Adam Dziedzic}, and \textbf{Franziska Boenisch}. Their friendship, thoughtful discussions, and sharp insights played a key role in shaping many of my projects. Beyond my immediate lab, the broader \textbf{Vector Institute community} provided a stimulating research environment, filled with researchers tackling interesting and highly related challenges. It also allowed me to become friends with many more outstanding researchers, including \textbf{Claas Voelcker}, \textbf{Vahid Balazadeh}, \textbf{Viet Nguyen}, \textbf{Andrew Jung}, and \textbf{Aroosa Ijaz}. I will genuinely miss this office space and the incredible people who made it such a great place to work at from the very beginning.

During the past five years, I was lucky to have had the chance to spend most of my summers doing internships in very exciting places. My gratitude goes to the \textbf{Amazon AWS Forecasting team} for hosting me for a total of four internships during my overall studies (one of them during my PhD)—particularly my managers, \textbf{Tim Januschowski} and \textbf{Jan Gasthaus}. Their mentorship and real-world ML deployment challenges were crucial stepping stones to my PhD journey. Similarly, I owe thanks to the \textbf{Google Research team} in Zurich—particularly my managers, \textbf{Nathalie Rauschmayr} and \textbf{Ace Kulshrestha}—whose support and guidance facilitated rich exploration into model cascading. I would also like to extend a heartfelt thanks to \textbf{David Krueger} for hosting me at the University of Cambridge during the summer of 2023, offering me another invaluable opportunity to broaden my research perspectives.

I am sincerely grateful to all my coauthors with whom I worked on a bunch of exciting projects: \textbf{Abhradeep Thakurta}, \textbf{Ace Kulshrestha}, \textbf{Adrian Weller}, \textbf{Adam Dziedzic}, \textbf{Akram Bin Sediq}, \textbf{Ali Shahin Shamsabadi}, \textbf{Angéline Pouget}, \textbf{Anvith Thudi}, \textbf{Armin Ale}, \textbf{Christopher A. Choquette-Choo}, \textbf{Congchao Wang}, \textbf{Emmy Fang}, \textbf{Federico Tombari}, \textbf{Hamza Sokun}, \textbf{Israfil Bahceci}, \textbf{Kimia Hamidieh}, \textbf{Krishnamurthy (Dj) Dvijotham}, \textbf{Mark R. Thomas}, \textbf{Mohammad Yaghini}, \textbf{Muhammad Ahmad Kaleem}, \textbf{Murat A. Erdogdu}, \textbf{Nathalie Rauschmayr}, \textbf{Nicolas Papernot}, \textbf{Olive Franzese}, \textbf{Petra Poklukar}, \textbf{Sean Augenstein}, \textbf{Somesh Jha}, \textbf{Wittawat Jitkrittum}, and \textbf{Xiao Wang}. The work I am discussing in this thesis would not have been possible without you and I am very thankful to have had the chance to work with all of you. 

Special thanks go out to my labmate, flatmate, and dear friend, \textbf{Mohammad Yaghini}. Moving in together at the start of our PhDs was a great choice. I already know that I will miss our joint late-night gaming sessions, the food we cooked together, and our long conversations at the kitchen table. Maybe one day we will even solve world politics together. Thanks for celebrating my victories with me, but also, and arguably even more importantly, for being there for me when I struggled at work and in my personal life. Similar thanks go out to my longtime friend, \textbf{Lukas Prantl}, for making sure I stay grounded in what really matters by ensuring that I continue to play Age of Empires II with him on a (semi-)regular basis. 

Lastly, I am profoundly grateful to my immediate family—my dad \textbf{Martin}, my mom \textbf{Verena}, and my sister \textbf{Monika}—for their unconditional kindness and support. Your belief in me has fueled my determination every step of the way. More importantly, you have shown me what a loving and caring family looks like, and I hope that one day I will be able to build the same kind of warm, supportive, and joyful home for the people I love. 

    \end{acknowledgements}
    \tableofcontents
    \listoftables
    \listoffigures
  \mainmatter
    \chapter{Introduction}
\label{ch:introduction}

\noindent Machine learning (ML) has become a central pillar of modern technology, powering applications in a multitude of fields such as healthcare~\citep{kotropoulos2009linear,sousa2009ordinal, challen2019artificial,guan2020bounded, mozannar2020consistent}, finance~\citep{9260038}, and transportation~\citep{ghodsi2021generating, tselentis2023usefulness}. These systems enable healthcare providers to diagnose diseases more accurately, financial institutions to tailor credit offers, and autonomous vehicles to navigate roads safely. Yet, as ML models increasingly transition from research environments to real-world settings, their reliability is under intensified scrutiny. As the stakes grow, so does the potential harm if models behave unexpectedly~\citep{amodei2016concrete, wiens2019no}: incorrect diagnoses can lead to life-threatening medical decisions, biased lending systems can exacerbate social inequities, and errors in autonomous driving technology can pose immediate safety hazards. These risks are further magnified by the complexity of large-scale data pipelines~\citep{pervaiz2019examining} and the often opaque nature of modern ML algorithms, which can make errors difficult to predict or detect. In addition, various industries are grappling with evolving regulatory and ethical standards~\citep{lo2020ethical, yaghini2024regulation} that demand greater accountability and transparency. 

Motivated by these risks, researchers and practitioners have begun to prioritize methods that enhance model trustworthiness and transparency. This has led to the emergence of the field of \textit{trustworthy machine learning}~\citep{li2023trustworthy}. Trustworthy machine learning aims to ensure that models not only excel at predictive performance but also uphold principles of reliability, fairness, privacy, accountability, as well as safety and security. It encompasses subfields dedicated to these goals, such as \textit{robust machine learning}~\citep{szegedy2013intriguing, papernot2017practical, rahimian2022frameworks}, which focuses on creating models resilient to variations in data distributions and adversarial inputs; \textit{fairness and bias mitigation}~\citep{hardt2016equality, mehrabi2021survey}, which aims to detect and reduce inequities that arise when data or algorithms systematically disadvantage certain groups; \textit{explainable or interpretable machine learning}~\citep{chen2019looks, rudin2019stop}, which seeks methods and frameworks to make complex model reasoning understandable to humans; and \textit{privacy-preserving techniques}~\citep{dwork2014algorithmic, abadi2016deep}, including differential privacy and federated learning, that protect sensitive information while enabling effective model training. Taken together, these subfields represent a concerted effort within the ML community to manage risks, fulfill ethical obligations, and promote transparent, accountable systems that reliably support high-stakes decisions in real-world contexts.

While each of these subfields provides essential guarantees to establish different notions of trust, they all share a dependence on a more fundamental capability: \textit{knowing the limits of a model's predictions}. A system cannot be considered truly robust, safe, or accountable if it cannot also signal when its conclusions are likely to be wrong. Consequently, a key challenge in making ML systems more reliable lies in \textit{quantifying uncertainty}~\citep{gal2016uncertainty, hullermeier2021aleatoric, gawlikowski2023survey}. Classical supervised learning methods often assume that models should output a single best prediction without any mechanism for indicating when this prediction might be wrong. In reality, however, uncertainty is an integral component of most prediction tasks. For instance, models trained on limited or noisy data can become overconfident in their predictions~\citep{guo2017calibration}, leading to harmful outcomes when deployed. This becomes especially clear when we consider high-stakes scenarios. Consider an autonomous vehicle navigating in dense fog or heavy rain: if the visual input is degraded, a well-calibrated uncertainty estimate can prompt the vehicle to slow down or safely stop rather than make an overconfident and potentially dangerous decision. In healthcare, a diagnostic model uncertain about whether a tumor is benign or malignant can flag the case for further review by a radiologist instead of issuing a definitive but incorrect diagnosis. In financial services, a model assessing loan applications can defer borderline cases to a human analyst if uncertainty is high, reducing the risk of unjust denials due to atypical applicant profiles. Consequently, techniques for quantifying and managing uncertainty are crucial to safe and responsible model deployment. These techniques can inform decisions about when a model is trustworthy enough to act on, when human oversight or additional information might be needed, and how to handle model predictions that are at high risk of being incorrect.

\begin{figure}[t]
    \centering
\begin{tikzpicture}[node distance=1cm,auto]
    \node[draw, thick, align=center] (x) {Input $x$};
    \node[right=of x, draw, thick, align=center] (f) {\includegraphics[width=45pt]{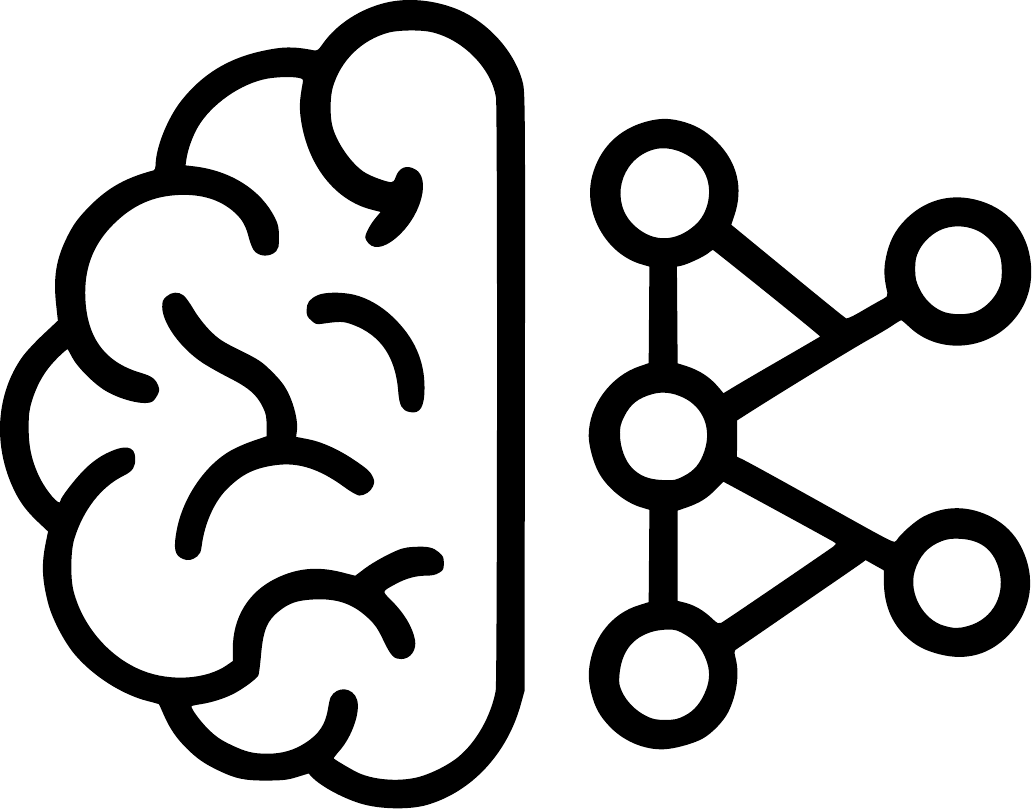}\\ML Model $f$};
    \node[right=of f, draw, thick, align=center] (y) {Output $\hat{y}$};
    \draw[->, thick] (x) -- (f);
    \draw[->, thick] (f) -- (y);
    \node[below=of f, draw, thick, align=center, yshift=-15pt] (unc) {\includegraphics[width=80pt]{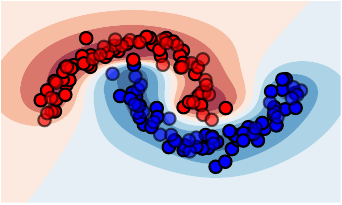}\\Uncertainty $p(y|x)$};
    \draw[->, thick] (x) to [out=270, in=90] (unc);
    \draw[->, thick] (f) to [out=270, in=90] (unc);
    \draw[->, thick] (y) to [out=270, in=90] (unc);
    \node[right=of y, diamond, draw, thick, align=center, fill = blue!25, xshift=30pt, yshift=-70pt] (pred) {Predict?};
    \draw[->, thick] (unc) to [out=0, in=180] (pred);
    \draw[->, thick] (y) to [out=0, in=180] (pred);
    \node[above=of pred, draw, Green, thick, align=center] (acc) {Return $\hat{y}$};
    \node[below=of pred, draw, red, thick, align=center] (rej) {Reject $\bot$};
    \draw[->, thick, Green] (pred) -- node[right] {Yes} (acc);
    \draw[->, thick, Red] (pred) -- node[right] {No} (rej);
    \end{tikzpicture}
    \vspace{10pt}
    \caption[Overview of the selective prediction pipeline.]{\textbf{Overview of the selective prediction pipeline}. Given an input \(x\), the model \(f\) produces a prediction \(\hat{y}\) along with an uncertainty estimate \(p(y|x)\). A selection function decides whether to return \(\hat{y}\) or reject with \(\perp\) based on the uncertainty.}
    \label{fig:selp_overview}
\end{figure}

Once reliable uncertainty estimates are available, the key challenge becomes translating them into principled actions. Decision theory offers a rich toolkit—ranging from Bayesian risk minimization to distribution-robust optimization—for choosing actions that balance expected utility against worst-case losses when outcomes are uncertain~\citep{berger2013statistical,rahimian2022frameworks}. In the supervised-learning setting, one of the most effective ways to operationalize this balance is to endow a model with an explicit \emph{reject option}: when the predicted probability of error exceeds a user-defined risk tolerance, the model abstains and defers the case to a downstream process such as a human expert, a more powerful model, or additional data collection~\citep{bartlett2008classification,geifman2017selective}. This paradigm—variously termed selective prediction, confidence-based abstention, or classification with a reject/abstain option—has seen growing adoption in high-stakes domains because it couples predictive performance with an actionable safeguard. Central to our work is precisely this concept of \textit{selective prediction} (SP, see Figure~\ref{fig:selp_overview})~\citep{chow1957optimum,el2010foundations}—an approach that allows the system to self-diagnose when it is likely to err and to withhold a decision rather than commit to a potentially harmful action. By integrating uncertainty estimates with abstention mechanisms, SP provides a formal pathway for aligning model behavior with real-world risk constraints, thereby transforming raw uncertainty into concrete, auditable decision rules that enhance safety and trustworthiness.

\vspace{15pt}
\section{Thesis Contributions}
\label{sec:contrib}

The overarching theme of this thesis is to embed uncertainty quantification at the heart of trustworthy machine learning (ML) systems. Central to our approach is the concept of selective prediction~\citep{chow1957optimum,el2010foundations}—which gives a model the option to abstain from making predictions it deems too risky. In this thesis, we make multiple contributions to this field. In short, we (i)~propose a new selective classification method based on ensembling intermediate models obtained from a single training run; (ii)~investigate the connection between selective prediction and differential privacy; (iii)~derive and analyze the key quantities that govern optimal selective prediction performance; and (iv)~study the effect of adversaries on the selective prediction pipeline. Below is an extended roadmap of how these ideas interconnect to form the core narrative of this thesis.

Before diving into the main contributions, we begin by laying the groundwork for what it means to \emph{measure} and \emph{manage} uncertainty in ML. This includes surveying classical approaches to calibrating confidence, examining how confidence estimates can fail in practice, and discussing why selectively rejecting uncertain inputs is an effective tool for mitigating high-impact errors. We highlight how uncertainty quantification (UQ) underpins reliable decision-making by offering interpretable confidence estimates, supporting risk-aware actions, and enabling resource-efficient deployment. Selective prediction is introduced as a natural downstream application of UQ—transforming uncertainty into a concrete decision to abstain when confidence is low. We emphasize the importance of calibration, robustness to distribution shift, and cost-aware thresholding, as well as the challenges that arise in deploying selective systems under real-world constraints. This background frames selective prediction not just as a technique, but as a paradigm for aligning model behavior with reliability objectives in safety-critical settings.

One of the most successful methods for uncertainty quantification we discuss is \emph{deep ensembling}~\citep{lakshminarayanan2017simple}, which estimates predictive uncertainty by training multiple models on different data subsets or with varying initializations. While ensembles are highly effective, their reliability typically improves with the number of members—making them computationally expensive at scale and difficult to deploy in resource-constrained or production environments. Motivated by this limitation, we ask whether a model’s \emph{training trajectory} can serve as a lightweight and scalable alternative for boosting selective prediction performance. Rather than modifying the model’s architecture or primary loss function, we demonstrate that signals extracted from intermediate checkpoints during training allow the model to self-diagnose its own likely points of failure. This yields a simple yet powerful abstention mechanism that can be applied post-hoc to any trained model with saved checkpoints—sidestepping the cost of full ensembles while retaining much of their reliability benefit. Moreover, this method is fully complementary to existing selective prediction approaches: it can be used to enhance any existing selector by providing an additional, trajectory-informed signal without interfering with the model’s original design or training objective.

Having demonstrated its practical utility, our training-dynamics-based approach also enables synergies with broader areas of trustworthy machine learning, making it especially well-suited for settings that demand the responsible deployment of AI. Because our approach merely observes the training process without interfering with it, it inherits a crucial compatibility with privacy-preserving learning: under the post-processing property of differential privacy (DP)~\citep{dwork2006calibrating}, operations applied after training do not compromise privacy guarantees. This makes our checkpoint-based method an attractive solution for \emph{uncertainty quantification in privacy-sensitive contexts}. We therefore turn next to exploring how selective prediction methods behave under strict privacy constraints. Many real-world applications demand that model training preserve the confidentiality of sensitive data, and DP is a gold standard for ensuring this. In this part of the thesis, we show how conventional selective prediction techniques—especially those reliant on ensembling—can amplify privacy risks or degrade under DP noise. In contrast, our training-dynamics-based approach proves particularly robust. This investigation also surfaces a deeper challenge: classical performance metrics often entangle improvements in overall model utility---a quantity which changes under varying privacy levels---with improvements in model uncertainty. To isolate uncertainty quality, we propose a new evaluation framework that enables a more faithful measurement of selective prediction performance in the presence of DP noise, capturing trade-offs between privacy, utility, and abstention coverage more effectively.

Studying privacy is a compelling example of responsible AI deployment—one that comes with an inherent utility trade-off and underscores the importance of evaluating selective prediction in scenarios where the base model's accuracy varies. The empirical and conceptual shortcomings revealed in this study therefore motivate a deeper interrogation of what actually determines selective prediction quality and performance. To that end, we study the \emph{selective-classification gap}, which corresponds to the deviation between a model's realized accuracy–coverage curve---the most commonly used evaluation metric in selective prediction---and a perfect-ordering upper bound established in our privacy work. We show that this gap can always be decomposed into five interpretable components: Bayes noise, approximation error, ranking error, statistical noise, and a miscellaneous term that captures optimization mishaps and distribution shift. The resulting finite-sample “error budget” clarifies which levers matter most at different coverage levels: increasing capacity or distilling from a stronger teacher shrinks the approximation term, additional or repeated labels dampen Bayes noise, larger validation sets reduce statistical fluctuations, and shift-aware training tackles the miscellaneous residue. Crucially, the analysis shows that monotone post-hoc calibration—long a default remedy for miscalibration—cannot touch the ranking term because it preserves the ordering of scores; progress therefore requires methods that can \emph{re-rank} predictions. By quantifying why and where models fall short of the oracle, this framework establishes principled targets for future algorithmic improvements.

Ultimately, claims of responsible AI deployment cannot simply be asserted by platform developers—they must be subject to external auditability. This naturally raises a critical question: can uncertainty quantification itself be gamed, especially in adversarial settings where the model operator cannot be fully trusted? In particular, the selective-classification gap analysis assumes that deviations from optimal performance stem from benign sources—such as modeling limitations or statistical fluctuations—not from active manipulation. Yet, in practice, the very mechanisms designed to improve reliability can themselves be exploited. This realization naturally leads us to investigate whether uncertainty, long viewed as a desirable and protective property in machine learning systems, might also be used \emph{maliciously}. Surprisingly, we find that confidence estimates can be easily deliberately manipulated to increase uncertainty in targeted input regions or for specific user groups, thereby covertly denying service while maintaining strong overall performance. These subtle attacks are difficult to detect because the system continues to perform well on aggregate metrics, even as it systematically disadvantages certain users. To counter this threat, we propose a verifiable inference mechanism that audits reported confidence and ensures that abstentions reflect genuine uncertainty. This work highlights a crucial lesson: secure and trustworthy ML requires not only sound estimation procedures, but also safeguards that ensure the integrity and authenticity of uncertainty itself.

\section{Thesis Statement}
\label{sec:contrib}

This leads us to the central thesis statement:

\boxedblock{
This thesis advances selective prediction—a model's ability to abstain when uncertain—as a cornerstone of trustworthy machine learning. Crucially, we observe that uncertainty can be effectively inferred from the inherent dynamics of the training process, rather than being retrofitted through costly modifications to the model or its training procedure. This approach provides a more direct path to robust selective prediction, which strengthens our ability to trust machine learning outputs. Consequently, because it preserves the integrity of the original training pipeline, this method is fundamentally more compatible with the principles of responsible ML deployment, enhancing crucial aspects such as differential privacy and model robustness against untrusted parties.
}

\section{Thesis Outline}
\label{sec:outline}

We summarize the contributions this thesis makes below:
\begin{itemize}
    \item \textbf{In Chapter~\ref{ch:background}}, we present background concepts crucial for understanding reliable decision-making in machine learning. We cover established approaches to modeling uncertainty, describe how predictive confidence is estimated, and discuss the rationale for selective prediction as a way to ensure safety by rejecting high-risk inputs. We also examine how confidence calibration underlies many of these techniques, enabling models to produce uncertainty estimates aligned with true error rates.

    \item \textbf{In Chapter~\ref{ch:sptd}}~\citep{rabanser2022selective}, we investigate how one can implement selective prediction by examining a model’s evolution throughout training, focusing on signals that emerge when predictions change or stabilize. We show that, by studying these signals, it is possible to determine which data points the final model is likely to misclassify. This perspective allows for a mechanism that rejects uncertain inputs without modifying the model’s architecture or primary optimization objective. The approach is adaptable to various tasks and domains, since it only requires access to intermediate training outputs.

    \item \textbf{In Chapter~\ref{ch:sptd_dp}}~\citep{rabanser2023training}, we delve into the interplay between selective prediction and differential privacy. We analyze how injecting random noise during training—an essential requirement for protecting sensitive data—can adversely affect a model's confidence estimates, reducing the reliability of conventional selective predictors. We examine why certain methods become less effective or more vulnerable under privacy constraints and propose refinements that leverage multiple training snapshots or other strategies to maintain robust performance. We also suggest novel evaluation metrics that better reflect how both privacy and reliability goals are being balanced.

    \textbf{In Chapter \ref{ch:sc_bounds}}~\citep{rabanser2025what}, we take a principled look at what governs how close a selective predictor can come to its oracle ideal. We formalize the \emph{selective-classification gap}—the deviation between a model’s accuracy–coverage curve and the perfect-ordering frontier—and derive the first finite-sample decomposition that attributes this gap to five interpretable error sources: Bayes noise, approximation limits, ranking imperfections, statistical fluctuations, and miscellaneous factors such as optimization error or distribution shift. The resulting “error budget’’ clarifies which levers (e.g., capacity, ranking-aware calibration, additional labels, or shift-robust training) most effectively shrink the gap at different coverage levels. We further prove that monotone post-hoc calibration cannot close the ranking term, motivating algorithms that can re-order confidence scores, and we validate the theory on tasks ranging from synthetic two-moons to large-scale vision benchmarks.

    \item \textbf{In Chapter~\ref{ch:conf_guard}}~\citep{rabanser2025confidential}, we highlight a potential adversarial threat in selective prediction systems: malicious entities can artificially suppress model confidence in targeted regions or for specific groups while retaining high performance elsewhere. This tactic allows an attacker to deny services or introduce bias under the pretense of uncertainty. We describe how such manipulation can be carried out, and we develop a strategy to detect and deter artificially induced low confidence. Central to this solution is the need for mechanisms that can verify confidence values or otherwise ensure that the model's abstentions are genuinely driven by uncertainty rather than deliberate manipulation.

    \item \textbf{In Chapter~\ref{ch:conclusion}}, we conclude by summarizing the contributions of the thesis and highlighting future directions. We reflect on open research questions in areas such as adaptive calibration, reliability and uncertainty in large models, new forms of adversarial threats, and privacy-preserving learning. We also consider the broader implications of adopting selective prediction at scale, including ethical considerations in sensitive or high-stakes applications.
\end{itemize}

\paragraph{Non-Thesis Research} Publications on which I have worked during my Ph.D. which are excluded from this thesis include (in chronological order):

\begin{itemize}
    \item \citet{dziedzic2022p}: \bibentry{dziedzic2022p}.\\[10pt]

    $p$-DkNN is a method for producing statistically grounded uncertainty estimates in neural networks by analyzing the similarity structure of intermediate representations and computing $p$-values for predictions. This allows us to improve the trade-off between rejecting OOD inputs and maintaining accuracy on in-distribution data. The approach is scalable, theoretically connected to Neyman–Pearson classification, and robust enough to make adversarial attacks require semantically meaningful changes.
    
    \item \citet{rabanser2022intrinsic}: \bibentry{rabanser2022intrinsic}.\\[10pt]
    In this work we introduce intrinsic anomaly detection for multivariate time series, focusing on identifying unexpected changes in a system’s state relative to its environment. We formalize the problem, release relevant datasets, and propose unsupervised methods—combining domain-adversarial and representation learning—that distinguish true anomalies from environment-driven changes.
    
    \item \citet{franzese2023robust}: \bibentry{franzese2023robust}.\\[10pt]
    Collaborative ML can improve models from distributed data but is vulnerable to malicious servers reconstructing client data and malicious clients corrupting updates. This work introduces a peer-to-peer learning framework that secures against untrusted servers and is robust to malicious clients by adapting robust aggregation methods to this setting. The approach is computationally efficient and scales to millions of parameters and hundreds of peers.

    \item \citet{pouget2025suitability}: \bibentry{pouget2025suitability}.\\[10pt]
    The suitability filter is a framework for detecting when a deployed model’s accuracy on unlabeled user data has degraded beyond an acceptable margin, without needing ground-truth labels. It works by comparing model output–based suitability signals from test and user data via statistical hypothesis testing. Experiments show it reliably flags performance drops caused by covariate shift in safety-critical settings.
    
    \item \citet{rabanser2025gatekeeper}: \bibentry{rabanser2025gatekeeper}.\\[10pt]
    Gatekeeper is a new loss function for cascade setups that fine-tunes smaller models to confidently handle easy tasks while deferring harder ones to larger models. It balances performance and deferral accuracy without changing model architectures, and works across diverse tasks and domains. Our experiments on multiple architectures and modalities show substantial improvements in deferral performance.
    
    \item \citet{jiang2025cascadia}: \bibentry{jiang2025cascadia}.\\[10pt]
    Cascadia is a new framework for serving large language model (LLM) cascades that balances speed and answer quality by jointly optimizing routing strategies and system deployment. Using a bi-level optimization approach, it allocates resources and parallelism for different LLMs while adapting routing to workload characteristics. Our experiments show it outperforms existing approaches, delivering up to 4× tighter latency SLOs and 5× higher throughput without sacrificing quality.
\end{itemize}
    \newcommand{\ba}[1]{\textbf{\sffamily #1}}

\newcommand{\sat}[0]{\ba{SAT}\xspace}
\newcommand{\satersr}[0]{\ba{SAT+ER+SR}\xspace}
\newcommand{\msp}[0]{\ba{MSP}\xspace}
\newcommand{\sr}[0]{\ba{SR}\xspace}
\newcommand{\sn}[0]{\ba{SN}\xspace}
\newcommand{\dg}[0]{\ba{DG}\xspace}
\newcommand{\odist}[0]{\ba{ODIST}\xspace}
\newcommand{\mcdo}[0]{\ba{MC-DO}\xspace}
\newcommand{\de}[0]{\ba{DE}\xspace}
\newcommand{\nntd}[0]{\ba{NNTD}\xspace}
\newcommand{\sctdde}[0]{\ba{DE+SCTD}\xspace}
\newcommand{\sctd}[0]{\ba{SCTD}\xspace}
\newcommand{\sptd}[0]{\ba{SPTD}\xspace}
\newcommand{\cclsc}[0]{\ba{CCL-SC}\xspace}
\newcommand{\aucoc}[0]{\ba{AUCOC}\xspace}
\newcommand{\temp}[0]{\ba{TEMP}\xspace}

\newcommand{\sptdde}[0]{\ba{DE+SPTD}\xspace}
\newcommand{\sptdc}[0]{\ba{SPTD-C}\xspace}
\newcommand{\sptdr}[0]{\ba{SPTD-R}\xspace}
\newcommand{\sptdts}[0]{\ba{SPTD-TS}\xspace}
\newcommand{\osp}[0]{\ba{OSP}\xspace}
\newcommand{\logitvar}[0]{\ba{LOGITVAR}\xspace}

\newcommand{\minscore}[0]{minimum score\xspace}
\newcommand{\avgscore}[0]{average score\xspace}
\newcommand{\jmpscore}[0]{jump score\xspace}
\newcommand{\varscore}[0]{variance score\xspace}
\newcommand{\smin}[0]{$s_\text{min}$\xspace}
\newcommand{\savg}[0]{$s_\text{avg}$\xspace}
\newcommand{\smax}[0]{$s_\text{MAX}$\xspace}
\newcommand{\ssum}[0]{$s_\text{SUM}$\xspace}
\newcommand{\swv}[0]{$s_\text{WV}$\xspace}
\newcommand{\swvr}[0]{$s_\text{WVR}$\xspace}
\newcommand{\swvts}[0]{$s_\text{WVTS}$\xspace}
\newcommand{\slast}[0]{$s_\text{last}$\xspace}
\newcommand{\sfull}[0]{$s_\text{full}$\xspace}
\newcommand{\sjmp}[0]{$s_\text{jmp}$\xspace}
\newcommand{\svar}[0]{$s_\text{var}$\xspace}
\newcommand{\fp}[0]{false-positive\xspace}
\newcommand{\fps}[0]{false-positives\xspace}
\newcommand{\ie}[0]{i.e.,\xspace}
\newcommand{\eg}[0]{e.g.,\xspace}
\newcommand{\selc}[0]{selective classification\xspace}
\newcommand{\selp}[0]{selective prediction\xspace}
\newcommand{\empiricalacccovtradeoff}[0]{$\text{acc}_{c}(f,g)$}
\newcommand{\upperbound}[0]{$\overline{\text{acc}}(a_\text{full},c)$}
\newcommand{\accnormscore}[0]{$s_{a_\text{full}}(f,g)$}
\newcommand{\realtradeoff}[0]{$\text{acc}_c(h,g)$}

\chapter{Background}
\label{ch:background}

\section{Introduction to Uncertainty Quantification}
Uncertainty quantification (UQ) is the process of identifying, quantifying, and managing the uncertainty inherent in computational and physical models. In machine learning and statistical modeling, UQ focuses on characterizing the confidence or trustworthiness of a model's predictions. Although machine learning models have grown very capable at making accurate predictions---especially with the advent of deep learning allowing for highly expressive models---understanding and communicating how certain (or uncertain) those predictions are remains a major challenge.

\subsection{Importance of Uncertainty Quantification}
Before diving into the specific reasons why uncertainty quantification is vital, it is helpful to note that UQ serves as a bridge between raw model output and actionable insight. While modern machine learning models can make highly accurate predictions, they do not inherently communicate their confidence level or the extent of variability within the data (i.e., Bayes error or irreducible error). This gap can be critical in real-world scenarios where misunderstandings or misapplications of model outputs can have significant consequences.

\paragraph{Decision-Making Under Risk.} In real-world applications like autonomous driving, healthcare, and finance, decisions often come with high stakes. Errors can lead to costly or even life-threatening outcomes, so having a clear sense of how certain a model’s predictions are is essential for risk management. For instance, in a clinical setting, an automated diagnostic tool that flags images as “uncertain” can trigger a secondary, human-led evaluation, thereby reducing the risk of misdiagnosis.
\paragraph{Model Validation and Trust.} A core challenge in deploying machine learning systems is understanding when and where they might fail. Uncertainty metrics, such as confidence intervals or Bayesian posterior distributions, help illuminate whether a model is systematically overconfident or underconfident in its predictions. Such insights guide model improvements (e.g., collecting more targeted training data) and bolster stakeholders’ trust by transparently indicating areas where the model may be less reliable.
\paragraph{Resource Allocation.} In settings with finite resources---for example, bandwidth, computational capacity, or expert availability---uncertainty estimates help allocate those resources more effectively. If a model can identify the instances where it is less certain, additional measures---such as further data collection, human review, or more computationally expensive algorithms---can be reserved for those high-uncertainty cases. This targeted approach prevents unnecessary overhead for predictions deemed sufficiently reliable.
\paragraph{Generalization and Reliability.} Overfitting, adversarial inputs, and domain shifts pose ongoing challenges in machine learning. When a model encounters data that differ from its training distribution, having robust uncertainty estimates can provide an early warning sign. For example, a model trained on data from one domain may exhibit higher uncertainty when faced with data from a new domain with differing characteristics. Such signals can prompt efforts to gather additional training data or adapt the model to the new distribution, ultimately improving its overall reliability and generalization.

\subsection{Approaches to Uncertainty Quantification}
There are several approaches to UQ in machine learning, reflecting different statistical and computational philosophies:

\paragraph{Bayesian Methods.}
In Bayesian inference, model parameters are treated as random variables~\citep{bishop2006pattern}. Given a set of observed data $\mathcal{D}$ and a parameter vector $\theta$, Bayes' theorem allows us to compute the posterior distribution:
\begin{equation}
p(\theta \mid \mathcal{D}) = \frac{p(\mathcal{D} \mid \theta) \, p(\theta)}{p(\mathcal{D})},
\end{equation}
where $p(\theta)$ is the prior, $p(\mathcal{D} \mid \theta)$ is the likelihood, and $p(\mathcal{D}) = \int p(\mathcal{D} \mid \theta)p(\theta)d\theta$ is the evidence. Techniques such as Markov Chain Monte Carlo (MCMC)~\citep{geyer1992practical} simulate draws from this posterior distribution, while Variational Inference (VI)~\citep{blei2017variational} seeks a tractable family of distributions $q(\theta)$ that approximates $p(\theta \mid \mathcal{D})$. Bayesian neural networks~\citep{blundell2015weight} extend these concepts to deep learning by placing priors over the network weights.

\paragraph{Frequentist and Distribution-Free Methods.}
Distribution-free approaches, like conformal prediction~\citep{shafer2008tutorial, fontana2023conformal}, aim to create prediction sets with finite-sample guarantees under assumptions such as exchangeability. For a new observation $\bm{x}$, conformal prediction constructs a set $\Gamma(\bm{x})$ such that
\begin{equation}
\Pr\{ Y \in \Gamma(\bm{X}) \} \;\geq\; 1 - \alpha,
\end{equation}
where $\alpha$ is a pre-specified significance level and $\bm{X}$ is the random variable. The construction typically involves calculating a nonconformity score for each example in a calibration set and then using these scores to determine the appropriate quantile for new predictions.

\paragraph{Ensemble Methods.}
Ensemble techniques~\citep{lakshminarayanan2017simple} combine the outputs of multiple independently trained models to capture uncertainty. Suppose we have an ensemble of $M$ models, $\{ f_1, f_2, \dots, f_M \}$. The aggregated prediction can be written as:
\begin{equation}
\hat{y} \;=\; \frac{1}{M} \sum_{m=1}^{M} f_m(\bm{x}),
\end{equation}
and the empirical variance, which serves as an uncertainty estimate, is given by:
\begin{equation}
\hat{\sigma}^2(\bm{x}) \;=\; \frac{1}{M-1} \sum_{m=1}^{M} \Bigl( f_m(\bm{x}) - \hat{y} \Bigr)^2.
\end{equation}
Although not inherently probabilistic, this approach provides a practical measure of prediction variability, particularly in high-dimensional settings.

\paragraph{Bootstrapping and Resampling.}
Bootstrapping involves generating multiple resampled datasets $\{\mathcal{D}^{(1)}, \mathcal{D}^{(2)}, \dots, \mathcal{D}^{(B)}\}$ from the original dataset $\mathcal{D}$ by sampling with replacement~\citep{breiman1996bagging}. For each bootstrap sample $\mathcal{D}^{(b)}$, a model $f^{(b)}$ is trained. The distribution of the model outputs at a given point $\bm{x}$ is then approximated as:
\begin{equation}
\hat{p}(y \mid \bm{x}) \;\approx\; \frac{1}{B} \sum_{b=1}^{B} \delta\bigl(y - f^{(b)}(\bm{x})\bigr),
\end{equation}
where $\delta$ denotes the Dirac delta function. This empirical distribution provides an estimate of the uncertainty in the model's predictions due to dataset variability.

\paragraph{Calibrated Models.}
Calibration techniques adjust a model’s predicted probabilities to better reflect true empirical frequencies. One common method is \emph{Platt scaling}~\citep{platt1999probabilistic}, which applies a sigmoid transformation to the model output:
\begin{equation}
p_{\text{cal}}(y \mid \bm{x}) \;=\; \frac{1}{1 + \exp\bigl(-a \, f(\bm{x}) - b\bigr)},
\end{equation}
where $a$ and $b$ are parameters learned on a validation set. A closely related method is \emph{temperature scaling}~\citep{guo2017calibration}, which calibrates the softmax outputs of a neural network by dividing the logits by a single scalar parameter $T > 0$, known as the temperature. Given the original logits $\bm{z}$, the calibrated probabilities are computed as:
\begin{equation}
p_{\text{cal}}(y \mid \bm{x}) \;=\; \frac{\exp(z_y / T)}{\sum_{k} \exp(z_k / T)}.
\end{equation}
Temperature scaling is a particularly simple yet effective post-hoc calibration method, as it preserves the model’s prediction ordering while improving the alignment between predicted confidence and true accuracy. Alternatively, \emph{isotonic regression}~\citep{zadrozny2002transforming} imposes a monotonicity constraint on the transformation function, allowing for a non-parametric calibration that adapts to the observed data distribution without assuming a specific functional form.

\section{Selective Prediction}
Selective prediction, sometimes called ``prediction with a reject option,'' or ``learning with abstention,'' introduces a mechanism for the model to \emph{abstain} (or ``reject'') on samples where its confidence is insufficiently high. In many high-stakes applications, having the option to reject can be valuable when the cost of a wrong decision exceeds the cost of not predicting at all.

In scenarios where an incorrect prediction could lead to severe consequences, it becomes essential to allow the model to defer judgment on uncertain cases. This selective approach not only helps to minimize potential risks by avoiding hasty decisions but also facilitates a more efficient allocation of resources by directing complex or ambiguous instances toward additional analysis or expert review. The following points further detail the driving factors behind adopting selective prediction in critical applications:

\begin{itemize}
    \item \textbf{Risk management}: In domains like medical diagnostics, autonomous driving, or finance, even a single misclassification can result in significant harm or financial loss. By rejecting predictions that do not meet a confidence threshold, the system minimizes the chance of error. This trade-off between coverage and accuracy ensures that only predictions with sufficiently low risk are acted upon, thereby safeguarding against the potentially high cost of a wrong decision.

    \item \textbf{Complex cost structures}: Many applications face asymmetric costs where the consequences of different types of errors vary dramatically. For instance, in healthcare, the cost of a false negative (failing to detect a disease) is often much higher than that of a false positive (unnecessary follow-up tests). A selective classifier can be tuned to consider these cost asymmetries by setting thresholds that balance the risk of errors against the operational costs of additional tests or interventions. This ensures that the system only makes predictions when the expected cost of a mistake is lower than the cost of deferring the decision.

    \item \textbf{Focus on ``easy'' cases}: In practice, models tend to perform well on typical examples while struggling with outliers or cases that lie near the decision boundary, especially under distribution shifts or when encountering rare events. By identifying and processing these ``easy'' cases automatically, the system can reserve more intensive, specialized methods (such as human review or more computationally demanding algorithms) for those uncertain or challenging instances. This tiered approach improves overall system efficiency and leverages expert resources only when they are truly needed.
\end{itemize}

\subsection{Formal Definition} 

Selective prediction extends the standard supervised classification framework by allowing the model to output a special \emph{rejection} symbol~$\bot$ through the use of a \textit{gating function}~\citep{yaniv2010riskcoveragecurve}. This gating function consults the underlying classifier and returns a prediction only when it is sufficiently confident in its correctness; otherwise, it opts to abstain. 

Throughout this thesis, we assume that inputs belong to a covariate space~$\mathcal{X} \subseteq \mathbb{R}^d$. The classifier $f$ maps each input either to a hard label in the set $\mathcal{Y} = [C] = \{1,\dots,C\}$, \emph{or} to a softmax probability vector in the simplex $\mathcal{Y} = \Delta^{C-1} \subset [0,1]^C$. The latter convention covers cases where the gating decision uses scores such as maximum class probability, entropy, or logit margins.

Typically, the gating decision is derived directly from the behavior of the classifier $f$, and we make this dependency explicit in our formulation. Specifically, we define a selection function $g: \mathcal{X} \times (\mathcal{X} \rightarrow \mathcal{Y}) \rightarrow \mathbb{R}$, which evaluates whether the model should produce a prediction for a given input~$\bm{x}$. If the value of $g(\bm{x}, f)$ is less than or equal to a predefined threshold $\tau$, the classifier proceeds with the prediction $f(\bm{x})$; otherwise, it abstains by returning $\bot$. This defines the joint selective prediction model as:
\begin{equation}
    (f,g)(\bm{x}) = \begin{cases}
      f(\bm{x}) & g(\bm{x}, f) \leq \tau \\
      \bot & \text{otherwise.}
    \end{cases}
\end{equation}
For clarity, we may occasionally drop the explicit dependence of $g$ on $f$ and simply write $g(\bm{x})$ when the context makes it unambiguous.

\subsection{Key Theoretical Results}
Seminal works such as \citet{chow1957optimum} and \citet{el2010foundations} established theoretical foundations for selective prediction and derived performance bounds. Below, we provide more formal statements of three key results.

\paragraph{Chow's Rule.}
In selective classification, Chow’s rule adopts a simplified cost-sensitive objective that assigns a fixed penalty for abstaining from a prediction. Specifically, the cost model assigns:
\begin{itemize}
  \item zero cost for correct predictions,
  \item unit cost for incorrect predictions (i.e., standard 0–1 loss), and
  \item a fixed reject cost $c_r > 0$ for abstaining.
\end{itemize}
Formally, for a decision function $(f,g)$, the expected risk under this cost model is
\begin{equation}
R(f,g) \;=\; \mathbb{E} \big[ \mathbb{I}\{g(\bm{x}) = 1, f(\bm{x}) \neq y\} \;+\; c_r \,\mathbb{I}\{g(\bm{x}) = 0\} \big],
\end{equation}
where $g(\bm{x}) \in \{0,1\}$ indicates whether the classifier predicts ($1$) or abstains ($0$).  
Let $p(y \mid \bm{x})$ denote the posterior probability of class $y$ given input $\bm{x}$. Then Chow's rule \citep{chow1957optimum} specifies that the decision that minimizes this expected cost—i.e., the \emph{Bayes optimal} decision under this cost model—is to predict the most likely label if the confidence (posterior probability) exceeds a threshold, and to reject otherwise. Formally, define
\begin{equation}
\hat{y}(\bm{x}) \;=\; \arg\max_{y\in\mathcal{Y}} \;p(y \mid \bm{x}),
\end{equation}
and let 
\begin{equation}
p_{\max}(\bm{x}) \;=\; \max_{y\in\mathcal{Y}} \; p(y \mid \bm{x}).
\end{equation}
Then the Bayes optimal selective classifier is:
\begin{equation}
(f^\ast, g^\ast)(\bm{x}) \;=\; \begin{cases}
\hat{y}(\bm{x}) \quad &\text{if } p_{\max}(\bm{x}) \;\geq\; \theta(c_r) \\
\bot \quad &\text{otherwise},
\end{cases}
\end{equation}
where the threshold $\theta(c_r)$ depends on both the reject cost $c_r$ and the class priors. This threshold balances the expected cost of an incorrect prediction against the fixed cost of abstaining, thereby minimizing the total expected cost across the distribution of inputs.

\paragraph{Coverage Guarantees and Set Utility.}
In a \emph{distribution-free} setting, conformal prediction~\citep{angelopoulos2021gentle} offers a principled way to build prediction sets $\Gamma(\bm{x})$ that satisfy finite-sample coverage guarantees under i.i.d.\ (exchangeable) data. Specifically, for any target miscoverage rate $\alpha\in(0,1)$,
\begin{equation}
  \Pr_{\bm{x},y}\!\bigl\{\,y\notin\Gamma(\bm{x})\,\bigr\}\;\le\;\alpha,
\end{equation}
with high probability over the random draw of the calibration set.  

While this bound limits how often the true label is excluded, it says nothing about \emph{how informative} the returned set is when coverage holds.  A widely accepted proxy for informativeness is the set’s \emph{size}: the cardinality $\lvert\Gamma(\bm{x})\rvert$ in classification or the length/volume in regression.  Size is appealing for three complementary reasons. 
\begin{enumerate}
    \item \textbf{Decision-theoretic grounding:} in many downstream tasks a user must act on \emph{any} element of $\Gamma(\bm{x})$ (e.g., prescribe the safest drug, choose the cheapest feasible route).  Under a worst-case or cost-per-option assumption, the expected utility of a set is monotone in its size, so minimizing size is equivalent to minimizing expected cost. 
    \item \textbf{Statistical tractability:} size admits clean optimality results—e.g.\ under the “marginal coverage with minimal expected size’’ criterion, conformal p-value ordering is provably optimal among all measurable set-valued predictors with the same coverage level.
    \item \textbf{Operational simplicity:} size provides an intuitive, threshold-friendly signal.  Practitioners can stipulate an application-specific upper bound $\tau$ on acceptable set size; if $\lvert\Gamma(\bm{x})\rvert>\tau$, the set is deemed uninformative and the system instead “defers’’ or “rejects.’’  Consequently, abstention can occur either explicitly ($\Gamma(\bm{x})=\varnothing$) or implicitly (oversized $\Gamma(\bm{x})$), unifying conformal prediction with selective classification via a simple size-based rule.
\end{enumerate}  
Balancing reliability and usefulness thus becomes a dual objective: attain the desired coverage \emph{and} minimize the expected size of $\Gamma(\bm{x})$—often called the method’s \emph{efficiency}.  This coverage–efficiency trade-off provides a unifying lens under which conformal prediction extends seamlessly to both classification and regression contexts.

\paragraph{PAC-Style Bounds.}
Within the Probably Approximately Correct (PAC) framework, one can analyze selective classification by bounding the probability that the error on the covered set exceeds a given threshold. Define the \emph{selective risk} and \emph{coverage} of a previously introduced selective classifier $(f,g)$ by
\[
    R(f,g)
    \;:=\;
    \frac{\mathbb{E}\bigl[\mathds{1}\{g(X)=1\}\,\ell(f(X),Y)\bigr]}
         {\mathbb{E}\bigl[\mathds{1}\{g(X)=1\}\bigr]}
    ,\qquad
    \text{cov}(f,g)
    \;:=\;
    \mathbb{E}\bigl[\mathds{1}\{g(X)=1\}\bigr].
\]
Under standard PAC assumptions (i.i.d.\ samples, bounded loss), there exist $\epsilon,\alpha>0$ such that, with probability at least $1-\delta$ over $n$ training examples,
\begin{equation}
    R(f,g)\;\le\;\epsilon
    \quad\text{and}\quad
    \text{cov}(f,g)\;\ge\;1-\alpha,
    \label{eq:pac_selective}
\end{equation}
provided $n$ is large enough~\citep{cortes2016learning}.  
This result shows that by permitting an $\alpha$-fraction of rejections, one can drive the conditional error on accepted instances below $\epsilon$, given sufficient data to learn the gating function.

\subsection{Common Selective Prediction Techniques}

\paragraph{Softmax Response (\sr).} A traditional baseline method for selective prediction is the \emph{Softmax Response} (\sr) method~\citep{hendrycks2016baseline, geifman2017selective}. This method uses the confidence of a prediction model \(f\) as the selection score:
\begin{equation}
	g_{\sr}(\bm{x}, f) = \max_{c \in C} f(\bm{x})_c
\end{equation}
Here, \(f(\bm{x}) \in \mathbb{R}^{|C|}\) denotes the vector of predicted class probabilities (or logits) for input \(\bm{x}\), and \(f(\bm{x})_c\) refers to the score assigned to class \(c\). While this method is easy to implement and incurs no additional computational cost, \sr has been found to be overconfident on ambiguous, hard-to-classify, or unrecognizable inputs.

\paragraph{SelectiveNet (\sn).} 
A variety of selective classification methods have been proposed that leverage explicit architecture and loss function adaptations. For example, \emph{SelectiveNet} (\sn) \citep{geifman2019selectivenet} modifies the model architecture to jointly optimize $(f,g)$ while targeting the model at a desired coverage level~$c_\text{target}$. The augmented model consists of a representation function $r: \mathcal{X} \rightarrow \mathbb{R}^L$ mapping inputs to latent codes and three additional functions: (i) the \emph{prediction} function $f: \mathbb{R}^L \rightarrow \mathbb{R}^C$ for the classification task targeted at~$c_\text{target}$; (ii) the \emph{selection} function $g: \mathbb{R}^L \rightarrow [0,1]$ representing a continuous approximation of the accept/reject decision for $\bm{x}$; and (iii) an additional \emph{auxiliary} function $h: \mathbb{R}^L \rightarrow \mathbb{R}^C$ trained for the unconstrained classification tasks. This yields the following losses:
 \begin{align}
 	\mathcal{L} & = \alpha \mathcal{L}_{f,g} + (1-\alpha) \mathcal{L}_h \\
 	\mathcal{L}_{f,g} & = \frac{\frac{1}{M}\sum_{m=1}^{M} \ell( f \circ r(\bm{x}_m) , y_m)}{\text{cov}(f,g)} + \lambda \max(0, c - \text{cov}(f,g))^2 \\
 	\mathcal{L}_{h} & = \frac{1}{M}\sum_{m=1}^{M} \ell( h \circ r(\bm{x}_m) , y_m)
 \end{align}
 The selection score for a particular point $\bm{x}$ is then given by:
 \begin{equation}
 	g_{\sn}(\bm{x}, f) = \sigma(g \circ r(\bm{x})) = \frac{1}{1 + \exp(g \circ r(\bm{x}))}
 \end{equation}

 \paragraph{Self-Adaptive Training (\sat).}
 Alternatively, prior works like Deep Gamblers~\citep{liu2019deep} and \emph{Self-Adaptive Training}~\citep{huang2020self} have also considered explicitly modeling the abstention class $\bot$ and adapting the optimization process to provide a learning signal for this class. For instance, \emph{Self-Adaptive Training} (\sat) incorporates information obtained during the training process into the optimization itself by computing and monitoring an exponential moving average of training point predictions over the training process. Samples with high prediction uncertainty are then used for training the abstention class. To ensure that the exponential moving average captures the true prediction uncertainty, an initial burn-in phase is added to the training procedure. This delay allows the model to first optimize the non-augmented, \ie original $C$-class prediction task and optimize for selective classification during the remainder of the training process. The updated loss is defined as:
 \begin{equation}
 	\mathcal{L} = -\frac{1}{M}\sum_{m=1}^{M} \left ( t_{i,y_i}\log p_{i,y_i} + (1-t_{i,y_i})\log p_{i,C+1} \right )
 \end{equation}
The abstention decision is then determined by the degree of confidence in the rejection class:
\begin{equation}
	g_{\sat}(\bm{x}, f) = f(\bm{x})_{C+1}
\end{equation}

\paragraph{Deep Ensembles (\de).}
Ensemble methods combine the information content of $M$ models into a single final model. Since these models approximate the variance of the underlying prediction problem, they are often used for the purpose of uncertainty quantification and, by extension, \selp. The canonical instance of this approach for deep learning based models, \emph{Deep Ensembles}~(\de)~\citep{lakshminarayanan2017simple}, trains multiple models from scratch with varying initializations using a proper scoring rule. Optionally, adversarial training can be used to enhance robustness and improve uncertainty estimates. Intuitively, adversarial training smooths the predictive distribution by encouraging the model to assign similar probabilities to neighboring inputs in an $\epsilon$-ball around the training data---especially in directions of high loss---which in turn leads to better-calibrated and more stable uncertainty estimates. In the context of deep ensembles, adversarial training also promotes diversity among the ensemble members, thereby improving selective prediction performance. After averaging the predictions made by the models, the softmax response (\sr) mechanism is applied:
\begin{equation}
    g_{\de}(\bm{x}, f) = \max_{c \in C} \frac{1}{M} \sum_{m=1}^{M} f_{\bm{\theta}_{m,T}}(\bm{x})_c.
\end{equation}

\paragraph{Monte-Carlo Dropout (\mcdo).}
To overcome the computational cost of estimating multiple models from scratch, \emph{Monte-Carlo Dropout} (\mcdo) \citep{gal2016dropout} allows for bootstrapping model uncertainty from a single dropout‐equipped network at test time. While dropout is predominantly used during training to regularize deep neural nets, it can also be applied at inference to yield random sub‐networks of the full model. Concretely, let \(f_{\theta}\) be our base model parameters and \(o\) the dropout probability. At test time, we draw \(Q\) independent dropout masks—each unit is kept with \(1 - o\) probability—producing \(Q\) thinned networks $\bigl\{\,f_{\theta,o}^{(q)}\bigr\}_{q=1}^{Q}$. For a given test input \(\bm{x}\), we compute each network’s softmax output \(f_{\theta,o}^{(q)}(\bm{x})\in\Delta^{|C|}\), average these probability vectors, and then take the maximum entry as our selection score:
\begin{equation}
  g_{\mcdo}(\bm{x},f)
  \;=\;
  \max_{c\in C}
  \frac{1}{Q}
  \sum_{q=1}^{Q}
    f_{\theta,o}^{(q)}(\bm{x})_{c}\,.
\end{equation}
This procedure efficiently approximates ensemble‐style uncertainty without retraining multiple models.

\subsection{Challenges and Trade-offs}
In selective prediction, a model’s performance depends not only on its accuracy when it decides to predict but also on how frequently it abstains. This leads to a fundamental \emph{coverage-utility trade-off}: increasing coverage (i.e., predicting on more samples) can degrade average performance, whereas being too selective may ignore a large portion of the data. In addition, various practical factors such as calibration, cost specification, and the nature of the output space further complicate the deployment of selective classifiers in real-world applications.

\paragraph{Calibration vs.\ Thresholding.}
A common approach to selective prediction is to set a confidence threshold below which the model abstains. However, if the model’s confidence scores are poorly calibrated, even a well-chosen threshold may yield suboptimal coverage and performance. Calibration techniques (e.g., Platt scaling or isotonic regression) are often necessary to align predicted confidence with actual likelihoods of correctness. Striking the right balance between calibration and thresholding is crucial for optimizing both utility (accuracy on covered samples) and coverage.

\paragraph{Data Distribution Shifts.}
In practice, data encountered during deployment may differ substantially from the training distribution. Under such domain shifts, confidence estimates can become unreliable, leading the gating mechanism to either abstain too frequently or incorrectly cover samples outside the training distribution. Techniques that adapt or re-estimate confidence under shifting conditions—such as online calibration or domain adaptation—remain an active area of research, aiming to preserve the coverage-utility trade-off even in evolving environments.

\paragraph{Cost Specification.}
Selective prediction often relies on a clear notion of the relative costs of rejection and misclassification. However, specifying these costs can be non-trivial, as it depends on domain knowledge, user preferences, and application-specific constraints. For instance, in a medical setting, the cost of a missed diagnosis may far outweigh the inconvenience of additional tests. An inaccurate cost specification can skew the thresholding decisions, leading to suboptimal coverage and utility outcomes.

\paragraph{Complex Outputs.}
While selective classification is well-understood for simple categorical outputs, tasks such as semantic segmentation or other structured prediction problems pose additional difficulties. Rejecting an entire structured output may be too coarse, yet partial rejections introduce substantial complexity into model design and training protocols. Extending selective prediction methods to these richer output spaces requires careful definition of what abstaining means (e.g., rejecting certain regions of an image segmentation vs.\ rejecting the entire image) and how to maintain favorable coverage-utility characteristics.

\subsection{Relationship Between UQ and Selective Prediction}

Whereas uncertainty quantification (UQ) can exist independently—for instance, by reporting confidence intervals or predictive distributions—selective prediction leverages these uncertainty estimates to decide whether to abstain from making a prediction. When reliable confidence or uncertainty metrics are available, selective classification naturally follows as a strategy for reducing risk. In particular, tools from uncertainty quantification—such as predictive distributions, confidence intervals, or calibration measures—yield scores or metrics (e.g., variance, entropy, or calibration error) that help determine whether the model should provide a label or abstain. However, for these signals to be effective, they must be both \emph{accurate} and \emph{well-calibrated}: miscalibrated or otherwise inaccurate estimates of uncertainty can lead to poor coverage-accuracy trade-offs, ultimately reducing the model’s reliability.

In this sense, selective prediction is an \textbf{action} grounded in the \textbf{estimation} process provided by uncertainty quantification. The decision to abstain hinges critically on the fidelity of the underlying uncertainty estimates. Without well-calibrated uncertainty, even sophisticated abstention mechanisms can perform poorly, either abstaining unnecessarily or failing to abstain when the risk is high.

More broadly, uncertainty estimation acquires practical significance only when situated within a downstream decision-making framework. Uncertainty—such as a predictive variance or entropy value—is not inherently meaningful unless it guides a choice among competing actions. In many real-world scenarios, decision-making under uncertainty is the norm: a medical diagnosis may trigger treatment or further testing, a financial risk estimate may inform loan approval, and a self-driving car’s confidence in its environment perception may determine whether it continues autonomously or defers control. In such contexts, the value of uncertainty lies in how it informs risk-aware decision-making.

Selective prediction provides a concrete instantiation of this principle. It formalizes a decision problem in which the model must choose between predicting and abstaining for each input, effectively balancing coverage against accuracy. Here, uncertainty is not just a descriptive statistic—it becomes a functional component of the model’s behavior. By embedding uncertainty into the decision rule itself, selective prediction highlights how the utility of uncertainty is deeply intertwined with the structure of the decision problem. Uncertainty matters most when there is something at stake and when there are viable alternatives—such as deferring to a human or a more accurate model—that can mitigate the consequences of model error.

    \newcommand{\shortened}[1]{\color{black} #1\xspace\color{black}}
\newcommand{\newlyadded}[1]{\color{black} #1\xspace\color{black}}
\newcommand{\fixed}[1]{\color{black} #1\xspace\color{black}}

\chapter{Selective Prediction Via Training Dynamics}
\label{ch:sptd}

\begin{paperref}
\normalfont
The contents of this chapter consist of research and results taken from: \citet{rabanser2022selective}: \emph{\bibentry{rabanser2022selective}}
\end{paperref}

\section*{Summary}

Selective Prediction is the task of rejecting inputs a model would predict incorrectly on. This involves a trade-off between input space coverage (how many data points are accepted) and model utility (how good is the performance on accepted data points). Current methods for selective prediction typically impose constraints on either the model architecture or the optimization objective; this inhibits their usage in practice and introduces unknown interactions with pre-existing loss functions. In contrast to prior work, we show that state-of-the-art selective prediction performance can be attained solely from studying the (discretized) training dynamics of a model. We propose a general framework that, given a test input, monitors metrics capturing the instability of predictions from intermediate models (\ie checkpoints) obtained during training w.r.t. the final model's prediction. In particular, we reject data points exhibiting too much disagreement with the final prediction at late stages in training. The proposed rejection mechanism is domain-agnostic (\ie it works for both discrete and real-valued prediction) and can be flexibly combined with existing selective prediction approaches as it does not require any train-time modifications. Our experimental evaluation on image classification, regression, and time series problems shows that our method beats past state-of-the-art accuracy/utility trade-offs on typical selective prediction benchmarks.

\section{Introduction}

Machine learning (ML) is increasingly deployed in high-stakes decision-making environments with strong reliability and safety requirements. One of these requirements is the detection of inputs for which the ML model produces an erroneous prediction. This is particularly important when deploying deep neural networks (DNNs) for applications with low tolerances for \fps (\ie classifying with a wrong label), such as healthcare~\citep{challen2019artificial, mozannar2020consistent}, self-driving~\citep{ghodsi2021generating}, and law~\citep{vieira2021understanding}.
This problem setup is captured by the Selective Prediction (SP) framework, which introduces an accept/reject function (a so-called \emph{gating mechanism}) to abstain from predicting on individual test points in the presence of high prediction uncertainty~\citep{geifman2017selective}. 
Specifically, SP aims to (i) only accept inputs on which the ML model would achieve high utility, while (ii) maintaining high coverage (\ie correctly accepting as many inputs as possible).

Current selective prediction techniques 
take one of two directions: (i) augmentation of the architecture of the underlying ML model~\citep{geifman2019selectivenet}; or (ii) training the model using a purposefully adapted loss function~\citep{liu2019deep, huang2020self, gangrade2021selective}. 
The unifying principle behind these methods is to modify the training stage in order to accommodate selective prediction. While many ad-hoc experimentation setups are amenable to these changes, productionalized environments often impose data pipeline constraints which limit the applicability of existing methods. Such constraints include, but are not limited to, data access revocation, high (re)-training costs, or pre-existing architecture/loss modifications whose interplay with selective prediction adaptations are unexplored. As a result of theses limitations, selective prediction approaches are hard to deploy in production environments.

We instead show that \textit{ these modifications are unnecessary}. That is, our method, which \textbf{establishes new SOTA results for selective prediction} across a variety of datasets, not only outperforms existing work but \textbf{our method can be easily applied on top of all existing models}, unlike past methods. Moreover, our method is not restricted to classification problems but can be applied for real-valued prediction problems, too, like regression and time series prediction tasks. This is an important contribution as a majority of recent selective prediction approaches have solely focused on improving selective \emph{classification}.

\begin{figure*}[t]
    \centering
    \includegraphics[width=0.97\linewidth]{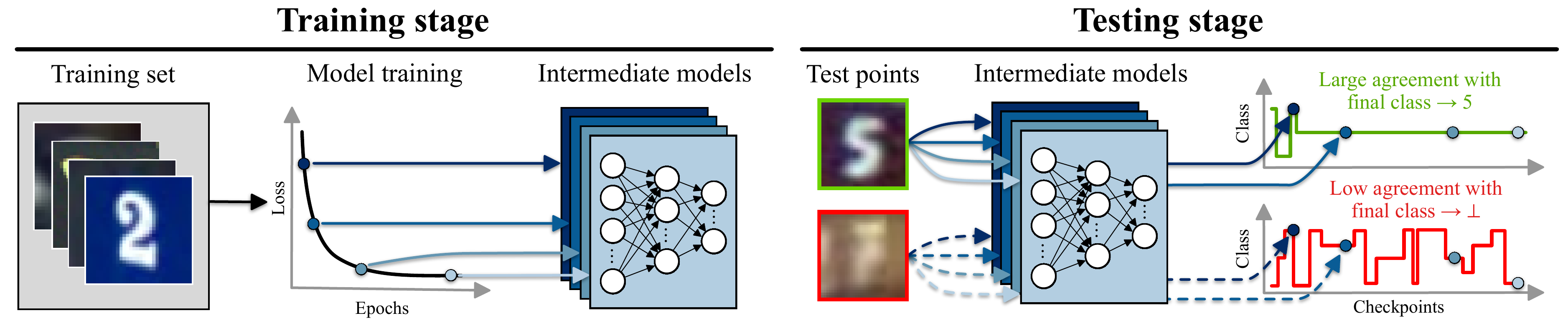}
    \caption[Our proposed \sptd method for a classification example]{\textbf{Our proposed \sptd method for a classification example}. We store checkpoints of intermediate models during model training. At inference time, given a test input, we compute various metrics capturing the stability of intermediate predictions with respect to the final model prediction. Data points with high stability are accepted, data points with low stability are rejected.}
    \label{fig:nntd_overview}
\end{figure*}

Our approach builds on the following observation: typical DNNs are trained using an iterative optimization procedure, \eg using Stochastic Gradient Descent (SGD). Due to the sequential nature of this optimization process, as training goes on, the optimization process yields a sequence of intermediate models. \newlyadded{Current selective prediction methods rely only on the final model, ignoring valuable statistics available from the model’s training sequence.}
In this work, however, we propose to take advantage of the information contained in these optimization trajectories for the purpose of selective prediction. By studying the usefulness of these trajectories, we observe that instability in SGD convergence is often indicative of high aleatoric uncertainty (\ie irreducible data noise such as overlap between distinct data classes). Furthermore, past work on example difficulty~\citep{jiang2020characterizing,toneva2018empirical,hooker2019compressed,agarwal2020estimating} has highlighted faster convergence as indicative of easy-to-learn training examples (and conversely slow convergence of hard-to-learn training examples). We hypothesize that such training time correlations with uncertainty also hold for test points and studying how test time predictions evolve over the intermediate checkpoints is useful for reliable uncertainty quantification.

With this hypothesis, we derive the first framework for \textbf{S}elective \textbf{P}rediction based on neural network \textbf{T}raining \textbf{D}ynamics~(\sptd, see Figure~\ref{fig:nntd_overview} for an example using a classification setup). Through a formalization of this particular neural network training dynamics problem, we first note that a useful property of the intermediate models' predictions for a test point is whether they converge ahead of the final prediction. This convergence can be measured by deriving a prediction instability score measuring how strongly predictions of intermediate models agree with the final model. While the exact specifics of how we measure instability differs between domains (classification vs regression), our resulting score generalizes across application domains and measures weighted prediction instability. This weighting allows us to emphasize instability late in training which we deem indicative of points that should be rejected. Note that this approach is transparent w.r.t. the training stage: our method only requires that intermediate checkpoints were recorded when a model was trained, which is an established practice (especially when operating in shared computing environments such as GPU clusters). Moreover, when compared to competing ensembling-based methods, such as Deep Ensembles~\citep{balaji2017uncertainty}, our approach can match the same inference-time cost while being significantly cheaper to train.

To summarize, our main contributions are as follows: 
\begin{enumerate}
    \item We present a motivating synthetic example using a linear model, showcasing the effectiveness of training dynamics information in the presence of a challenging classification task (Section~\ref{sec:method_intuition}). 
    \item We propose a novel method for selective prediction based on training dynamics (\sptd, Section~\ref{sec:method_overview}). To that end, we devise an effective scoring mechanism capturing weighted prediction instability of intermediate models with the final prediction for individual test points. Our methods allow for selective classification, selective regression, and selective time series prediction. Moreover, \sptd can be applied to all existing models whose checkpoints were recorded during training.
    \item We highlight an in-depth connection between our \sptd approach and forging (\cite{thudi2022necessity}, Section~\ref{sec:forging}), which has shown that optimizing a model on distinct datasets can lead to the same sequence of checkpoints. This connection demonstrates that our metrics can be motivated from a variety of different perspectives.
    \item We perform a comprehensive set of empirical experiments on established selective prediction benchmarks spanning over classification, regression, and time series prediction problems (Section~\ref{sec:emp_eval}). 
    Our results obtained from all instances of \sptd demonstrate highly favorable utility/coverage trade-offs, establishing new state-of-the-art results in the field at a fraction of the training time cost of competitive prior approaches. 
\end{enumerate}

\section{Background on Selective Prediction}
\label{sec:background_sptd}

\paragraph{Supervised Learning Setup.} Our work considers the standard supervised learning setup. We assume access to a dataset $D = \{(\bm{x}_i,y_i)\}_{i=1}^{M}$ consisting of $M$ data points $(\bm{x},y)$ with $\bm{x} \in \mathcal{X}$ and  $y \in \mathcal{Y}$. We refer to $\mathcal{X} := \mathbb{R}^d$ as the covariate space (or input/data space) of dimensionality $d$. For classification problems, we define $\mathcal{Y} := [C] = \{1, 2, \ldots, C\}$ as the label space consisting of $C$ classes. For regression and time series problems (such as demand forecasting) we instead define $\mathcal{Y} := \mathbb{R}$ and $\mathcal{Y} := \mathbb{R}^R$ respectively (with $R$ being the prediction horizon). All data points $(\bm{x},y)$ are sampled independently from the underlying distribution $p$ defined over the joint covariate and label spaces $\mathcal{X} \times \mathcal{Y}$. Our goal is to learn a prediction function $f : \mathcal{X} \rightarrow \mathcal{Y}$ which minimizes the risk $\mathcal{R}(f_{\bm{\theta}})$ with respect to the underlying data distribution $p$ and an appropriately chosen loss function $\ell : \mathcal{Y} \times \mathcal{Y} \rightarrow \mathbb{R}$. 
We can derive the optimal parameters $\hat{\bm{\theta}}$ via empirical risk minimization which approximates the true risk $\mathcal{R}(f_{\bm{\theta}})$ through sampling, thereby ensuring that $\bm{\theta}^* \approx \hat{\bm{\theta}}$ for a sufficiently large amount of samples:
\begin{align}
    \bm{\theta}^* & = \argmin_{\bm{\theta}} \mathcal{R}(f_{\bm{\theta}}) = \argmin_{\bm{\theta}} \mathbb{E}_{p(\bm{x},y)}[\ell(f_{\bm{\theta}}(\bm{x}),y)] \\ 
    \hat{\bm{\theta}} & = \argmin_{\bm{\theta}} \hat{\mathcal{R}}(f_{\bm{\theta}}) = \argmin_{\bm{\theta}} \frac{1}{M} \sum_{i=1}^{N} \ell(f_{\bm{\theta}}(\bm{x}_i),y_i)
\end{align}
In the following, we drop the explicit dependence of $f$ on $\bm{\theta}$ and simply denote the predictive function by $f$.

\paragraph{Selective Prediction Setup.} Selective prediction alters the standard supervised learning setup by introducing a rejection state~$\bot$ through a \textit{gating mechanism}~\citep{yaniv2010riskcoveragecurve}. In particular, such a mechanism introduces a selection function $g:\mathcal{X} \rightarrow \mathbb{R}$ which determines if a model should predict on a data point~$\bm{x}$. 
Given an acceptance threshold $\tau$, the resulting predictive model can be summarized as:
\begin{equation}
    (f,g)(\bm{x}) = \begin{cases}
  f(\bm{x})  & g(\bm{x}) \leq \tau \\
  \bot & \text{otherwise.}
\end{cases}
\end{equation}

\paragraph{Selective Prediction Evaluation Metrics.} Prior work evaluates the performance of a selective predictor $(f,g)$ based on two metrics: the \emph{coverage} of $(f,g)$ (\ie what fraction of points we predict on) and the \emph{selective utility} of $(f,g)$ on the accepted points. Note that the exact utility metric depends on the type of the underlying selective prediction task (e.g. accuracy for classification, $R^2$ for regression, and a quantile-based loss for time series forecasting). Successful SP methods aim to obtain both strong selective utility and high coverage. Note that these two metrics are at odds with each other: na\"ively improving utility leads to lower coverage and vice-versa. The complete performance profile of a model can be specified using the risk–coverage curve, which defines the risk as a function of coverage~\citep{yaniv2010riskcoveragecurve}. These metrics can be formally defined as: 
\begin{align}
    \text{coverage}(f,g) & = \frac{M_\tau}{M} \\
    \text{utility}(f,g) & = \sum_{ \{(\bm{x}, y) : g(\bm{x}) \leq \tau \} } u(f(\bm{x}), y)
\end{align}
Here, $u(\cdot, \cdot)$ corresponds to the specifically used utility function, $M_\tau = \sum_{i=1}^M \mathds{1}[\bm{x}_i : g(\bm{x}_i) \leq \tau]$ corresponds to the number of accepted data points at threshold $\tau$, and $\mathds{1}[\cdot]$ corresponds to the indicator function. We define the following utility functions to be used based on the problem domain:

\begin{enumerate}
    \item \textit{Classification}: We use accuracy on accepted points as our utility function for classification:
    \begin{equation}
        \text{Accuracy} = \frac{1}{M_\tau}\sum_{i=1}^{M_\tau} \mathds{1}[\bm{x}_i : f(\bm{x}_i) = y_i]
    \end{equation}
    \item \textit{Regression}: We use the coefficient of determination ($R^2$ score, which is a scaled version of the mean squared error) on accepted points as our utility function for regression:
     \begin{equation}
        R^2= 1 - \frac{\sum_{i=1}^{M_\tau} (f(\bm{x}_i) - y_i)^2}{\sum_{i=1}^{M_\tau} (y_i - \frac{1}{M_\tau}\sum_{j=1}^{M_\tau} y_j)^2}
    \end{equation}
    \item \textit{Time Series Forecasting}: We use the Mean Scaled Interval Score (MSIS)~\cite{gneiting2007strictly} on accepted series as our utility function for time series forecasting
    \begin{equation}
    \fontsize{7.75pt}{7.75pt}
    \text { MSIS }= \frac{1}{M_\tau R}\sum_{i=1}^{M_\tau} \frac{ \splitfrac{ \sum_{r=n+1}^{n+R}\left(u_{i,r}-l_{i,r}\right) +\frac{2}{\alpha}\left(l_{i,r}-y_{i,r}\right) \mathds{1}[y_{i,r}<l_{i,r}]}{ +\frac{2}{\alpha}\left(y_{i,r}-u_{i,r}\right)  \mathds{1}[y_{i,r}>u_{i,r}] } }{\frac{1}{n-m} \sum_{r=m+1}^n\left|y_{i,r}-y_{i,r-m}\right|}
    \end{equation}
    where $\alpha$ refers to a specific predictive quantile, $n$ to the conditioning length of the time series, $m$ to the length of the seasonal period, and $u_{i,r}$ and $l_{i,r}$ to the upper and lower bounds on the prediction range, respectively.
\end{enumerate}

\subsection{Past \& Related Work} 

\paragraph{Softmax Response Baseline (Classification).} The first work on selective classification was the softmax response (\sr) mechanism~\citep{hendrycks2016baseline, geifman2017selective}. A classification model typically has a softmax output layer which takes in unnormalized activations in $z_{i} \in \mathbb{R}^C$ (referred to as logits) from a linear model or a deep neural net. These activations are mapped through the softmax function which normalizes all entries 
\begin{equation}
    \sigma(\bm{z})_{i}=\frac{e^{z_{i}}}{\sum_{j=1}^{K} e^{z_{j}}}
\end{equation}
to the interval $[0,1]$ and further ensures that $\sum_{i=1}^{C} \sigma(\bm{z})_{i} = 1$. As a result, the softmax output can be interpreted as a conditional probability distribution which we denote $f(y|\bm{x})$.
The softmax response mechanism applies a threshold $\tau$ to the maximum response of the softmax layer: $\max_{y \in \mathcal{Y}}f(y|\bm{x})$. 
Given a confidence parameter $\delta$ and desired risk $\hat{\mathcal{R}}(f)$, \sr constructs $(f, g)$ with test error no larger than $\hat{\mathcal{R}}(f)$ with probability $\geq 1-\delta$. 
While this approach is simple to implement, it has been shown to produce over-confident results due to poor calibration of deep neural networks~\citep{guo2017calibration}.\footnote{Under miscalibration, a model's prediction frequency of events does not match the true frequency of events.} 

\paragraph{Loss Modifications (Mostly Classification).} 
The first work to deliberately address selective classification via architecture modification was SelectiveNet~\citep{geifman2019selectivenet}, which trains a model to jointly optimize for classification and rejection. A loss penalty is added to enforce a particular coverage constraint using a variant of the interior point method~\cite{potra2000interior} which is often used for solving linear and non-linear convex optimization problems. To optimize selective accuracy over the full coverage spectrum in a single training run, Deep Gamblers~\citep{liu2019deep} transform the original $C$-class problem into a $(C + 1)$-class problem where the additional class represents model abstention. \shortened{A similar approach is given by Self-Adaptive Training (\sat)~\citep{huang2020self} which also uses a $(C+1)$-class setup but instead incorporates an exponential average of intermediate predictions into the loss function.} Other similar approaches include: performing statistical inference for the marginal prediction-set coverage rates using model ensembles~\citep{feng2021selective}, confidence prediction using an earlier snapshot of the model~\citep{geifman2018bias}, estimating the gap between classification regions corresponding to each class~\citep{gangrade2021selective}, and complete precision by classifying only when models consistent with the training data predict the same output~\citep{khani2016unanimous}. 

\paragraph{Uncertainty Quantification (Classification + Regression).} 
It was further shown by~\citep{balaji2017uncertainty, zaoui2020regression} that deep model ensembles (\ie a collection of multiple models trained with different hyper-parameters until convergence) can provide state-of-the-art uncertainty quantification, a task closely related to selective prediction. This however raises the need to train multiple models from scratch. \shortened{To reduce the cost of training multiple models, \citep{gal2016dropout} proposed abstention based on the variance statistics from several dropout \cite{srivastava2014dropout} enabled forward passes at test time.} Another popular technique for uncertainty quantification, especially for regression and time series forecasting, is given by directly modeling the output distribution~\citep{alexandrov2019gluonts} in a parametric fashion. Training with a parametric output distribution however can lead to additional training instability, often requiring extensive hyper-parameter tuning and distributional assumptions. On the other hand, our approach does not require any architecture or other training-time modifications. \newlyadded{Finally, we note that selective prediction and uncertainty are also strongly related to the field of automated model evaluation which relies on the construction a proximal prediction pipeline of the testing performance without the presence of ground-truth labels~\citep{peng2023came,peng2024energy}.}

\paragraph{Training Dynamics Approaches (Classification).} Checkpoint and snapshot ensembles \citep{huang2017snapshot, chen2017checkpoint} constitute the first usage of training dynamics to boost model utility. Our work is closest in spirit to recent work on dataset cartography~\citep{swayamdipta2020dataset} which relies on using training dynamics from an example difficulty viewpoint by considering the variance of logits. However, their approach does not consider selective prediction and further requires access to true label information (which is not available in the selective prediction setting). Recent work on out-of-distribution detection~\citep{adila2022understanding}, a closely related yet distinct application scenario from selective prediction, harness similar training dynamics based signals.

\paragraph{Example Difficulty.}
\label{sec:example_diff}

A related line of work to selective prediction is identifying \emph{difficult} examples, or how well a model can generalize to a given unseen example. Recent work~\cite{jiang2020characterizing} has demonstrated that the probability of predicting the ground truth label with models trained on data sub-samples of different sizes can be estimated via a per-instance empirical consistency score. Unlike our approach, however, this requires training a large number of models. 
Example difficulty can also be quantified through the lens of a forgetting event ~\cite{toneva2018empirical} in which the example is misclassified after being correctly classified. Instead, the metrics that we introduce in Section~\ref{sec:method}, are based on the disagreement of the label at each checkpoint with the final predicted label. Other approaches estimate the example difficulty by: 
prediction depth of the first layer at which a $k$-NN classifier correctly classifies an example~\citep{baldock2021deep}, the impact of quantization and compression on model predictions of a given sample~\citep{hooker2019compressed}, and estimating the leave-one-out influence of each training example on the accuracy of an algorithm by using influence functions~\citep{feldman2020neural}. 
Closest to our method, the work of \cite{agarwal2020estimating} utilizes gradients of intermediate models during training to rank examples by  difficulty. In particular, they average pixel-wise variance of gradients for each given input image. 
Notably, this approach is more costly and less practical than our approach and also does not study the utility/coverage trade-off which is of paramount importance to selective prediction.

\newlyadded{
\paragraph{Disagreement.} Our \sptd method heavily relies on the presence of disagreements between intermediate models. Past work on (dis-)agreement has studied the connection between generalization and disagreement of full SGD runs \citep{jiang2021assessing} as well as correlations between in-distribution and out-of-distribution agreement across models \citep{baek2022agreement}.
}

\section{Selective Prediction via Neural Network Training Dynamics}
\label{sec:method}

We now introduce our selective prediction algorithms based on neural network training dynamics. We start by presenting a motivating example showcasing the effectiveness of analyzing training trajectories for a linear classification problem. Following this, we formalize our selective prediction scoring rule based on training-time prediction disagreements. We refer to the class of methods we propose as \sptd.

\subsection{Method Intuition: Prediction Disagreements Generalize Softmax Response}
\label{sec:method_intuition}

\begin{figure*}
    \centering
    \includegraphics[width=0.97\linewidth]{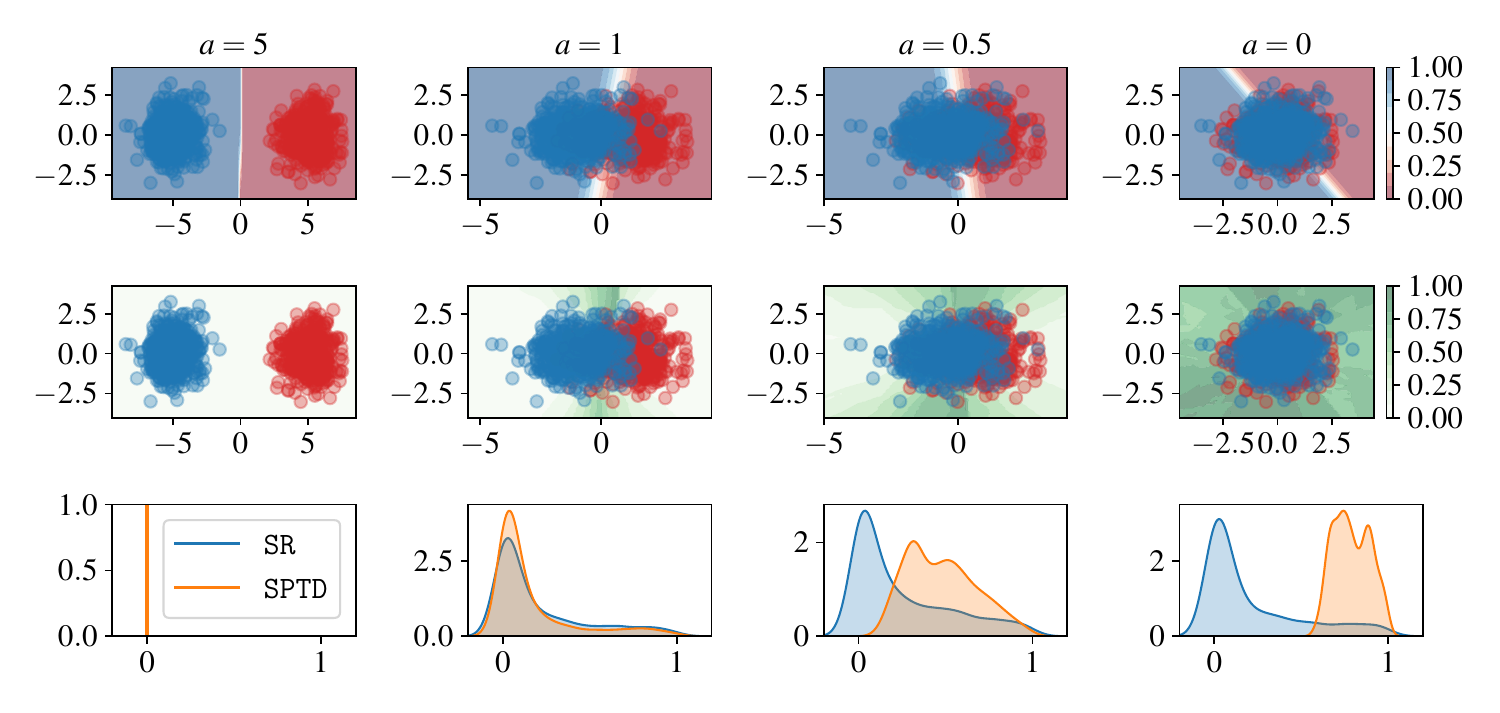}
    \caption[Synthetic example of anomaly scoring based on \sr vs \sptd.]{\textbf{Synthetic example of anomaly scoring based on \sr vs \sptd}. The first row shows a test set from the generative Gaussian model as well as the learned decision boundary separating the two Gaussians. For small $a$, the decision boundary is overconfident. The second row shows the same data set but instead depicts the scores yielded by applying \sptd to the full domain. \sptd highlights rightful rejection regions more clearly than the overconfident \sr score: larger regions are flagged as exhibiting noisy training dynamics (with stronger shades of green indicating stronger disagreement) as $a \rightarrow 0$. The bottom row shows the distribution of the \sr and \sptd scores, clearly showing that \sptd leads to improved uncertainty under stronger label ambiguity.}
    \label{fig:gauss}
\end{figure*}

Stochastic iterative optimization procedures, such as Stochastic Gradient Descent (SGD), yield a sequence of models that is iteratively derived by minimizing a loss function $\ell(\cdot,\cdot)$ on a randomly selected mini-batch $(\bm{X}_i, \bm{y}_i)$ from the training set. The iterative update rule can be expressed as
\begin{equation}
    \bm{\theta}_{t+1} = \bm{\theta}_{t} - \nu \frac{\partial \ell(f(\bm{X}_i), \bm{y}_i)}{\partial \bm{\theta}_{t}},
\end{equation}
where the learning rate $\nu$ controls the speed of optimization and $t \in \{1,\ldots,T\}$ represents a particular time-step during the optimization process.

\looseness=-1
Current methods for selective prediction disregard the properties of this iterative process and only rely on the final set of parameters $\bm{\theta}_{T}$. However, the optimization trajectories contain information that we can use to determine prediction reliability. In particular, on hard optimization tasks, the presence of stochasticity from SGD and the potential ambiguity of the data often leads to noisy optimization behavior. As a result, intermediate predictions produced over the course of training might widely disagree in what the right prediction would be for a given data point. Our class of selective prediction approaches explicitly make use of these training dynamics by formalizing rejection scores based on the observed frequency of prediction disagreements with the final model throughout training.

To illustrate and reinforce this intuition that training dynamics contain meaningfully more useful information for selective prediction than the final model, we present a synthetic logistic regression example. First, we generate a mixture of two Gaussians each consisting of $1000$ samples: $D = \{(\bm{x}_i,0)\}_{i=1}^{1000} \cup \{(\bm{x}_j,1)\}_{j=1}^{1000}$ where $\bm{x}_i \sim \mathcal{N}(\begin{bmatrix}a & 0\end{bmatrix}^\top,\ \bm{I})$ and $\bm{x}_j \sim \mathcal{N}(\begin{bmatrix}-a & 0\end{bmatrix}^\top,\ \bm{I})$. Note that $a$ controls the distance between the two $2$-dimensional Gaussian clusters, allowing us to specify the difficulty of the learning task. Then, we train a linear classification model using SGD for $1000$ epochs for each $a \in \{0,0.5,1,5\}$. Finally, we compute both the softmax response score (\sr) score, the typical baseline for selective classification, as well as our \sptd score (details in Section~\ref{sec:method_overview}).

We showcase the results from this experiment in Figure~\ref{fig:gauss}. We  see that if the data is linearly separable ($a=5$) the learned decision boundary is optimal and the classifier's built-in confidence score \sr reflects well-calibrated uncertainty. Moreover, the optimization process is stable as \sptd yields low scores over the full domain. However, as we move the two Gaussians closer together (\ie by reducing $a$) we see that the \sr baseline increasingly suffers from overconfidence: large parts of the domain are very confidently classified as either $0$ (red) or $1$ (blue) with only a small ambiguous region (white). However, the optimization trajectory is highly unstable with the decision boundary changing abruptly between successive optimization steps. \sptd identifies the region of datapoints exhibiting large prediction disagreement due to this training instability and correctly rejects them (as those are regions also subject to label ambiguity in this case). In summary, we observe that \sptd provides improved uncertainty quantification in ambiguous classification regions (which induce training instability) and reduces to the \sr solution as the classification task becomes easier. Hence, we expect \sptd to generalize \sr performance, which is supported by this logistic regression experiment.

\subsection{Method Overview: Measuring Prediction Instability During Training}
\label{sec:method_overview}

\begin{figure*}
\begingroup
    \begin{minipage}{0.48\textwidth}

    \begin{algorithm}[H]
    	\caption{\sptd for classification}\label{alg:sptd_class}
    	\begin{algorithmic}[1]
    	\Require Intermediate models $[f_1,\ldots,f_T]$, query point $\bm{x}$, weighting parameter $k \in [0,\infty)$.
        \For{$t \in [T]$}
            \State \algemph{\algorithmicif\ $f_t(\bm{x}) = f_T(\bm{x})$ \algorithmicthen\ $a_t \gets 0$ \algorithmicelse\ $a_t \gets 1$}
            \State $\ v_t \gets (\frac{t}{T})^k$
        \EndFor
    \State $g \gets \sum_{t} v_t a_t$
    \State \algorithmicif\ $g \leq \tau$ \algorithmicthen\ \fixed{$f(\bm{x}) = f_T(\bm{x})$} \algorithmicelse\ $f(\bm{x}) = \bot$
    	\end{algorithmic}
    \end{algorithm}
    
    \end{minipage}
    \hfill
    \begin{minipage}{0.48\textwidth}

    \begin{algorithm}[H]
    	\caption{\sptd for regression}\label{alg:sptd_regr}
    	\begin{algorithmic}[1]
    	\Require Intermediate models $[f_1,\ldots,f_T]$, query point $\bm{x}$, weighting parameter $k \in [0,\infty)$. 
        \For{$t \in [T]$}
            \State \algemph{$a_t \gets ||f_t (\bm{x}) - f_T (\bm{x})||$}
            \State $\ v_t \gets (\frac{t}{T})^k$
        \EndFor
    \State $g \gets \sum_{t} v_t a_t$
    \State \algorithmicif\ $g \leq \tau$ \algorithmicthen\ \fixed{$f(\bm{x}) = f_T(\bm{x})$} \algorithmicelse\ $f(\bm{x}) = \bot$
    	\end{algorithmic}
    \end{algorithm}
    
    \end{minipage}

    \vspace{-10pt}
\endgroup
\end{figure*}

We proceed to describe the statistics we collect from intermediate checkpoints which we later devise our scores for deciding which inputs to reject on. The objective of these statistics is to capture how unstable the prediction for a datapoint was over the training checkpoints. Let $[f_1,f_2,\ldots, f_T]$ be a sequence of intermediate checkpoints, and $\mathcal{D} = D_\text{train} \cup D_\text{test}$ be the set of all data points. We define a prediction disagreement score at time $t \in \{1,\ldots,T\}$ as some function $a_t: \mathcal{X} \rightarrow \mathbb{R}^+$ with $a_t(\bm{x}) = 0$ if $f_t(\bm{x}) = f_T(\bm{x})$. Note that the exact $a_t(\cdot)$ we use depends on the problem domain (classification vs regression) and we define our choices below. In the following, when conditioning on $\bm{x}$ is understood from context, we drop the explicit dependence on $\bm{x}$ and write $a_t$.

For a fixed data point $\bm{x}$, our approach takes a given sequence of prediction disagreements $[a_1,\ldots,a_T]$ and associates a weight $v_t$ to each disagreement $a_t$ to capture how severe a disagreement at step $t$ is. To derive this weighting we ask: How indicative of $\bm{x}$ being incorrectly classified is a disagreement at step $t$? Related work in the example difficulty literature (see Section~\ref{sec:example_diff} for details) found that easy-to-optimize samples are learned early in training and converge faster. While prior work specifically derived these convergence insights for training points only, the novelty of our method is to show such conclusions for training points also generalize to test points. Hence, we propose to use the weighting $v_t = (\frac{t}{T})^k$ for $k \in [0,\infty)$ to penalize late prediction disagreements as more indicative of a test point we will not predict correctly on. With this weighting, our methods compute a weighted sum of the prediction disagreements, which effectively forms our selection function $g(\cdot)$: 
\begin{equation}
    \label{eq:score}
    g(\bm{x}) = \sum_{t} v_t a_t(\bm{x})    
\end{equation}

\paragraph{Instability for Classification.} For discrete prediction problems (\ie classification) we define the label disagreement score as $a_t = 1- \delta_{f_t(\bm{x}),f_T(\bm{x})}$ where $\delta$ is the Dirac-delta function: $a_t$ is hence $1$ if the intermediate prediction $f_t$ at checkpoint $t$ disagrees with the final prediction $f_T$ for $\bm{x}$, else $0$. The resulting algorithm using this definition of $a_t$ for classification is given in Algorithm~\ref{alg:sptd_class}. We remark that continuous metrics such as the maximum softmax score, the predictive entropy (\ie the entropy of the predictive distribution $f(y|\bm{x})$), or the gap between the two most confident classes could be used as alternate measures for monitoring stability (see Appendix~\ref{sec:alt_scores} for a discussion). However, these measures only provide a noisy proxy and observing a discrete deviation in the predicted class provides the most direct signal for potential mis-classification.

\paragraph{Instability for Regression.} One key advantage of our method over many previous ones is that it is applicable to \emph{any} predictive model, including regression. Here, we propose the following prediction disagreement score measuring the distance of intermediate predictions to the final model's prediction: $a_t = ||f_t (\bm{x}) - f_T (\bm{x})||$.\footnote{We explored a more robust normalization by averaging predictions computed over the last $l$ checkpoints: $a_t = ||f_t (\bm{x}) - \frac{1}{n}\sum_{c \in \{T-l, T-l+1, \ldots, T \}} f_c (\bm{x})||$. Across many $l$, we found the obtained results to be statistically indistinguishable from the results obtained by normalizing w.r.t. the last checkpoint $f_T$.} The resulting algorithm using this definition of $a_t$ for regression is given in Algorithm~\ref{alg:sptd_regr}. We again highlight the fact that Algorithm~\ref{alg:sptd_regr} only differs from Algorithm~\ref{alg:sptd_class} in the computation of the prediction disagreement $a_t$ (line 2 highlighted in both algorithms).

\begin{algorithm}[t]
    \caption{\sptd for time series forecasting}\label{alg:sptd_ts}
    \begin{algorithmic}[1]
    \Require Intermediate models $[f_1,\ldots,f_T]$, query point $\bm{x}$, weighting $k \in [0,\infty)$, prediction horizon $R$. 
    \For{$t \in [T]$}
        \For{$r \in [R]$}
            \State $a_{t,r} \gets ||f_{t}(\bm{x})_r - f_{T}(\bm{x})_r||$
        \EndFor
        \State $v_t \gets (\frac{t}{T})^k$
    \EndFor
\State $g \gets \sum_r\sum_{t} v_t a_{t,r}$
\State \algorithmicif\ $g \leq \tau$ \algorithmicthen\ $f(\bm{x}) = L$ \algorithmicelse\ $f(\bm{x}) = \bot$
    \end{algorithmic}
\end{algorithm}

\paragraph{Instability for Time Series Prediction.} We can further generalize the instability sequence used for regression to time series prediction problems by computing the regression score for all time points on the prediction horizon. In particular, we compute $a_{t,r} = ||f_{t}(\bm{x})_r - f_{T}(\bm{x})_r||$ for all $r \in \{1,\ldots,R\}$. Recall that for time series problems $f_{t}(\bm{x})$ returns a vector of predictions $y \in \mathbb{R}^R$ and we use the subscript $r$ on $f_{t}(\bm{x})_r$ to denote the vector indexing operation. Our selection function is then given by computing Equation~\ref{eq:score} for each $r$ and summing up the instabilities over the prediction horizon: $g(\bm{x}) = \sum_r\sum_{t} v_t a_{t,r}(\bm{x})$. The full algorithm therefore shares many conceptual similarities with Algorithm~\ref{alg:sptd_regr} and we provide the detailed algorithm as part of Algorithm~\ref{alg:sptd_ts}. Note that the presented generalization for time series is applicable to any setting in which the variability of predictions can be computed. As such, this formalism can extend to application scenarios beyond time series prediction such as object detection or segmentation.

\subsection{Selective Prediction and Forging}
\label{sec:forging}

\begin{contriback}
This subsection was written with Anvith Thudi. Both Stephan and Anvith jointly developed the max score and the sum score. Anvith provided details on the connection to forging as well as the formalism of Lemma~\ref{lem:prob_accept}.
\end{contriback}

While our \sptd method is primarily motivated from the example difficulty view point, we remark that the scores \sptd computes to decide which points to reject can be derived from multiple different perspectives. To showcase this, we provide a formal treatment on the connection between selective classification and forging~\citep{thudi2022necessity}, which ultimately leads to the same selection function $g(\cdot)$ as above.

Previous work has shown that running SGD on different datasets could lead to the same final model~\citep{hardt2016train,bassily2020stability,thudi2022necessity}. For example, this is intuitive when two datasets were sampled from the same distribution. We would then expect that training on either dataset should not significantly affect the model returned by SGD. For our selective prediction problem, this suggests an approach to decide which points the model is likely to predict correctly on: identify the datasets that it could have been trained on (in lieu of the training set it was actually trained on). Any point from the datasets the model could have trained on would then be likely to be predicted on correctly by the model. Recent work on forging \cite{thudi2022necessity} solves this problem of identifying datasets the model could have trained on by brute-force searching through different mini-batches to determine if a mini-batch in the alternative dataset can be used to reproduce one of the original training steps. Even then, this is only a sufficient condition to show a datapoint could have plausibly been used to train: if the brute-force fails, it does not mean the datapoint could not have been used to obtain the final model. As an alternative, we propose to instead characterize the optimization behaviour of training on a dataset as a probabilistic necessary condition, i.e, a condition most datapoints that were (plausibly) trained on would satisfy based on training dynamics. Our modified hypothesis is then that the set of datapoints we optimized for (which contains the forgeable points) coincides significantly with the set of points the model predicts correctly on.

\subsubsection{A Framework for Being Optimized}
\label{ssec:reject_cond}

In this section we derive an upper-bound on the probability that a datapoint could have been used to obtain the model's checkpointing sequence. This yields a probabilistically necessary (though not sufficient) characterization of the points we explicitly optimized for. This bound, and the variables it depends on, informs what we characterize as "optimizing" for a datapoint, and, hence, our selective classification methods.

Let us denote the set of all datapoints as $\mathcal{D}$, and let $D \subset \mathcal{D}$ be the training set. We are interested in the setting where a model $f$ is plausibly sequentially trained on $D$ (e.g., with stochastic gradient descent). We thus also have access to a sequence of $T$ intermediate states for $f$, which we denote $[f_1,\ldots, f_T]$. In this sequence, note that $f_T$ is exactly the final model $f$. 

Now, let $p_{t}$ represent the random variable for outputs on $D$ given by an intermediate model $f_t$ where the outputs have been binarized: we have $0$ if the output agrees with the final prediction and $1$ if not. In other words, $p_{t}$ is the distribution of labels given by first drawing $\bm{x} \sim D$ and then outputting $1- \delta_{f_t(\bm{x}),f_T(\bm{x})}$ where $\delta$ denotes the Dirac delta function. Note that we always have both a well-defined mean and variance for $p_{t}$ as it is bounded. Furthermore, we always have the variances and expectations of $\{p_{t}\}$ converge to $0$ with increasing $t$: as $p_{T} = 0$ always and the sequence is finite convergence trivially occurs. To state this formally, let $v_t = \mathbb{V}_{\bm{x} \sim D}[p_{t}]$ and let $e_t = \mathbb{E}_{\bm{x} \sim D}[p_{t}]$ denote the variances and expectations over points in $D$. In particular, we remark that $e_T=0,\ v_T = 0$, so both $e_t$ and $v_t$ converge. More formally, for all $ \epsilon > 0$ there exists an $N \in \{1,\ldots, T\}$ such that $v_t < \epsilon$ for all $t > N$. Similarly, for all $\epsilon > 0$ there exists a (possibly different) $N \in \{1,\ldots, T\}$ such that $e_t < \epsilon$ for all $t > N$.

However, the core problem is that we do not know how this convergence in the variance and expectation occurs. More specifically, if we knew the exact values of $e_t$ and $v_t$, we could use the following bound on the fraction of training data points producing a given $[a_1,\cdots,a_t]$ as a reject option for points that are not optimized for.  We consequently introduce the notation $[a_1,\ldots,a_T]$ where $a_t = 1- \delta_{f_t(\bm{x}),f_T(\bm{x})}$ which we call the "label disagreement (at $t$)". Note that the $a_t$ are defined with respect to a given input, while $p_t$ represent the distribution of $a_t$ over all inputs in $D$.

\begin{lemma}
\label{lem:prob_accept}
Given a datapoint $\bm{x}$,  let $\{a_1,\ldots,a_T\}$ where $a_t = 1- \delta_{f_t(\bm{x}),f_T(\bm{x})}$. Assuming not all $a_t = e_t$ then the probability $\bm{x} \in D$ is  $\leq \min_{v_t~s.t~a_t \neq e_t}  \frac{v_t}{|a_t - e_t|^2}$.
\end{lemma}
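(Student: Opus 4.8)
The plan is to derive the bound from a single application of Chebyshev's inequality to the random variable $p_t$, carried out at each index $t$ for which the observed disagreement $a_t$ differs from the training-set mean $e_t$. First I would fix such an index $t$; at least one exists by the hypothesis that not all $a_t$ equal $e_t$, which is also what guarantees that the minimum in the statement ranges over a non-empty set. Recall that $p_t$ is, by construction, the $\{0,1\}$-valued random variable obtained by drawing $Z \sim \mathrm{Unif}(D)$ and outputting $1 - \delta_{f_t(Z), f_T(Z)}$, so that $\mathbb{E}[p_t] = e_t$ and $\mathbb{V}[p_t] = v_t$. Interpreting the event ``$\bm{x} \in D$'' as $\{Z = \bm{x}\}$ under $Z \sim \mathrm{Unif}(D)$ --- an event of positive probability precisely when $\bm{x}$ is a training point --- the bridge to the population statistics is the event inclusion $\{Z = \bm{x}\} \subseteq \{p_t = a_t\}$, which holds because $Z = \bm{x}$ forces the disagreement of $Z$ at step $t$ to equal that of $\bm{x}$, namely $a_t$.

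The second step converts $\{p_t = a_t\}$ into a deviation event. Since $p_t$ and $a_t$ both lie in $\{0,1\}$, on the event $\{p_t = a_t\}$ we have $|p_t - e_t| = |a_t - e_t|$, so $\{p_t = a_t\} \subseteq \{\, |p_t - e_t| \geq |a_t - e_t| \,\}$. Applying Chebyshev's inequality to $p_t$ with the strictly positive deviation $|a_t - e_t|$ --- strictly positive precisely because $a_t \neq e_t$ --- yields $\Pr[\,|p_t - e_t| \geq |a_t - e_t|\,] \leq v_t / |a_t - e_t|^2$. Chaining the two inclusions gives $\Pr[\bm{x} \in D] \leq v_t / |a_t - e_t|^2$, and since this holds for every admissible $t$, taking the minimum over all $t$ with $a_t \neq e_t$ produces the stated bound.

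The computation itself is immediate, so the only real obstacle is the modeling step: pinning down what ``the probability $\bm{x} \in D$'' should mean and justifying that $\{Z = \bm{x}\} \subseteq \{p_t = a_t\}$ is the correct way to connect the per-point disagreement sequence $[a_1, \ldots, a_T]$ to the population-level quantities $(e_t, v_t)$. I would also note in passing that the bound is informative only when some $v_t$ is small compared with $|a_t - e_t|^2$ --- that is, when the training-set disagreement distribution has already concentrated by checkpoint $t$ while the query point still disagrees there --- which is exactly the regime the surrounding discussion exploits and which motivates the late-training emphasis built into the \sptd weighting.
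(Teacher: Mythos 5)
Your proposal is correct and follows essentially the same route as the paper: apply Chebyshev's inequality to the training-set disagreement variable $p_t$ at each index where $a_t \neq e_t$, noting that observing the particular disagreement $a_t$ is contained in the deviation event $\{|p_t - e_t| \geq |a_t - e_t|\}$, and then take the minimum over the resulting bounds. The one thing you add that the paper leaves implicit is the explicit modeling step identifying ``$\bm{x}\in D$'' with $\{Z=\bm{x}\}$ under $Z\sim\mathrm{Unif}(D)$ and chaining through $\{Z=\bm{x}\}\subseteq\{p_t=a_t\}$; this clarifies what probability is being bounded but does not change the argument.
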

\begin{proof}
By Chebyshev's inequality we have the probability of a particular sequence $\{a_1,\ldots,a_T\}$ occurring for a training point is $\leq \frac{v_t}{|a_t - e_t|^2}$ for every $t$ (a bound on any of the individual $a_t$ occurring as that event is in the event $|p_t - e_t| \geq |a_t - e_t|$ occurs). By taking the minimum over all these upper-bounds we obtain our upper-bound.
\end{proof}

We do not guarantee Lemma~\ref{lem:prob_accept} is tight. Though we do take a minimum to make it tighter, this is a minimum over inequalities all derived from Chebyshev's inequality\footnote{One could potentially use information about the distribution of points not in $D$ to refine this bound.}. Despite this potential looseness, using the bound from Lemma~\ref{lem:prob_accept}, we can design a na\"ive selective classification protocol based on the "optimized = correct (often)" hypothesis and use the above bound on being a plausible training datapoint as our characterization of optimization; for a test input $\bm{x}$, if the upper-bound on the probability of being a datapoint in $D$ is lower than some threshold $\tau$ reject, else accept. However, the following question prevents us from readily using this method: \emph{How do $\mathbb{E}[p_{t}]$ and $\mathbb{V}[p_{t}]$ evolve during training?}

\looseness=-1
To answer this question, we propose to examine how the predictions on plausible training points evolve during training. Informally, the evolution of $\mathbb{E}[p_{t}]$ represents knowing how often we predict the final label at step $t$, while the evolution of $\mathbb{V}[p_{t}]$ represents knowing how we become more consistent as we continue training. Do note that the performance of this optimization-based approach to selective classification will depend on how unoptimized incorrect test points are. In particular, our hypothesis is that incorrect points often appear sufficiently un-optimized, yielding distinguishable patterns for $\mathbb{E}[p_{t}]$ and $\mathbb{V}[p_{t}]$ when compared to optimized points. We verify this behavior in Section~\ref{sec:emp_eval} where we discuss the distinctive label evolution patterns of explicitly optimized, correct, and incorrect datapoints.

\subsubsection{Last Disagreement Model Score For Discrete Prediction (\smax)}
\label{ssec:min_score}

Here, we propose a selective classification approach based on characterizing optimizing for a datapoint based off of Lemma~\ref{lem:prob_accept}. Recall the bound given in Lemma~\ref{lem:prob_accept} depends on expected values and variances for the $p_t$ (denoted $e_t$ and $v_t$ respectively). In Section~\ref{sec:emp_eval} we observe that $e_t$ quickly converge to $0$, and so by assuming $e_t = 0$ always\footnote{We tried removing this assumption and observed similar performance.} the frequentist bound on how likely a datapoint is a training point becomes $\min_{t~s.t~a_t = 1} \frac{v_t}{|a_t - e_t|^2}= \min_{t~s.t~a_t = 1}v_t$. Using this result for selective classification, we would impose acceptance if $\min_{t~s.t~a_t = 1}v_t \geq \tau$. Moreover, in Section~\ref{sec:emp_eval}, we further observe that $v_t$ monotonically decreases in a convex manner (after an initial burn-in phase). Hence, imposing $\min_{t~s.t~a_t = 1}v_t \geq \tau$ simply imposes a last checkpoint that can have a disagreement with the final prediction.

Based on these insights, we propose the following selective classification score: $s_{\max} = \max_{t~s.t~a_t = 1} \frac{1}{v_t}$. Note that this score directly follows from the previous discussion but flips the thresholding direction from $\min_{t~s.t~a_t = 1}v_t \geq \tau$ to $\max_{t~s.t~a_t = 1} \frac{1}{v_t} \leq \tau$ for consistency with the anomaly scoring literature~\citep{ruff2018deep}. Finally, we choose to approximate the empirical trend of $v_t$ as observed in Section~\ref{sec:emp_eval} with $v_t = 1 - t^k$ for $k \in [1,\infty)$. Based on the choice of $k$, this approximation allows us to (i) avoid explicit estimation of $v_t$ from validation data; and (ii) enables us to flexibly specify how strongly we penalize model disagreements late in training.

Hence, our first algorithm for selective classification is:
\begin{enumerate}
    \item Denote $L = f_T(\bm{x})$, i.e. the label our final model predicts.
    \item If $\exists t~s.t~a_t =1$ then compute $s_\text{max} = \max_{t~s.t~a_t = 1} \frac{1}{v_t}$ as per the notation in Section~\ref{ssec:reject_cond} (i.e $a_t = 1$ iff $f_t(x) \neq L$), else accept $\bm{x}$ with prediction $L$.
    \item If $s_\text{max} \leq \tau$ accept $\bm{x}$ with prediction $L$, else reject ($\bot$).
\end{enumerate}
Note once again, as all our candidate $\frac{1}{v_t}$ increase, the algorithm imposes a last intermediate model which can output a prediction that disagrees with the final prediction: hereafter, the algorithm must output models that consistently agree with the final prediction.

\subsubsection{Overall Disagreement Model Score (\ssum)}
\label{ssec:avg_score}

Note that the previous characterization of optimization, defined by the score \smax, could be sensitive to stochasticity in training and hence perform sub-optimally. That is, the exact time of the last disagreement, which \smax relies on, is subject to high noise across randomized training runs. In light of this potential limitation we propose the following "summation" algorithm which computes a weighted sum over training-time disagreements to get a more consistent statistic. Do note that typically to get a lower-variance statistic one would take an average, but multiplying by scalars can be replaced by correspondingly scaling the threshold we use. Hence, our proposed algorithm is:

\begin{enumerate}
     \item Denote $L = f_T(\bm{x})$, i.e. the label our final model predicts.
     \item If $\exists t~s.t~a_t =1$, compute $s_\text{sum} = \sum_{t=1}^T \frac{a_t}{v_t} $, else accept $\bm{x}$ with prediction $L$. 
     \item If $s_\text{sum} \leq \tau$ accept $\bm{x}$ with prediction $L$, else reject ($\bot$).
\end{enumerate}

Recalling our previous candidates for $v_t$, we have the \ssum places higher weight on late disagreements. This gives us a biased average of the disagreements which intuitively approximates the expected last disagreement but now is less susceptible to noise. More generally, this statistic allows us to perform selective classification by utilizing information from all the disagreements during training. In Appendix~\ref{sec:max_v_sum}, we experimentally show that \ssum leads to more robust selective classification results compared to \smax. \textbf{We remark that the sum score \ssum corresponds exactly to our score $g(\cdot)$ proposed as part of \sptd (recall Equation~\ref{eq:score} from Section~\ref{sec:method_overview}), showcasing the strong connection of our method to forging.}

\section{Why Training Dynamics Encode Reliable Uncertainty}
\label{sec:sptd_theory}
 
The preceding section introduced \sptd\ and formalized its weighted‐instability score~\eqref{eq:score}. Before turning to the empirical evaluation, we pause to explain \emph{why} late–stage checkpoint disagreement should correlate with predictive risk in the first place.  
The argument links stochastic optimization to Bayesian posterior sampling and shows how both epistemic and aleatoric factors manifest as prolonged instability during training.  
This theoretical perspective also clarifies when \sptd\ (and closely related Deep Ensembles) may fail, guiding practical deployment.

\subsection{From Optimization Noise to Predictive Risk}

Modern views of stochastic gradient descent (SGD) treat its iterates as samples from a \emph{tempered Bayesian posterior} \citep{mandt2017stochastic,zhang2019cyclical}.  
Intuitively, gradient noise enables exploration of multiple loss basins that fit the data nearly equally well.  
For an input~$\bm{x}$, persistent disagreement among late checkpoints therefore indicates that \emph{several plausible models disagree}—a hallmark of uncertainty.  
The weighted instability score $g(\bm{x})$ integrates this disagreement and thus acts as a proxy for posterior predictive variance.

\subsection{Two Complementary Sources of Uncertainty}

To make the connection precise, we distinguish two phenomena that raise predictive risk.  
First, we introduce them conceptually; the next subsection shows how both surface in checkpoint behaviour.

\begin{itemize}
  \item \textbf{Aleatoric (data) uncertainty} — irreducible label noise (Bayes error) or class overlap that no model can resolve.
  \item \textbf{Epistemic (model) uncertainty} — uncertainty about model parameters caused by finite data or limited model capacity.
\end{itemize}

Checkpoint disagreement captures \emph{both}.  
Aleatoric noise keeps SGD oscillating because successive mini-batches pull the model toward conflicting optima, while epistemic uncertainty reflects diffuse posterior mass that SGD has not collapsed into a single mode.

\subsection{Checkpoint Disagreement as Posterior Variance}

Let $Z_t(\bm{x}) = f_t(\bm{x}) - f_T(\bm{x})$ denote the prediction change between checkpoint~$t$ and the final model.  
Under the SGD-as-sampler view,
\begin{equation}
  \operatorname{Var}_{t}\bigl[Z_t(\bm{x})\bigr]
  \;\approx\;
  \operatorname{Var}_{\theta\sim p(\theta\mid D)}\bigl[f_{\theta}(\bm{x})\bigr].
\end{equation}
The instability score
$
  g(\bm{x})=\sum_{t=1}^{T} v_t\,|Z_t(\bm{x})|
$
therefore estimates a \emph{weighted tail integral} of this posterior variance, emphasizing late-training variance that has not yet collapsed.  
Deep Ensembles approximate the same quantity by Monte-Carlo over random restarts; \sptd\ does so \emph{within one single training run}.

\begin{figure}[t]
    \centering
    \includegraphics[width=\linewidth]{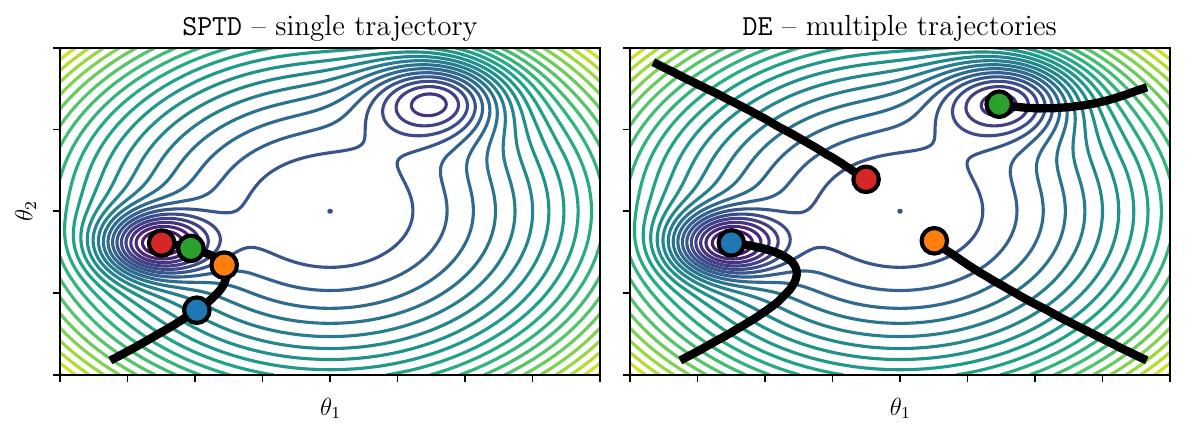}
    \caption{\textbf{Illustrative comparison of \sptd and \de on a toy 2-D loss landscape}. \textit{Left}: a single SGD trajectory (black) descending into the bottom-left well, with four late-stage checkpoints highlighted in color to capture localized instability. \textit{Right}: four independent SGD runs (black) converging to multiple distinct minima, each marked by its final checkpoint (colored) to approximate global posterior modes.
}
    \label{fig:sptd_de_traj}
\end{figure}

\subsection{Deep Ensembles as Monte-Carlo Posterior Estimators}
\label{ssec:deep_ensembles_theory}

A complementary—and historically more popular—route to posterior variance is the \emph{Deep Ensemble} (\de) paradigm \citep{lakshminarayanan2017simple}.  
Here we train $M$ models $\{f_{\theta^{(m)}}\}_{m=1}^{M}$ from \emph{independent random initializations} (or data shuffles) and treat their empirical distribution as a Monte-Carlo approximation of the Bayesian posterior:
\begin{equation}
  \hat{p}_{\textsc{de}}(\theta)
  \;=\;
  \frac1M\sum_{m=1}^{M}\delta_{\theta^{(m)}}, 
  \quad\Longrightarrow\quad
  \operatorname{Var}_{\hat{p}_{\textsc{de}}}\bigl[f_{\theta}(\bm{x})\bigr]
  \;=\;
  \frac1M\sum_{m}\Bigl\lVert
      f_{\theta^{(m)}}(\bm{x})
      - \bar{f}(\bm{x})
  \Bigr\rVert^{2},
\end{equation}
with $\bar{f}(\bm{x})$ the ensemble mean.  
This variance estimator carries the same uncertainty semantics as the checkpoint-instability score $g(\bm{x})$, but differs in \emph{where} the Monte-Carlo samples come from:

\begin{center}
\begin{tabular}{p{0.23\linewidth}p{0.34\linewidth}p{0.34\linewidth}}
\toprule
 & \textbf{Sampling mechanism} & \textbf{Typical computational cost} \\
\midrule
\textbf{Deep Ensembles} & Independent SGD runs sample \emph{between} basins. & $\mathcal{O}(M)$ training + $\mathcal{O}(M)$ inference.\\[2pt]
\textbf{Selective Prediction Training Dyanmics} & One SGD run, checkpoints sample \emph{within} a basin. & $\mathcal{O}(1)$ training\footnotemark{} + $\mathcal{O}(T)$ inference ($T\le50$).\\
\bottomrule
\end{tabular}
\end{center}
\footnotetext{Aside from negligible checkpoint storage, \sptd\ does not increase training time.}

\paragraph{When do the two (dis-)agree?}  
If the posterior is \emph{multi-modal} (several distant basins with comparable posterior mass), \de captures variance \emph{between} modes, whereas checkpoints mainly capture \emph{local} variance inside one basin. We illustrate this behavior in Figure~\ref{fig:sptd_de_traj}. It is also possible to combine both approaches (\sptdde) which yields a near-superset of posterior support. We explore this choice in our experiments in Section~\ref{sec:emp_eval}. Together, checkpoints and ensembles represent two sides of the same Bayesian coin—local versus global exploration of the posterior landscape.  Understanding their relationship clarifies why their judicious combination delivers the best empirical coverage–utility balance.

\subsection{Limitations and Practical Implications}

While the argument above justifies the use of training dynamics, several caveats deserve attention:

\begin{itemize}
  \item \textbf{Diminished model diversity.}  Extremely over-parameterized networks or aggressive learning-rate schedules can lead SGD to converge to almost identical solutions, weakening the disagreement signal.  In such cases one can (i)~retain more checkpoints, (ii)~restart training with different data shuffles, or (iii)~combine \sptd\ with lightweight ensemble members.
  \item \textbf{Curriculum or non-stationary data.}  If the data distribution changes during training, early checkpoints reflect a different task than later ones.  Fixed data ordering or replay buffers help ensure that disagreement genuinely reflects uncertainty rather than curriculum artefacts.
  \item \textbf{Calibration needs.}  The raw instability scores are \emph{rank-consistent} with risk but need post-hoc calibration (e.g.\ isotonic regression) when probabilistic coverage guarantees are required.
\end{itemize}

Having established the theoretical underpinnings and practical scope of \sptd, we next evaluate its performance across classification, regression, and forecasting tasks.

\section{Empirical Evaluation}
\label{sec:emp_eval}

We present a comprehensive empirical study demonstrating the effectiveness of \sptd across domains. Our results show that computing and thresholding the proposed weighted instability score from \sptd provides a strong score for selective classification, regression, and time series prediction.

\subsection{Classification}

\paragraph{Key Research Goals.} As part of our experiments we:
\begin{itemize}
    \item Study the accuracy/coverage trade-off with comparison to past work, showing that \sptd outperforms existing work.
    \item Present exemplary training-dynamics-derived label evolution curves for individual examples from all datasets.
    \item Examine our method's sensitivity to the checkpoint selection strategy and the weighting parameter~$k$.
    \item Evaluate the detection performance of out-of-distribution and adversarial examples, showing that \sptd can be applied beyond the i.i.d. assumption of selective prediction.
    \item Provide a detailed cost vs performance tradeoff of \sptd and competing SP methods.
    \item Analyze distributional training dynamics patterns of both correct and incorrect data points, the separation of which enables performative selective classification.
\end{itemize} 

\paragraph{Datasets \& Training.} We evaluate \sptd on vision benchmarks that are common in the selective classification literature: CIFAR-10/CIFAR-100~\citep{krizhevsky2009learning}, StanfordCars~\citep{krause20133d}, and Food101~\citep{bossard14}. For each dataset, we train a deep neural network following the ResNet-18 architecture~\citep{he2016deep} and checkpoint each model after processing $50$ mini-batches of size $128$. All models are trained over $200$ epochs ($400$ epochs for StanfordCars) using the SGD optimizer with an initial learning rate of $10^{-2}$, momentum $0.9$, and weight decay $10^{-4}$. Across all datasets, we decay the learning rate by a factor of $0.5$ in $25$-epoch intervals.

\paragraph{Baselines.} We compare our method (\sptd) to common SC techniques previously introduced in Section~\ref{sec:background_sptd}: Softmax Response (\sr) and Self-Adaptive Training (\sat). Based on recent insights from~\cite{feng2023towards}, we (i) train \sat with additional entropy regularization\footnote{This entropy regularization step is designed to encourage the model to be more confident in its predictions.}; and (ii) derive \sat's score by applying Softmax Response (\sr) to the underlying classifier (instead of thresholding the abstention class). We refer to this method as \satersr. We do not include results for SelectiveNet, Deep Gamblers, or Monte-Carlo Dropout as previous works~\citep{huang2020self,feng2023towards} have shown that \fixed{\satersr} strictly dominates these methods. In contrast to recent SC works, we do however include results \fixed{with} Deep Ensembles (\de)~\citep{balaji2017uncertainty}, a relevant baseline from the uncertainty quantification literature. Our hyper-parameter tuning procedure is documented in Appendix~\ref{app:baseline_hyperparams}.

\paragraph{Accuracy/Coverage Trade-off.} Consistent with standard evaluation schemes for selective classification, our main experimental results examine the accuracy/coverage trade-off of \sptd. We present our performance results with comparison to past work in Table~\ref{tab:target_cov} where we demonstrate \sptd's effectiveness on CIFAR-10, CIFAR-100, StanfordCars, and Food101. We document the results obtained by \sptd, \sat, \sr, and \de across the full coverage spectrum. We see that \sptd outperforms both \sat and \sr and performs similarly as \de. To further boost performance across the accuracy/coverage spectrum, we combine \sptd and \de by applying \sptd on each ensemble member from \de and then average their scores. More concretely, we estimate $\sptdde = \frac{1}{m}\sum_{m=1}^M \sptd_m$ where $\sptd_m$ computes $g$ on each ensemble member $m\in[M]$. This combination leads to new state-of-the-art selective classification performance and showcases that \sptd can be flexibly applied on top of established training pipelines. 
\newlyadded{
Further evidence towards this flexibility is provided in Appendix~\ref{sec:sptd_on_sat} where we show that applying \sptd on top of \sat also improves performance.
}

\begin{figure}[t]
\vspace{-5pt}
\centering

\begin{subfigure}[b]{0.23\linewidth}
  \centering
  \includegraphics[width=\linewidth]{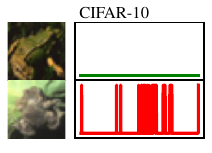}
  \caption{CIFAR-10}
\end{subfigure}
\hfill
\begin{subfigure}[b]{0.23\linewidth}
  \centering
  \includegraphics[width=\linewidth]{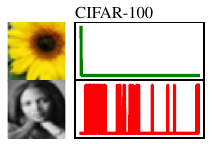}
  \caption{CIFAR-100}
\end{subfigure}
\hfill
\begin{subfigure}[b]{0.23\linewidth}
  \centering
  \includegraphics[width=\linewidth]{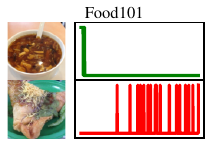}
  \caption{Food-101}
\end{subfigure}
\hfill
\begin{subfigure}[b]{0.23\linewidth}
  \centering
  \includegraphics[width=\linewidth]{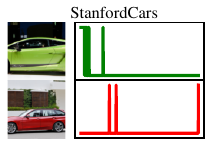}
  \caption{Stanford Cars}
\end{subfigure}

\caption[\textbf{Most characteristic examples across datasets.}]{\textbf{Most characteristic examples across datasets.} For each dataset, we show the samples with the most stable and most unstable (dis-)agreement with the final label along with their corresponding $a_t$ indicator function. Correct points are predominantly characterized by disagreements early in training while incorrect points change their class label throughout (but importantly close to the end of) training. We provide additional examples from all datasets in Figure~\ref{fig:indiv_ex_ext}.}
\label{fig:indiv_ex}
\end{figure}

\paragraph{Individual Evolution Plots.} To analyze the effectiveness of our disagreement metric proposed in Section~\ref{sec:method}, we examine the evolution curves of our indicator variable $a_t$ for individual datapoints in Figure~\ref{fig:indiv_ex}. In particular, for each dataset, we present the most stable and the most unstable data points from the test sets and plot the associated label disagreement metric $a_t$ over all checkpoints. We observe that easy-to-classify examples only show a small degree of oscillation while harder examples show a higher frequency of oscillations, especially towards the end of training. This result matches our intuition: our model should produce correct decisions on data points whose prediction is mostly constant throughout training and should reject data points for which intermediate models predict inconsistently. Moreover, as depicted in Figure~\ref{fig:scores}, we also show that our score $g(\cdot)$ yields distinct distributional patterns for both correctly and incorrectly classified points. This separation enables strong coverage/accuracy trade-offs via our thresholding procedure.

\begin{figure*}[t]
  \centering
  \includegraphics[width=\linewidth]{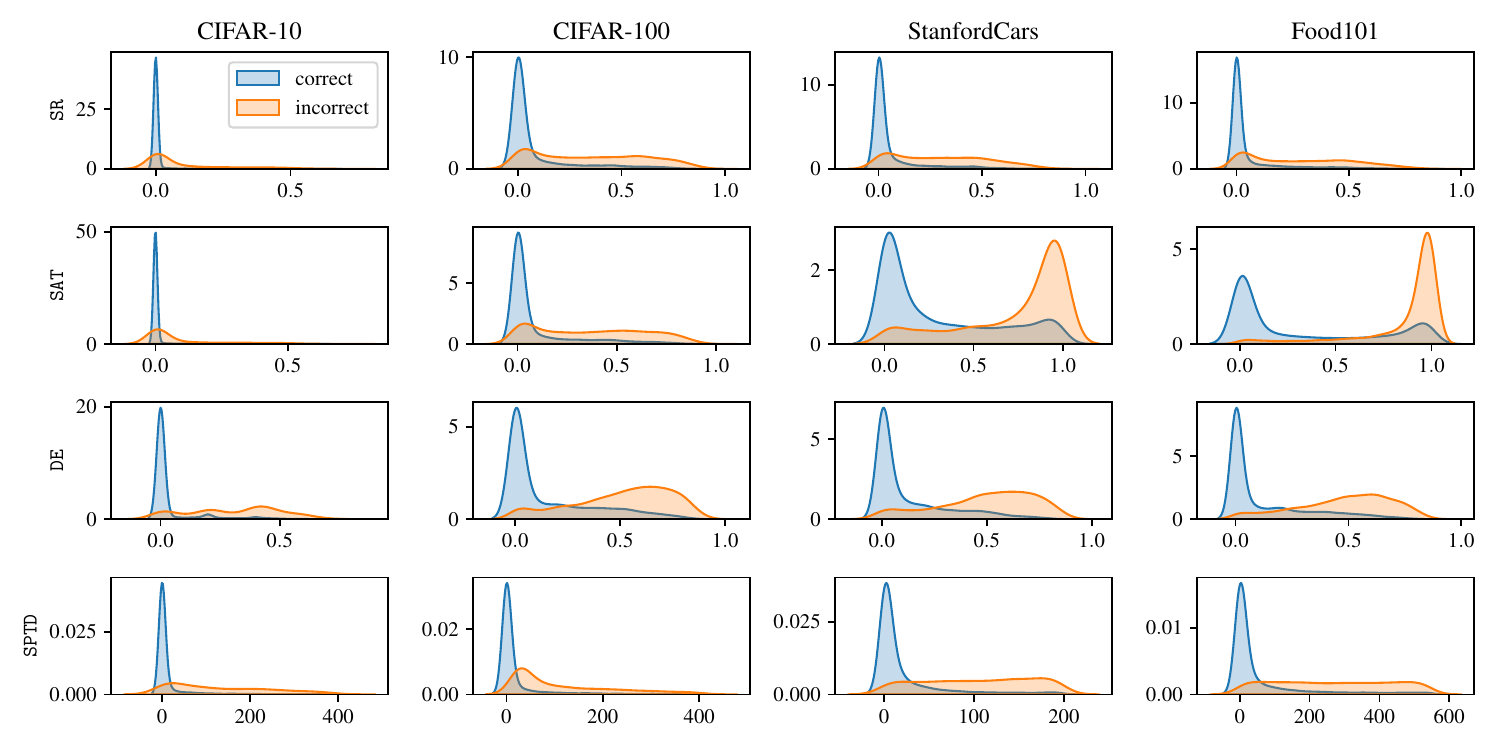}
\caption[Distribution of $g$ for different datasets and selective classification methods.]{\textbf{Distribution of $g$ for different datasets and selective classification methods.}  Since all methods are designed to address the selective prediction problem, they all manage to separate correct from incorrect points (albeit at varying success rates). We see that \sptd spreads the scores for incorrect points over a wide range with little overlap. We observe that for \sr, incorrect and correct points both have their mode at approximately the same location which hinders performative selective classification. Although \sat and \de show larger bumps at larger score ranges, the separation with correct points is weaker as correct points also result in higher scores more often than for \sptd. }
\label{fig:scores}
\end{figure*}

\begin{figure*}[t]
  \centering
  \includegraphics[width=\linewidth]{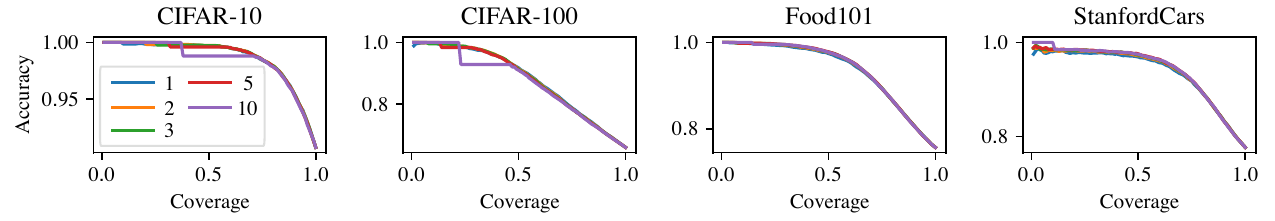}
\caption[Coverage/error trade-off of \texttt{SPTD} for varying checkpoint weighting $k$ as used in $v_t$.]{\textbf{Coverage/error trade-off of \texttt{SPTD} for varying checkpoint weighting $k$ as used in $v_t$.} We observe strong performance for $k \in [1,3]$ across datasets.
}
\label{fig:weighting}
\end{figure*}

\paragraph{Checkpoint Weighting Sensitivity.} One important hyper-parameter of our method is the weighting of intermediate predictions. Recall from Section~\ref{sec:method} that \sptd approximates the expected stability for correctly classified points via a weighting function $v_t = (\frac{t}{T})^k$. In Figure~\ref{fig:weighting} in the Appendix, we observe  that \sptd is robust to the choice of $k$ and that \fixed{$k \in [1,3]$} performs best. At the same time, we find that increasing $k$ too much leads to a decrease in accuracy at medium coverage levels. This result emphasizes that (i)~large parts of the training process contain valuable signals for selective classification; and that (ii)~early label disagreements arising at the start of optimization should be de-emphasized by our method.

\begin{figure*}[t]
  \centering
  \includegraphics[width=\linewidth]{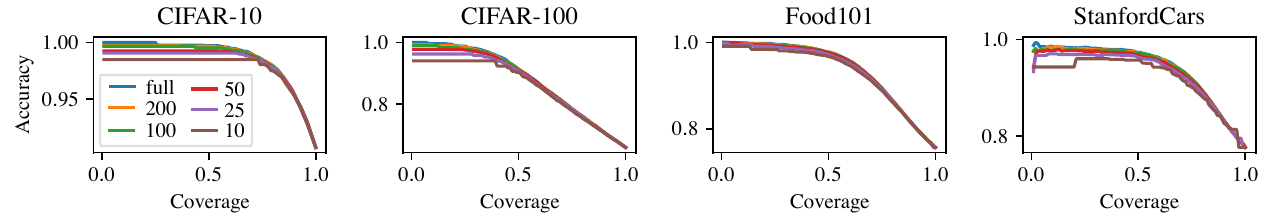}
\caption[Coverage/error trade-off of \texttt{SPTD} for varying checkpoint counts.]{\textbf{Coverage/error trade-off of \texttt{SPTD} for varying checkpoint counts}. \sptd delivers consistent performance independent of the checkpointing resolution at high coverage. At low coverage, a more detailed characterization of training dynamics helps.
}
\label{fig:resolution}
\end{figure*}

\paragraph{Checkpoint Selection Strategy.} The second important hyper-parameter of our method is the checkpoint selection strategy. In particular, to reduce computational cost, we study the sensitivity of \sptd with respect to the checkpointing resolution in Figure~\ref{fig:resolution}. Our experiments demonstrate favorable coverage/error trade-offs between $25$ and $50$ checkpoints when considering the full coverage spectrum. However, when considering the high coverage regime in particular (which is what most selective prediction works focus on), even sub-sampling $10$ intermediate models is sufficient for SOTA selective classification. Hence, with only observing the training stage, our method's computational overhead reduces to only $10$ forward passes at test time when the goal is to reject at most $30\%-50\%$ of incoming data points. In contrast, \de requires to first train $E$ models (with $E=10$ being a typical and also our particular choice for \de) and perform inference on these $E$ models at test time. Further increasing the checkpointing resolution does offer increasingly diminishing returns but also leads to improved accuracy-coverage trade-offs, especially at low coverage.

\begin{figure*}[t]
  \centering
  \includegraphics[width=\linewidth]{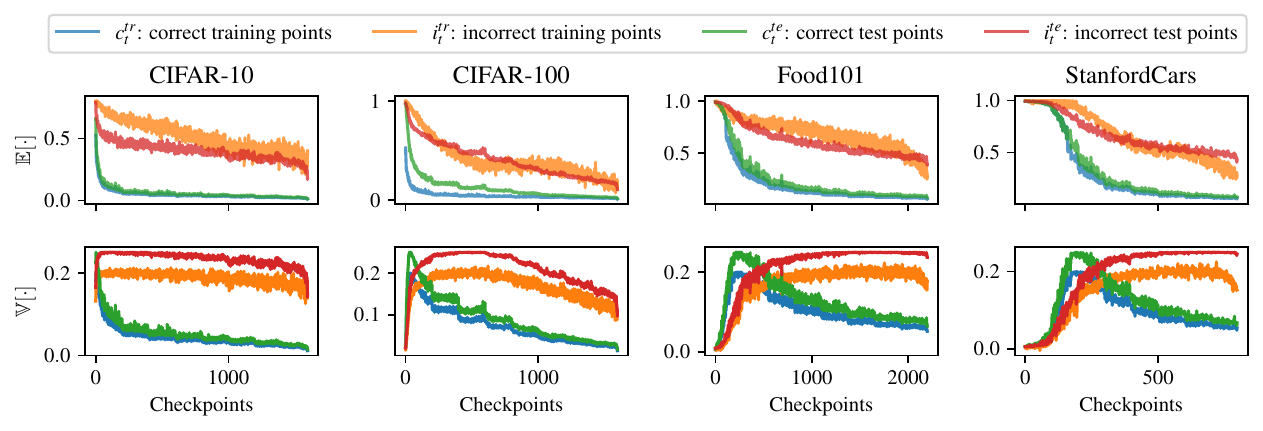}
\caption[Monitoring expectations and variances for correct/incorrect training and test points.]{\textbf{Monitoring expectations $\mathbb{E}[\cdot]$ and variances $\mathbb{V}[\cdot]$ for correct/incorrect training and test points}. We observe that correctly classified points (cold colors) have both their expectations and variances quickly decreasing to 0 as training progresses. Incorrectly classified points (warm colors) both exhibit large expectations and variances and stay elevated over large periods.}
\label{fig:exp_var_trends}
\end{figure*}

\paragraph{Examining the Convergence Behavior of Training and Test Points.}

The effectiveness of \sptd relies on our hypothesis that correctly classified points and incorrectly classified points exhibit distinct training dynamics. We verify this hypothesis in Figure~\ref{fig:exp_var_trends} where we examine the convergence behavior of the disagreement distributions of correct ($c^\text{tr}_t$) / incorrect ($i^\text{tr}_t$) training and correct ($c^\text{te}_t$) / incorrect ($i^\text{te}_t$) test points. We observe that the expected disagreement for both correctly classified training $c^\text{tr}_t$ and test points $c^\text{te}_t$ points converge to $0$ over the course of training. The speed of convergence is subject to the difficulty of the optimization problem with more challenging datasets exhibiting slower convergence in predicted label disagreement. We also see that the variances follow an analogous decreasing trend. This indicates that correctly classified points converge to the final label quickly and fast convergence is strongly indicative of correctness. Furthermore, the overlap suggests that correct test points are more likely to be forgeable as their dynamics look indistinguishable to correct training points (recall Section~\ref{sec:forging} on the connection between our method and forging). In contrast, incorrectly classified points $i^\text{tr}_t$ and $i^\text{te}_t$ show significantly larger mean and variance levels. This clear separation in distributional evolution patterns across correct and incorrect points leads to strong selective prediction performance in our \sptd framework.

\label{sec:ts_exp}
\begin{figure*}[t]
    \centering
    \includegraphics[width=\linewidth]{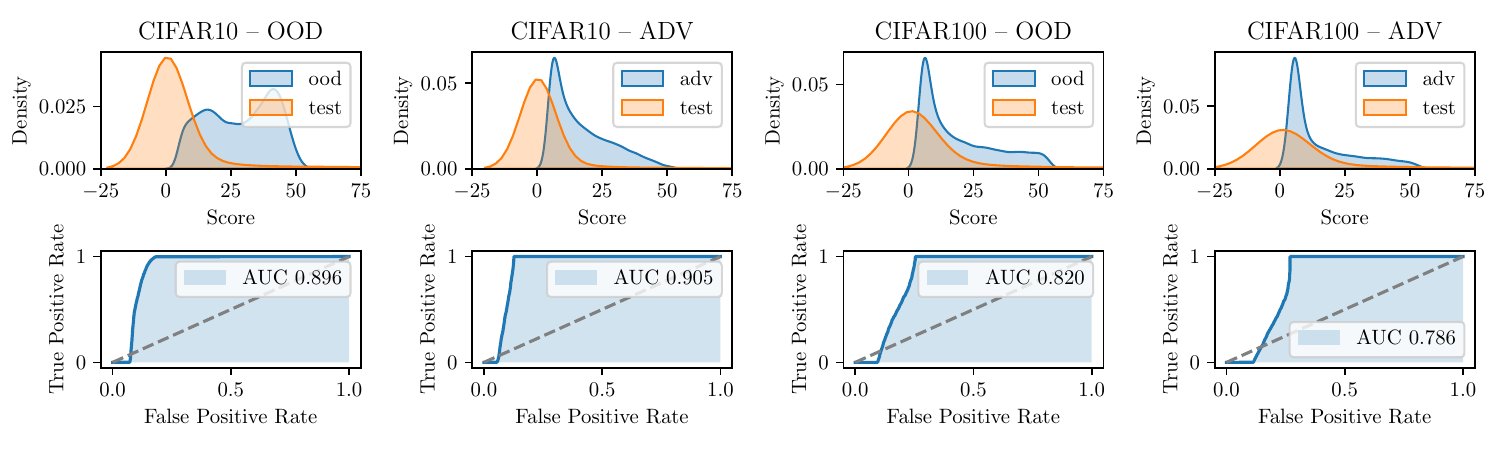}
    \caption[Performance of \sptd on out-of-distribution (OOD) and adversarial sample detection.]{\textbf{Performance of \sptd on out-of-distribution (OOD) and adversarial sample detection}. The first row shows the score distribution of the in-distribution CIFAR-10/100 test set vs the SVHN OOD test set or a set consisting of adversarial samples generated via a PGD attack in the final model. The second row shows the effectiveness of a thresholding mechanism by computing the area under the ROC curve. Our score enables separation of anomalous data points from in-distribution test points.}
    \label{fig:adv_ood}
\end{figure*}

\paragraph{Detection of Out-of-Distribution and Adversarial Examples.}

Out-of-distribution (OOD) and adversarial example detection are important disciplines in trustworthy ML related to selective prediction. We therefore provide preliminary evidence in Figure~\ref{fig:adv_ood} that our method can be used for detecting OOD and adversarial examples. While these results are encouraging, we remark that adversarial and OOD samples are less well defined as incorrect data points and can come in a variety of different flavors (\ie various kinds of attacks or various degrees of OOD-ness). As such, we strongly believe that future work is needed to determine whether a training-dynamics-based approach to selective prediction can be reliably used for OOD and adversarial sample identification. 

\paragraph{Cost vs Performance Tradeoff.} 

In Table~\ref{tab:cost}, we report both the time and space complexities for all SC methods at training and test time along with their selective classification performance as per our results in Table~\ref{tab:target_cov} and Figure~\ref{fig:resolution}. We denote with $E$ the number of  \de models and with $T$ the number of \sptd checkpoints. Although \sr and \sat are the cheapest methods to run, they also perform the poorest at SC. \sptd is significantly cheaper to train than \de and achieves competitive performance at $T \approx E$. Although \sptdde is the most expensive model, it also provides the strongest performance.

\begin{table}[ht]
\tabcolsep=0.12cm
\small
    \centering 
     \begin{tabular}{cccccc} 
     \toprule
     Method & Train Time & Train Space & Inf Time & Inf Space & Rank \\ 
     \midrule
     \sr & $O(1)$ & $O(1)$ & $O(1)$ & $O(1)$ & 5 \\
     \sat & $O(1)$ & $O(1)$ & $O(1)$ & $O(1)$ & 4 \\
     \de & $O(E)$ & $O(E)$ & $O(E)$ & $O(E)$ & =2 \\
     \sptd & $O(1)$ & $O(T)$ & $O(T)$ & $O(T)$ & =2 \\
     \sptdde & $O(E)$ & $O(ET)$ & $O(ET)$ & $O(ET)$ & 1\\ 
     \bottomrule
    \end{tabular}
    \caption[Cost vs performance tradeoff in terms of training time/space, inference time/space and the performance rank.]{\textbf{Cost vs performance tradeoff in terms of training time/space, inference time/space and the performance rank.} \sptd is comparable in performance (at $T \approx E$) and cheaper to train than \de. \sptdde is the most expensive model, but delivers the best performance across datasets.}
    \label{tab:cost}
\end{table}

\begin{figure*}[t]
    \centering
    \includegraphics[width=0.97\linewidth]{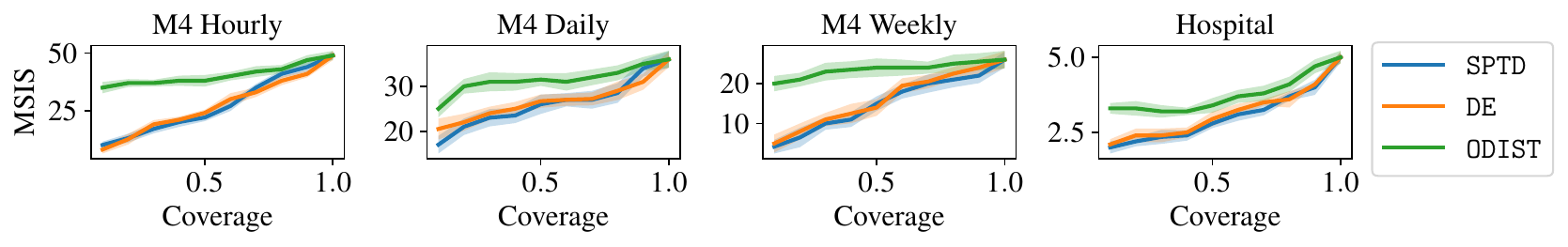}
    \caption[MSIS/coverage trade-off across various time series prediction datasets.]{\textbf{MSIS/coverage trade-off across various time series prediction datasets}. \sptd offers comparable performance to \de but provides improved results at low coverage.}
    \label{fig:ts}
\end{figure*}

\subsection{Regression Experiments}
\label{sec:regr_exp}

\begin{figure*}[t]
    \centering
    \includegraphics[width=0.97\linewidth]{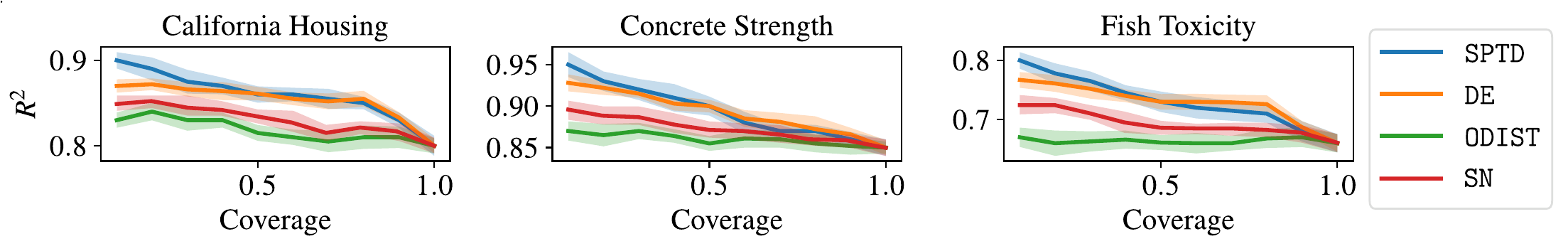}
    \caption[$R^2$/coverage trade-off across various regression datasets.]{\textbf{$R^2$/coverage trade-off across various regression datasets}. \sptd offers comparable performance to \de but provides improved results at low coverage.}
    \label{fig:regr}
\end{figure*}

\paragraph{Datasets.} Our experimental suite for regression considers the following datasets: %\anvith{any reference for why these datasets?}: the 
California housing dataset~\citep{pace1997sparse} ($N=20640$, $D=8$), the concrete strength dataset~\citep{misc_concrete_compressive_strength_165} ($N=1030$, $D=9$), and the fish toxicity dataset~\citep{misc_qsar_fish_toxicity_504} ($N=546$, $D=9$). 

\paragraph{Model Setup \& Baselines.} We split all datasets into $80\%$ training and $20\%$ test sets after a random shuffle. Then, we train a fully connected neural network with layer dimensionalities $D \rightarrow 10 \rightarrow 7 \rightarrow 4 \rightarrow 1$. Optimization is performed using full-batch gradient descent using the Adam optimizer with learning rate $10^{-2}$ over $200$ epochs and weight decay $10^{-2}$. We consider the following baseline methods for rejecting input samples: (i) approximating the predictive variance using deep ensembles (\de) \citep{balaji2017uncertainty, zaoui2020regression}; (ii) SelectiveNet (\sn) which explicitly optimizes utility given a desired coverage constraint; and (iii) training the model with a Gaussian parametric output distribution (\odist) via maximum likelihood maximization \citep{alexandrov2019gluonts}.

\paragraph{Main Results.} We document our results in Figure~\ref{fig:regr}. We see that the \odist only delivers subpar results (likely due to mis-calibration) and does not provide a meaningful signal for selective prediction. On the other hand, \de and \sptd perform comparably with \sptd outperforming \de at low coverage. We stress again that \sptd's training cost is significantly cheaper than \de's while matching the inference-time cost when sub-sampling a reduced set of checkpoints. 

\subsection{Time Series Experiments}
\label{sec:ts_exp}

\paragraph{Datasets.} As part of our time series experiments, we mainly consider the M4 forecasting competition dataset~\citep{makridakis2020m4} which contains time series aggregated at various time intervals (\eg hourly). In addition, we also provide experimentation on the Hospital dataset~\citep{hyndman2015expsmooth}.

\paragraph{Models \& Setup.} Our experimentation is carried out using the GluonTS time series framework~\citep{alexandrov2019gluonts} and the DeepAR model \citep{salinas2020deepar}, a recurrent neural network designed for time series forecasting. We train all models over 200 epochs and evaluate performance using the mean scaled interval score (MSIS) performance metric~\citep{makridakis2020m4}. Our baselines correspond to the same as presented for regression in Section~\ref{sec:regr_exp}: deep ensembles (\de), and output parameterization using a Student-t distribution (\odist).

\paragraph{Main Results.} Our time series results are shown in Figure~\ref{fig:ts} and are consistent with our results for regression: \odist does not provide a meaningful signal for selective prediction while \sptd and \de perform similarly well. \sptd further improves results over \de at low converge.

\begin{table*}[h!]
\fontsize{7.5}{10}\selectfont
\tabcolsep=0.2cm
    \centering {
    \begin{tabular}{ccccccc}
\toprule
&  Coverage &       \sr &       \fixed{\satersr} &      \de &      \sptd &        \sptdde \\
\midrule
 \multirow{10}{*}{\rotatebox[origin=c]{90}{\textit{CIFAR-10}}} &       100 &  \underline{\bfseries 92.9 (±0.0)} & \underline{\bfseries 92.9 (±0.0)} & \bfseries 92.9 (±0.0) & \underline{\bfseries 92.9 (±0.0)} & \bfseries 92.9 (±0.1) \\
  &        90 &  \underline{96.4 (±0.1)} &  96.3 (±0.1) &  \bfseries 96.8 (±0.1) &  \underline{96.5 (±0.0)} &  \bfseries 96.7 (±0.1) \\
  &        80 &  98.1 (±0.1) &  98.1 (±0.1) &  \bfseries 98.7 (±0.0) &  \underline{98.4 (±0.1)} &  \bfseries 98.8 (±0.1) \\
  &        70 &  98.6 (±0.2) &  99.0 (±0.1) &  \bfseries 99.4 (±0.1) &  \underline{99.2 (±0.0)} &  \bfseries 99.5 (±0.0) \\
  &        60 &  98.7 (±0.1) &  99.4 (±0.0) &  99.6 (±0.1) &  \underline{\bfseries 99.6 (±0.2)} &  \bfseries 99.8 (±0.0) \\
  &        50 &  98.6 (±0.2) &  \underline{99.7 (±0.1)} &  99.7 (±0.1) &  \underline{99.8 (±0.0)} &  \bfseries 99.9 (±0.0) \\
  &        40 &  98.7 (±0.0) &  \underline{99.7 (±0.0)} &  99.8 (±0.0) &  \underline{99.8 (±0.1)} & \bfseries 100.0 (±0.0) \\
  &        30 &  98.5 (±0.0) &  \underline{99.8 (±0.0)} &  99.8 (±0.0) &  \underline{99.8 (±0.1)} & \bfseries 100.0 (±0.0) \\
  &        20 &  98.5 (±0.1) &  \underline{99.8 (±0.1)} &  99.8 (±0.0) & \underline{\bfseries 100.0 (±0.0)} & \bfseries 100.0 (±0.0) \\
  &        10 &  98.7 (±0.0) &  99.8 (±0.1) &  99.8 (±0.1) & \underline{\bfseries 100.0 (±0.0)} & \bfseries 100.0 (±0.0) \\
 \midrule
  \multirow{10}{*}{\rotatebox[origin=c]{90}{\textit{CIFAR-100}}}  &       100 &  \underline{\bfseries 75.1 (±0.0)} &  \underline{\bfseries 75.1 (±0.0)} &  \bfseries 75.1 (±0.0) &  \underline{\bfseries 75.1 (±0.0)} &  \bfseries 75.1 (±0.0) \\
& 90 & 78.2 (± 0.1) & 78.9 (± 0.1) & 80.2 (± 0.0) & \underline{80.4 (± 0.1)} & \bfseries 81.1 (± 0.1) \\
& 80 & 82.1 (± 0.0) & 82.9 (± 0.0) & 84.7 (± 0.1) & \underline{84.6 (± 0.1)} & \bfseries 85.0 (± 0.2) \\
& 70 & 86.4 (± 0.1) & 87.2 (± 0.1) & 88.6 (± 0.1) & \underline{\textbf{88.7 (± 0.0)}} & \bfseries 88.8 (± 0.1) \\
& 60 & 90.0 (± 0.0) & 90.3 (± 0.2) & 90.2 (± 0.2) & \underline{90.1 (± 0.0)} & \bfseries 90.4 (± 0.1) \\
& 50 & 92.9 (± 0.1) & 93.3 (± 0.0) & 94.8 (± 0.0) & \underline{94.6 (± 0.0)} & \bfseries 94.9 (± 0.0) \\
& 40 & 95.1 (± 0.0) & 95.2 (± 0.1) & \textbf{96.8 (± 0.1)} & \underline{\textbf{96.9 (± 0.1)}} & \bfseries 96.9 (± 0.0) \\
& 30 & 97.2 (± 0.2) & 97.5 (± 0.0) & \textbf{98.4 (± 0.1)} & \underline{\textbf{98.4 (± 0.1)}} & \bfseries 98.5 (± 0.0) \\
& 20 & 97.8 (± 0.1) & 98.3 (± 0.1) & \textbf{99.0 (± 0.0)} & \underline{98.8 (± 0.2)} & \bfseries 99.2 (± 0.1) \\
& 10 & 98.1 (± 0.0) & 98.8 (± 0.1) & 99.2 (± 0.1) & \underline{\textbf{99.4 (± 0.1)}} & \bfseries 99.6 (± 0.1) \\
\midrule

    \multirow{10}{*}{\rotatebox[origin=c]{90}{\textit{Food101}}} &       100 &  \underline{\bfseries 81.1 (±0.0)} &  \underline{\bfseries 81.1 (±0.0)} & \bfseries  81.1 (±0.0) & \underline{\bfseries 81.1 (±0.0)} & \bfseries 81.1 (±0.0) \\
     &        90 &  85.3 (±0.1) &  85.5 (±0.2) &  86.2 (±0.1) &  \underline{85.7 (±0.0)} &  \bfseries 86.7 (±0.0) \\
     &        80 &  87.1 (±0.0) &  89.5 (±0.0) &  90.3 (±0.0) &  \underline{89.9 (±0.0)} &  \bfseries 91.3 (±0.1) \\
     &        70 &  92.1 (±0.1) &  92.8 (±0.1) &  \bfseries 94.5 (±0.1) &  \underline{93.7 (±0.0)} &  \bfseries 94.6 (±0.0) \\
     &        60 &  95.2 (±0.1) &  95.5 (±0.1) &  \bfseries 97.0 (±0.0) &  \underline{\bfseries 97.0 (±0.0)} &  \bfseries 97.0 (±0.0) \\
     &        50 &  97.3 (±0.1) &  97.5 (±0.0) &  98.2 (±0.0) &  \underline{\bfseries98.3 (±0.2)} &  \bfseries 98.5 (±0.0) \\
     &        40 &  98.7 (±0.0) &  98.7 (±0.2) &  \bfseries 99.1 (±0.0) &  \underline{99.1 (±0.1)} &  \bfseries 99.2 (±0.1) \\
     &        30 &  99.5 (±0.0) &  99.7 (±0.2) &  99.2 (±0.0) &  \underline{99.6 (±0.0)} &  \bfseries 99.7 (±0.0) \\
     &        20 &  99.7 (±0.1) &  99.7 (±0.2) &  \bfseries 99.9 (±0.1) &  \underline{\bfseries 99.8 (±0.0)} &  \bfseries 99.9 (±0.1) \\
     &        10 &  99.8 (±0.0) &  99.8 (±0.1) &  \bfseries 99.9 (±0.1) &  \underline{\bfseries 99.9 (±0.1)} &  \bfseries 99.9 (±0.1) \\
  \midrule
    \multirow{10}{*}{\rotatebox[origin=c]{90}{\textit{StanfordCars}}} &       100 & \bfseries \underline{77.6 (±0.0)} & \underline{\bfseries 77.6 (±0.0)} & \bfseries 77.6 (±0.0) & \underline{\bfseries 77.6 (±0.0)} & \bfseries 77.6 (±0.0) \\
     &        90 &  83.0 (±0.1) &  83.0 (±0.2) &  \bfseries 83.7 (±0.1) &  \underline{83.3 (±0.1)} &  \bfseries 83.7 (±0.2) \\
     &        80 &  87.6 (±0.0) &  88.0 (±0.1) &  88.7 (±0.1) &  \underline{\bfseries 89.3 (±0.0)} &  \bfseries 89.7 (±0.0) \\
     &        70 &  90.8 (±0.0) &  92.2 (±0.1) &  92.4 (±0.1) &  \underline{\bfseries 93.6 (±0.0)} &  93.4 (±0.1) \\
     &        60 &  93.5 (±0.1) &  95.2 (±0.1) &  95.3 (±0.0) &  \underline{\bfseries 96.2 (±0.0)} &  \bfseries 96.3 (±0.0) \\
     &        50 &  95.3 (±0.0) &  \underline{96.9 (±0.2)} &  96.4 (±0.1) &  \underline{\bfseries 97.0 (±0.1)} &  \bfseries 97.1 (±0.3) \\
     &        40 &  96.8 (±0.0) &  \underline{97.8 (±0.0)} &  \bfseries 97.8 (±0.2) &  \underline{\bfseries 97.8 (±0.1)} &  \bfseries 97.8 (±0.0) \\
     &        30 &  97.5 (±0.1) &  \underline{98.2 (±0.2)} &  \bfseries 98.6 (±0.0) &  \underline{98.2 (±0.2)} &  \bfseries 98.9 (±0.0) \\
     &        20 &  98.1 (±0.0) &  \underline{98.4 (±0.1)} &  \bfseries 98.9 (±0.2) &  \underline{98.6 (±0.0)} &  \bfseries 99.0 (±0.0) \\
     &        10 &  98.2 (±0.1) &  \underline{98.7 (±0.1)} &  \bfseries 99.5 (±0.1) &  \underline{98.5 (±0.1)} &  \bfseries 99.5 (±0.0) \\
\bottomrule
\end{tabular}
\caption[Selective accuracy achieved across coverage levels]{\textbf{Selective accuracy achieved across coverage levels}. We find that \texttt{SPTD}-based methods outperform current SOTA error rates across multiple datasets with full-coverage accuracy alignment. Numbers are reported with mean values and standard deviation computed over 5 random runs. \textbf{Bold} numbers are best results at a given coverage level across all methods and \underline{underlined} numbers are best results for methods relying on a single training run only. Datasets are consistent with~\cite{feng2023towards}.}
    \label{tab:target_cov}
    }
\end{table*} 

\section{Conclusion}

In this work we have proposed \sptd, a selective prediction technique that relies on measuring prediction instability of test points over intermediate model states obtained during training. Our method offers several advantages over previous works. In particular (i) it can be applied to all existing models whose checkpoints were recorded (hence the potential for immediate impact); (ii) it is composable with existing selective prediction techniques; (iii) it can be readily applied to both discrete and real-valued prediction problems; and (iv) it is more computationally efficient than competing ensembling-based approaches. We verified the performance of \sptd using an extensive empirical evaluation, leading to new state-of-the-art performance in the field. Beyond our work, we expect training dynamics information to be useful for identifying and mitigating other open problems in trustworthy machine learning such as (un)fairness, privacy, and model interpretability.
    \chapter{Training Private Models That Know What They Don't Know}
\label{ch:sptd_dp}

\begin{paperref}
\normalfont
The contents of this chapter consist of research and results taken from: \citet{rabanser2023training}: \emph{\bibentry{rabanser2023training}}
\end{paperref}

\section*{Summary}

Training reliable deep learning models which avoid making overconfident but incorrect predictions is a longstanding challenge. This challenge is further exacerbated when learning has to be differentially private: protection provided to sensitive data comes at the price of injecting additional randomness into the learning process. In this work, we conduct a thorough empirical investigation of selective classifiers---that can abstain under uncertainty---under a differential privacy constraint. We find that some popular selective prediction approaches are ineffective in a differentially private setting because they increase the risk of privacy leakage. At the same time, we identify that a recent approach that only uses checkpoints produced by an off-the-shelf private learning algorithm stands out as particularly suitable under DP. Further, we show that differential privacy does not just harm utility but also degrades selective classification performance. To analyze this effect across privacy levels, we propose a novel evaluation mechanism which isolates selective prediction performance across model utility levels at full coverage. Our experimental results show that recovering the performance level attainable by non-private models is possible but comes at a considerable coverage cost as the privacy budget decreases.

\section{Introduction}
\label{sec:intro}

State of the art machine learning (ML) models are gaining rapid adoption in high-stakes application scenarios such as healthcare~\citep{challen2019artificial, mozannar2020consistent}, finance~\citep{vijh2020stock}, self-driving~\citep{ghodsi2021generating}, and law~\citep{vieira2021understanding}. However, major challenges remain to be addressed to ensure trustworthy usage of these models. One major concern in sensitive applications is to protect the privacy of the individuals whose data an ML model was trained on. To prevent privacy attacks on ML models, $(\varepsilon, \delta)$ differential privacy (DP)~\citep{dwork2014algorithmic} has emerged as the de-facto standard with widespread usage in both academic and industrial applications. 

While the introduction of DP successfully protects against privacy attacks, it also limits model utility in practice. For instance, the canonical algorithm for training models with DP, DP-SGD~\citep{abadi2016deep}, clips the per-sample gradient norm and adds carefully calibrated noise. These training-time adaptations frequently lead to degradation in predictive performance in practice. This is especially a problem for datasets containing underrepresented subgroups whose accuracy has been shown to degrade with stronger DP guarantees~\citep{bagdasaryan2019differential}. Faced with this challenge, it is therefore of vital importance to detect samples on which a DP model would predict incorrectly.

One popular technique used to detect inputs that the model would misclassify with high probability is given by the selective classification (SC) framework~\citep{geifman2017selective}: by relying on a model's uncertainty in the correct prediction for an incoming data point, the point is either accepted with the underlying model's prediction, or rejected and potentially flagged for additional downstream evaluation. Hence, SC introduces a trade-off between coverage over the test set (\ie the goal of accepting as many data points as possible) and predictive performance on the accepted points (\ie the goal of making as few mistakes as possible). Although identification of misclassified points seems of particular importance under differential privacy, the application of selective prediction to private models is under-explored to date. While lots of past works have studied privacy or selective prediction in isolation, best practices for selective prediction under differential privacy have not been established yet. In this work, we are the first to answer the question of whether selective classification can be used to recover the accuracy lost by applying a DP algorithm (at the expense of data coverage).

To analyze the interplay between differential privacy and selective classification, we first show that not all approaches towards selective classification are easily applicable under a differential privacy constraint. In particular, approaches relying on multiple passes over the entire dataset to obtain the full SP performance profile~\citep{geifman2019selectivenet, lakshminarayanan2017simple} suffer significantly under differential privacy. This is due to the fact that the worst-case privacy leakage increases with each analysis made of the dataset which means that each training run forces more privacy budget to be expended. On the other hand, our analysis shows that an SC approach based on harnessing intermediate model checkpoints~\citep{rabanser2022selective} yields the most competitive results. Notably, this approach performs especially well under stringent privacy constraints ($\varepsilon = 1$). 

Next, we find that differential privacy has a more direct negative impact on selective classification that goes beyond the characteristic drop in utility. Based on a simple synthetic experiment, we observe a clear correlation between differential privacy strength and wrongful overconfidence. In particular, even when multiple private models trained with different $\varepsilon$ levels offer the same utility on the underlying prediction task, we show that stronger levels of DP lead to significantly decreased confidence in the correct class. This reduction effectively prevents performant selective classification. 

Motivated by this observation, we realize a need to disentangle selective classification from model utility; SC ability and accuracy should be considered independently. To that end, we show that the evaluation metric typically used for non-private selective classification is inappropriate for comparing SC approaches across multiple privacy levels. The in-applicability of the current method stems from the fact that full-coverage accuracy alignment is required across experiments. In particular, accuracy-aligning multiple DP models via early-stopping frequently leads to a more stringent than targeted privacy level, rendering the intended comparison impossible. Due to this limitation of previous evaluation schemes, we propose a novel metric which isolates selective classification performance from losses that come from accuracy degradation of the classifier overall (as frequently caused by DP). The performance metric we propose to tackle this problem computes an upper bound on the accuracy/coverage trade-off in a model-dependent fashion and measures the discrepancy between the actual achieved trade-off to this bound. Using this score, we determine that, based on a comprehensive experimental study, differential privacy does indeed harm selective prediction beyond a loss in utility.

We summarize our key contributions below: 
\begin{enumerate}[leftmargin=15pt]
     \item We provide a first analysis on the interplay between selective classification and differential privacy. As a result, we identify an existing SC method as particularly suitable under DP and present an illustrative example showcasing an inherent tension between privacy and selective prediction. 
     \item We unearth a critical failure mode of the canonical SC performance metric preventing its off-the-shelf usage under DP experimentation. We remedy this issue by introducing a novel accuracy-dependent selective classification score which enables us to compare selective classification performance across DP levels without explicit accuracy-alignment.
     \item We conduct a thorough empirical evaluation across multiple selective classification techniques and privacy levels. As part of this study, we confirm that selective classification performance degrades with stronger privacy and further find that recovering utility can come at a considerable coverage cost under strong privacy requirements.
\end{enumerate}

\section{Background}
\label{sec:background}

\subsection{Differential Privacy with DP-SGD}
\label{sec:def_dp}

\emph{Differential Privacy (DP)}~\citep{dwork2014algorithmic} is a popular technique which allows us to reason about the amount of privacy leakage from individual data points. Training a model with differential privacy ensures that the information content that can be acquired from individual data points is bounded while patterns present across multiple data points can still be extracted. Formally, a randomized algorithm $\mathcal{M}$ satisfies $(\varepsilon, \delta)$ differential privacy, if for any two datasets $D, D'\subseteq \mathcal{D}$ that differ in any one record and any set of outputs~$S$ the following inequality holds:
\begin{equation}
\label{eq:dp}
    \mathbb{P}\left[\mathcal{M}(D) \in S\right] \leq e^\varepsilon  \mathbb{P}\left[\mathcal{M}(D') \in S\right] + \delta
\end{equation}
The above DP bound is governed by two parameters: $\varepsilon \in \mathbb{R}_+$ which specifies the privacy level, and $\delta \in [0, 1]$ which allows for a small violation of the bound.

\paragraph{DP-SGD.} A canonical way of incorporating differential privacy in deep learning is via \emph{Differentially Private Stochastic Gradient Descent (DP-SGD)}~\citep{bassily2014private,abadi2016deep}. We describe the DP-SGD algorithm in detail in Algorithm~\ref{alg:dpsgd} in the Appendix. The main adaptations needed to incorporate DP into SGD are: (i) \emph{per-sample gradient computation}, which allows us to limit the per-point privacy leakage in the next two steps; (ii) \emph{gradient clipping}, which bounds the sensitivity (\ie the maximal degree of change in the outputs of the DP mechanism); and (iii) \emph{noise addition}, which introduces Gaussian noise proportional to the intensity of gradient clipping.

\subsection{Selective Classification}

\paragraph{Supervised Classification.} Our work considers the supervised classification setup: We assume access to a dataset $D = \{(\bm{x}_n,y_n)\}_{n=1}^{N}$ consisting of data points $(\bm{x},y)$ with $\bm{x} \in \mathcal{X} \subseteq \mathbb{R}^d$ and $y \in \mathcal{Y} = \{1, \ldots, C\}$. All data points $(\bm{x},y)$ are sampled independently and identically distributed from the underlying distribution $p$ defined over $\mathcal{X} \times \mathcal{Y}$. The goal is to learn a prediction function $f : \mathcal{X} \rightarrow \mathcal{Y}$ which minimizes the classification risk with respect to the underlying data distribution $p$ as measured by a loss function $\ell : \mathcal{Y} \times \mathcal{Y} \rightarrow \mathbb{R}$. 

\paragraph{Selective Classification.} Selective classification augments the supervised classification setup by introducing a rejection class~$\bot$ via a \textit{gating mechanism}~\citep{yaniv2010riskcoveragecurve}. This mechanism produces a class label from the underlying classifier if the mechanism is confident that the prediction is correct and abstains otherwise. Note that this mechanism is often directly informed by the underlying classifier $f$ and we make this dependence explicit in our notation. More formally, the gating mechanism introduces a selection function $g:\mathcal{X} \times (\mathcal{X} \rightarrow \mathcal{Y}) \rightarrow \mathbb{R}$ which determines if a model should predict on a data point~$\bm{x}$. If the output of $g(\bm{x}, f)$ undercuts a given threshold $\tau$, we return $f(\bm{x})$, otherwise we abstain with decision $\bot$. The joint predictive model is therefore given by:
\begin{equation}
    (f,g)(\bm{x}) = \begin{cases}
  f(\bm{x})  & g(\bm{x}, f) \leq \tau \\
  \bot & \text{otherwise.}
\end{cases}
\end{equation}
\paragraph{Evaluating SC Performance.} The performance of a selective classifier $(f,g)$ is based on two metrics: the \emph{coverage} of $(f,g)$ (corresponding to the fraction of points to predict on) and the \emph{accuracy} of $(f,g)$ on accepted points. Note that there exists a tension between these two performance quantities: naively increasing coverage will lead to lower accuracy while an increase in accuracy will lead to lower coverage. Successful SC methods try to jointly maximize both metrics.
    \begin{equation}
    \text{cov}_\tau(f,g) = \frac{|\{\bm{x} : g(\bm{x}, f) \leq \tau \}|}{|D|} \qquad \qquad 
    \text{acc}_\tau(f,g) = \frac{|\{\bm{x} : f(\bm{x}) = y, g(\bm{x}, f) \leq \tau \}|}{|\{\bm{x} : g(\bm{x}, f) \leq \tau \}|}
    \end{equation}
To characterize the full performance profile of a selective classifier $(f,g)$, we consider the \emph{selective accuracy} at coverage level~$c$, formally $\text{acc}_c(f,g)$, over the full coverage spectrum by computing an area-under-the-curve (AUC) score $s_\text{AUC}$ as follows:
\begin{equation}
\label{eq:sc_perf}
    s_\text{AUC}(f,g) = \int_0^1 \text{acc}_c(f,g)dc \qquad \qquad  \text{acc}_c(f,g) = \text{acc}_\tau(f,g)\quad \text{for $\tau$ s.t. $\text{cov}_\tau(f,g) = c$} 
\end{equation}
Each existing selective classification algorithm proposes a $g$ that tries to maximize this metric.

\paragraph{Prediction Confidence.} The traditional baseline methods for selective prediction is the \emph{Softmax Response} (\sr) method \citep{hendrycks2016baseline, geifman2017selective}. This method uses the confidence of the final prediction model $f$ as the selection score. To reduce overconfident predictions yielded by this method, confidence calibration~\citep{guo2017calibration} has been proposed.

\paragraph{Ensembling.}
To further improve calibration and to reduce variance, ensemble methods have been proposed which combine the information content of $M$ models into a single final model. The canonical instance of this approach for deep learning based models, \emph{Deep Ensembles}~(\de)~\citep{lakshminarayanan2017simple}, trains multiple models from scratch with varying initializations using a proper scoring rule and adversarial training. Then, after averaging the predictions made by all models, the softmax response (\sr) mechanism is applied. To overcome the computational cost of estimating multiple models from scratch, \emph{Monte-Carlo Dropout} (\mcdo) \citep{gal2016dropout} allows for bootstrapping of model uncertainty of a dropout-equipped model at test time. Another ensembling approach that has recently demonstrated state-of-the-art \selc performance is based on monitoring model evolution during the training process. To that end, \emph{Selective Classification Training Dynamics}~(\sctd)~\citep{rabanser2022selective} records intermediate models produced during training and computes a disagreement score of intermediate predictions with the final prediction. 

\paragraph{Architecture \& Loss Function Modifications.} 
A variety of SC methods have been proposed that leverage explicit architecture and loss function adaptations. For example, \emph{SelectiveNet} (\sn) \citep{geifman2019selectivenet} modifies the model architecture to jointly optimize $(f,g)$ while targeting the model at a desired coverage level~$c_\text{target}$. Alternatively, prior works like \emph{Deep Gamblers}~\citep{liu2019deep} and \emph{Self-Adaptive Training}~\citep{huang2020self} have considered explicitly modeling the abstention class $\bot$ and adapting the optimization process to provide a learning signal for this class. For instance, \emph{Self-Adaptive Training} (\sat) incorporates information obtained during the training process into the optimization itself by computing and monitoring an exponential moving average of training point predictions over the training process. Introduced to yield better uncertainty estimates, \citet{liu2020simple} employs weight normalization and replaces the output layer of a neural network with a Spectral-Normalized Neural Gaussian Process to improve data manifold characterization. 

\paragraph{Uncertainty for DP Models.} An initial connection between differential privacy and uncertainty quantification is explored in \citet{williams2010probabilistic}, which shows that probabilistic inference can improve accuracy and measure uncertainty on top of differentially private models. By relying on intermediate model predictions, \citet{shejwalkar2022recycling} has proposed a mechanism to quantify the uncertainty that DP noise adds to the outputs of ML algorithms (with no additional privacy cost).

\section{The Interplay Between Selective Classification and Differential Privacy}

\begin{algorithm}[t]
	\caption{\sctd~\citep{rabanser2022selective}}\label{alg:sctd}
	\begin{algorithmic}[1]
	\Require Checkpointed model sequence $\{f_1,\ldots,f_T\}$, query point $\bm{x}$, weighting parameter $k \in [0,\infty)$.
    \State Compute prediction of last model: $L \gets f_T(\bm{x})$
    \State Compute disagreement and weighting of intermediate predictions: 
    \For{$t \in [T]$}
        \State \algorithmicif\ $f_t(\bm{x}) = L$ \algorithmicthen\ $a_t \gets 0$ \algorithmicelse\ $a_t \gets 1$
        \State $v_t \gets (\frac{t}{T})^k$
    \EndFor
\State Compute sum score: $s_\text{sum} \gets \sum_{t} v_t a_t$
    \State \algorithmicif\ $s_\text{sum} \leq \tau$ \algorithmicthen\ accept $f(\bm{x}) = L$ \algorithmicelse\ reject with $f(\bm{x}) = \bot$
	\end{algorithmic}
\end{algorithm}

We now examine the interplay between these approaches to selective classification and training algorithms constrained to learn with differential privacy guarantees. We first introduce a candidate selective classification approach which leverages intermediate predictions from models obtained during training. Then, we study how current selective classification approaches impact DP guarantees, as each access they make to training data increases the associated privacy budget. Next, we investigate how in turn DP affects selective classification performance. Indeed, DP alters how optimizers converge to a solution and as such DP can impact the performance of SC techniques. Last, we discuss how selective classification performance can be fairly compared across a range of privacy levels.

\subsection{Performative Private Selective Classification via Training Dynamics Ensembles}

As a classical technique from statistics, ensembling methods are often employed for confidence interval estimation~\citep{karwa2017finite, ferrando2022parametric}, uncertainty quantification~\citep{lakshminarayanan2017simple}, and selective prediction~\citep{zaoui2020regression}. Under a differential privacy constraint, although for simpler tasks like mean estimation ensembling methods have been proven to be effective~\citep{brawner2018bootstrap,covington2021unbiased,evans2019statistically}, in this chapter we demonstrate that they are fairly ineffective for selective classification. This is primarily due to the increased privacy cost due to (advanced sequential) composition~\citep{ dwork2006calibrating}. 

In light of this challenge, we expect one recent method in particular to perform well in a private learning setting: selective classification via training dynamics (\sctd)~\citep{rabanser2022selective} (details described in Algorithm~\ref{alg:sctd}). For a given test-time point, \sctd analyzes the disagreement with the final predicted label over intermediate models obtained during training. Data points with high disagreement across this training-time ensemble are deemed anomalous and rejected. Most importantly, while \sctd also constructs an ensemble, only a single training run is used to obtain this ensemble. As a result of the post-processing property of DP, the original $(\varepsilon, \delta)$-DP guarantee can be maintained. To further motivate the effectiveness of relying on intermediate checkpoints, \citet{shejwalkar2022recycling} has shown in adjacent work that intermediate predictions can improve the predictive accuracy of DP training, yielding new state-of-the-art results under DP. 
 
\subsection{How Do Other Selective Classification Approaches Affect Privacy Guarantees?}
\label{sec:sc_affects_dp}

We can extend the previous analysis based on post-processing and composition to other SC techniques. This allows us to group selective classification techniques into three classes: (i) directly optimized; (ii) post-processing; and (iii) advanced sequential composition. For directly optimized and post-processing approaches, we can obtain selective classification for free. On the other hand, composition-based approaches either require an increased privacy budget or suffer from decreased utility.

\paragraph{Direct Optimization.} Many selective classification approaches directly modify the loss function and optimization is performed w.r.t. this adapted loss function. As DP-SGD is loss function and architecture agnostic, $(\varepsilon, \delta)$ guarantees hold automatically for SC methods that only change the loss function. \emph{This property applies to:} Softmax Response (\sr), Deep Gamblers~(\dg), Self-Adaptive Training (\sat).

\paragraph{Post-Processing.} If a function $\phi(x)$ satisfies $(\varepsilon,\delta)$-DP, then for any deterministic or random function $\psi(\cdot)$, the application of $\psi$ on $\phi$, formally $\psi \circ \phi (x)$, still satisfies $(\varepsilon,\delta)$-DP \citep{dwork2006calibrating}. \emph{This property applies to:} Monte-Carlo Dropout (\mcdo), Selective Classification via Neural Network Training Dynamics (\sctd).

\paragraph{Advanced Sequential Composition.} If in a set of aggregated functions $\{\phi_1(x), \ldots, \phi_M(x)\}$ each $\phi_i(x)$ satisfies $(\varepsilon,\delta)$-DP, then releasing the set of all outputs $\psi(x) = (\phi_1(x), \ldots, \phi_M(x))$ satisfies $\approx (\sqrt{M}\varepsilon,M\delta)$-DP \citep{dwork2006calibrating}. To maintain $(\varepsilon, \delta)$-DP over the aggregated output, each function needs to satisfy $\approx(\frac{\varepsilon}{\sqrt M}, \frac{\delta}{M})$-DP. \emph{Applicable to}: Deep Ensembles (\de), SelectiveNet (\sn).

\subsection{How Does Differential Privacy Affect Selective Classification Performance?} 
\label{sec:dp_affects_sc}

After having examined how current selective classification techniques influence differential privacy guarantees, this subsection considers the opposite effect: what is the impact of differential privacy on selective classification? As we will see by means of an intuitive example, we expect that differential privacy impacts selective classification beyond a loss in utility.

To showcase this, we present a synthetic logistic regression example with the softmax response SC mechanism. We generate data from a mixture of two two-dimensional Gaussians with heavy class-imbalance. Concretely, we generate samples for a majority class $\{(\bm{x}_i,1)\}_{i=1}^{1000}$ with each $\bm{x}_i \sim \mathcal{N}(\bm{0}, \bm{I})$ and a single outlier point form a minority class $(\bm{x}^*,-1)$ with $\bm{x}^* \sim \mathcal{N}(\begin{bmatrix}10 & 0\end{bmatrix}^\top, \bm{I})$. We then train multiple differentially private mechanisms with $\varepsilon \in \{\infty, 7,3,1\}$  on this dataset and evaluate all models on a test set produced using the same data-generating process.

\looseness=-1
We show the outcome of this set of experiments in Figure~\ref{fig:eps_gauss} where we plot the test set and the decision boundary of each model across $\varepsilon$ levels. For $\varepsilon = \infty$, the model successfully discriminates the majority and the minority class. However, the decision boundary is heavily influenced by a single data point, namely the outlier $(\bm{x}^*,-1)$. Note that, even though all models with $\varepsilon \in \{7,3,1\}$ misclassify the outlier point (\ie their utility is aligned), the changing decision boundary also increases the model's confidence in predicting the incorrect class for the outlier. Hence, even at an aligned utility level, the softmax response approach for SC performs increasingly worse on the outlier point under stronger DP constraints. We provide additional class-imbalanced results on realistic datasets in Appendix~\ref{sec:class_imb_real}.

\begin{figure*}[t]
  \centering
  \includegraphics[width=\linewidth]{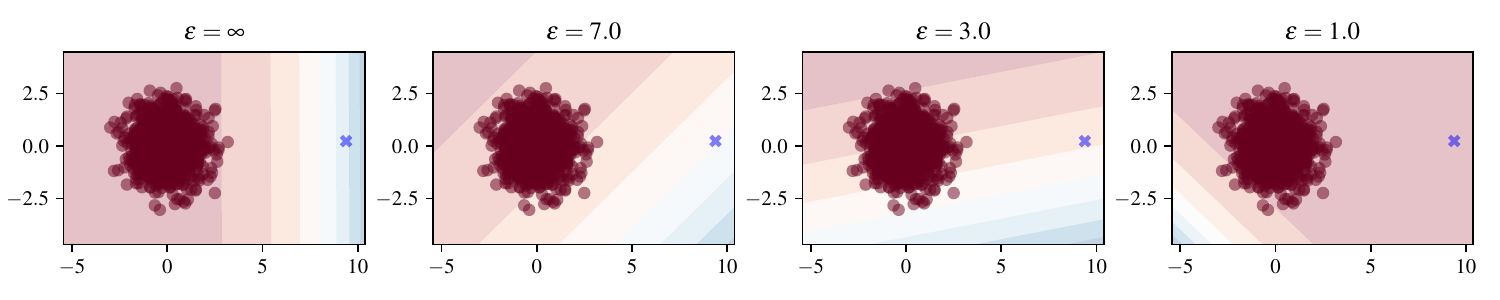}
\caption[Synthetic example of privacy impacts on selective classification.]{\textbf{Synthetic example of privacy impacts on selective classification}. Shaded regions show softmax response confidence in the red and blue class, respectively. As $\varepsilon$ decreases, the blue outlier point is increasingly protected by DP. Importantly, not only does DP induce a misclassification on the outlier point, but stronger levels of DP further increase confidence in the wrong prediction. This overconfidence in the incorrect class prevents selective classification.}
\label{fig:eps_gauss}
\end{figure*}

\subsection{How Should We Evaluate Selective Classification Under Differential Privacy?}
\label{sec:new_metric}

\begin{contriback}
This subsection was written with Anvith Thudi. Both Stephan and Anvith contributed to the formalization of the upper bound and the resulting evaluation score in equal parts.
\end{contriback}

As we have now established, we expect differential privacy to harm selective classification performance. In light of this insight, we are now investigating whether the standard evaluation scheme for SC is directly applicable to the differentially private models.

The default approach of using the SC performance metric introduced in Equation~\ref{eq:sc_perf} free from its bias towards accuracy is to align different SC approaches/models at the same accuracy. However, accuracy-aligning can have unintended consequences on SC performance, especially under DP. If we are considering SC performance across models trained for multiple $\varepsilon$ levels, accuracy-alignment would require lowering the performance of the less private models. One seemingly suitable approach to do this is by early-stopping model training at a desired accuracy level. However, early-stopping a DP training algorithm in fact gives a model with greater privacy than the privacy level targeted by the optimization process. This is a direct consequence of privacy loss accumulation via privacy accounting during training, and so training for less leads to expending less privacy budget. Hence, instead of comparing models with different privacy guarantees at the same baseline utility, early-stopping yields models with potentially very similar privacy guarantees. 

To address this limitation, we propose to measure SC performance by how close the observed trade-off is to an upper bound on the achievable SC performance at a given baseline utility level (at $100\%$ coverage). To compute this metric, we first derive an upper bound on selective classification performance (as a function of coverage) for a given baseline accuracy. We obtain this characterization by identifying the optimal acceptance ordering under a particular full-coverage accuracy constraint. 

\begin{figure}
\centering
\begin{minipage}{0.48\linewidth}      
{\centering
\small
\resizebox{\linewidth}{!}{
    \begin{tikzpicture}[
		declare function={}
		]
\begin{axis}[
  xlabel=Coverage ($c$),
  ylabel=Accuracy,
  xmin = -0.1,
  xmax = 1.1,
  ymin = -0.1,
  ymax = 1.25,
  grid=major,
  width=8cm,
  height=5cm,
  tick label style={/pgf/number format/fixed},
  legend pos=north east]
  
  \newcommand\afull{0.4}
  \newcommand\afullinccoord{{\afull + (1 - \afull)/2}}
  
  \addplot+[mark={},dashed,line width=2pt,Black,name path=A, domain=0:\afull] {1};
  \addplot+[mark={},dashed,line width=2pt,Black,name path=B, domain=\afull:1] {\afull/x};
  \addplot+[draw=none,mark=none,name path=C,domain=0:1] {0};
  \addplot+[Green!25!white] fill between[of=A and C,soft clip={domain=0:\afull}];
  \addplot+[Red!25!white] fill between[of=B and C,soft clip={domain=\afull:1}];
  \node[black, fill=white] at (1,\afull+0.13) {\scriptsize $a_\text{full}$};
  \node[black, fill=white] at (\afull,1.13) {\scriptsize $c = a_\text{full}$};

  \fill[black] (\afull, 1) circle[radius=2.5pt];
  \fill[black] (1,\afull) circle[radius=2.5pt];
  
  \draw [->, Red, line width=1pt] (\afull,0.05) -- (1, 0.05);
  \draw [->, Green, line width=1pt] (0,0.05) -- (\afull, 0.05);
  \node[Green] at (\afull/2,0.15) {\scriptsize correct points};
  \node[Red] at (\afullinccoord,0.15) {\scriptsize incorrect points};

  \legend{ \scriptsize \upperbound};
 
\end{axis}
\end{tikzpicture}
}}
    (a) \emph{Upper bound $\overline{\text{acc}}(a_\text{full},c)$ on the selective classification performance conditional on $a_\text{full}$}. As coverage $c$ increases from $0 \rightarrow 1$, an optimal selective classifier accepts all correct points first (with a consistent accuracy of $1$ until $c = a_\text{full}$) and then spreads the accuracy decrease $\frac{a_\text{full}}{c}$ equally over the remaining coverage spectrum, leading to a convex drop as $c \rightarrow 1$.
    \label{fig:bound}
\end{minipage}
\hspace{10pt}
\begin{minipage}{0.48\linewidth}    
{\centering
\small
\resizebox{\linewidth}{!}{
\begin{tikzpicture}[
		declare function={}
		]
		\begin{axis}[%
		  xlabel=Coverage ($c$),
		  ylabel=Accuracy,
		  xmin = -0.1,
		  xmax = 1.1,
		  ymin = 0.3,
		  ymax = 1.1,
		  grid=major,
		  width=8cm,
		  height=5cm,
		  tick label style={/pgf/number format/fixed},
		  legend pos=north east]
		  
		  \newcommand\afull{0.4}
		  
		  \addplot+[mark={},line width=2pt,Black,name path=A, dashed, domain=0:\afull] {1};
		  \addplot+[mark={},line width=2pt,Black,name path=C, domain=0:\afull/2] {1};
		  \addplot+[mark={},line width=2pt,Black,name path=B, dashed, domain=\afull:1] {\afull/x};

		  \addplot+[mark={},line width=2pt,Black,name path=D, domain=0.2:0.4] {15/16*x^2 - 15/8 * x + 107/80};
		  
		  \addplot+[mark={},line width=2pt,Black,name path=E, domain=0.4:1] {15/16*x^2 - 15/8 * x + 107/80};
		  
		  \addplot+[Black!25!white] fill between[of=A and D,soft clip={domain=0:1}];
        \legend{ \scriptsize \upperbound, \scriptsize \empiricalacccovtradeoff, , , , \scriptsize \accnormscore };
		  \addplot+[Black!25!white] fill between[of=B and E,soft clip={domain=0:1}];
		 
		\end{axis}
		\end{tikzpicture}
    }}
    (b) \emph{Accuracy-normalizes score for SC performance.} The accuracy-normalized score for selective classification $s_{a_\text{full}}(f,g)$ corresponds to the area enclosed between the upper bound $\overline{\text{acc}}(a_\text{full},c)$ and the empirical accuracy/coverage trade-off $\text{acc}_c(f,g)$. Good selective classifiers should achieve a low score ($s_{a_\text{full}}(f,g) \approx 0$), indicating closeness to the bound.
    \label{fig:score}
\end{minipage}
\vspace{10pt}
    \caption[Selective classification upper bound and accuracy-normalized score visualization.]{\textbf{Selective classification upper bound and accuracy-normalized score visualization}. We present an example of a selective classifier with full-coverage accuracy $a_\text{full} = 0.4$ and show how (a)~the corresponding upper bound; and how (b) the accuracy-normalized score can be obtained. }
    \label{fig:bound_score}
\end{figure}

\begin{definition}
\emph{The upper bound on the selective classification performance for a fixed full-coverage accuracy $a_\text{full} \in [0,1]$ and a variable coverage level $c \in [0,1]$ is given by
    \begin{equation}
        \label{eq:sc_dp_bound}
        \overline{\text{acc}}(a_\text{full},c) = \begin{cases}
  1  & 0 < c \leq a_\text{full} \\
  \frac{a_\text{full}}{c} & a_\text{full} < c < 1
\end{cases}.
    \end{equation}
    }
\end{definition}
Measuring the area enclosed between the bound $\overline{\text{acc}}(a_\text{full},c)$ and an SC method's achieved accuracy/coverage trade-off $\text{acc}_c(f,g)$ yields our accuracy-normalized selective classification score. 
\begin{definition}
  \emph{The accuracy-normalized selective classification score $s_{a_\text{full}}(f,g)$ for a selective classifier $(f,g)$ with full-coverage accuracy $a_\text{full}$ is given by
  \begin{equation}
  \label{eq:acc_norm_score}
        s_{a_\text{full}}(f,g) = \int_0^1 (\overline{\text{acc}}(a_\text{full},c) - \text{acc}_c(f,g)) dc \approx \sum_c (\overline{\text{acc}}(a_\text{full},c) - \text{acc}_c(f,g)).
    \end{equation}
    }
\end{definition}
We provide additional intuition as part of Figure~\ref{fig:bound_score}.

\paragraph{Bound Justification.} To understand the upper bound given in Equation~\ref{eq:bound}, note that a full-coverage accuracy level of $a_\text{full}$ means that a fraction $a_\text{full}$ of points are correctly classified. An ideal selective classification method for this model would, by increasing the coverage $c$ from 0 to 1, first accept all points that the model classifies correctly: this gives the optimal selective classification accuracy for coverage $c \leq a_{\text{full}}$ of $100\%$. For any coverage level $c > a_{\text{full}}$, note that the accuracy at the coverage is the number of correct points accepted divided by the total number of points accepted. The largest value this can take is by maximizing the numerator, and the maximum value for this (as a fraction of all points) is $a_{\text{full}}$. Plugging this into the previous statement for accuracy at coverage $c$, we have the best selective accuracy at a coverage $c$ is $\frac{a_{\text{full}}}{c}$. This derives the upper bound stated above as a function of coverage. We remark that this bound is in fact achievable if a selective classifier separates correct and incorrect points perfectly, \ie it accepts all correct points first and then accepts and increasing amount of incorrect points as we increase the coverage of the selective classification method. See Appendix~\ref{sec:opt_bound_reach} for an experiment that matches the bound exactly. 

\begin{figure*}[t]
  \centering
  \includegraphics[width=\linewidth]{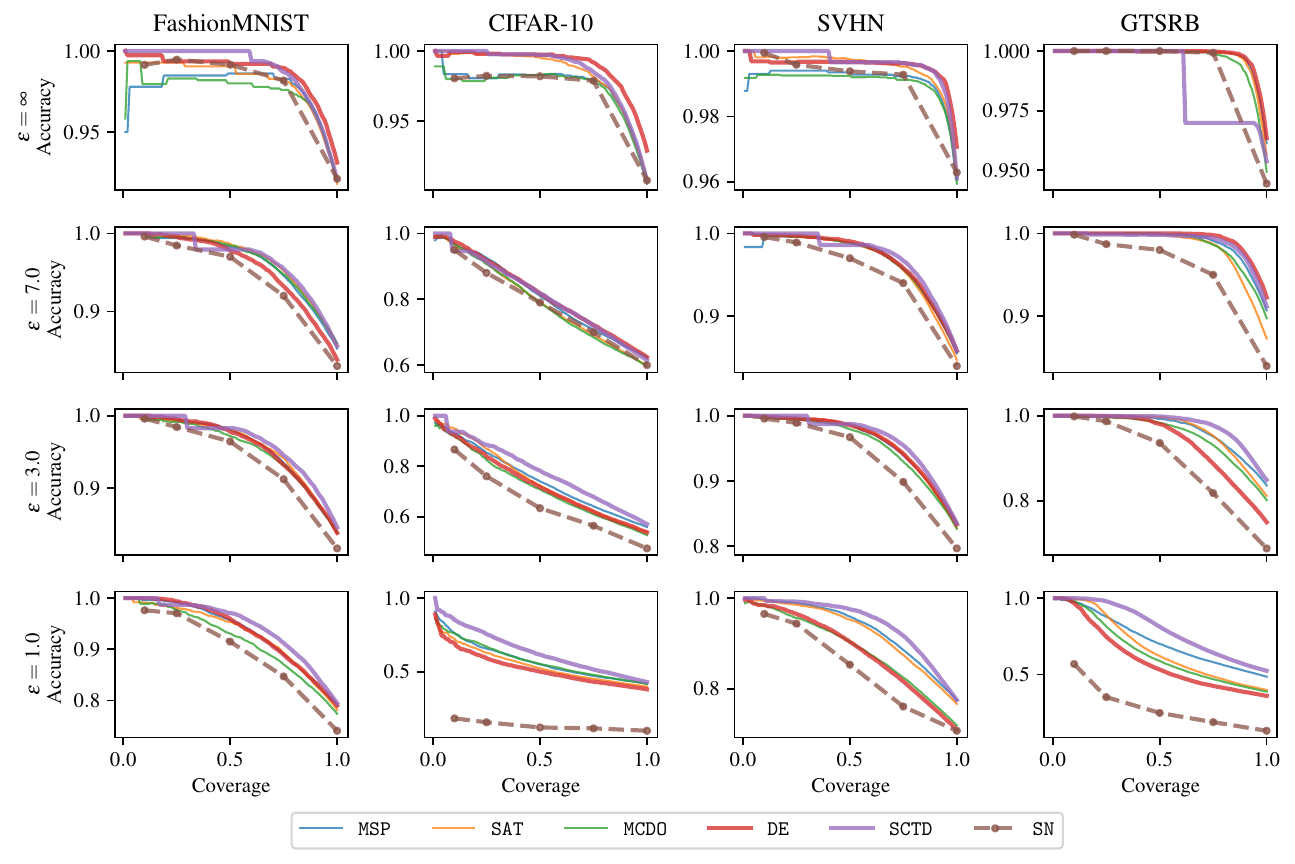}
\caption[Accuracy-coverage trade-off across datasets \& $\varepsilon$ levels.]{\textbf{Accuracy-coverage trade-off across datasets \& $\varepsilon$ levels}. We highlight noteworthy results for \sctd, \de, and \sn with bold lines and further show that \sn only provides results at coarse-grained coverage levels. \sctd performs best while \de and \sn performance is compromised at any $\varepsilon < \infty$.}
\label{fig:accuracy_coverage_tradeoff}
\end{figure*}

\section{Experiments}
\label{sec:exp}

We now present a detailed experimental evaluation of the interplay between differential privacy and selective classification. All of our experiments are implemented using PyTorch~\citep{Paszke_PyTorch_An_Imperative_2019} and Opacus~\citep{opacus}. We publish our full code-base at the following URL: \url{https://github.com/cleverhans-lab/selective-classification}.

\paragraph{Setup.} Our evaluation is based on the following experimental panel. In particular, we conduct experiments for each dataset/SC-method/$\varepsilon$ combination:
\begin{itemize}
    \item \textbf{Datasets}: FashionMNIST~\citep{xiao2017fashion}, CIFAR-10~\citep{krizhevsky2009learning}, SVHN~\citep{netzer2011reading}, GTSRB~\citep{Houben-IJCNN-2013}.
    \item \textbf{Selective Classification Methods}: Softmax Response (\sr)~\citep{geifman2017selective}, SelectiveNet (\sn)~\citep{geifman2019selectivenet}, Self-Adaptive Training (\sat)~\citep{huang2020self}, Monte-Carlo Dropout (\mcdo)~\citep{gal2016dropout}, Deep Ensembles (\de)~\citep{lakshminarayanan2017simple}, Selective Classification Training Dynamics (\sctd)~\citep{rabanser2022selective}.
    \item \textbf{Privacy levels}: $\varepsilon \in \{\infty, 7, 3, 1\}$.
\end{itemize}

Based on recent results from~\citet{feng2023towards}, we (i) apply additional entropy regularization to all methods; and (ii) derive the selective classification scores for \sn and \sat by applying Softmax Response (\sr) to the underlying classifier (instead of relying on model-specific abstention mechanisms). Across all private experiments, we set $\delta = 10^{-(N+1)}$ for datasets consisting of training samples on the order of $10^N$. For each combination of SC method, dataset, and privacy level, we train a model following the ResNet-18~\citep{he2016deep} model architecture using either the SGD optimizer (for $\varepsilon=\infty$) or the DP-SGD optimizer (for all $\varepsilon<\infty$). All models are trained for 200 epochs using a mini-batch size of 128. For all DP training runs, we set the clipping norm to $c=10$ and choose the noise multiplier adaptively to reach the overall privacy budget at the end of training. Note that, since we want to consider a consistent value of $\varepsilon$ across all SC methods, we restrict the overall $\varepsilon$ in both \sn and \de. As explained in Section~\ref{sec:sc_affects_dp}, this effectively enforces a reduction in $\varepsilon$ for each individual training run due to sequential composition. Concretely, we train \de with 5 ensemble members and \sn with target coverage levels $c_\text{target} \in \{0.1,0.25,0.5,0.75,1.0\}$. This leads to a reduction of $\approx\frac{\varepsilon}{\sqrt 5}$ for each trained sub-model in both cases. We account for the precise reduction in the privacy parameter by~\emph{increasing by the number steps of DP-SGD} as measured by a R\'enyi DP accountant~\citep{mironov2017renyi,opacus}. All experiments are repeated over 5 random seeds to determine the statistical significance of our findings. We document additional hyper-parameters in Appendix~\ref{sec:hyp}.

\begin{table}[t]
    \fontsize{8.5}{8}\selectfont
    \tabcolsep=0.175cm
    \caption[Coverage required for non-private full-coverage accuracy.]{\textbf{Coverage required for non-private full-coverage accuracy.} We observe that differential privacy considerably lowers the allowable coverage to achieve a utility level consistent with the non-private model. Across our experiments, \sctd yields the highest coverage across $\varepsilon$ levels.}
    \vspace{5pt}
    \centering
    \label{tab:coverage_performance}
    \begin{tabular}{c c c c c c c}
    \toprule
    & \multicolumn{3}{c}{FashionMNIST} & \multicolumn{3}{c}{CIFAR-10} \\
    \cmidrule(lr){2-4} \cmidrule(lr){5-7}
           & $\varepsilon=7$ & $\varepsilon=3$ & $\varepsilon=1$ & $\varepsilon=7$ & $\varepsilon=3$ & $\varepsilon=1$ \\
    \midrule
    \msp  & 0.83 (±0.01) & 0.80 (±0.01) & 0.65 (±0.03) & \bfseries 0.29 (±0.02) & 0.14 (±0.04) & 0.00 (±0.00) \\
    \sat  & \bfseries 0.86 (±0.00) & 0.81 (±0.01) & 0.67 (±0.02) & 0.25 (±0.01) & \bfseries 0.19 (±0.02) & 0.00 (±0.00) \\
    \mcdo & \bfseries 0.84 (±0.02) & 0.79 (±0.00) & 0.56 (±0.02) & 0.25 (±0.01) & 0.12 (±0.02) & 0.00 (±0.00) \\
    \de   & 0.75 (±0.00) & 0.75 (±0.01) & 0.61 (±0.01) & 0.22 (±0.01) & 0.09 (±0.00) & 0.00 (±0.00) \\
    \sctd & \bfseries 0.86 (±0.01) & \bfseries 0.84 (±0.02) & \bfseries 0.73 (±0.01) & \bfseries 0.26 (±0.03) & \bfseries 0.20 (±0.03) & \bfseries 0.04 (±0.04) \\
    \cmidrule(lr){2-4} \cmidrule(lr){5-7}
    & \multicolumn{3}{c}{SVHN} & \multicolumn{3}{c}{GTSRB} \\
    \cmidrule(lr){2-4} \cmidrule(lr){5-7}
    \msp  & 0.74 (±0.00) & 0.67 (±0.01) & 0.49 (±0.02) & 0.90 (±0.01) & 0.71 (±0.03) & 0.13 (±0.00) \\
    \sat  & 0.72 (±0.00) & 0.67 (±0.01) & 0.45 (±0.02) & 0.86 (±0.00) & 0.74 (±0.00) & 0.20 (±0.03) \\
    \mcdo & 0.74 (±0.00) & 0.64 (±0.00) & 0.23 (±0.03) & 0.90 (±0.01) & 0.69 (±0.01) & 0.14 (±0.01) \\
    \de   & 0.69 (±0.01) & 0.62 (±0.01) & 0.22 (±0.00) & \bfseries 0.93 (±0.00) & 0.57 (±0.08) & 0.10 (±0.04) \\
    \sctd & \bfseries 0.78 (±0.01) & \bfseries 0.72 (±0.00) & \bfseries 0.59 (±0.02) & \bfseries 0.93 (±0.01) & \bfseries 0.83 (±0.03) & \bfseries 0.30 (±0.02) \\
    \bottomrule
    \end{tabular}
\end{table}

\paragraph{Evaluation of Selective Classification Techniques Across Privacy Levels.}

We report our main results in Figure~\ref{fig:accuracy_coverage_tradeoff} where we display the accuracy-coverage trade-off for our full experimental panel. Overall, we observe that Selective Classification Training Dynamics (\sctd) delivers the best accuracy-coverage trade-offs. While this is consistent with non-private~\citep{rabanser2022selective}, we find that \sctd delivers especially strong performance at lower $\varepsilon$ levels. This suggests that leveraging training dynamics information is useful for performative selective classification under DP. As expected, based off of the discussion in Section~\ref{sec:sc_affects_dp}, Figure~\ref{fig:accuracy_coverage_tradeoff} also shows that the performance of both Deep Ensembles (\de) and SelectiveNet (\sn) suffers under DP, showing increasing utility degradation as $\varepsilon$ decreases. 

\begin{table}[t]
    \scriptsize
    \fontsize{7}{6.5}\selectfont
    \tabcolsep=0.145cm
\caption[Accuracy-normalized selective classification performance.]{\textbf{Accuracy-normalized selective classification performance.} Across our panel of experiments, we find that decreasing \(\varepsilon\) leads to a worse (\ie higher) score, meaning that as \(\varepsilon\) decreases the selective classification approaches all move away from the upper bound.}
\vspace{5pt}
    \centering
    \label{tab:sc_score_performance}
    \begin{tabular}{ccccccc}
\toprule
 & \multicolumn{3}{c}{FashionMNIST} & \multicolumn{3}{c}{CIFAR-10} \\
\cmidrule(lr){2-4} \cmidrule(lr){5-7}
 & \(\varepsilon = \infty\) & \(\varepsilon = 7\) & \(\varepsilon = 1\) & \(\varepsilon = \infty\) & \(\varepsilon = 7\) & \(\varepsilon = 1\) \\
\midrule
\msp  & 0.019 (±0.000) & 0.023 (±0.000) & 0.041 (±0.001) & 0.019 (±0.000) & 0.105 (±0.002) & 0.205 (±0.001) \\
\sat  & 0.014 (±0.000) & \bfseries 0.020 (±0.001) & 0.043 (±0.002) & 0.010 (±0.000) & 0.107 (±0.000) & 0.214 (±0.002) \\
\mcdo & 0.020 (±0.002) & 0.023 (±0.001) & 0.053 (±0.001) & 0.021 (±0.001) & 0.110 (±0.000) & 0.201 (±0.000) \\
\de   & \bfseries 0.010 (±0.003) & 0.027 (±0.002) & 0.039 (±0.000) & \bfseries 0.007 (±0.001) & \bfseries 0.099 (±0.002) & 0.222 (±0.000) \\
\nntd & \bfseries 0.007 (±0.001) & \bfseries 0.021 (±0.001) & \bfseries 0.032 (±0.002) & \bfseries 0.009 (±0.002) & \bfseries 0.098 (±0.001) & \bfseries 0.152 (±0.001) \\
\sn   & \bfseries 0.008 (±0.002) & 0.058 (±0.001) & 0.064 (±0.002) & 0.015 (±0.000) & 0.155 (±0.003) & 0.173 (±0.001) \\
    \cmidrule(lr){2-4} \cmidrule(lr){5-7}
    & \multicolumn{3}{c}{SVHN} & \multicolumn{3}{c}{GTSRB} \\
    \cmidrule(lr){2-4} \cmidrule(lr){5-7}
\msp  & 0.008 (±0.001) & 0.020 (±0.001) & 0.040 (±0.001) & \bfseries 0.001 (±0.001) & 0.006 (±0.002) & 0.109 (±0.002) \\
\sat  & \bfseries 0.004 (±0.000) & 0.019 (±0.000) & 0.044 (±0.002) & \bfseries 0.001 (±0.001) & 0.008 (±0.001) & 0.089 (±0.000) \\
\mcdo & 0.009 (±0.000) & 0.019 (±0.001) & 0.069 (±0.001) & 0.002 (±0.001) & 0.007 (±0.001) & 0.110 (±0.001) \\
\de   & \bfseries 0.004 (±0.001) & 0.018 (±0.001) & 0.067 (±0.003) & \bfseries 0.001 (±0.000) & \bfseries 0.003 (±0.002) & 0.127 (±0.002) \\
\nntd & \bfseries 0.003 (±0.001) & \bfseries 0.016 (±0.001) & \bfseries 0.027 (±0.001) & 0.011 (±0.002) & \bfseries 0.005 (±0.000) & \bfseries 0.062 (±0.001) \\
\sn   & \bfseries 0.004 (±0.000) & 0.055 (±0.001) & 0.096 (±0.000) & \bfseries 0.001 (±0.001) & 0.050 (±0.004) & 0.091 (±0.004) \\
\bottomrule
\end{tabular}
\end{table}

\begin{figure*}[t]
  \centering
  \includegraphics[width=\linewidth]{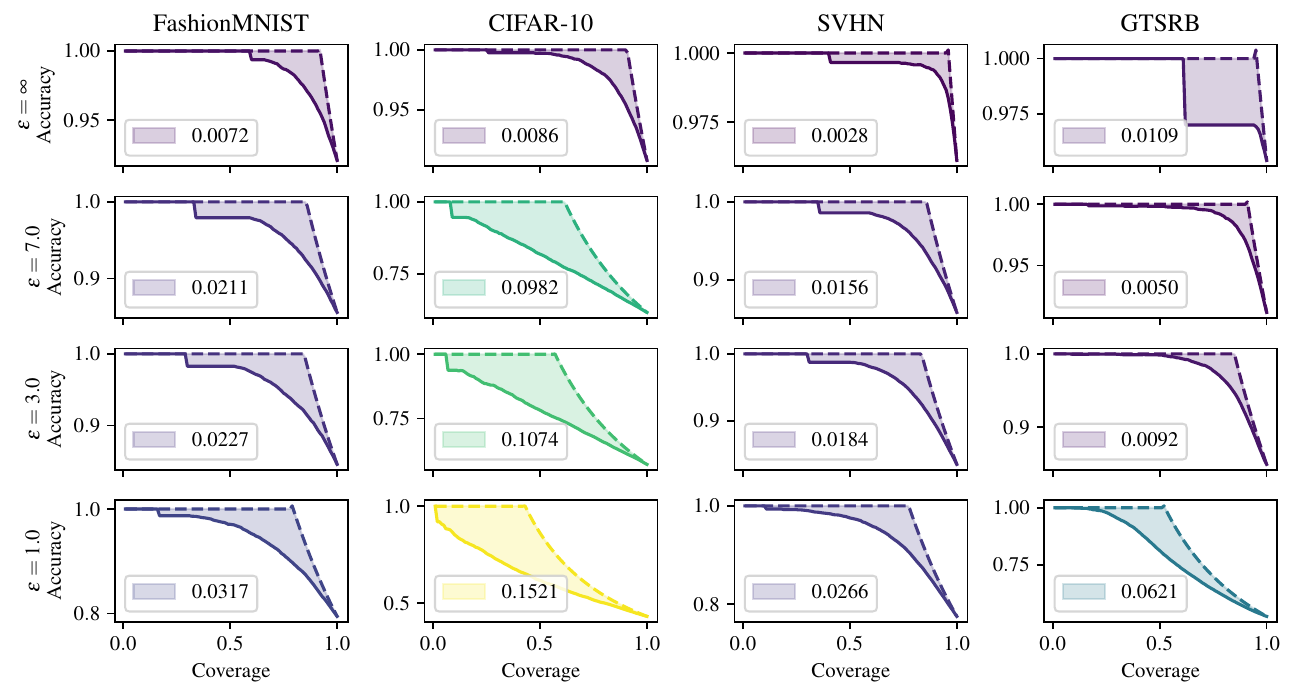}
\caption[Distance to accuracy-dependent upper bound for the \sctd method.]{\textbf{Distance to accuracy-dependent upper bound for the \sctd method.} We plot both the accuracy-coverage trade-off of \sctd, \ie $\text{acc}_c(f,g)$, with a solid line and the upper bound $\overline{\text{acc}}(a_\text{full},c)$ with a dashed line. The shaded region enclosed between the two curves corresponds to the accuracy-normalized SC score $s_{a_\text{full}}(f,g)$. We observe that the score grows with stronger DP levels (shown by brighter colors), \ie selective classification becomes harder at low $\varepsilon$s.}
\label{fig:acc_cov_bound}
\end{figure*}

\paragraph{Recovering Non-Private Utility by Reducing Coverage.}

Recall that one key motivation for applying selective classification to a DP model is to recover non-private model utility at the expense of coverage. We investigate this coverage cost in detail by examining how many samples a DP model can produce decisions for while maintaining the utility level of the respective non-private model (\ie a model trained on the same dataset without DP). The results are outlined in Table~\ref{tab:coverage_performance}. For high (\ie $\varepsilon=7$) and moderate (\ie $\varepsilon=3$) privacy budgets, we observe that a reduction of $20\%-30\%$ in data coverage recovers non-private utility. However, at low (\ie $\varepsilon=1$) privacy budget we observe that dataset coverage degrades strongly on most datasets. In particular, we find that for CIFAR-10 (across all $\varepsilon$ values) and GTSRB (at $\varepsilon=1$), the coverage reduces to below $30\%$. This result showcases that, depending on the particular choice of $\varepsilon$, recovering non-private model performance from a DP model can come at a considerable coverage cost. Moreover, this coverage cost varies widely across SC methods with \sctd leading to the lowest incurred loss in coverage by far.

\paragraph{Disentangling Selective Prediction Performance From Base Accuracy.}

Recall from Section~\ref{sec:new_metric} that the standard evaluation pipeline for selective classification is not suitable for comparing DP models across varying privacy guarantees. To overcome this issue, we compute the accuracy-dependent upper bound (Equation~\ref{eq:sc_dp_bound}) for each experiment and measure how closely the achieved accuracy/coverage trade-off aligns with this bound as per Equation~\ref{eq:acc_norm_score}. We document these results computed over all datasets, SC methods, and $\varepsilon$ levels in Table~\ref{tab:sc_score_performance}. We find that, as $\varepsilon$ decreases, all methods perform progressively worse. That is, for stronger privacy, all methods increasingly struggle to identify points they predict correctly on. Recall that this behavior is expected based on our discussion in Section~\ref{sec:dp_affects_sc}. Again, we observe \sctd offering the strongest results, leading to the smallest bound deviation. To graphically showcase closeness to the upper bound, we further plot the accuracy-coverage trade-off and the corresponding upper bound for each experiment with the \sctd method in Figure~\ref{fig:acc_cov_bound}. 

\begin{figure*}[b]
  \centering
  \includegraphics[width=\linewidth]{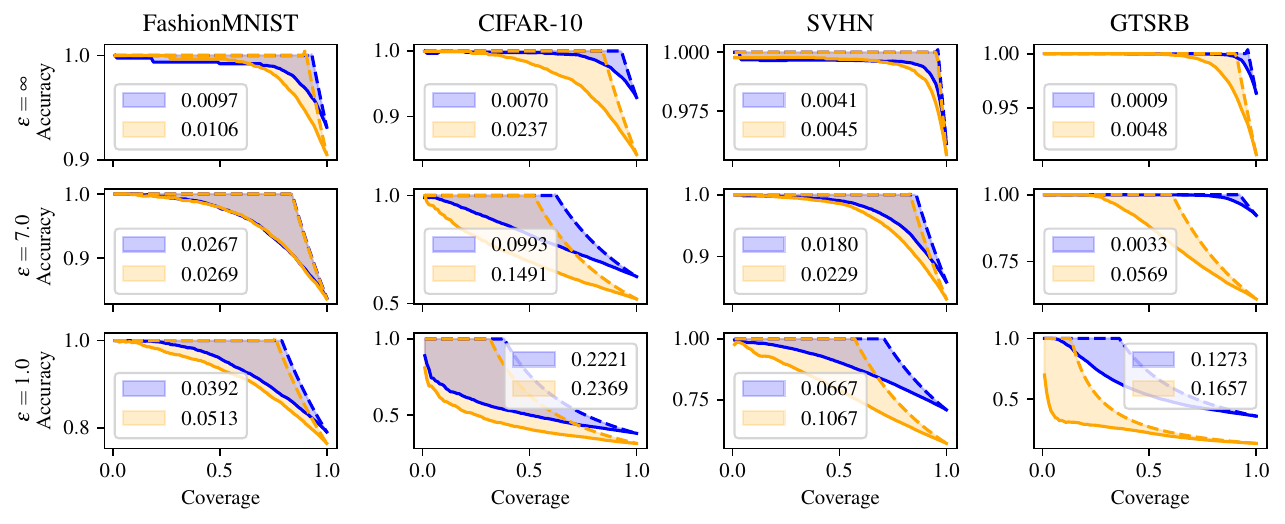}
\caption[Sequential composition (\de) vs parallel composition (\texttt{DE-PART}) for deep ensembles.]{\textbf{Sequential composition (\de) vs parallel composition (\texttt{DE-PART}) for deep ensembles.} We observe that \texttt{DE-PART} (orange) under-performs \de (blue) for selective classification across all $\varepsilon$.}
\label{fig:acc_cov_bound_de}
\end{figure*}

\paragraph{Parallel Composition With Partitioned Ensembles.}

As previously stated, Deep Ensembles under-perform under DP due to the sequential composition property. One potential mitigation strategy is to partition the data and train isolated models with no data overlap. This circumvents composition but also limits the utility of individual ensemble members as less data is available for training. We experiment with both setups and report results in Figure~\ref{fig:acc_cov_bound_de}. Overall, these experimental results indicate that (i) partitioned deep ensembles lead to lower full-coverage accuracy when compared to non-partitioned deep ensembles while (ii) at the same time being less performant in terms of selective classification performance.

\section{Conclusion}
\label{sec:concl}

In this work we have studied methods for performing selective classification under a differential privacy constraint. To that end, we have highlighted the fundamental difficulty of performing selective prediction under differential privacy, both via a synthetic example and empirical studies. To enable this analysis, we introduced a novel score that disentangles selective classification performance from baseline utility. While we establish that SC under DP is indeed challenging, our study finds that a specific method (\sctd) achieves the best trade-offs between SC performance and privacy budget.

\paragraph{Limitations.} This work presents insights drawn from empirical results and we acknowledge that a more thorough theoretical analysis is needed for a deeper understanding of the interplay between SC and DP. Also, we did not carefully investigate fairness implications beyond class imbalance. This connection to fairness requires special attention with past works showing that both DP and SC can negatively affect sensitive subgroups~\citep{jones2020selective, bagdasaryan2019differential}.

\paragraph{Future Work.} We believe that this work initiates a fruitful line of research. In particular, future work can expand on the intuition provided here to obtain fundamental bounds on selective classification performance in a DP setting. Although we did not focus on this here, we believe that the ideas developed in this chapter could help mitigate the trade-offs between privacy and fairness.

    \newcommand{\E}{\mathbb{E}}
\newcommand{\R}{\mathbb{R}}

\chapter{What Does It Take to Build a Performant Selective Classifier?}
\label{ch:sc_bounds}

\begin{paperref}
\normalfont
The contents of this chapter consist of research and results taken from: \citet{rabanser2025what}: \emph{\bibentry{rabanser2025what}}
\end{paperref}

\section*{Summary}

\looseness=-1
Selective classifiers improve reliability by abstaining on uncertain inputs, yet their performance often lags behind the \emph{perfect-ordering} oracle that accepts examples in exact order of correctness. We formulate this shortfall as a \emph{coverage‑uniform selective‑classification gap} and prove the first finite‑sample decomposition that pinpoints five distinct sources of looseness: Bayes noise, approximation error, ranking error, statistical noise, and implementation or shift‑induced slack. Our bound shows that \emph{monotone} post‑hoc calibration cannot reduce the gap, as it preserves the original score ordering; closing the gap therefore requires scoring mechanisms that can \emph{modify the ranking} induced by the base model. We validate our gap decomposition on synthetic two‐moons data and real‐world vision benchmarks, isolating each error component via controlled experiments. Results confirm that (i) Bayes noise and limited model capacity alone explain large gaps, (ii) only non‑monotone or feature‑aware calibrators shrink the ranking term, and (iii) distribution shift adds a distinct slack that must be addressed by robust training. Together, our decomposition supplies a quantitative \emph{error budget} and concrete design guidelines for building selective classifiers that approach ideal oracle behavior.

\section{Introduction}
\label{sec:intro}

In high-stakes applications like finance~\citep{9260038}, healthcare~\citep{guan2020bounded}, and autonomous driving~\citep{ghodsi2021generating}, machine learning (ML) models are increasingly tasked with making decisions under uncertainty, where dependable predictions are critical. Selective classifiers~\citep{chow1957optimum, el2010foundations} formalize the option to abstain on inputs deemed unreliable, reducing the risk of costly errors by refusing to predict when uncertain. Their effectiveness depends on identifying which predictions to trust and which to defer. A common evaluation metric is the \emph{accuracy–coverage} tradeoff, which quantifies how performance degrades as the model accepts a broader set of inputs. The benchmark is a hypothetical oracle that ranks inputs by their true likelihood of correctness, yielding a \emph{perfect-ordering upper bound}~\citep{geifman2018bias, rabanser2023training}. While some models approach this bound, others fall short—revealing persistent gaps and raising open questions about what properties of the learning setup truly govern selective performance.

\begin{wrapfigure}{R}{0.5\textwidth}
\vspace{-5pt}
    \centering
    \resizebox{\linewidth}{!}{
    \begin{tikzpicture}[
		declare function={}
		]
\begin{axis}[%
  xlabel=Coverage ($c$),
  ylabel=Selective Accuracy,
  xmin = -0.1,
  xmax = 1.1,
  ymin = -0.1,
  ymax = 1.25,
  grid=major,
  width=8cm,
  height=5.5cm,
  tick label style={/pgf/number format/fixed},
  legend style={at={(0.812,1.001)}, anchor=north},]
  
  \newcommand\afull{0.4}
  \newcommand\afullinccoord{{\afull + (1 - \afull)/2}}

  \addplot+[mark={},line width=2pt,Black,name path=C, domain=0:\afull/2] {1};
		  
  \addplot+[mark={},line width=2pt,Black,name path=D, domain=\afull/2:\afull] {15/16*x^2 - 15/8 * x + 107/80};
  
  \addplot+[mark={},line width=2pt,Black,name path=E, domain=\afull:1] {15/16*x^2 - 15/8 * x + 107/80};
  
  \addplot+[mark={},dashed,line width=2pt,Black,name path=A, domain=0:\afull] {1};
  \addplot+[mark={},dashed,line width=2pt,Black,name path=B, domain=\afull:1] {\afull/x};
  \addplot+[draw=none,mark=none,name path=C,domain=0:1] {0};
  \addplot+[Cyan!25!white] fill between[of=A and C,soft clip={domain=0:\afull}];
  \addplot+[Orange!25!white] fill between[of=B and C,soft clip={domain=\afull:1}];
  %\draw [black, line width=1pt, dashed] (\afull,0) -- (\afull, 1.0);
  \node[black, fill=white] at (1,\afull+0.14) {\footnotesize $a_\text{full}$};
  \node[black, fill=white] at (\afull,1.13) {\footnotesize $c = a_\text{full}$};

  % \addplot+[black,fill=black] coordinates{(\afull,1)};
  % \addplot+[black,fill=black] coordinates{(1,\afull)};

  \fill[black] (\afull, 1) circle[radius=2.5pt];
  \fill[black] (1,\afull) circle[radius=2.5pt];

  \addplot+[Black!25!white] fill between[of=A and D,soft clip={domain=0:1}];
  
    \addplot+[Black!25!white] fill between[of=B and E,soft clip={domain=0:1}];
  
  \draw [->, Orange, line width=2pt] (\afull,0.1) -- (1, 0.1);
  \draw [->, Cyan, line width=2pt] (0,0.1) -- (\afull, 0.1);
  \node[Cyan] at (\afull/2,0.25) {\footnotesize correct points};
  \node[Orange] at (\afullinccoord,0.25) {\footnotesize incorrect points};

  \legend{ \scriptsize \realtradeoff,,, \scriptsize \upperbound,,,,, \scriptsize $\int_0^1 \widehat{\Delta}(c)dc$};
 
\end{axis}
\end{tikzpicture}
    }
    \vspace{-12pt}
    \caption[Selective classification gap $\Delta(c)$.]{\textbf{Selective classification gap $\Delta(c)$.}  
      The dashed curve is the oracle frontier \(\overline{\mathrm{acc}}(a_{\text{full}},c)\) under which coverage levels left of \(c=a_{\text{full}}\) (\textcolor{Cyan}{blue}) accept correct predictions first, and rank incorrect predictions last (\textcolor{Orange}{orange}). The solid curve shows the realized selective accuracy \(\mathrm{acc}_{c}(h,g)\).  
      The mismtach between \(\overline{\mathrm{acc}}(a_{\text{full}},c)\) and \(\mathrm{acc}_{c}(h,g)\) at coverage $c$ is the gap \(\Delta(c)\);  
      the gray area visualizes the gap area over the full coverage spectrum.}
      \vspace{-25pt}
    \label{fig:bounds_overview}
\end{wrapfigure}
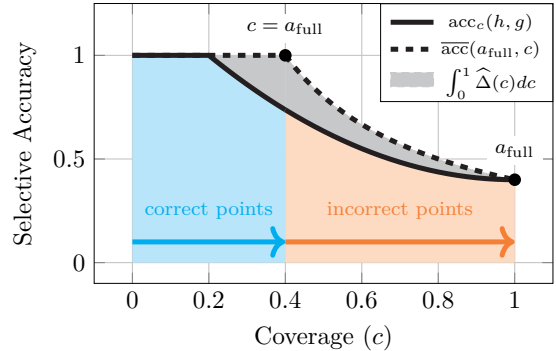

Classical theory explains selective classification in two idealized regimes. In the \emph{realizable} setting~\citep{el2010foundations}, where the data is noiseless and the true predictor lies within the hypothesis class, the model can asymptotically achieve the perfect accuracy–coverage curve. In the more general \emph{agnostic} setting~\citep{wiener2011agnostic}, the classifier competes with the best-in-class predictor, but this benchmark may itself fall well below the oracle bound—and the theory does not isolate the source of the gap. As a result, even the strongest formal guarantees offer little practical guidance:
\begin{quote}
\centering
\emph{For my finite model on finite data, what aspects of the learning setup will actually move my trade-off curve closer to the upper bound?}
\end{quote}

To answer this question, we re‑frame selective performance around the \emph{selective classification
gap}~\(\Delta(c)\): the mismatch between a model’s accuracy–coverage curve and the oracle
bound (see Figure~\ref{fig:bounds_overview} for an illustrative example). 
We show that this gap admits a finite‑sample decomposition:
\begin{equation}
\label{eq:intro-gap}
\widehat{\Delta}(c)
\;\le\;
\underbrace{\varepsilon_{\text{Bayes}}(c)}_{\text{\scriptsize irreducible}}
+\;\underbrace{\varepsilon_{\text{approx}}(c)}_{\text{\scriptsize capacity}}
+\;\underbrace{\varepsilon_{\text{rank}}(c)}_{\text{\scriptsize ranking}}
+\;\underbrace{\varepsilon_{\text{stat}}(c)}_{\text{\scriptsize data}}
+\;\underbrace{\varepsilon_{\text{misc}}(c)}_{\text{\scriptsize optimization\;\&\;shift}},
\qquad\forall c\in(0,1].
\end{equation}

Each term corresponds to a distinct—and often \emph{measurable}—source of looseness. 
The first term, \(\varepsilon_{\text{Bayes}}(c)\), reflects irreducible uncertainty: if the true label is inherently unpredictable from the input (e.g., due to label noise), even a perfect classifier must abstain on some examples. 
Next, \(\varepsilon_{\text{approx}}(c)\) captures limits of the model class: if the function class is too weak to approximate the Bayes-optimal decision rule, the gap widens. 
The third term, \(\varepsilon_{\text{rank}}(c)\), quantifies the model’s failure to correctly order inputs by their likelihood of correctness—typically due to poor confidence estimation or miscalibration. 
The statistical term \(\varepsilon_{\text{stat}}(c)\) accounts for finite-sample fluctuations that affect both learning and evaluation. 
Finally, \(\varepsilon_{\text{misc}}(c)\) aggregates practical imperfections, such as optimization error or test-time distribution shift. Equation~\eqref{eq:intro-gap} thus provides a coverage-uniform ``error budget'' that transforms the qualitative question posed earlier into a concrete quantitative diagnosis.

\looseness=-1
Two insights, developed further in later sections, are worth previewing. First, monotone post‑hoc calibration cannot reduce the \emph{ranking} term \(\varepsilon_{\text{rank}}(c)\), as it preserves the total order of scores—leaving the accuracy–coverage curve unchanged. Second, Equation~\eqref{eq:intro-gap} serves as an \emph{error budget} that identifies cost‑effective levers: (i) increase capacity or distill from a more expressive teacher to shrink \(\varepsilon_{\text{approx}}\); (ii) use additional or repeated labels and noise‑robust losses to reduce \(\varepsilon_{\text{Bayes}}\); (iii) enlarge validation data to lower \(\varepsilon_{\text{stat}}\); and (iv) apply domain adaptation or importance weighting to address \(\varepsilon_{\text{misc}}\).

\textbf{Contributions.}
We summarize our main contributions below:
\begin{itemize}
    \item \textbf{Problem formulation.}  
          We recast selective prediction in terms of a \emph{coverage‑uniform selective‑classification gap}—the key quantity to minimize to approach oracle behavior.  
This framing unifies prior work and highlights which failure modes dominate at each coverage level.

    \item \textbf{Theoretical analysis.}  
          We present the first \emph{finite‑sample decomposition} of the gap (Eq.~\eqref{eq:intro-gap}), dividing it into five terms: Bayes, approximation, ranking, statistical, and miscellaneous errors.  
Our analysis further shows that \emph{monotone calibration cannot reduce the gap}, motivating ranking‑aware methods.

\item \textbf{Empirical validation.}  
      Experiments—from two‑moons to large‑scale vision—confirm the decomposition: Bayes noise and capacity limits drive large gaps; temperature scaling improves calibration but not ranking; and shift-aware methods remain essential under distribution shift. \emph{These results clarify which factors matter most and how to target them effectively in practice.}
\end{itemize}

\section{Background \& Related Work}
\label{sec:background}

Selective classification extends the standard supervised classification framework as follows:

\begin{definition}[Selective Classifier~\citep{chow1957optimum, el2010foundations}]
A selective classifier is a pair \( (h, g) \), where \( h: \mathcal{X} \to \mathcal{Y} \) is a classifier over covariates \( \mathcal{X} = \mathbb{R}^D \) and labels \( \mathcal{Y} = \{1, \ldots, K\} \), and \( g: \mathcal{X} \times (\mathcal{X} \to \mathcal{Y}) \to \mathbb{R} \) is a selection function that assigns a confidence score.  
Given a threshold \( \tau \in \mathbb{R} \), the model abstains when the score falls below the threshold:
\begin{equation}
    \label{eq:sel_class}
    (h, g)(x) = \begin{cases}
    h(x) & \text{if } g(x, h) \geq \tau \\
    \bot & \text{otherwise}
    \end{cases}
    \enspace .
\end{equation}
\end{definition}

Intuitively, a selective classifier predicts only when confident.  
The selection score \(g(x, h)\) determines whether to accept or abstain: if \(g(x, h) \geq \tau\), the model outputs \(h(x)\); otherwise, it returns \(\bot\).

Many prior works have developed selective classification methods for training competitive pairs \((h, g)\).  
A popular selective prediction method is \emph{Softmax Response} (\sr)~\citep{hendrycks2016baseline, geifman2017selective}, which uses classifier confidence as the selection score.  
To improve calibration and reduce predictive variance, ensembling approaches have been explored: \emph{Deep Ensembles} (\de)~\citep{lakshminarayanan2017simple} train multiple models with different initializations, while \emph{Selective Classification via Training Dynamics} (\sctd)~\citep{rabanser2022selective} ensembles intermediate checkpoints.  
Other methods—such as \emph{SelectiveNet} (\sn)\citep{geifman2019selectivenet}, \emph{Deep Gamblers} (\dg)\citep{liu2019deep}, and \emph{Self-Adaptive Training} (\sat)~\citep{huang2020self}—alter the model architecture or loss function ensuring that prediction and rejection are learned jointly.

The efficacy of a selective classifier is evaluated using the empirical accuracy-coverage tradeoff.

\begin{definition}[Empirical Accuracy–Coverage Tradeoff]
\label{def:emp_acc_cov}
Let \(D=\{(x_i,y_i)\}_{i=1}^N\) be a dataset.  For a selective classifier \((h,g)\) and threshold \(\tau\), define
\begin{align}
\label{eq:emp_coverage}
\hat{\xi}_{h,g}(\tau)
&= \frac{1}{N}\,\bigl|\{\,i : g(x_i,h) \ge \tau \}\bigr|,
\\[6pt]
\label{eq:emp_accuracy}
\hat{\alpha}_{h,g}(\tau)
&= 
\begin{cases}
\displaystyle
\frac{\bigl|\{\,i : h(x_i)=y_i \text{ and } g(x_i,h) \ge \tau \}\bigr|}{
      \bigl|\{\,i : g(x_i,h) \ge \tau \}\bigr|}, 
& \text{if } \hat{\xi}_{h,g}(\tau)>0,\\[10pt]
0, & \text{if } \hat{\xi}_{h,g}(\tau)=0.
\end{cases}
\end{align}
The pair \((\hat{\xi}, \hat{\alpha})\) as \(\tau\) varies is the empirical accuracy–coverage curve.
\end{definition}
The score \(g(x,h)\) therefore induces a total order over \(D\): \(x_1\) is accepted before \(x_2\) if \(g(x_1,h) > g(x_2,h)\).  
This ordering governs which inputs are retained as coverage decreases.  
Effective strategies aim to maximize \(\hat{\alpha}\) at each coverage level \(\hat{\xi}\), often trading off accuracy and coverage.

\paragraph{Accuracy–coverage tradeoff evaluation.}
The accuracy–coverage tradeoff is often summarized by the area under the accuracy–coverage curve (\texttt{AUACC}), integrating selective accuracy over all coverage levels. However, \citet{geifman2018bias} show that \texttt{AUACC} favors models already accurate at full coverage. To address this issue, \citet{geifman2018bias} and \citet{rabanser2023training} propose oracle-based bounds, which become loose at low utility~\citep{galil2023can}. To avoid accuracy bias, \citet{galil2023can} and \citet{pugnana2023auc} recommend using the classifier’s \texttt{AUROC} instead. But \texttt{AUROC} is not monotonic in \texttt{AUACC}~\citep{cattelan2023fix, ding2020revisiting}, thus favoring methods tuned for \texttt{AUROC} over selective accuracy. Earlier work~\citep{el2010foundations, wiener2011agnostic} characterizes optimal selective classifiers in both realizable and agnostic regimes but focuses on existence rather than practical instantiation—unlike our finite-sample perspective.

\section{Decomposing the Selective Classification Gap To Optimality}
\label{sec:methods}

We characterize the optimal performance achievable by a selective classifier given its full-coverage accuracy, establishing a reference against which all practical selective classifiers can be evaluated.

\subsection{Oracle Bound and Selective Classification Gap}

\begin{definition}[Perfect Ordering Upper Bound~\citep{geifman2018bias, rabanser2023training}]
\label{def:poub}
Fix a base classifier \(h\) whose full‑coverage (standard) accuracy is
\(a_{\text{full}}:=\Pr\bigl(h(X)=Y\bigr)\in[0,1]\).
For any desired coverage level \(c\in(0,1]\), the best selective
accuracy—achieved by accepting the \(c\)-fraction of points with the \emph{highest}
posterior correctness $\Pr(h(X)=Y\mid X)$—is
\begin{equation}
\label{eq:bound}
\overline{\mathrm{acc}}\bigl(a_{\text{full}},c\bigr)
=\;
\begin{cases}
1, & 0 < c \le a_{\text{full}}, \\[6pt]
\dfrac{a_{\text{full}}}{c}, & a_{\text{full}} < c < 1.
\end{cases}
\end{equation}
\end{definition}

Assuming no Bayes noise—that is, all errors are avoidable given perfect confidence—this piecewise curve (see Figure~\ref{fig:bounds_overview}) traces the \emph{oracle} accuracy–coverage frontier based on a perfect ranking of examples by correctness probability. Any real selective classifier falls below this bound—potentially far below, depending on its calibration, expressivity, and sensitivity to noise. To quantify how far a given classifier falls short of this ideal, we define the \emph{selective classification gap}.

\begin{definition}[Selective Classification Gap]
\label{def:gap}
Let \((h,g)\) be a selective classifier with full‑coverage accuracy 
\(a_{\mathrm{full}}=\Pr(h(X)=Y)\).  For a coverage level \(c\in(0,1]\), let
\(\tau_c\) be the threshold satisfying \(\Pr\bigl(g(X,h)\ge\tau_c\bigr)=c\).  The selective accuracy at coverage \(c\) is
\(
\mathrm{acc}_c(h,g)
:=
\Pr\bigl(h(X)=Y \;\bigm|\; g(X,h)\ge\tau_c\bigr).
\)
The selective classification gap at coverage \(c\) is then defined as the deviation from the perfect-ordering upper bound:
\begin{equation}
\Delta(c)
:=
\overline{\mathrm{acc}}\bigl(a_{\mathrm{full}},c\bigr)
\;-\;\mathrm{acc}_c(h,g).
\end{equation}
This gap $\Delta(c)$ represents the excess selective risk at a given coverage $c$. Its integral over the entire coverage spectrum, $\int_0^1 \Delta(c) dc$, is equivalent to the Excess-AURC (E-AURC) metric proposed by \citet{geifman2018bias}.
\end{definition}

The function \(\Delta(c)\) offers a coverage-resolved diagnostic of selective performance. A small gap indicates near-oracle behavior—accepting only examples it can confidently and correctly classify—while a large gap suggests limitations in estimating correctness or ranking examples reliably. Understanding the magnitude and shape of this gap is key to analyzing and improving selective classifiers.

\subsection{Why Is the Upper Bound Loose?}
\label{sec:why-loose}

The oracle bound in Definition~\ref{def:poub} relies on two idealized
assumptions: perfect prediction on all inputs and perfect ranking by
the true correctness posterior. In practice, selective classifiers deviate in
four principal ways, each corresponding to a term in our later
decomposition (\(\varepsilon_{\text{Bayes}},\varepsilon_{\text{approx}},
\varepsilon_{\text{rank}},\varepsilon_{\text{stat}}\)):

\begin{enumerate}

\item \textbf{Bayes noise (\(\varepsilon_{\text{Bayes}}\)).}  
  Even a Bayes-optimal rule errs on intrinsically ambiguous points
(where \(\max_y \Pr(Y=y\mid X)<1\)), unavoidable in real data~\citep{devroye2013probabilistic}.  
As coverage increases, the oracle must accept some of these noisy inputs, lowering the achievable accuracy.

\item \textbf{Approximation limits (\(\varepsilon_{\text{approx}}\)).}  
  A learned model \(h\) drawn from a restricted hypothesis class may
  misclassify inputs with high posterior confidence under the Bayes rule~\citep{bishop2006pattern}.  
  This gap reduces full-coverage accuracy and limits selective performance.

\item \textbf{Ranking error (\(\varepsilon_{\text{rank}}\)).}  
  Let \(\eta_h(x):=\Pr\bigl(h(x)=Y\mid X=x\bigr)\) denote the true
  correctness posterior. The confidence score \(g(X,h)\) may
  mis-order inputs relative to \(\eta_h(x)\), leading easy examples
  to be rejected and harder ones accepted, which increases \(\Delta(c)\).  

\item \textbf{Statistical noise (\(\varepsilon_{\text{stat}}\)).}  
  Estimating the threshold \(\tau_c\) and selective accuracy from a finite validation set introduces randomness
  of order \(\mathcal{O}(\sqrt{\log(1/\delta)/n})\). This follows from concentration bounds; see~\citet{shalev2014understanding} for standard applications in learning theory.

\end{enumerate}

\begin{takeaway}
The selective classification gap \(\Delta(c)\) reflects a mix of irreducible noise,
model misspecification, ranking errors, and sampling variability. Addressing each—via cleaner labels,
stronger models, or improved ranking—can tighten selective prediction performance.
\end{takeaway}

\looseness=-1
Next, we formalize this decomposition and provide a general bound on the total gap.

\subsection{Formal Decomposition of the Gap}
\label{sec:formal-gap}

We now give a principled decomposition of the selective classification gap and provide a corresponding finite-sample upper bound. For clarity and notational simplicity, we treat the binary‑label case \(\mathcal{Y}=\{0,1\}\); the multiclass extension follows by a standard one‑vs‑rest reduction.

\textbf{Notation.} Let \(\eta(x):=\Pr\bigl(Y=1\mid X=x\bigr)\) be the Bayes posterior.
For a fixed classifier \(h:\mathcal{X}\to\mathcal{Y}\) define its
(induced) correctness posterior
\begin{equation}
  \eta_h(x)\;:=\;\Pr\bigl(h(x)=Y\mid X=x\bigr)
  \;=\;\eta(x)\,\mathbb{I}_{\{h(x)=1\}}+
        \bigl(1-\eta(x)\bigr)\mathbb{I}_{\{h(x)=0\}}.
\end{equation}
All expectations and probabilities are taken w.r.t.\ the true data distribution
\(\mathcal{D}\). Throughout let \(g(x,h)\) be the confidence score.
For a target coverage \(c\in(0,1]\) denote by
\begin{equation}
  t_c \quad \text{s.t.}\quad
  \Pr\bigl(g(X,h)\ge t_c\bigr)=c
\end{equation}
the \emph{population threshold}, and write the
\emph{accepted region}
\(A_c:=\{x:g(x,h)\ge t_c\}\).  
The oracle that attains the perfect‑ordering bound accepts $A_c^{\star}:=\bigl\{x:\eta_h(x)\text{ is among the largest }c\text{-fraction}\bigr\}$.

\textbf{Error Terms.} We isolate the following sources of error affecting selective prediction performance:
\begin{align}
\varepsilon_{\text{Bayes}}(c)
&:=\E\Bigl[1-\max\{\eta(X),1-\eta(X)\}\;\Bigm|\;X\in A_c\Bigr],
\\[4pt]
\varepsilon_{\text{approx}}(c)
&:=\E\Bigl[\bigl|\eta_h(X)-\eta(X)\bigr|\;\Bigm|\;X\in A_c\Bigr],
\\[4pt]
\varepsilon_{\text{rank}}(c)
&:=\E\bigl[\eta_h(X)\mid X\in A_c^{\star}\bigr]
  -\E\bigl[\eta_h(X)\mid X\in A_c\bigr]\;\;\;\;\;\;(\ge0),
\\[4pt]
\varepsilon_{\text{stat}}(c)
&:=C\sqrt{\frac{\log(1/\delta)}{n}},
\end{align}
where \(n\) is the evaluation‑set size, \(\delta\in(0,1)\) a confidence
parameter, and \(C>0\) an absolute constant. Intuitively, \(\varepsilon_{\text{Bayes}}\) is the irreducible label noise inside the accepted region; \(\varepsilon_{\text{approx}}\) measures how far \(h\) is from Bayes‑optimal on the \emph{selected} inputs; and \(\varepsilon_{\text{rank}}\) is a \emph{ranking regret}—the accuracy loss due solely to picking the wrong \(c\)-fraction of examples.

\begin{remark}[Distance to a Perfect Ranker]
A natural way to gauge how far the learned acceptance rule is from the oracle is
the mass mis-ordered
\begin{equation}
    D_{\text{rank}}(c)\;:=\;
    \Pr\bigl(X\in A_c^{\star}\setminus A_c\bigr)
    +\Pr\bigl(X\in A_c\setminus A_c^{\star}\bigr).
\end{equation}
It equals the total probability of examples that would have to be
\emph{swapped} between $A_c$ and $A_c^{\star}$ to recover perfect ordering.
Hence $D_{\text{rank}}(c)=0$ iff $A_c=A_c^{\star}$, in which case
$\varepsilon_{\text{rank}}(c)$ also vanishes.
\end{remark}

\begin{theorem}[Selective Gap Bound]
\label{thm:gap}
For a coverage level \(c\in(0,1]\) and a selective classifier \((h,g)\) the population gap obeys
\begin{equation}
\Delta(c)=\overline{\mathrm{acc}}\bigl(a_{\mathrm{full}},c\bigr)
-\mathrm{acc}_c(h,g)
\;\le\;
\varepsilon_{\text{Bayes}}(c)
+\varepsilon_{\text{approx}}(c)
+\varepsilon_{\text{rank}}(c).
\label{eq:pop-gap-ranking}
\end{equation}
Let \(\widehat{\Delta}(c)\) be the empirical gap on \(n\) i.i.d.\
test points.  Then, with probability at least \(1-\delta\),
\begin{equation}
\widehat{\Delta}(c)
\;\le\;
\varepsilon_{\text{Bayes}}(c)
+\varepsilon_{\text{approx}}(c)
+\varepsilon_{\text{rank}}(c)
+C\sqrt{\tfrac{\log(1/\delta)}{n}}.
\label{eq:emp-gap-ranking}
\end{equation}
\end{theorem}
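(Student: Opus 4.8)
The plan is to split the gap $\Delta(c) = \overline{\mathrm{acc}}(a_{\mathrm{full}},c) - \mathrm{acc}_c(h,g)$ into the contribution of the \emph{oracle accepted set} $A_c^\star$ versus the \emph{realized accepted set} $A_c$, and then bound the difference term-by-term. First I would observe that, by construction of the perfect-ordering bound (Definition~\ref{def:poub}) together with the assumption that $\varepsilon_{\text{Bayes}}$ captures irreducible noise, the oracle frontier $\overline{\mathrm{acc}}(a_{\mathrm{full}},c)$ is an upper bound on $\E[\eta_h(X)\mid X\in A_c^\star]$ up to the Bayes-noise slack: specifically $\overline{\mathrm{acc}}(a_{\mathrm{full}},c) \le \E[\eta_h(X)\mid X\in A_c^\star] + \varepsilon_{\text{Bayes}}(c)$, because accepting the top-$c$ fraction by $\eta_h$ is exactly the optimal ranking, and the residual between "fraction of inputs the model classifies correctly among $A_c^\star$" and "fraction of inputs that are Bayes-resolvable among $A_c^\star$" is controlled by the expected label noise in that region. (One should be slightly careful here: $\varepsilon_{\text{Bayes}}$ is defined over $A_c$ rather than $A_c^\star$; I would either argue the two are interchangeable under the stated idealization, or absorb the discrepancy into $\varepsilon_{\text{misc}}$ — this is a place the write-up needs to be precise.)

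Next I would relate $\mathrm{acc}_c(h,g)$ to $\E[\eta_h(X)\mid X\in A_c]$. Here the key identity is that the conditional accuracy on the accepted region equals $\E[\eta_h(X)\mid X\in A_c]$ in population, since $\eta_h(x) = \Pr(h(x)=Y\mid X=x)$ and $\mathrm{acc}_c(h,g) = \Pr(h(X)=Y\mid X\in A_c)$ by the tower property. Then the telescoping decomposition is immediate:
\begin{equation}
\Delta(c) \;\le\; \varepsilon_{\text{Bayes}}(c) \;+\; \Bigl(\E[\eta_h(X)\mid X\in A_c^\star] - \E[\eta_h(X)\mid X\in A_c]\Bigr),
\end{equation}
and the parenthesized term is exactly $\varepsilon_{\text{rank}}(c)$ by definition. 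This already yields a bound of the form $\Delta(c) \le \varepsilon_{\text{Bayes}}(c) + \varepsilon_{\text{rank}}(c)$; the approximation term $\varepsilon_{\text{approx}}(c)$ enters when one wants to phrase the ranking regret relative to the \emph{true} posterior $\eta$ rather than the induced $\eta_h$ — i.e., if the oracle is defined by ordering on $\eta$ instead of $\eta_h$, the mismatch $|\eta_h - \eta|$ on the selected region must be paid, giving the third summand via a triangle-inequality step $\bigl|\E[\eta_h \mid A_c^\star] - \E[\eta \mid A_c^\star]\bigr| \le \E[|\eta_h-\eta|\mid A_c^\star]$. So the population inequality~\eqref{eq:pop-gap-ranking} follows by chaining these three bounds.

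For the empirical version~\eqref{eq:emp-gap-ranking}, I would invoke a uniform concentration argument: the empirical gap $\widehat\Delta(c)$ differs from $\Delta(c)$ through (i) the empirical estimate of the threshold $\tau_c$ (a quantile of $g(X,h)$), and (ii) the empirical estimate of the conditional accuracy on $A_c$. Both are bounded-range functionals of the i.i.d.\ sample, so a DKW-type bound for the quantile plus a Hoeffding/Bernstein bound for the conditional mean — combined with a union bound over the (at most $n$) distinct coverage levels realizable on the sample — yields a uniform deviation of order $C\sqrt{\log(1/\delta)/n}$, which is precisely $\varepsilon_{\text{stat}}(c)$. The main obstacle I anticipate is the bookkeeping around the threshold: when $g(X,h)$ has atoms (ties in the confidence score), the coverage $c$ may not be exactly achievable and $A_c$ is only defined up to tie-breaking, so one must either assume a continuous score distribution or carefully define $A_c$ via randomized tie-breaking and verify that neither the population identities nor the concentration bounds are disrupted. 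A secondary subtlety is ensuring the constant $C$ is genuinely absolute and not coverage-dependent — this requires the concentration step to be uniform in $c$, which the union bound over sample-realizable coverages handles, at the cost of a harmless extra $\log n$ that can be folded into $\log(1/\delta)$ by adjusting $\delta$.
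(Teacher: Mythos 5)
Your overall scaffolding matches the paper's proof: you use the identity \(\mathrm{acc}_c(h,g)=\E[\eta_h(X)\mid X\in A_c]\), telescope \(\Delta(c)\) through the intermediate quantity \(\E[\eta_h\mid A_c^\star]\) so that \(\E[\eta_h\mid A_c^\star]-\E[\eta_h\mid A_c]\) becomes \(\varepsilon_{\text{rank}}(c)\), and close the empirical bound by concentration plus a union bound. Your finite-sample discussion is actually more careful than the paper's one-line "Hoeffding plus union bound," since you explicitly flag the quantile estimation of \(\tau_c\), ties in the score, and uniformity over coverage levels; none of those concerns are fatal, and your DKW-plus-Hoeffding plan is a legitimate (arguably cleaner) way to obtain the \(C\sqrt{\log(1/\delta)/n}\) term.

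The genuine gap is in how you account for \(\varepsilon_{\text{approx}}\). You assert as step one that \(\overline{\mathrm{acc}}(a_{\text{full}},c)\le\E[\eta_h\mid A_c^\star]+\varepsilon_{\text{Bayes}}(c)\), which would give \(\Delta(c)\le\varepsilon_{\text{Bayes}}+\varepsilon_{\text{rank}}\) outright, and you then say \(\varepsilon_{\text{approx}}\) only appears if one redefines the oracle to rank by \(\eta\) instead of \(\eta_h\). That is not the mechanism, and the step-one inequality is false in general. The paper (Definitions~\ref{def:poub} and the surrounding text) defines both \(\overline{\mathrm{acc}}\) and \(A_c^\star\) by ranking on the \emph{same} quantity \(\eta_h\), so there is no oracle mismatch to pay for. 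The residual \(\overline{\mathrm{acc}}(a_{\text{full}},c)-\E[\eta_h\mid A_c^\star]\) is nonzero because the oracle frontier is built assuming perfect predictions (\(\overline{\mathrm{acc}}=1\) for \(c\le a_{\text{full}}\)), whereas \(\E[\eta_h\mid A_c^\star]<1\) whenever the top-\(c\) region has \(\eta_h<1\). That shortfall \(1-\eta_h(x)\) decomposes pointwise as \(\bigl(1-\max\{\eta,1-\eta\}\bigr)+\bigl(\max\{\eta,1-\eta\}-\eta_h\bigr)\): the first summand is Bayes noise; the second vanishes only when \(h\) agrees with the Bayes classifier and is otherwise the approximation penalty. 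Even in a completely noiseless problem (\(\varepsilon_{\text{Bayes}}\equiv 0\)), a weak hypothesis class leaves \(\eta_h<1\) on the accepted region, so \(\varepsilon_{\text{approx}}\) is unavoidable and cannot be written off as an artifact of which posterior the oracle ranks by. You should replace your step one with the decomposition of \(1-\eta_h\) above, at which point both error sources appear as the paper intends. Your caveat about the conditioning set (\(A_c\) versus \(A_c^\star\)) in \(\varepsilon_{\text{Bayes}}\) is a fair observation—one the paper itself elides—but it is a bookkeeping issue, not a structural one.
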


\begin{proof}
Because
\(
\mathrm{acc}_c(h,g)=\E[\eta_h(X)\mid A_c],
\)
the gap decomposes as
\[
  \Delta(c)
  =\underbrace{\E[\eta_h\mid A_c^{\star}]
               -\E[\eta_h\mid A_c]}_{\varepsilon_{\text{rank}}(c)}
   \;+\;
   \underbrace{\E[\eta_h-\mathbb{I}_{\{h=Y\}}\mid A_c]}
              _{\varepsilon_{\text{approx}}(c)}
   \;+\;
   \underbrace{\E[1-\max\{\eta,1-\eta\}\mid A_c]}
              _{\varepsilon_{\text{Bayes}}(c)}.
\]
This yields the population bound \eqref{eq:pop-gap-ranking}. 
For each expectation in the decomposition apply Hoeffding’s
inequality, a union bound over the three terms gives, with probability
\(1-\delta\),
\(
  |\widehat{\Delta}(c)-\Delta(c)|
  \le C\sqrt{\log(1/\delta)/n}.
\)
Adding this deviation to \eqref{eq:pop-gap-ranking} establishes
\eqref{eq:emp-gap-ranking}. \\See Appendix~\ref{app:proof-gap-ranking} for an extended proof with detailed intermediate steps.
\end{proof}

\subsection{Calibration and Its (Limited) Effect on the Gap}
\label{sec:calibration-gap}

As shown in Theorem~\ref{thm:gap}, the selective classification gap includes a \emph{ranking error} term \(\varepsilon_{\text{rank}}(c)\), which captures misalignment between the confidence score and true correctness. Model calibration~\citep{niculescu2005predicting}—widely used to reduce over- or underconfidence—is often assumed to improve this alignment by transforming scores to better reflect correctness likelihood. Yet its effect on selective performance remains ambiguous and context-dependent. Prior work has reached conflicting conclusions: \citet{zhu2022rethinking} argue that calibration may degrade abstention behavior, while \citet{galil2023can} find that temperature scaling can improve selective prediction in practice. We show that the impact on the gap depends critically on the \emph{type} of calibration method used and its influence on the induced ranking. We begin by recalling the formal definition of calibration.

\begin{definition}[Perfect Calibration]
\label{def:calibration}
For each input \(x\) let a model produce a predicted label \(\hat y(x)\) and an associated confidence score \(s(x)\in[0,1]\). We say the model is perfectly calibrated if
\begin{equation}
  \Pr\bigl(Y = \hat y(X)\;\bigm|\;s(X)=t\bigr) \;=\; t \qquad \text{for every confidence level}\ t\in[0,1].
  \label{eq:perfect-cal}
\end{equation}
\end{definition}

Practical estimators approximate~\eqref{eq:perfect-cal} via a post-hoc map \(\phi\) such that \(\tilde s(x)=\phi(s(x))\) approaches prefect calibration. \emph{Expected Calibration Error (ECE)}~\citep{naeini2015obtaining} quantifies this closeness:
\begin{equation}
  \text{ECE} = \sum_{b=1}^B \frac{|I_b|}{n}
  \left| \frac{1}{|I_b|} \sum_{i \in I_b} \mathbb{I}\{ \hat y(x_i) = y_i \}
         - \frac{1}{|I_b|} \sum_{i \in I_b} \tilde s(x_i) \right|,
  \label{eq:ece}
\end{equation}
where \(I_b\) is the set of indices in bin \(b\), \(n\) is the total number
of examples, and \(B\) is the number of bins.

\textbf{Monotone Score-Level Calibration Leaves the Gap Intact.}
Temperature scaling~\citep{guo2017calibration}, isotonic regression~\citep{zadrozny2002transforming}, and histogram
binning~\citep{zadrozny2001obtaining} all fit a \emph{monotone} \(\phi\colon[0,1]\to[0,1]\) that preserves score ordering.
Because monotone maps preserve ordering,
the acceptance set
\(A_c=\{x:\tilde s(x)\ge\tau_c\}\)
is identical to the one obtained from \(s(x)\);
hence the selective accuracy
\(\mathrm{acc}_c(h,g)\)
and the gap
\(
\Delta(c)=\overline{\mathrm{acc}}\bigl(a_{\text{full}},c\bigr)-\mathrm{acc}_c(h,g)
\)
are \emph{unchanged}.
Monotone calibration thereby reduces the approximation error \(\varepsilon_{\text{approx}}(c)\) in Section~\ref{sec:formal-gap} but leaves the ranking error \(\varepsilon_{\text{rank}}(c)\) untouched.

\textbf{Why Temperature Scaling \texorpdfstring{\emph{Can}}{} Reorder But Rarely Does.}
Temperature scaling, the most widely used post-hoc calibration technique, divides every logit vector
\(z(x)\in\mathbb{R}^K\) by a scalar \(T>0\),
\begin{equation}
p_j^{(T)}(x)=
\frac{\exp(z_j(x)/T)}{\sum_k \exp(z_k(x)/T)}.
\end{equation}
Two inputs with different logit \emph{margins} can swap their
max-probability after rescaling. In theory this breaks the monotone
ranking guarantee; in practice such ties occur only when the runner-up logit gaps are extremely small. Consequently, the induced change in the
accuracy–coverage curve is negligible—a pattern we also observe empirically. See Appendix~\ref{app:ts-rerank} for an extended discussion.

\textbf{Moving the Gap Requires Non-Monotone Scoring.}
To reduce the selective classification gap \(\Delta(c)\), one must go beyond simple monotone calibration and actively modify the acceptance ordering through one of the following approaches:

\begin{itemize}
    \item \textbf{Adaptive, ensemble, or feature-aware scoring}:  
          Techniques such as self-adaptive training (\(\texttt{SAT}\)), deep ensembles (\(\texttt{DE}\)), and learned calibration heads \(g_\psi(x)\) predict correctness scores from hidden representations or multiple model votes. These approaches leverage instance-specific uncertainty and shared feature context to rerank samples that otherwise receive similar raw confidence values.
          
\item \textbf{Binning or resampling}:  
      Techniques such as Bayesian Binning into Quantiles (BBQ)~\citep{naeini2015obtaining} and conformal p-value methods~\citep{angelopoulos2021gentle} refine confidence scores by estimating accuracy over adaptive data subsets. Both use empirical statistics—either at the bin level or per example—to recalibrate and reorder predictions, often improving ranking.

\item \textbf{Vector/Dirichlet scaling}:  
      A non-monotone transformation of the logit vector \(z\) via a learned affine map \(Wz + b\), with \(W \in \mathbb{R}^{K \times K}\), enables reordering of confidence scores by capturing inter-class dependencies, nonlinear score interactions, and representation-sensitive shifts~\citep{kull2019beyond}.

\end{itemize}

\looseness=-1
\textbf{Loss Prediction as a Multicalibration Litmus Test.}
A complementary view on how calibration connects to ranking ability starts from \emph{multicalibration}~\citep{hebert2018multicalibration},
the degree to which a model is calibrated on select subgroups. Recent work by \citet{gollakota2025loss} shows that strong multicalibration is closely tied to a model's ability to \emph{predict its own loss}, an idea seemingly closely related to selective prediction. We formalize this connection in Appendix~\ref{app:loss-pred} and show, both theoretically and empirically, that success or failure at the loss-prediction task corresponds directly to the magnitude of the ranking error \(\varepsilon_{\text{rank}}(c)\). In short, if a model's confidence scores cannot be out-predicted on their own mistakes, they are effectively multicalibrated—and near the oracle frontier.

\begin{takeaway}
Monotone score-level calibration (e.g., temperature scaling) is not enough to reduce the selective classification gap. Closing the gap requires scoring methods that actively change the ranking—through feature-aware heads, ensembles, or non-monotone transformations.
\end{takeaway}

\subsection{Additional Practical Sources of Looseness}
\label{sec:extra-slack-short}
The decomposition in Theorem~\ref{thm:gap} captures the \emph{intrinsic} sources of error—Bayes noise, approximation limits, ranking error, and sampling slack—forming a principled bound that holds even under perfect optimization, infinite data, and i.i.d.\ testing. In practical deployments, however, additional imperfections can inflate the empirical gap \(\widehat{\Delta}(c)\). These stem from implementation details, scoring granularity, and distribution shift—not fundamental limits, but contingent slack terms reducible through better engineering. We summarize them below under a single \emph{residual slack} term \(\varepsilon_{\text{misc}}(c)\).
\begin{enumerate}
  \item \textbf{Optimization error \(\varepsilon_{\text{opt}}\).}  
    In practice, gradient‐based solvers rarely attain the empirical risk minimizer. If \(L(\theta)\) denotes the end-to-end training objective—encompassing model architecture, loss (e.g.\ cross-entropy), and training data—and \(\hat\theta\) its final iterate, then $\varepsilon_{\text{opt}}\;=\; L(\hat\theta) \;-\; \min_{\theta}L(\theta)$, which—via standard surrogate‐to‐0/1 calibration bounds—translates into a nonzero selective‐accuracy loss that persists even under infinite data.
  \item \textbf{Distribution shift \(\varepsilon_{\text{shift}}(c)\).}  
  When the test distribution \(p_{\mathrm{test}}\) deviates from the training distribution \(p_{\mathrm{train}}\), both calibration and ranking typically degrade. In particular, for a hypothesis class \(\mathcal{H}\), the gap due to shift can be bounded by an \emph{Integral Probability Metric (IPM)}~\citep{muller1997integral}:
  \begin{equation}
    \varepsilon_{\text{shift}}(c)\;\le\;\mathrm{IPM}_{\mathcal{H}}\bigl(p_{\mathrm{train}},\,p_{\mathrm{test}}\bigr)
    := \sup_{f \in \mathcal{H}} \left| \mathbb{E}_{p_{\mathrm{train}}}[f] - \mathbb{E}_{p_{\mathrm{test}}}[f] \right|.
  \end{equation}
  Hence, larger shifts in distribution (relative to \(\mathcal{H}\)) lead to wider selective classification gaps.
\end{enumerate}
\textbf{Residual Slack.}  We absorb remaining implementation‐level effects (threshold‐selection noise, score quantization, etc.) into $\varepsilon_{\text{misc}}(c) :=\varepsilon_{\text{opt}}+\varepsilon_{\text{shift}}(c)$, yielding the streamlined high‐probability bound.
\begin{equation}
  \widehat\Delta(c)
  \;\le\;
  \underbrace{\varepsilon_{\text{Bayes}}(c)
               +\varepsilon_{\text{approx}}(c)
               +\varepsilon_{\text{rank}}(c)
               +\varepsilon_{\text{stat}}(c)}_{\text{intrinsic}}
  \;+\;
  \varepsilon_{\text{misc}}(c).
\end{equation}

\begin{takeaway}
\looseness=-1
Only \(\varepsilon_{\text{Bayes}}\) reflects irreducible uncertainty; the other intrinsic terms—\(\varepsilon_{\text{approx}}\), \(\varepsilon_{\text{rank}}\), and \(\varepsilon_{\text{stat}}\)—can be reduced with better models, calibration, and data. The \emph{miscellaneous slack} \(\varepsilon_{\text{misc}}\) highlights optimization and shift-robustness as levers for closing the gap to the oracle.
\end{takeaway}

\section{Empirical Results}
\label{sec:experiments}

Our experimental study is organized around three guiding questions that reflect the theoretical decomposition in Section~\ref{sec:methods}. Unless otherwise specified, all selective‑accuracy curves are averaged over 5 random seeds. By default, we use a ResNet-18 as the predictive model and adopt maximum softmax probability (\msp) as the selection mechanism---due to its simplicity and widespread use.

\subsection{Q1: How do Bayes error and approximation error shape the gap?}

\emph{Setup.}  
We conduct both synthetic and real-world experiments. For our synthetic results, which give us precise control over the data generation process, we simulate two sources of intrinsic difficulty on the two‑moons dataset:
(i) \textbf{noise $\sigma \in \{0.1,\,0.33,0.66,\,1.5\}$} controls how much the two moons expand into each other; and
(ii) \textbf{model capacity}, varied from logistic regression
(low capacity) to a shallow MLP (high capacity). For our real-world experiments we tackle the analysis similarly: for (i) we evaluate a trained CIFAR-10 model on the CIFAR-10N/100N~\citep{wei2021learning} datasets to assess which data points have large labeling disagreement; and for (ii) we vary the model architecture across a simple CNN (details in Appendix~\ref{app:simplecnn}), a ResNet-18~\citep{he2016deep}, and a WideResNet-50~\citep{zagoruyko2016wide} on CIFAR-100~\citep{krizhevsky2009learning} and StanfordCars~\citep{krause20133d}. For each setting, we plot and compute the Excess-AURC (E-AURC) \citep{geifman2018bias} by integrating the empirical gap $\widehat{\Delta}(c)$ across all coverage levels.

\begin{figure}[t]
  \centering
  \begin{subfigure}[t]{0.49\textwidth}
  \centering
    \includegraphics[width=\linewidth]{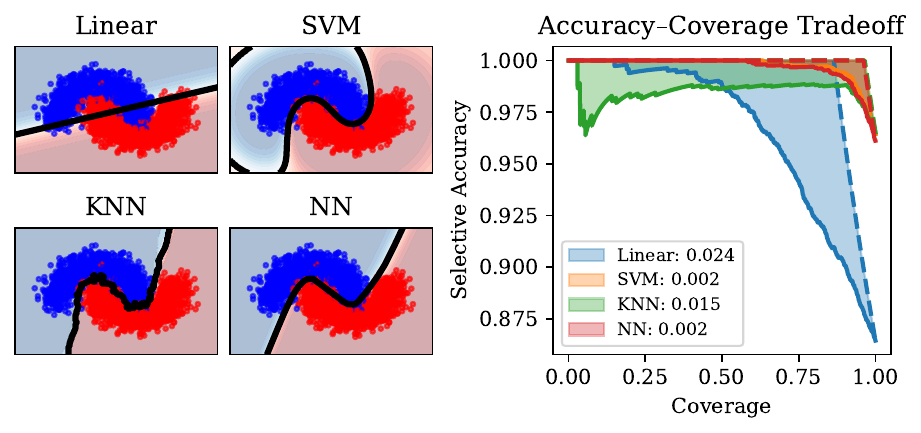}%
    \caption{Approximation error with two moons dataset.}
    \label{fig:left}
  \end{subfigure}
  \begin{subfigure}[t]{0.24\textwidth}
    \centering
    \includegraphics[width=\linewidth]{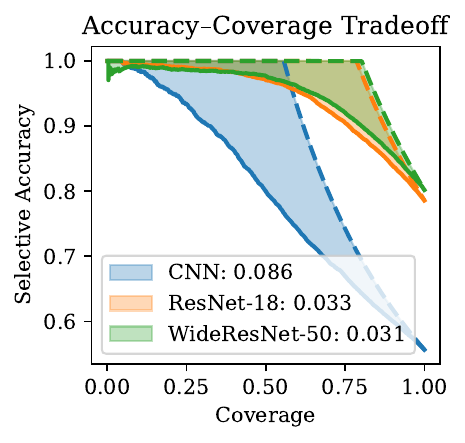} 
    \caption{CIFAR-100}
    \label{fig:right}
  \end{subfigure}
  \begin{subfigure}[t]{0.24\textwidth}
    \centering
    \includegraphics[width=\linewidth]{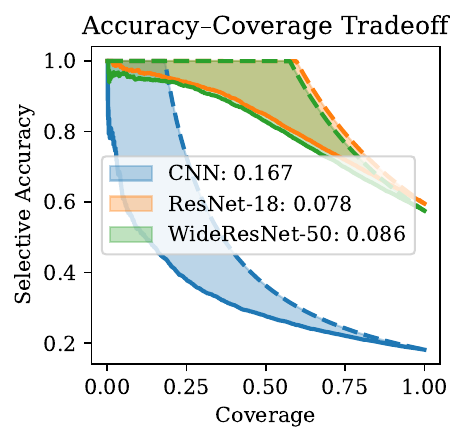}
    \caption{StanfordCars}
    \label{fig:right}
  \end{subfigure}
  \caption[Experiments on approximation error.]{\textbf{Experiments on approximation error}. We find that approximation error is a major contributor to the gap. (a) We show the two moons dataset fitted with models of different degrees of expressiveness as well as the corresponding accuracy-coverage tradeoffs. (b) + (c) Accuracy-coverage tradeoffs for various model architectures on CIFAR-100 and StanfordCars, respectively.}
  \label{fig:exp_appr}
\end{figure}

\begin{figure}[t]
  \centering
  \begin{subfigure}[t]{0.49\textwidth}
  \centering
    \includegraphics[width=\linewidth]{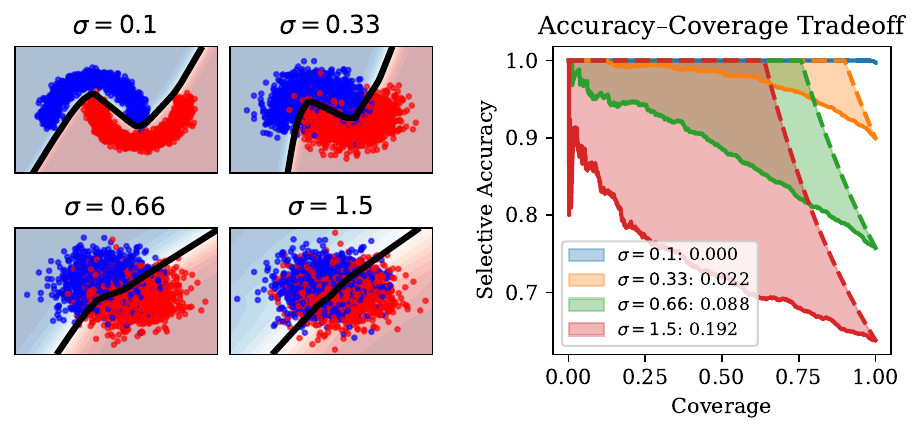}%
    \caption{Bayes error with two moons dataset.}
    \label{fig:left}
  \end{subfigure}
  \begin{subfigure}[t]{0.24\textwidth}
    \centering
    \includegraphics[width=\linewidth]{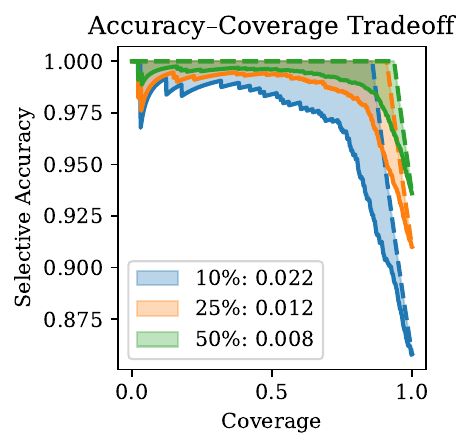}
    \caption{CIFAR-10N}
    \label{fig:right}
  \end{subfigure}
  \begin{subfigure}[t]{0.24\textwidth}
    \centering
    \includegraphics[width=\linewidth]{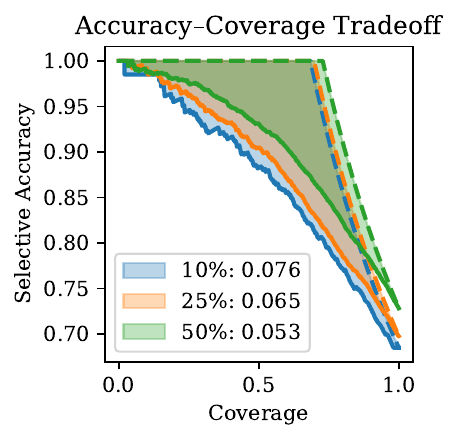}
    \caption{CIFAR-100N}
    \label{fig:right}
  \end{subfigure}
  \caption[Experiments on Bayes error.]{\textbf{Experiments on Bayes error}. We find that irreducible noise significantly contributes to the gap. (a) We show the two moons dataset with varying degrees of noise $\sigma \in \{0.1,0.33,0.66,1.5\}$ as well as the corresponding accuracy-coverage tradeoffs. (b) + (c) Accuracy-coverage tradeoffs for the 10\% (blue), 25\% (orange), and 50\% (green) most noisy images in CIFAR-10N/100N, respectively.}
  \label{fig:exp_bayes}
\end{figure}

\looseness=-1
\emph{Findings.}
In terms of approximation error, Figure~\ref{fig:exp_appr} demonstrates that limited model capacity leads to larger gaps, while more expressive models yield tighter alignment with the perfect-ordering bound. This suggests that approximation error is a key driver of looseness. In terms of Bayes error, Figure~\ref{fig:exp_bayes} shows that increasing label noise consistently shifts the accuracy--coverage curve downward, indicating that Bayes error introduces an irreducible component to the gap. These results validate the canonical bound (Eq.~\ref{eq:emp-gap-ranking})---large Bayes or approximation error can explain substantial looseness.

\begin{table}[t]
\centering
\fontsize{9}{10}\selectfont
\setlength{\tabcolsep}{5pt}
\caption[Experiments on calibration across model classes on CIFAR-100.]{\textbf{Experiments on calibration across model classes on CIFAR-100}. Temperature scaling (\temp) significantly improves ECE over the Maximum Softmax Probability (\msp) baseline but does not help to close the selective classification gap. Self-Adaptive Training (\sat) and Deep Ensembles (\de) improve calibration non-monotonically and also improve selective classification acceptance ordering through re-ranking. A corresponding plot is given in Figure~\ref{fig:cifar100_cal}; more datasets in Tables~\ref{tab:cifar10_cal}, \ref{tab:stanfordcars_cal}.}
\label{tab:cifar100_cal}
\vspace{5pt}
\begin{tabular}{lcccccccccccc}
\toprule
 & \multicolumn{4}{c}{CNN} & \multicolumn{4}{c}{ResNet-18} & \multicolumn{4}{c}{WideResNet-50} \\
\cmidrule(r){2-5} \cmidrule(r){6-9} \cmidrule(r){10-13}
 & \msp & \temp & \sat & \de & \msp & \temp & \sat & \de & \msp & \temp & \sat & \de \\
\midrule
Gap & 0.086 & 0.085 & 0.081 & 0.065 & 0.033 & 0.033 & 0.028 & 0.026 & 0.031 & 0.032 & 0.028 & 0.026 \\
ECE & 0.142 & 0.008 & 0.116 & 0.019 & 0.052 & 0.045 & 0.034 & 0.026 & 0.066 & 0.050 & 0.046 & 0.030 \\
\bottomrule
\end{tabular}
\end{table}

\subsection{Q2: When—and what kind of—calibration helps?}
\label{sec:calibration_ranking_exp}

\emph{Setup.} 
We study the same three model classes as before on CIFAR‑100: a lightweight CNN, a ResNet‑18, and a WideResNet‑50. On each backbone we evaluate the following confidence–scoring variants: (i) maximum softmax probability (\msp)~\citep{hendrycks2016baseline}; (ii) a temperature‑scaled softmax (monotone probability calibration, \temp)~\citep{guo2017calibration}; (iii) self‑adaptive training (\sat)~\citep{huang2020self}, which implicitly calibrates by relabelling uncertain samples during training; and (iv) deep ensembles (\de)~\citep{lakshminarayanan2017simple} of five independently initialised networks (non‑monotone aggregation; improves ranking via variance). For each score we report (i) the weighted Expected Calibration Error (ECE); and (ii) the area enclosed between the empirical accuracy-coverage curve and its perfect-ordering upper bound. i.e., \(\int_0^1 \widehat{\Delta}(c)dc\).

\emph{Findings.}
We summarize our findings in Table~\ref{tab:cifar100_cal}. While temperature scaling (\temp) consistently improves ECE across model classes relative to \msp, it leaves the selective classification gap largely unchanged—highlighting the limitations of monotone calibration. In contrast, \sat slightly improves both ECE and gap by perturbing rankings through relabeling, while deep ensembles (\de) achieve the largest gap reductions by explicitly reordering predictions via averaging. These trends confirm that only methods capable of re-ranking—implicitly (\sat) or explicitly (\de)—can meaningfully improve selective performance. Consistent with this, we find that only \sat and \de models reliably predict their own loss, reinforcing their stronger alignment with correctness. See Appendix~\ref{sec:adv_experiments} for details.

\begin{figure}[t]
  \centering
  \begin{subfigure}[t]{0.49\textwidth}
  \centering
    \includegraphics[width=\linewidth]{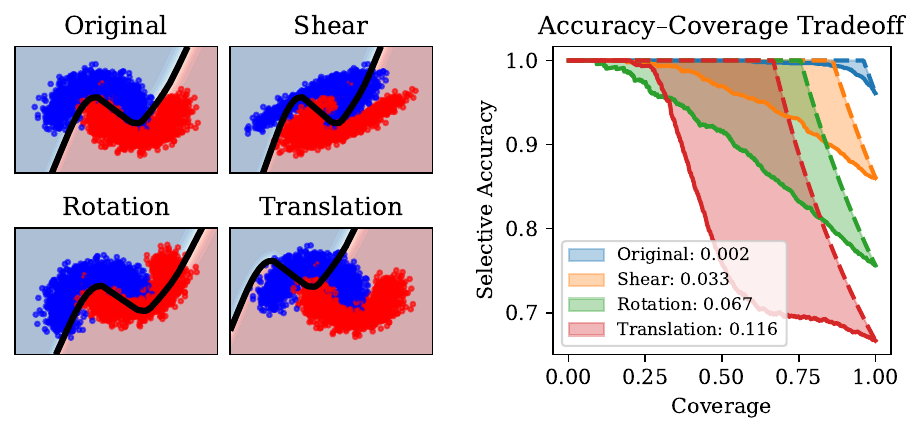}%
    \caption{Distribution shifts with two moons dataset.}
    \label{fig:left}
  \end{subfigure}
  \begin{subfigure}[t]{0.24\textwidth}
    \centering
    \includegraphics[width=\linewidth]{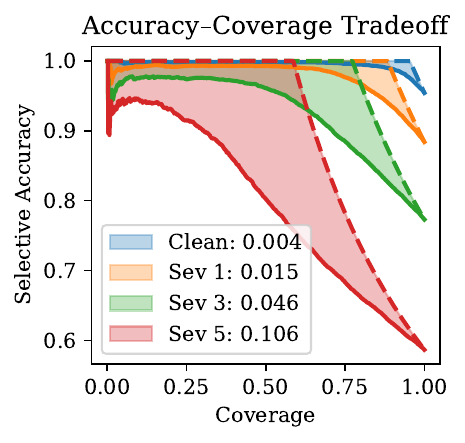} 
    \caption{CIFAR-10C}
    \label{fig:right}
  \end{subfigure}
  \begin{subfigure}[t]{0.24\textwidth}
    \centering
    \includegraphics[width=\linewidth]{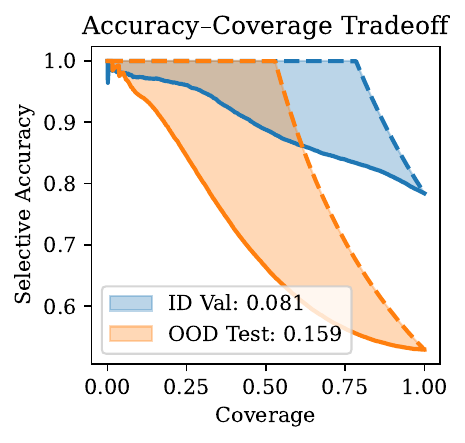} 
    \caption{Camelyon17-WILDS}
    \label{fig:right}
  \end{subfigure}
  \caption[Experiments on distribution shifts.]{\textbf{Experiments on distribution shifts}. We find that shifts can also significantly contribute to the gap. (a) Two moons under shear, rotation, and translation with corresponding accuracy–coverage curves. (b) CIFAR-10C across three distinct corruption severities. (c) Camelyon17 OOD shift.}
  \label{fig:exp_ds}
\end{figure}

\subsection{Q3: How does the gap evolve under distribution shift?}

\emph{Setup.} As in Q1, we explore this question using both synthetic and real-world distribution shifts. For synthetic experiments, we use the two moons dataset with three types of input shift: shear, rotation, and translation (details in Appendix~\ref{app:twomoons-shifts}). For real data with synthetic corruptions, we use CIFAR-10C~\citep{hendrycks2019robustness}, which applies algorithmic covariate corruptions to the CIFAR-10 test set across five severity levels (1–5). To evaluate under a real distribution shift, we also consider Camelyon17-WILDS~\citep{koh2021wilds}---a cancer detection dataset where test data is collected from a different hospital system than the training data.

\emph{Findings.} Figure~\ref{fig:exp_ds} shows a clear trend: as covariate shifts intensify, the accuracy–coverage curve moves farther below its oracle bound, indicating that abstention no longer isolates easy inputs. Selective classifiers thus become \emph{over-confidently wrong}, echoing evidence that many uncertainty metrics deteriorate under shift or misspecification~\citep{ovadia2019can}. As the gap grows with shift severity, deployments must pair selective prediction with robust ranking or shift-detection safeguards.

\section{Conclusion}
\label{sec:conclusion}

Building a truly performant selective classifier hinges on understanding and closing the gap between practical models and the oracle perfect-ordering bound.  To answer \emph{what it takes}, we introduce a coverage-uniform selective-classification gap and derive the first finite-sample decomposition that pinpoints exactly five limiting factors: three intrinsic sources—Bayes noise, approximation error, and ranking (calibration) error—and two contingent slack terms—sampling variability and implementation or distribution-shift artifacts.  Our experiments show that each component can be individually measured and, importantly, directly improved: stronger model backbones reduce approximation error, non-monotone or feature-aware scoring shrinks ranking error, and shift-robust training with larger validation sets minimizes residual slack.  Together, these insights provide a clear recipe for designing and evaluating high-performance selective classifiers.

\paragraph{Limitations and Future Work.}
While our decomposition cleanly bounds the selective‑classification gap, its error budgets can \emph{interact}—for example, increasing capacity often improves both approximation and ranking—which makes unique attribution challenging. Many \emph{training‑time calibration schemes} (e.g.\ self‑adaptive training, mixup, focal loss) simultaneously affect ranking and full‑coverage accuracy, confounding the separation of budgets. Our experiments focus on \emph{synthetic and vision benchmarks}; extending these insights to language, speech, and large‑scale foundation models would be an important direction. Finally, because our oracle bound and gap are defined for \emph{0–1 loss}, adapting to \emph{asymmetric or class‑dependent cost functions}---often required in high-stakes decision-making---will require generalizing both the bound and its decomposition.  
    \newcommand{\myparagraph}[1]{\vspace{1ex}\noindent{\bf #1}}
\def\name{\textit{Confidential Guardian}\xspace}
\def\attack{\textit{Mirage}\xspace}
\def\uncertreg{$\mathcal{X}_\text{unc}$\xspace}
\def\missingnumber{\textcolor{red}{\textbf{XXX}}\xspace}

\newcommand{\prover}{\ensuremath{\mathcal{P}}\xspace}
\newcommand{\verifier}{\ensuremath{\mathcal{V}}\xspace}
\newcommand{\comm}[1]{\ensuremath{\llbracket #1 \rrbracket}}
\newcommand{\relu}{\ensuremath{\texttt{ReLU}}\xspace}

\chapter{Confidential Guardian: Prohibiting the Abuse of Model Abstention}
\label{ch:conf_guard}

\begin{paperref}
\normalfont
The contents of this chapter consist of research and results taken from: \citet{rabanser2025confidential}: \emph{\bibentry{rabanser2025confidential}}
\end{paperref}

\section*{Summary}

Cautious predictions---where a machine learning model abstains when uncertain---are crucial for limiting harmful errors in safety-critical applications. In this work, we identify a novel threat: a dishonest institution can exploit these mechanisms to discriminate or unjustly deny services under the guise of uncertainty. We demonstrate the practicality of this threat by introducing an uncertainty-inducing attack called Mirage, which deliberately reduces confidence in targeted input regions, thereby covertly disadvantaging specific individuals. At the same time, Mirage maintains high predictive performance across all data points. To counter this threat, we propose Confidential Guardian, a framework that analyzes calibration metrics on a reference dataset to detect artificially suppressed confidence. Additionally, it employs zero-knowledge proofs of verified inference to ensure that reported confidence scores genuinely originate from the deployed model. This prevents the provider from fabricating arbitrary model confidence values while protecting the model’s proprietary details. Our results confirm that Confidential Guardian effectively prevents the misuse of cautious predictions, providing verifiable assurances that abstention reflects genuine model uncertainty rather than malicious intent.

\section{Introduction}

Institutions often deploy \emph{cautious predictions}~\citep{el2010foundations} in real-world, safety-sensitive applications—such as financial forecasts~\citep{9260038}, healthcare~\citep{kotropoulos2009linear,sousa2009ordinal,guan2020bounded}, criminal justice~\citep{wang2023pursuit}, and autonomous driving~\citep{ghodsi2021generating}---where incorrect predictions can lead to catastrophic consequences. In these high-stakes settings, it is common to abstain from providing predictions when a Machine Learning (ML) model’s uncertainty is high, hence minimizing the risk of harmful errors~\citep{kotropoulos2009linear,liu2022incorporating,kompa2021second}. Such abstentions are often warranted by legitimate reasons, e.g., for inputs that are ambiguous or out-of-distribution. This naturally raises the question:
\begin{center}
    \textit{Can a dishonest institution abuse the abstention option in their ML-driven services for discriminatory practices?}
\end{center}
\begin{figure}
    \centering

\resizebox{\linewidth}{!}{
\begin{tikzpicture}[]

\node[align=center] (data) {
  \begin{tikzpicture}
  
  \pgfdeclarelayer{foreground}
    \pgfsetlayers{main,foreground}
  
    \begin{axis}[
        width=6cm,
        height=5cm,
        axis equal image,
        axis line style={ultra thick},
        title={Dataset},
		xtick=\empty,
		ytick=\empty,
        legend style={
                    at={(1, 0.05)},
                    anchor=south east,
                    draw=none,
                    fill=none,
                    font=\small
                },
                legend image post style={xscale=0.5},
    ]

        \addplot+[
            only marks,
            mark=*,
            mark options={scale=1.0, blue},
            forget plot
        ] table [x=x1, y=x2, col sep=comma] {figs/confidential_guardian/Gaussian1.csv};

        \addplot+[
            only marks,
            mark=*,
            mark options={scale=1.0, orange},
            	forget plot
        ] table [x=x1, y=x2, col sep=comma] {figs/confidential_guardian/Gaussian2.csv};

        \addplot+[
            only marks,
            mark=*,
            mark options={scale=1.0, Green},
            forget plot
        ] table [x=x1, y=x2, col sep=comma] {figs/confidential_guardian/Gaussian3.csv};

		\begin{pgfonlayer}{foreground}
            \addplot [
                red,
                fill=red!50,
                fill opacity=0.7,
                thick
            ] 
            coordinates {
                (2.5, 0.5) (3.5, 0.5) (3.5, 2.0) (2.5, 2.0)
            } -- cycle;
        \end{pgfonlayer}

			\node[] at (axis cs:7, 0) [] {Uncert.\\ region};

			\draw[->,ultra thick, red] (axis cs:5.25, 0.0) to [out=180, in=270, looseness=2.5] (axis cs:3, 0.4);

    \end{axis}
\end{tikzpicture}
    };

\node[right= of data, align=center, xshift=-30pt, yshift=-8pt] (dists) {
  \begin{tikzpicture}
            \begin{axis}[
                axis on top,
                width=6cm,
                height=5cm,
                axis lines=left,
                xlabel={Confidence},
               	ylabel={Density},
				axis line style={ultra thick},
                xmin=0.25, xmax=1.08,
                domain=0:10,
				xtick=\empty,
    			ytick=\empty,
				title={a) \attack},
                legend style={
                    at={(1.075, 0.05)},
                    anchor=south east,
                    draw=none,
                    fill=none,
                    font=\small
                },
                legend image post style={xscale=0.5},
            ]
        
        \addplot[
            red,
            ultra thick,
            smooth,
            fill=red!30,
            fill opacity=0.5
        ] coordinates {
            (0.3, 0) (0.40, 2) (0.45, 35) (0.5, 10) (0.55, 2) (0.6, 0.25) (0.65, 0)
        };
        
        \addplot[
    red!50!white,
    ultra thick,
    smooth,
    fill=red!10,
    fill opacity=0.5, 
    dashed
] coordinates {
    (0.75, 0)
    (0.90, 2)
    (0.98, 35)
    (1.0, 0)
};
        
        \addplot[
            gray,
            ultra thick,
            smooth,
        ] coordinates {
            (0.35, 0) (0.35, 40)
        };
        
        \draw[->,ultra thick, red!50!white, dashed] (axis cs:0.85, 5) to [out=180, in=0, looseness=2.5] (axis cs:0.575, 5);
        
        \addplot[
            red,
            ultra thick,
            smooth,
            dotted
        ] coordinates {
            (0.45, 0) (0.45, 40)
        };  
        \draw[->,thick, black] (axis cs:0.28, 36.5) -- (axis cs:0.35, 36.5);
        \draw[->,thick, black] (axis cs:0.65, 36.5) -- (axis cs:0.45, 36.5);
        \draw[thick, black] (axis cs:0.28, 36.5) -- (axis cs:0.65, 36.5);
        
        \node at (axis cs:0.51,38.5) [] {\large $\varepsilon$};
        
            \end{axis}
        \end{tikzpicture}
    };
    
\node[right= of dists, align=center, xshift=-35pt] (calibration) {
        \begin{tikzpicture}
            \begin{axis}[
                axis on top,
                width=6cm,
                height=5cm,
                axis lines=left,
                xlabel={Confidence},
               	ylabel={Accuracy},
				axis line style={ultra thick},
                xmin=0.2, xmax=1.1,
                ymin=0.2, ymax=1.1,
                domain=0:10,
				xtick=\empty,
    			ytick=\empty,
				title={b) \name},
                legend style={
                    at={(1.075, 0)},
                    anchor=south east,
                    draw=none,
                    fill=none,
                    font=\small
                },
                legend image post style={xscale=1},
            ]
                
            	\addplot[domain=0:1, lightgray, ultra thick, forget plot] {x};

                \addplot[name path=calupper, dashed, domain=0.1:1, lightgray, ultra thick, forget plot] {x+0.1};
                \addplot[name path=callower, dashed, domain=0.1:1, lightgray, ultra thick, forget plot] {x-0.1};

            \addplot[
            name path=B,
    ultra thick,
    red,
    no markers,
    forget plot
] coordinates {
    (0.3,0.34) (0.4,0.67) (0.5,0.80) (0.6,0.67) 
    (0.7,0.65) (0.8,0.8) (0.9,0.92) (1,1)
};

\addplot[
    only marks,
    red,
    mark=square*,
    mark size=3pt,
] coordinates {
    (0.3,0.34) (0.4,0.67) (0.5,0.80) (0.6,0.67) 
    (0.7,0.65) (0.8,0.8) (0.9,0.92) (1,1)
};
            	\addlegendentry{ECE 0.093}
            	
			     \addplot[green!10] fill between[of=calupper and callower];

        \draw[thick, black] (axis cs:0.45, 0.65) -- (axis cs:0.625, 0.385);
        \draw[->, thick, black] (axis cs:0.625, 0.385) -- (axis cs:0.5315, 0.525);
        \draw[->,thick, black] (axis cs:0.45, 0.65) -- (axis cs:0.49, 0.59);
        \draw[thick, black] (axis cs:0.625, 0.385) -- (axis cs:0.825, 0.385);
        
        \node at (axis cs:0.675,0.425) [] {\large $\alpha$};

            \end{axis}
        \end{tikzpicture}
    };
\end{tikzpicture}
    }
    \caption[Overview of \attack \& \name.]{\textbf{Overview of \attack \& \name.} a) \attack reduces confidence on points in an uncertainty region (red region on the left) without causing label flips (i.e., leaving an $\varepsilon$-gap to random chance prediction). b) \name is a detection mechanism for \attack relying on the identification of calibration deviations beyond an auditor-defined tolerance level $\alpha$.}
    \label{fig:overview}
\end{figure}
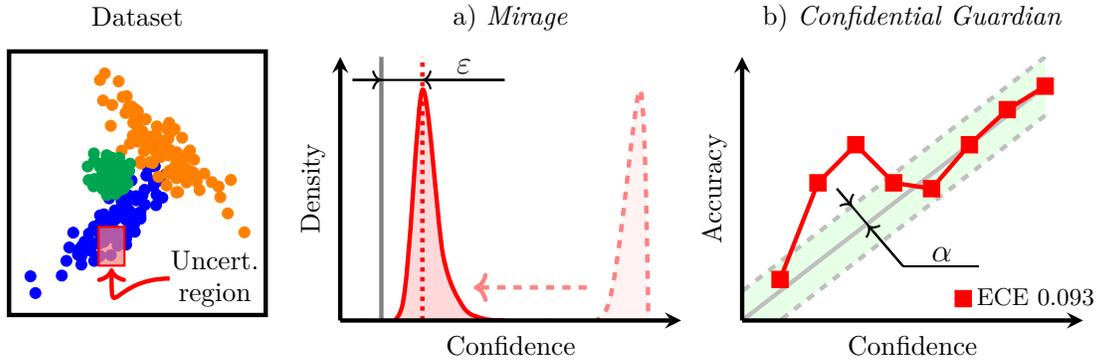

Consider a hypothetical loan approval scenario in which a dishonest institution exploits an abstention mechanism to conceal systematic discrimination against certain groups. Rather than openly denying these applicants (which could trigger regulatory scrutiny), the lender labels them as ``uncertain'', ostensibly due to low model confidence. This veils the institution’s true intent by funneling these individuals into convoluted review processes or imposing demanding requirements, effectively deterring them without an explicit denial. Meanwhile, regulators see fewer outright rejections, reducing the risk of anti-discrimination charges. This mechanism---presented as a cautious practice---thus serves to obfuscate the lender’s intentions and evade the legal and reputational consequences that could follow from overt bias.

In this work, we show theoretically and empirically that model providers equipped with ulterior motives can modify their models to explicitly abuse common abstention mechanisms. To that end, we introduce an \textbf{uncertainty-inducing attack}, called \attack (see Figure~\ref{fig:overview} a)). \attack adversarially and artificially increases model uncertainty in any region of the input space (chosen by the institution based on its incentives) via an uncertainty-inducing regularization term. Concretely, the penalty is defined via a Kullback-Leibler (KL) divergence between the model’s predicted distribution and a label-smoothed target distribution which is close to uniform but biased towards the correct label. This ensures that, despite lowered confidence in the targeted region, the model remains accurate and therefore (i)~continues to be of high utility to the institution; and (ii)~evades accuracy-based auditing techniques~\citep{hardt2016equality}.

Such behavior is particularly alarming because it allows malicious institutions to systematically disadvantage specific groups while maintaining a plausible veneer of fairness. Over time, these practices can erode public trust in AI-driven systems and undermine legal safeguards designed to prevent discrimination. Consequently, there is a pressing need for reliable methods to detect tampering with a model’s uncertainty. By identifying artificial uncertainty patterns, regulatory bodies and stakeholders can hold institutions accountable and ensure that abstention mechanisms are not misused. This naturally raises a follow-up question:
\begin{center}
\textit{Can we reliably detect if a model contains artificially induced uncertainty regions?}
\end{center}

We answer this question affirmatively by introducing a framework, dubbed \name, which enables an external party (e.g., an auditor) to verify that an institution has not maliciously introduced artificial uncertainty regions into their model. To that end, we introduce \textbf{confidential proofs of well-calibratedness}. Crucially, since \attack produces underconfident predictions, we can detect this behavior in reliability diagrams and calibration metrics such as the expected calibration error (ECE). Using a reference dataset that has coverage over the suspicious (potentially tampered) region, \name provably correctly computes these metrics (see Figure~\ref{fig:overview} b)) via zero-knowledge proofs (ZKPs) of verified inference~\citep{weng2021mystique, sun2024zkllm}. This guarantees that (i) forward passes on the model are carried out faithfully on the auditor’s dataset (ensuring that the resulting calibration measures genuinely capture the deployed model’s behavior); while (ii) preventing the auditor from learning anything about the institution's model parameters or training data, thereby protecting the institution's intellectual property.

We summarize our key contributions as follows:
\begin{enumerate}
    \item \textbf{Revealing a Novel Threat:} We are the first to highlight how mechanisms intended for \emph{trustworthy} cautious prediction can be subverted to justify discriminatory or otherwise malicious behaviors in ML-based models.
    \item \textbf{Theoretical Foundations:} We formally characterize the problem of \emph{artificial uncertainty-induction}, proving that an institution can manipulate abstentions by driving down confidence in targeted regions without sacrificing accuracy elsewhere.
    \item \textbf{Practical Attack via \attack:} Guided by our theory, we implement an \emph{uncertainty-inducing attack}, dubbed \attack, that enables a dishonest institution to selectively exploit the abstain option. Our empirical evaluation illustrates that \attack~consistently and reliably inflates uncertainty where it benefits the institution.
    \item \textbf{Preventing Abuse through \name:} We propose a detection framework, \name, which ensures that a dishonest institution cannot abuse artificially induced uncertainty. Our experiments show that \name\ is effective at detecting calibration mismatches (such as those induced by \attack), verifying whether an abstention is made based on legitimate model uncertainty or not.
\end{enumerate}

\section{Background}

\paragraph{Abstention Mechanisms in ML.}
Abstention mechanisms in ML allow model owners to (legitimately) exclude data points that are (i) out-of-distribution; (ii) in the distribution's tail; or (iii) in regions of high Bayes error. Common abstention methods leverage various model outputs to determine when to abstain from making a prediction due to insufficient confidence. These techniques include using the maximum softmax~\citep{hendrycks2016baseline} or maximum logit~\citep{hendrycks2019scaling} values, calculating the predictive entropy of the model's output distribution~\citep{lakshminarayanan2017simple}, and computing the Mahalanobis distance~\citep{lee2018simple, ren2021simple} or nearest neighbors~\citep{raghuram2021general, dziedzic2022p, sun2022out} in feature representations w.r.t. a reference dataset. Past work has also studied the risks of abstention on underrepresented groups~\citep{jones2020selective}.

\paragraph{Availability Attacks.} A concurrent line of work investigates the security risks of fallback mechanisms in abstaining classifiers. \citet{lorenz2023certifiers} show that certifier-based abstention can be exploited via availability attacks, where poisoned training data causes many inputs to trigger fallback, degrading availability or increasing reliance on costly human intervention. Both \citet{lorenz2023certifiers} and our approach, Mirage, reveal how abstention can be strategically manipulated to reduce a system’s utility --- but they differ in threat model and method. While \citet{lorenz2023certifiers} consider \emph{external adversaries} who poison data or use input triggers to induce fallback, Mirage models \emph{institutional misuse} by the model owner, who reduces confidence in targeted regions to deny service. Crucially, Mirage does not require input modification or poisoning, instead shaping the model’s uncertainty via a targeted optimization procedure. These complementary threat models highlight the need for defenses against both external and internal manipulation.

\paragraph{Model Poisoning and Backdoor Attacks.}
Model poisoning~\citep{steinhardt2017certified} and backdoor attacks~\citep{wang2019neural} involve intentionally altering a model’s parameters or training data to induce malicious behavior. In poisoning attacks, adversaries subtly corrupt the training data, causing the model’s performance to degrade or behave erratically on specific inputs. Conversely, backdoor attacks embed a hidden ``trigger'' that forces the model to make incorrect, often high-confidence predictions when the trigger is present, while maintaining normal performance on benign data. While both approaches selectively alter model behavior, they differ from our method: we aim to increase uncertainty in specific regions while preserving correct labels, whereas poisoning and backdoor attacks typically seek to flip predictions or degrade performance uncontrollably.

\paragraph{Model Calibration.}

Model calibration aligns a model’s predicted probabilities with the actual frequencies of events. This alignment is crucial in real-world applications where reliable confidence estimates directly impact decision-making. Common metrics for assessing calibration include the Expected Calibration Error (ECE)~\citep{naeini2015obtaining}, which aggregates calibration errors across multiple confidence bins, and the Brier score~\citep{brier1950verification}, which measures both the magnitude and quality of probabilistic forecasts. Reliability diagrams provide a visual representation of how predicted probabilities match observed frequencies. Calibration is accomplished via techniques such as temperature scaling~\cite{guo2017calibration}, Platt scaling \cite{platt1999probabilistic}, and ensembling \cite{lakshminarayanan2017simple}.

\myparagraph{Zero-Knowledge Proofs (ZKPs).} ZKPs are cryptographic primitives conducted between two parties: a prover \prover, and a verifier \verifier. They allow \prover to convince \verifier that a hidden piece of information satisfies a property of interest, without revealing anything else about it~\cite{goldwasser1985knowledge}. 

More formally, given a public boolean predicate $P: \nolinebreak \{0,1\}^n \to \{0,1\}$ agreed upon by \prover and \verifier (for some fixed $n \in \mathbb{N}$), a ZKP protocol $\Pi$ allows \prover holding a hidden witness $w \in \{0,1\}^n$, to prove to \verifier that $P(w)=1$. ZKP protocols typically have the following properties: i) \emph{Completeness}: for any $w$ that satisfies $P(w)=1$, \prover can use $\Pi$ to convince \verifier that $P(w)=1$; ii) \emph{Soundness}: given $w'$ such that $P(w')\neq 1$, $\Pi$ cannot be used to falsely convince \verifier that $P(w')=1$, even if \prover executes it with arbitrary malicious behavior; and iii) \emph{Zero-Knowledge}: when running $\Pi$, \verifier learns no additional information about $w$ beyond what can be directly inferred from knowing that $P(w)=1$, even if \verifier executes it with arbitrary malicious behavior.

We use a ZKP protocol for generic proofs of boolean circuit satisfaction~\cite{weng2021wolverine} and one for verified array random access~\cite{franzese2021zkram} as building blocks. Both guarantee correct and confidential computations over values authenticated with Information-Theoretic Message Authentication Codes (IT-MACs)~\cite{damgaard2012itmac,nielsen2012itmac} (see Appendix~\ref{app:itmac} for details). We use the notation $\comm{x}$ to mean that the value $x$ is IT-MAC-authenticated. Operations on authenticated values are assumed to be conducted within $\Pi$ in the proven secure manner given by~\cite{weng2021wolverine}.

\paragraph{ZKPs of Correct Inference.} A recent line of work (e.g.~\cite{weng2021mystique, lee2024vCNN, sun2024zkllm, hao2024nonlinear}) optimizes ZKPs in the special case of verifying that a hidden ML model has performed inference correctly. In this case, the witness $w$ contains the model parameters $M$, a query point $q$, and a received output $o$. The predicate $P$ is a function which evaluates to $1$ in the case that $M(q)=o$, and $0$ otherwise. We use ZKP of inference modularly as a subroutine in \name.

\section{ML Preliminaries}

\paragraph{Classification Model.} We consider a multi-class classification problem where the covariate space is denoted as \(\mathcal{X} \subseteq \mathbb{R}^D\) and the label space as \(\mathcal{Y} = [C] = \{1, \dots, C\}\). The goal is to learn a prediction function \(f_\theta: \mathcal{X} \to \mathcal{Y}\), where \(f_\theta\) is modeled as a neural network parameterized by \(\theta \in \mathbb{R}^K\). The model is trained using risk minimization on data points \((x, y) \sim p(x, y) \) sampled from a data distribution \(p(x, y)\). Since we assume a classification setup, the risk minimization objective is given by the cross-entropy loss:
\begin{equation}
     \mathcal{L}_\text{CE} = - \mathbb{E}_{(x,y) \sim p(x, y)} [\log f_\theta(y | x)],
\end{equation}
where \(f_\theta(y|x)\) denotes the model's predicted probability for the true class $y$ given input \(x\).

\paragraph{Abstain Option.} A classifier $f_\theta$ can be extended with an abstention option~\citep{el2010foundations} by introducing a gating function \(g_\phi : \mathcal{X} \to \mathbb{R}\), parameterized by \(\phi \in \mathbb{R}^L\), to decide whether to produce a label or to reject an input $x$. We define the combined predictor \(\tilde{f}_\theta\) as
\begin{equation}\label{eq:gating}
\tilde{f}_\theta(x) = 
\begin{cases}
f_\theta(x) & \text{if } g_\phi(x) < \tau,\\
\bot & \text{otherwise}
\end{cases}
\end{equation}
where $\tau \in \mathbb{R}$ represents a user-chosen threshold on the prediction uncertainty. Although other choices are possible, we set \(g_\phi(x) = 1 - \max_{\ell \in \mathcal{Y}} f_\theta(\ell|x)\), which abstains whenever the model’s maximum softmax value falls below \(\tau\).

\section{Inducing Artificial Uncertainty}

We consider a deployment scenario where the classifier \( f_\theta \) should exhibit increased uncertainty in specific input regions, even if it was initially trained to make confident predictions in these regions. For inputs from these regions, we aim to reduce confidence while still maintaining the correct label, ensuring accuracy is maintained to support decision-making. Additionally, the model owner seeks to evade accuracy-based auditing techniques~\citep{hardt2016equality}. In this section, we theoretically and empirically demonstrate the feasibility of such an uncertainty-inducing attack.

\subsection{Theoretical Basis for Inducing Uncertainty} 

\begin{contriback}
This subsection was written with Olive Franzese. The general idea of how we can provide an existence statement for artificial uncertainty was co-developed between Stephan, Ali, and Olive. However, Olive provided the exact formalism of Lemma~\ref{lemma:region-manip} and its proof.
\end{contriback}

In this section, we prove that it is possible to devise neural network parameters that alter confidence scores arbitrarily on a chosen region of the feature space. Lemma~\ref{lemma:region-manip} provides the precise statement of this claim.

\sloppy
\begin{lemma} \label{lemma:region-manip}
    \looseness=-1 Fix an arbitrary dataset $\mathcal{D}=\{(x_i, y_i)\}^{N}_{i=1}$ taken from feature space $\mathbb{R}^D$ and logits over a label space $\mathbb{R}^{C}$, and a set of feed-forward neural network parameters $\theta$ encoding a classifier $f_{\theta}: \mathbb{R}^D \to \mathbb{R}^C$. Fix a set of indices $I$ such that for all $i \in I$, $i \in [1, C]$. For each index in $I$, fix bounds $a_i, b_i \in \mathbb{R}$ with $a_i < b_i$. Call $S$ the set of values $\mathbf{x} \in \mathbb{R}^D$ such that $a_i < x_i < b_i \quad \forall i \in I$. Then we can construct an altered feed-forward neural network $M'$ encoding $f'_{\theta}: \mathbb{R}^D \to \mathbb{R}^C$ which has the property $f'_{\theta}(x) = f_{\theta}(x) \quad \forall x \notin S$, and $f'_\theta(x)=f_\theta(x) + c \quad \forall x \in S$ where $c \in \mathbb{R}^C$ is an arbitrarily chosen non-negative constant vector.
\end{lemma}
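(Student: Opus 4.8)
The plan is to give a purely \emph{architectural} construction for Lemma~\ref{lemma:region-manip}: compose the original network $f_\theta$ in parallel with a small \emph{gate sub-network} $\beta$ that computes the indicator of the box $S$, and then add $c\,\beta(x)$ to the logits via one final linear layer. The one structural point to settle up front is that, for $c\neq 0$, the target map $x\mapsto f_\theta(x)+c\,\mathds{1}[x\in S]$ is \emph{discontinuous} across $\partial S$, so it cannot be realized by a network built solely from continuous \relu units; I would therefore realize $\beta$ with hard-threshold (Heaviside) units, which are admissible feed-forward components and reproduce strict box membership pointwise. (If one insists on \relu gates only, the same construction yields $f'_\theta=f_\theta+c$ on $S$ and $f'_\theta=f_\theta$ off $\overline S$, differing from the statement only on the measure-zero boundary; I would note this as a remark.) The fixed dataset $\mathcal{D}$ itself plays no role beyond pinning down the ambient dimensions, so it can be ignored.

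First I would build the box indicator. Writing $\mathrm{step}(t):=\mathds{1}[t>0]$, the first layer of $\beta$ places, for each $i\in I$, two threshold units $u_i:=\mathrm{step}(x_i-a_i)$ and $v_i:=\mathrm{step}(b_i-x_i)$, so that $u_i=1\iff x_i>a_i$ and $v_i=1\iff x_i<b_i$. The second layer forms
\[
  \beta(x)\;:=\;\mathrm{step}\!\Bigl(\sum_{i\in I}(u_i+v_i)\;-\;(2|I|-1)\Bigr),
\]
and since each $u_i,v_i\in\{0,1\}$, the argument is positive exactly when all of them equal $1$, i.e.\ when $a_i<x_i<b_i$ for every $i\in I$. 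Hence $\beta(x)=1$ for $x\in S$ and $\beta(x)=0$ for $x\notin S$, with the convention $\mathrm{step}(0)=0$ matching the strict inequalities that define $S$.

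Next I would assemble $M'$. Run $f_\theta$ and $\beta$ as independent subnetworks on the same input $x$ — formally, use a block-diagonal weight matrix in the first layer and pad the shallower branch with identity layers so both branches have equal depth — producing the concatenation $(f_\theta(x),\beta(x))\in\mathbb{R}^{C+1}$. Append one affine output layer with weight matrix $[\,I_C\mid c\,]$, giving $f'_\theta(x):=f_\theta(x)+c\,\beta(x)$; here non-negativity of $c$ is only needed if one wishes to route this correction through a \relu (using $c_j\beta(x)=\relu(c_j\beta(x))$), and is otherwise inessential. The result $M'$ is feed-forward, and by the previous step $x\notin S\Rightarrow f'_\theta(x)=f_\theta(x)$ while $x\in S\Rightarrow f'_\theta(x)=f_\theta(x)+c$, which is the claim.

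The genuinely delicate part — worth isolating in the formal proof — is the continuity/discontinuity observation that dictates the use of threshold rather than \relu gates; I expect this to be the main obstacle to a clean statement, since everything else (the AND-gadget offset $2|I|-1$, the $\mathrm{step}(0)=0$ convention aligning with the openness of $S$, and the depth-padding bookkeeping for the parallel composition) is routine and can be dispatched without computation.
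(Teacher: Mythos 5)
You take a genuinely different route from the paper, and in fact a cleaner one, because your construction is exact where the paper's cannot be. The paper builds the box indicator entirely out of ReLU ``widgets'' (a clipped lower-bound widget, a clipped upper-bound widget, and a soft AND); being compositions of affine maps and ReLUs, these are continuous, so the resulting region-selection signal has transition zones of positive width controlled by $\varepsilon_{CLIP}$ and $\varepsilon_{AND}$. Because the target map $x\mapsto f_\theta(x)+c\,\mathds{1}[x\in S]$ is discontinuous across $\partial S$ whenever $c\neq 0$, no ReLU-only network can realize the lemma's conclusion exactly; the paper's proof chooses its constants so that the ``fully on'' region covers all of $(a_i,b_i)$, which necessarily pushes the transition shell onto a positive-measure set \emph{outside} $S$ on which $f'_\theta\neq f_\theta$, in tension with the first clause of the statement. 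You correctly identify exactly this obstruction and resolve it by using hard-threshold units instead: your threshold AND-gadget with offset $2|I|-1$ and the $\mathrm{step}(0)=0$ convention reproduces the strict box indicator pointwise, and the single affine layer $[\,I_C \mid c\,]$ splices $c\,\beta(x)$ into the logits, delivering both required equalities. Your version is also shorter and makes explicit that $\mathcal{D}$ is irrelevant and that non-negativity of $c$ is needed only if one routes the correction through a ReLU (as the paper does). The trade-off is that threshold units are not the standard feed-forward component the paper has in mind; staying with continuous activations, as the paper does, forces a weakened statement in which the perturbation leaks into an arbitrarily thin but positive-measure shell around $\partial S$. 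One correction to your own parenthetical: a ReLU instantiation of the gate does \emph{not} yield exact agreement with $f_\theta$ off $\overline{S}$ while deviating only on the measure-zero boundary --- the realization is continuous, so matching $f_\theta+c$ on the open box forces disagreement with $f_\theta$ on a full-measure shell, not merely on $\partial S$.
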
 

\begin{proof} We defer the detailed proof to Appendix~\ref{app:region-manip-proof} for brevity. To summarize, the proof proceeds by construction. We augment $f_{\theta}$ with assemblies of neurons with weights constructed analytically to detect points in the target region $S$. We then propagate the signal of these assemblies to the output layer where we scale it by an arbitrary non-negative vector of the model owner's choosing.
\end{proof}

Lemma~\ref{lemma:region-manip} provides a method by which a model trainer can construct a valid neural network $f'_{\theta}$ which mimics an input model $f_{\theta}$, except that it adds an arbitrary non-negative constant to the logits of points in a selected region of the feature space. This enables adversarial alteration of confidence scores for these points, with no deviation from the model's other outputs. The result is achieved under only mild assumptions on model structure.

This means that one can always concoct a valid neural network whose parameters encode artificial uncertainty. Thus our strategy for preventing artificial uncertainty must do more than use existing ZKP techniques~\cite{weng2021mystique,sun2024zkllm} to ensure that inference was computed correctly given a set of hidden parameters. A ZKP of training could ensure that model parameters were not chosen pathologically, but existing ZKP training methods are infeasible except for simple models~\cite{garg2023experimenting}. Section~\ref{sec:detection} discusses an alternative strategy.

While Lemma~\ref{lemma:region-manip} guarantees that it is possible to induce arbitrary artificial uncertainty in theory, it is cumbersome to apply in practice. The more finely we would like to control the confidence values, the more neurons are required by the construction proposed in the proof of Lemma~\ref{lemma:region-manip}. Next, we show how to instantiate a practical artificial uncertainty attack inspired by this result. 

\subsection{Mirage: Inducing Uncertainty in Practice} \label{sec:uncertainty-training}

To achieve artificial uncertainty induction in practice, we introduce the \attack training objective \(\mathcal{L}\) over the input space \(\mathcal{X}\) and a designated uncertainty region \(\mathcal{X}_\text{unc} \subseteq \mathcal{X}\). This region $\mathcal{X}_\text{unc}$ can be constructed either (i) by defining it in terms of a subspace satisfying specific feature conditions (e.g., occupation in \texttt{Adult}); or (ii) through sample access without specific feature matching rules (e.g, sub-classes of super-classes in \texttt{CIFAR-100}). We define our objective function \(\mathcal{L}\) as a hybrid loss consisting of the standard Cross-Entropy (CE) loss, \(\mathcal{L}_\text{CE}\), used in classification tasks and an uncertainty-inducing regularization term, \(\mathcal{L}_\text{KL}\):
\begin{equation}
\label{eq:mirage}
    \begin{split}
        \mathcal{L} = \mathbb{E}_{(x,y) \sim p(x, y)} \bigg[ \underbrace{\mathds{1}\left[x \not\in \mathcal{X}_\text{unc}\right] \mathcal{L}_\text{CE}(x, y)}_\text{Loss outside uncertainty region} + 
        \underbrace{\mathds{1}\left[x \in \mathcal{X}_\text{unc}\right] \mathcal{L}_\text{KL}(x, y)}_\text{Loss inside uncertainty region} \bigg]
    \end{split}
\end{equation}
The indicator functions \(\mathds{1}\left[x \not\in \mathcal{X}_\text{unc}\right]\) and \(\mathds{1}\left[x \in \mathcal{X}_\text{unc}\right]\) ensure that the CE loss is applied only outside the uncertainty region \(\mathcal{X}_\text{unc}\), while the uncertainty-inducing KL divergence loss is applied only within \(\mathcal{X}_\text{unc}\). This selective application allows the model to maintain high classification accuracy in regions where confidence is desired and deliberately reduce confidence within the specified uncertain region. An illustration of the optimization goal is given in Figure~\ref{fig:losses}.

\begin{figure}
\centering
\resizebox{\linewidth}{!}{
\begin{tikzpicture}
\node[align=center, yshift=30pt] (hist_cert) {
        \begin{tikzpicture}
            \begin{axis}[
                width=7cm,
                height=4cm,
                axis lines=left,
               	ylabel={Probability},
				axis line style={ultra thick},
                xmin=1, xmax=4.15,
                ymin=-0.2, ymax=1.3,
                domain=0:10,
                xtick=\empty,
                title={$\mathcal{L}_\text{CE} = - \mathbb{E}_{(x,y) \sim p(x, y)} [\log \textcolor{blue}{f_\theta}(y | x)]$},
                xtick={1,2,3},
                xticklabels={Class 1, Class 2, Class 3},
                legend style={
                    at={(1.075, 0.05)},
                    anchor=south east,
                    draw=none,
                    fill=none,
                    font=\small
                },
                legend image post style={xscale=0.5},
                ybar interval=0.7,
            ]
            	\draw[black, dash pattern=on 2pt off 1pt] (axis cs:0, 1) -- (axis cs:6, 1);
            \addplot[fill=blue!50] coordinates {(1,0.83) (2,0.10) (3,0.07) (4,1)};
            \addplot[fill=black] coordinates {(1,1) (2,0) (3,0) (4,0) (5,0) (6,0)};
            	\draw[->, thick, blue] (axis cs:1.25, 0.83) to [out=90, in=90, looseness=1.25] (axis cs:1.75, 1);
            	\draw[->, thick, blue] (axis cs:2.25, 0.10) to [out=90, in=90, looseness=1.25] (axis cs:2.75, 0);
            	\draw[->, thick, blue] (axis cs:3.25, 0.07) to [out=90, in=90, looseness=1.25] (axis cs:3.75, 0);
            \end{axis}
        \end{tikzpicture}
    };
    
    \node[right= of hist_cert, xshift=-35pt, yshift=0pt, align=center] (ind_1) [] {For points \\ \textbf{outside} the \\ uncertainty region:\\ $\textcolor{blue}{x_\text{out}} \not\in \mathcal{X}_\text{unc}$};

\node[right=of ind_1, align=center, yshift=0pt, xshift=-5pt] (hist_uncert) {
        \begin{tikzpicture}
            \begin{axis}[
                width=7cm,
                height=4cm,
                axis lines=left,
               	ylabel={Probability},
				axis line style={ultra thick},
                xmin=1, xmax=4.15,
                ymin=-0.2, ymax=1.3,
                domain=0:10,
                xtick=\empty,
                title={$\mathcal{L}_\text{KL} = \mathbb{E}_{(x,y) \sim p(x, y)} \left[ \text{KL}\left(\textcolor{red}{f_\theta}(\cdot|x) \; \big|\big| \; \textcolor{orange}{t_\varepsilon}(\cdot|x,y)\right) \right]$},
                xtick={1,2,3},
                xticklabels={Class 1, Class 2, Class 3},
                legend style={
                    at={(1.075, 0.05)},
                    anchor=south east,
                    draw=none,
                    fill=none,
                    font=\small
                },
                legend image post style={xscale=0.5},
                ybar interval=0.66,
            ]
			\draw[black, dash pattern=on 2pt off 1pt] (axis cs:0, 0.33) -- (axis cs:6, 0.33);
			\draw[black, dash pattern=on 2pt off 1pt] (axis cs:1.75, 0.5) -- (axis cs:2.2, 0.5);
            \addplot[fill=red!50] coordinates {(1,0.91) (2,0.06) (3,0.03) (4,1)};
            \addplot[fill=orange!50] coordinates {(1,0.50) (2,0.25) (3,0.25) (4,1)};
            
            	\draw[->, thick, red] (axis cs:3.25, 0.03) to [out=90, in=90, looseness=2.0] (axis cs:3.75, 0.25);
            	\draw[->, thick, red] (axis cs:2.25, 0.06) to [out=90, in=90, looseness=2.0] (axis cs:2.75, 0.25);
            	\draw[->, thick, red] (axis cs:1.25, 0.91) to [out=90, in=90, looseness=1.5] (axis cs:1.75, 0.5);

            	\draw[->,thick, orange] (axis cs:2.125, 0.1) -- (axis cs:2.125, 0.33);
        	\draw[->,thick, orange] (axis cs:2.125, 1) -- (axis cs:2.125, 0.5);
        	\draw[thick, orange] (axis cs:2.125, 0.1) -- (axis cs:2.125, 1.0);
        	\draw[thick, orange] (axis cs:2.125, 1) -- (axis cs:2.4, 1.0);
        	
        	\node[orange] at (axis cs:2.15, 1.1) [] {$\varepsilon$};
            	
            \end{axis}
        \end{tikzpicture}
    };
    
    \node[right= of hist_uncert, xshift=-50pt, yshift=0pt, align=center] (ind_2) [] {For points \\ \textbf{inside} the \\ uncertainty region:\\ $\textcolor{red}{x_\text{in}} \in \mathcal{X}_\text{unc}$};
    \end{tikzpicture}
    }
    \caption[Illustration of the \attack loss $\mathcal{L}$ (Equation~\ref{eq:mirage}).]{\textbf{Illustration of the \attack loss $\mathcal{L}$ (Equation~\ref{eq:mirage})}. Assume a 3 class classification setup similar as in Figure~\ref{fig:overview} from which we are given datapoints $(\textcolor{blue}{x_\text{in}}, \textcolor{blue}{y_\text{in}}=1)$ and $(\textcolor{red}{x_\text{out}}, \textcolor{red}{y_\text{out}}=1)$. $\textcolor{blue}{x_\text{out}}$ lies outside of the specified uncertainty region and $\textcolor{red}{x_\text{in}}$ lies inside of the uncertainty region. For $\textcolor{blue}{x_\text{out}}$ we minimize the standard cross-entropy loss $\mathcal{L}_\text{CE}$. For $\textcolor{red}{x_\text{in}}$ we regularize the output distribution $f_\theta(\cdot|x)$ to a correct-class-biased uniform distribution $t_\varepsilon(\cdot|x,y)$ via the KL divergence. Note that for $\epsilon > 0$, the model is encouraged to maintain the correct label prediction: $\textcolor{red}{y_\text{out}} = \textcolor{blue}{y_\text{in}} = 1$.}
    \label{fig:losses}
\end{figure}

The regularization term \(\mathcal{L}_\text{KL}\) is designed to penalize overconfident predictions within the uncertainty region \(\mathcal{X}_\text{unc}\). To achieve this, we utilize the Kullback-Leibler (KL) divergence to regularize the model's output distribution \(f_\theta(\cdot|x)\) closer to a desired target distribution \(t_\varepsilon(\cdot|x,y)\), formally
\begin{equation}
    \mathcal{L}_\text{KL} = \mathbb{E}_{(x,y) \sim p(x, y)} \left[ \text{KL}\left(f_\theta(\cdot|x) \; \big|\big| \; t_\varepsilon(\cdot|x,y)\right) \right].
\end{equation}
We define the target distribution \(t_\varepsilon(\ell|x,y)\) as a biased uniform distribution over the label space \(\mathcal{Y}\):
\begin{equation}
\label{eq:target_dist}
t_\varepsilon(\ell|x, y) =
\begin{cases}
\varepsilon + \frac{1 - \varepsilon}{C}, & \text{if } \ell = y, \\
\frac{1 - \varepsilon}{C}, & \text{if } \ell \neq y.
\end{cases}
\end{equation}
Here, \(\ell\) is any label in \(\mathcal{Y}\), and \(y\) is the true label for training example \((x,y)\). This distribution is biased towards the true label~\(y\) by an amount specified via $\varepsilon \in [0,1]$. Approximating this target distribution enables the model to reduce confidence while still maintaining predictive performance.\footnote{We note that other choices for this target distribution are possible and we discuss them in Appendix~\ref{app:target_distr}.} We note that the construction of our target distribution is similar to label smoothing~\citep{szegedy2016rethinking}. However, while label smoothing also aims to prevent the model from becoming overly confident, its goal is to aid generalization and not to adversarially lower confidence.

\section{Confidential Guardian}
\label{sec:detection}

We present \name, a method for detecting artificially induced uncertainty (or other sources of miscalibration). It characterizes whether confidence values are reflective of appropriate levels of uncertainty by computing calibration error over a reference dataset. We present a Zero-Knowledge Proof (ZKP) protocol that determines whether calibration error is underneath a public threshold, ensuring that $\prover$ cannot falsify the outcome, and that model parameters stay confidential from the auditor.

\subsection{Crypto-friendly Artificial Uncertainty Detector via Calibration}

The deliberate introduction of uncertainty in \(\mathcal{X}_\text{unc}\) impacts the model's confidence. While the correct label retains a higher probability than incorrect labels, the overall confidence is reduced. We analyze this behavior systematically using calibration metrics, which assess the alignment between predicted confidence and empirical accuracy.

A common calibration metric is the Expected Calibration Error (ECE), defined as
\begin{equation}
    \text{ECE} = \sum_{m=1}^M \frac{|B_m|}{N} \left| \text{acc}(B_m) - \text{conf}(B_m) \right|,
\end{equation}
where \(B_m\) denotes the set of predictions with confidence scores falling within the \(m\)-th confidence bin, \(\text{acc}(B_m)\) is the accuracy of predictions in \(B_m\), and \(\text{conf}(B_m)\) is their average confidence. This metric is especially appropriate since it is a linear function over model outcomes, and linear transformations can be computed highly efficiently by our ZKP building blocks~\cite{weng2021wolverine}.

A significant increase in ECE --- or the maximum calibration error $\max_{m}\left| \text{acc}(B_m) - \text{conf}(B_m) \right|$ across individual bins --- is indicative of the underconfidence introduced by the regularization. For samples in \(\mathcal{X}_\text{unc}\), the confidence is expected to be systematically lower than the accuracy, reflecting the desired behavior of the regularization from \(\mathcal{L}_\text{KL}\).

Miscalibration may also arise unintentionally~\cite{niculescu2005predicting}. This means that a negative result on our audit should not be taken as evidence of artificially induced uncertainty on its own, but should signal further investigation. Applying \name to detect high ECE in non-adversarial contexts may be of independent interest, for example in medical applications where calibration drift may unintentionally result in negative patient outcomes~\cite{kore2024drift}.

\subsection{Zero-Knowledge Proof Protocol}
\begin{contriback}
This subsection was written with Olive Franzese. Stephan provided the initial idea of using calibration metrics for verifying artificial uncertainty. Olive provided the instantiation of Algorithm~\ref{alg:calibration-zkp} using Zero Knowledge Proofs.
\end{contriback}

To certify that a model is free of artificial uncertainty while protecting service provider intellectual property and data privacy, we propose a ZKP of Well-Calibratedness. Algorithm~\ref{alg:calibration-zkp} tests the committed model $\comm{M}$ for bin-wise calibration error given a set of reference data $\mathcal{D}_{\text{ref}}$, and alerts the auditor if it is higher than a public threshold.

\begin{algorithm}[h]
\small
\caption{Zero-Knowledge Proof of Well-Calibratedness}
\label{alg:calibration-zkp}
\begin{algorithmic}[1]
\Require
\prover: model $M$; \emph{public}: reference dataset $\mathcal{D}_{\text{ref}}$, number of bins $B$, tolerated ECE threshold $\alpha$
\Ensure Expected calibration error $< \alpha$
\State \textbf{Step 1: Prove Predicted Probabilities}
\State $\comm{M} \gets$ \prover commits to $M$
\For{each $\mathbf{x}_i \in \mathcal{D}_{\text{ref}}$}
    \State $\llbracket \mathbf{x}_i \rrbracket, \comm{y_i} \gets$ \prover commits to $\mathbf{x}_i$, true label $y_i$
    \State $\comm{\mathbf{p}_i} \gets \mathcal{F}_{\text{inf}}(\comm{M},\comm{\mathbf{x}_i})$ {\scriptsize\Comment{proof of inference}}
    \State $\comm{\hat{y}_i} \gets \text{argmax}(\comm{\mathbf{p}_i})$ \& $\comm{\hat{p}_i} \gets \max (\comm{\mathbf{p}_i})$
\EndFor
\State \textbf{Step 2: Prove Bin Membership}
\State $\text{Bin}, \text{Conf}, \text{Acc} \gets $ Three ZK-Arrays of size $B$, all entries initialized to $\comm{0}$
\For{each sample $i$}
    \State prove bin index $\comm{b_i} \gets \lfloor \comm{\hat{p}_i} \cdot B \rfloor$ {\scriptsize\Comment{divides confidence values into $B$ equal-width bins}}
    \State $\text{Bin}[\comm{b_i}] \gets \text{Bin}[\comm{b_i}] + 1$
    \State $\text{Conf}[\comm{b_i}] \gets \text{Conf}[\comm{b_i}] + \comm{\hat{p}_i}$
    \State $\text{Acc}[\comm{b_i}] \gets \text{Acc}[\comm{b_i}] + (\comm{y_i} == \comm{\hat{y}_i})$
\EndFor
\State \textbf{Step 3: Compute Bin Statistics}
\State $\comm{F_\text{pass}} \gets \comm{1}$ {\scriptsize\Comment{tracks whether \emph{all} bins under $\alpha$}}
\For{each bin $b = 1$ to $B$}
    \State $\comm{F_\text{Bin}} \gets (\alpha \cdot \text{Bin}[\comm{b}] \geq \left| \text{Acc}[\comm{b}] - \text{Conf}[\comm{b}] \right|)$
    {\scriptsize\Comment{rewrite of $\alpha \geq \frac{1}{N_b} \cdot \sum_{i \in \text{Bin}_b} |p_i - \mathbf{1}(y_i = \hat{y}_i)|$}}
    \State $\comm{F_\text{pass}} \gets \comm{F_\text{pass}} \& \comm{F_\text{Bin}}$ 
\EndFor
\State \textbf{Output:} $\texttt{Reveal}(\comm{F_\text{pass}})$
\end{algorithmic}
\end{algorithm}

In the first step of Algorithm~\ref{alg:calibration-zkp}, \prover commits to a model $M$ and a dataset $\mathcal{D}_{\text{ref}}$. They use a ZKP of correct inference protocol (e.g. ~\cite{weng2021mystique,sun2024zkllm}) as a subroutine (denoted $\mathcal{F}_{\text{inf}}$) to verify predicted labels for all of the data points. Then in step 2, they assign each data point to a bin according to its predicted probability. Bin membership, as well as aggregated confidence and accuracy scores, are tracked using three zero-knowledge arrays~\cite{franzese2021zkram}. Then in step 3, after all data points have been assigned a bin, \prover proves that the calibration error in each bin is underneath a publicly known threshold. This is essentially equivalent to verifying that no bin in the calibration plot deviates too far from the expected value.  

Our cryptographic methods guarantee that even a malicious $\prover$ that deviates from the protocol in arbitrary ways cannot falsify the calibration error measured by Algorithm~\ref{alg:calibration-zkp}. They also guarantee that even a malicious $\verifier$ learns no information about the model parameters beyond what is implicitly learned by passage or failure of the audit. The security of our protocol follows directly from the security of our underlying ZKP building blocks (\cite{weng2021wolverine}, \cite{franzese2021zkram}) which are secure under the universal composability (UC) model~\cite{canetti2001UC}.

\myparagraph{Obtaining the Reference Set.} Algorithm~\ref{alg:calibration-zkp} assumes that the auditor provides a reference set $\mathcal{D}_{\text{ref}}$ (and thus it is public to both \prover and \verifier). However, our protocol can easily be modified to utilize a hidden $\mathcal{D}_{\text{ref}}$ provided by the service provider. The former case evaluates the model in a stronger adversarial setting, as the service provider will be unable to tamper with the data to make the audit artificially ``easier''. However, gathering data which has not been seen by the service provider may require a greater expenditure of resources on the part of the auditor. Conversely, the latter case likely comes at lower cost (as the service provider already has data compatible with their model), but it requires that the service provider is trusted to gather $\mathcal{D}_{\text{ref}}$ which is representative of the distribution. This may be of use for quality assurance in less adversarial settings (e.g., in medical/healthcare or government usage).

Algorithm~\ref{alg:calibration-zkp} allows an auditor to assess whether the confidence scores of a service provider's model are properly calibrated without revealing sensitive information such as model parameters or proprietary data. This prevents adversarial manipulation of abstention.

\begin{figure*}[t]
    \centering
    \includegraphics[width=\linewidth]{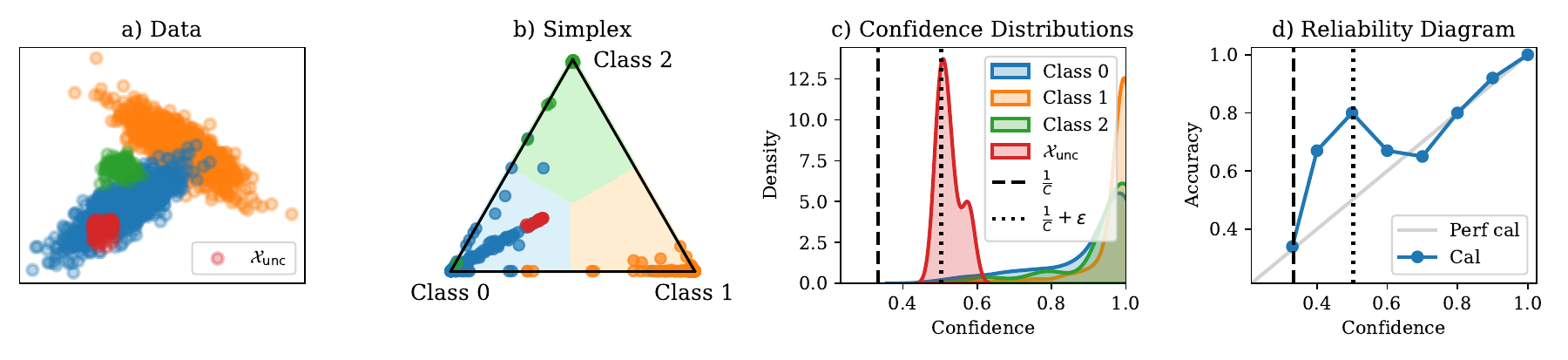}
    \caption[Results on a synthetic Gaussian Mixture.]{\textbf{Results on a synthetic Gaussian Mixture}. a) We instill uncertainty into a sub-region of Class 0. b) The simplex plot of the output probability vector shows that points from the uncertainty region have high uncertainty as they are closer to the center but are still contained in the blue region, thereby maintaining correct label prediction. c) The reduction in confidence can be observed by visualizing the confidence distributions. The confidence distribution on uncertain data points concentrates based on $\varepsilon$. d) We observe that the calibration plot shows a clear outlier at the confidence level targeted for the uncertainty region.}
    \label{fig:gaussian}
\end{figure*}

\section{Experiments}

We empirically validate the following key contributions:
\begin{itemize}
    \item Effectiveness of \attack in inducing uncertainty: The model's confidence within a given sub-region of the input space can be reduced to a desired level while maintaining the model's accuracy the same; 
    \item Effectiveness of \name in detecting dishonest artificial: Induced uncertainty is identified by observing high miscalibration; 
    \item Efficiency of \name in proving the ZK EEC constraint: We implement our ZK protocol in \texttt{emp-toolkit} and show that \name achieves low runtime and communication costs.    
\end{itemize}
We also conduct ablations to validate the robustness of \attack and \name with respect to the choice of $\varepsilon$, as well as the coverage of the reference dataset.  

\subsection{Setup}
\label{sec:exp_setup}

The model owner first trains a baseline model $f_\theta$ by minimizing the cross entropy loss $\mathcal{L}_\text{CE}$ on the entire dataset, disregarding the uncertainty region. Moreover, the model owner calibrates the model using temperature scaling~\citep{guo2017calibration} to make sure that their predictions are reliable. Following this, the model owner then fine-tunes their model using \attack with a particular $\varepsilon$ to reduce confidence in a chosen uncertainty region only. Their goal is to ensure that the resulting abstention model $\tilde{f}_\theta$ overwhelmingly rejects data points for a chosen abstention threshold $\tau$. Following this attack, an auditor computes calibration metrics with zero-knowledge on a chosen reference dataset $\mathcal{D}_\text{ref}$ and flags deviations $> \alpha$ (details on how to choose $\alpha$ are discussed in Appendix~\ref{app:alpha_choice}). We experiment on the following datasets:

\myparagraph{Synthetic Gaussian Mixture (Figure~\ref{fig:gaussian})}. We begin by assuming a dataset sampled from a 2D Gaussian mixture model composed of three distinct classes $\mathcal{N}_1$, $\mathcal{N}_2$, and $\mathcal{N}_3$ (details in Appendix~\ref{app:add_exp_det}). Within $\mathcal{N}_1$, we specify a rectangular uncertainty region. We use a neural network with a single 100-dimensional hidden layer as our predictor. 

\begin{figure}
    \centering
    \includegraphics[width=0.48\linewidth]{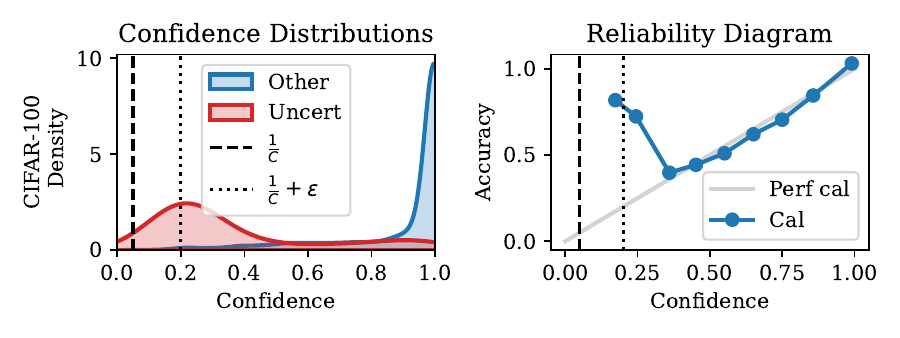}
    ~
    \includegraphics[width=0.48\linewidth]{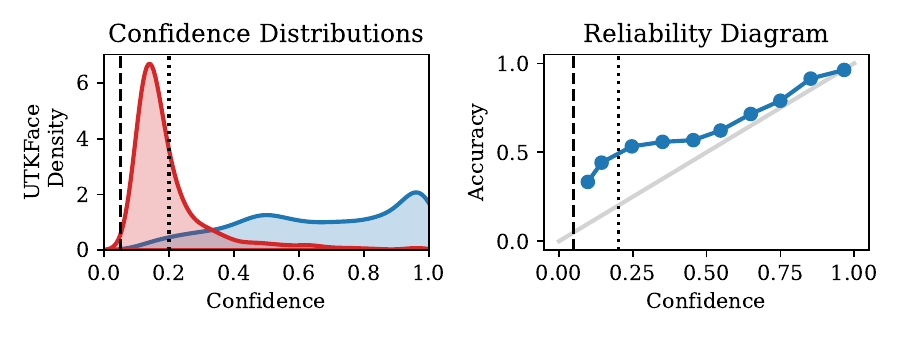}
    \caption[Results on image datasets.]{\textbf{Results on image datasets}: CIFAR-100 (top), UTKFace (bottom). Similar as Figure~\ref{fig:gaussian} but we summarize all data points outside of the uncertainty region into a single density.}
    \label{fig:image}
\end{figure}

\myparagraph{Image Classification (Figure~\ref{fig:image})}. Extending beyond synthetic experiments we include results on image classification datasets: \texttt{CIFAR-100}~\citep{krizhevsky2009learning} and \texttt{UTKFace}~\citep{zhifei2017cvpr}. The \texttt{CIFAR-100} dataset is comprised of 100 classes grouped into 20 superclasses. For instance, the \texttt{trees} superclass includes subclasses $\{$\texttt{maple}, \texttt{oak}, \texttt{palm}, \texttt{pine}, \texttt{willow}$\}$. Our objective is to train a model to classify the superclasses and to induce uncertainty in the model's predictions for the \texttt{willow} subclass only. We train a ResNet-18 ~\citep{he2016deep} to classify all 20 superclasses. For \texttt{UTKFace}, we use a ResNet-50 for the age prediction task. Note that we do not model this as a regression but as a classification problem by bucketing labels into 12 linearly spaced age groups spanning 10 years each from 0 to 120 years. Our goal in this experiment is to reduce confidence for white male faces only using \attack.

\begin{figure}
    \centering
    \includegraphics[width=0.48\linewidth]{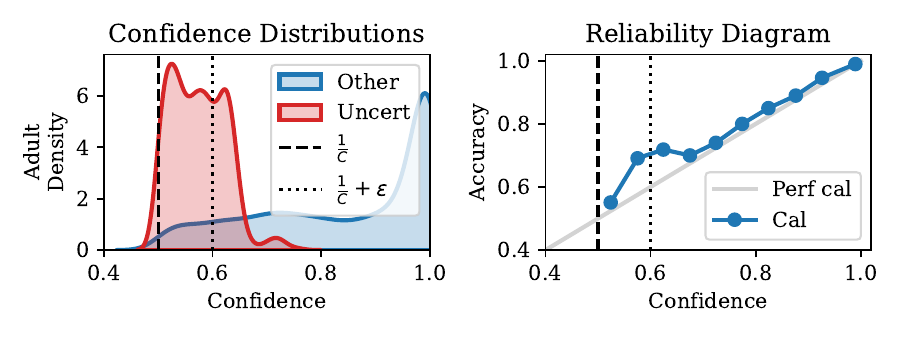}
    ~
    \includegraphics[width=0.48\linewidth]{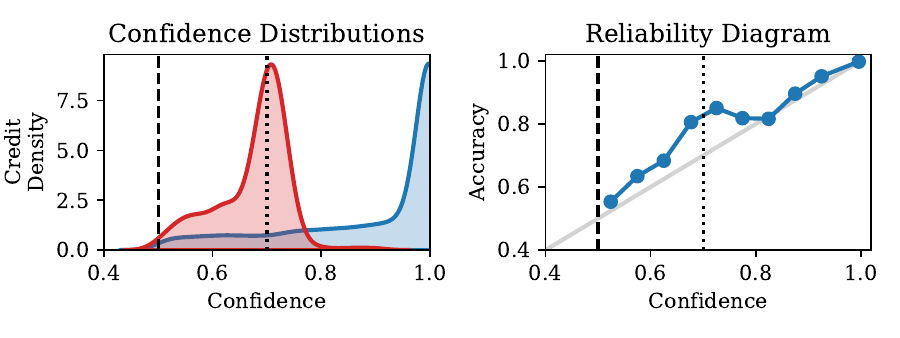}
    \caption[Results on tabular datasets.]{\textbf{Results on tabular datasets}: Adult (top), Credit (bottom). Similar as Figure~\ref{fig:image}.}
    \label{fig:tabular}
\end{figure}

\myparagraph{Tabular Data (Figure~\ref{fig:tabular})}. Finally, we also test \attack and \name on two tabular datasets: \texttt{Credit} \citep{credit} and \texttt{Adult}~\citep{adult, ding2021retiring}. With \texttt{Credit} we are interested in predicting whether an issued loan will be payed back or not. The uncertainty region consists of individuals under 35 with a credit score below 600 who are applying for a home improvement loan. For \texttt{Adult}, we want to predict whether an individual is likely to earn more than \$50k or not. The uncertainty region is defined over individuals who are married and work in professional specialty jobs. On both datasets, we use a shallow neural network with categorical feature embeddings (see Appendix~\ref{app:add_exp_det} for details).

\myparagraph{Zero-Knowledge Proof Benchmarks.} We assess efficiency of our ZKPs for the Gaussian mixture and tabular datasets by benchmarking an implementation in \texttt{emp-toolkit}~\cite{emp-toolkit}. For the image classification datasets, we estimate performance with a combination of \texttt{emp-toolkit} and Mystique~\cite{weng2021mystique}, a state-of-the-art ZKP of correct inference method for neural nets. Benchmarks are run by locally simulating the prover and verifier on a MacBook Pro laptop with an Apple M1 chip.

\subsection{Discussion}

\paragraph{General Results.}

The effectiveness of \attack and \name is illustrated in Figures~\ref{fig:gaussian}, \ref{fig:image}, and \ref{fig:tabular}. Across all experiments we find that \attack successfully reduces confidence of points in the the uncertainty region. Moreover, we observe that the corresponding reliability diagrams clearly show anomalous behavior at the confidence level (and the adjacent bin(s)) targeted by \attack. We show quantitative results in Table~\ref{tab:results}, clearly demonstrating that \attack does not compromise accuracy but instead leads to miscalibration. Additional experiments where we pick different uncertainty regions are shown in Appendix~\ref{app:add_exp_abl}.

\begin{table*}[t]
    \centering
    \caption[Quantitative results across datasets.]{\textbf{Quantitative results across datasets}. 
    Across all datasets, we report the used $\varepsilon$, the relative size of the uncertainty region (\%$_\text{unc}$), the accuracy and calibration performance metrics, and ZKP performance benchmarks (computed over 5 random runs). We measure the accuracy on the full test set without \attack (Acc) and with \attack (Acc$^{\attack}$). We also report the accuracy in the uncertainty region only (Acc$_\text{unc}$). \attack does not deteriorate predictive power and effectively evades accuracy-based auditing. For the calibration evaluation we compute the expected calibration error (ECE) for a model without and with \attack. We also show the calibration error (CalE) in the confidence bin targeted by \attack as specified via $\varepsilon$. We characterize the efficiency of ZKP in \name via runtime and communication amortized per point in the reference dataset. \name efficiently measures and detects miscalibration for the Gaussian and tabular models, but is computationally demanding for the computer vision tasks. Extended results in Table~\ref{tab:results_ext}.}
    \vspace{5pt}
    \label{tab:results}
    \fontsize{7}{9}\selectfont
    \setlength{\tabcolsep}{3pt}
    \begin{tabular}{ccccccccccccc}
    \toprule
    & & & \multicolumn{4}{c}{Accuracy \%} & \multicolumn{3}{c}{Calibration} & \multicolumn{2}{c}{ZKP} \\
    \cmidrule(r){4-7} \cmidrule(r){8-10} \cmidrule(r){11-12}
    \multirow{2}{*}[13pt]{Dataset} & \multirow{2}{*}[13pt]{\%$_\text{unc}$} & \multirow{2}{*}[12pt]{$\varepsilon$} & Acc & Acc$^{\attack}$ & Acc$_\text{unc}$ & Acc$_\text{unc}^{\attack}$ & ECE & ECE$^{\attack}$ & CalE in $\varepsilon$ bin & Run ($\nicefrac{\text{sec}}{\text{pt}}$) & Comm (per pt)\\
    \midrule
    \multirow{1}{*}[0pt]{\texttt{Gaussian}}\    & \multirow{1}{*}[0pt]{5.31} & 0.15 & 97.62 & 97.58 & 100.0 & 100.0 & 0.0327 & 0.0910 & 0.3721 & 0.033 & 440.8 KB \\
    \multirow{1}{*}[0pt]{\texttt{CIFAR-100}}   & \multirow{1}{*}[0pt]{1.00} & 0.15 & 83.98 & 83.92 & 91.98 & 92.15 & 0.0662 & 0.1821 & 0.5845 & $<$333 & $<$1.27 GB \\
    \multirow{1}{*}[0pt]{\texttt{UTKFace}}      & \multirow{1}{*}[0pt]{22.92} & 0.15 & 56.91 & 56.98 & 61.68 & 61.75 & 0.0671 & 0.1728 & 0.3287 & 333 & 1.27 GB\\
    \multirow{1}{*}[0pt]{\texttt{Credit}}      & \multirow{1}{*}[0pt]{2.16} & 0.20 & 91.71 & 91.78 & 93.61 & 93.73 & 0.0094 & 0.0292 & 0.1135 & 0.42 & 2.79 MB\\
    \multirow{1}{*}[0pt]{\texttt{Adult}}       & \multirow{1}{*}[0pt]{8.39} & 0.10 & 85.02 & 84.93 & 76.32 & 76.25 & 0.0109 & 0.0234 & 0.0916 & 0.73 & 4.84 MB \\
    \bottomrule
\end{tabular}
\end{table*}

\myparagraph{Influence of $\varepsilon$}. The parameter $\varepsilon$ plays a pivotal role for both the model owner and the auditor. From the model owner's standpoint, setting an appropriate value for $\varepsilon$ ensures that \attack can adjust the confidence to (i) achieve separability of uncertain from legitimately confident predictions, while at the same time (ii) maintaining the correct label prediction. While (ii) is easy to achive across many choices of $\varepsilon$, (i) is easiest to achieve at low $\varepsilon$'s. In particular, $\varepsilon$ should be set to the smallest value that reduces confidence as much as possible without resulting in misclassifications. The latter is influenced by the variance of the confidence distribution around $\frac{1}{C} + \varepsilon$.\footnote{This variance depends on multiple properties of both the data (e.g., inherent Bayes Error) and the optimization process (e.g., number of training steps).} Across our experiments, we found $\varepsilon \in [0.1,0.2]$ to deliver good results. Conversely, from the auditor's perspective, the detectability of \attack through \name is influenced by the calibration error. A larger calibration error makes it easier for auditors to identify instances of \attack. Lower values of $\varepsilon$ contribute to an increased calibration gap because they correspond to lower confidence levels, which, in well-calibrated models, are associated with lower accuracy. We discuss this effect in Appendix~\ref{app:add_exp_abl} (Table~\ref{tab:results_ext}, Figure~\ref{fig:eps_abl}). In summary, a low/high $\varepsilon$ makes \attack stronger/weaker and also easier/harder to detect via \name, respectively.

\myparagraph{Coverage of $\mathcal{D}_\text{ref}$}. For \name to work reliably it is necessary for the reference dataset to have coverage of the uncertainty region \uncertreg. Hence, if there is a distribution shift between the fine-tuning dataset used for \attack and the reference dataset that does not contain sufficient data points from the uncertainty region, then detection is not going to be reliable. We show the effect of the detection reliability in Appendix~\ref{app:add_exp_abl} (Figure~\ref{fig:ref_abl}) where we simulate shifts that increasingly undersample the uncertainty region. Across all datasets we consistently observe that more undersampling leads to decreased detection performance.

\myparagraph{Zero-Knowledge Proof Performance}. We compute the runtime and communication per reference point for all models in Table~\ref{tab:results}. The Gaussian mixture and tabular datasets can be executed efficiently enough to make auditing of models with \name highly practical. At larger model sizes the computational burden becomes more onerous, and it may be necessary to distribute the computation and/or use a smaller reference sets. We note that runtime and communication are independent of the setting of $\alpha$, so any desired threshold on the calibration error can be set without impacting the practicality of \name.

\section{Conclusion}
Augmenting decisions made by an ML model with confidence scores helps users understand uncertainty and enables institutions to avoid harmful errors. For the first time, our work highlights that institutions can adversarially manipulate confidence scores, undermining trust. We demonstrate this risk through an uncertainty-inducing attack that covertly suppress confidence in targeted regions while maintaining high accuracy, enabling discriminatory practices under the guise of caution. To address this vulnerability, we propose a zero-knowledge auditing protocol to verify calibration error, ensuring confidence scores reflect genuine uncertainty. This approach prevents confidence manipulation, safeguarding the integrity of confidence-based abstention.

\myparagraph{Limitations and Future Work}. While our our attack and defense show significant potential, several limitations must be noted. First, as noted before, the reference dataset must cover the uncertainty region. Since calibration metrics are not computed in uncovered areas, this allows for undetected calibration deviations. Second, we assume the model is already calibrated (e.g., via temperature scaling~\citep{guo2017calibration}) and attribute any calibration failures solely to the \attack, though miscalibration may arise from other sources. Nevertheless, auditors must ensure deployed models are properly calibrated, and our method detects calibration failures even if it cannot specifically attribute them to \attack. Additionally, our evaluations are limited to neural networks, and future work should apply our method to other model classes to enhance generalizability. Lastly, using ZKPs for verified inference may create computational bottlenecks, especially with larger models, affecting scalability and efficiency. Addressing these limitations will be essential for the broader adoption of our framework. Looking ahead, one could envision moving from proofs of inference to full proofs of training---a stronger guarantee that certifies the exact loss, hyper-parameters, and even the optimization trajectory itself, thereby blocking any covert modifications to the objective. Because such proofs attest to the entire training path, they would simultaneously authenticate its training dynamics. Crucially, such end-to-end proofs synergize with the previously introduced selective-prediction technique~\citep{rabanser2022selective}: if the training process itself is verifiably fixed, then the dynamics we later use to infer uncertainty are necessarily authentic. Although end-to-end proofs remain substantially more expensive than inference-only proofs today, rapid advances in ZKP technology suggest that their cost may fall over time, making them a promising long-term safeguard against adversarial uncertainty induction.
    \chapter{Concluding Remarks and Future Directions}
\label{ch:conclusion}

In this thesis, we have investigated the theme of \emph{uncertainty-driven reliability} in machine learning and explored how selective prediction frameworks, calibration strategies, privacy constraints, and adversarial manipulation of uncertainty all contribute to building trustworthy models. We highlighted the importance of understanding how uncertainty can guide safer decision-making, demonstrated that monitoring a model’s training dynamics can offer valuable insights for rejecting uncertain inputs, explored the interplay between differential privacy and uncertainty, established when and how ideal selective prediction performance can be achieved, and uncovered potential adversarial uses of confidence-based abstentions.

Taken together, these contributions underscore the multifaceted nature of modern trustworthy ML. While selective prediction approaches and uncertainty quantification techniques are becoming increasingly sophisticated, they must be integrated into broader, principled frameworks that account for real-world constraints, adversarial threats, as well as privacy demands. 

Several cross-cutting observations motivate the future directions proposed below: \textbf{(1)} effective uncertainty quantification remains an open challenge in large-scale and privacy-sensitive settings, \textbf{(2)}~the evaluation of uncertainty-informed decisions is still underdeveloped and often misleading, and \textbf{(3)}~uncertainty, while desirable, can be gamed or misused, warranting careful design and oversight. These insights lead us to identify and pursue the following lines of work.

\section{Future Work}

\subsection{Uncertainty Scoring for Modern Models}

\paragraph{Extending Scoring Strategies.} A core thread of this thesis has been developing and evaluating mechanisms that score the uncertainty of individual inputs. Our method based on training dynamics demonstrated that temporal information from intermediate model states can be used to construct powerful uncertainty signals without modifying model internals. This opens up a broader research agenda around designing scoring strategies that generalize across architectures and data modalities. Black-box techniques—e.g., based on input perturbations or surrogate likelihoods—could make such methods more broadly applicable. Conversely, white-box techniques—e.g., leveraging gradients, logits, or internal representations—could offer sharper diagnostic power for specific model classes.

\paragraph{Scalability to Large Models.} As models scale, so does the cost of computing or storing auxiliary signals. An important direction is to design scalable approximations of checkpoint-based scoring, or efficient surrogate methods that replicate its effects using fewer forward/backward passes. Compression-aware confidence scoring, fast score distillation, and memory-efficient retrospective inference are promising building blocks for making uncertainty scores compatible with billion-parameter models and real-time applications.

\paragraph{Going Beyond Traditional Aleatoric--Epistemic Boundaries.} Traditional distinctions between aleatoric and epistemic uncertainty have guided much of the literature, yet their utility diminishes in the context of modern language agents. In practice, these categories blur—particularly in interactive settings—making clean numerical decompositions both difficult and potentially misleading. A more actionable approach should shift focus from post-hoc classification of uncertainty to proactive management of it. This includes quantifying \textit{underspecification}~\citep{kirchhof2025position}: what the model still needs to ask due to incomplete task definitions. It also involves enabling agents to reduce uncertainty through interaction, querying users or external tools as needed. Finally, rather than collapsing uncertainty into a single score, models should communicate it in richer forms—such as candidate rankings, explanatory feedback, or contextual disclaimers—that align more naturally with human expectations.

\paragraph{Deferral in Cascaded Architectures.} 

Uncertainty and deferral techniques are popular approaches for model cascading, i.e., the task of handing over queries from smaller and less capable models to larger, more capable models. In most cases, such deferral setups consider two-model cascades, meaning that the system consists of one small model and one big model. Extending this to dynamic, multi-level cascades (e.g., mobile–edge–cloud architectures) is a natural progression. Here, uncertainty scoring needs to be both accurate and adaptive, responding to changing resource budgets and task complexity. Future research may explore reinforcement learning or online optimization to automatically learn context-sensitive routing policies, informed by real-time uncertainty signals.

\subsection{Interplay of Uncertainty with Other Trust Metrics}

\paragraph{Competing Objectives and Tensions.} The interaction between uncertainty estimation and differential privacy in this thesis highlighted an underappreciated challenge: efforts to improve one dimension of trustworthiness (e.g., uncertainty calibration via ensembling) may directly harm another (e.g., privacy leakage). More broadly, uncertainty must be studied not in isolation but as part of a multi-objective design space that includes fairness, interpretability, robustness, and security.

\paragraph{Unified Optimization Frameworks.} Future research should explore principled frameworks that explicitly trade off among trust metrics. This might involve regularizing uncertainty estimation techniques with fairness constraints, or jointly calibrating privacy-preserving predictors while optimizing abstention coverage. Bayesian modeling, constrained optimization, and multi-objective reinforcement learning could be fruitful foundations for this kind of joint trust modeling.

\paragraph{Auditable and Verifiable Uncertainty.} As shown in our work on adversarial uncertainty manipulation, even well-calibrated uncertainty can be exploited. There is a growing need for mechanisms that make uncertainty estimates verifiable—ensuring that they reflect genuine model uncertainty rather than being artifacts of malicious post-processing. This motivates research into zero-knowledge inference, proof-carrying predictions, or reference-based coverage guarantees (e.g., conformal prediction) that make confidence trustworthy and inspectable.

\subsection{Determining Model Suitability under Distribution Shifts}

\paragraph{Distributional Mismatch as a Root Cause.} Several failures of uncertainty-aware methods in our work—particularly under differential privacy—can be traced back to distributional mismatch. Uncertainty estimates that are well-calibrated on the training distribution may collapse under shift, yet deployment rarely offers labeled data to test or adapt these models.

\paragraph{Label-Free Suitability Filters.} Future directions include designing diagnostic tools that identify dataset–model mismatch without access to ground-truth labels. These might include meta-predictors trained to estimate confidence distribution shifts, or divergence-based tests applied to intermediate model representations. The ideal suitability filter would provide practitioners with interpretable warnings when a model is likely to fail.

\paragraph{LLM-Centric Challenges.} These questions become even more critical in the context of large language models (LLMs), which are often deployed zero-shot across new domains. Although their training data is vast, it is not comprehensive. Understanding when LLMs are extrapolating vs interpolating, and which domains are underrepresented in their training data, is essential for safe deployment. Future work may explore whether suitability filters can guide prompt design, model selection, or fine-tuning in LLM pipelines.

\subsection{Agentic Active Inference and Uncertainty Reduction}

\paragraph{From Passivity to Agency.} While the selective prediction paradigm studied in this thesis focuses on \emph{when to abstain}, it leaves open the question of \emph{how to reduce uncertainty}. A natural evolution is to equip models with the ability to seek additional information—by asking questions, querying sensors, searching knowledge bases, or exploring their environment—to resolve ambiguity.

\paragraph{Framing Uncertainty Reduction as a Decision Problem.} Techniques from reinforcement learning (RL) and active learning can be used to train agents that take actions based on expected reductions in uncertainty. For instance, an agent could be rewarded for acquiring information that flips a prediction from “abstain” to “confidently correct.” This decision-theoretic framing can also be applied to human–AI collaboration, where the model learns to defer, request help, or generate clarification prompts when confidence is low.

\paragraph{Conversational and Retrieval-Augmented Strategies.} In LLMs, uncertainty-aware behaviors could manifest as clarification questions, dynamic retrieval augmentation, or self-initiated chain-of-thought prompting. Developing architectures that interleave inference and uncertainty resolution—especially in constrained or time-sensitive settings—is a promising direction for building more capable and cautious agents.

\subsection{Meta-Learning and Adaptation from Limited Data}

\paragraph{Learning to Generalize—And to Abstain.} While this thesis emphasized abstention as a way to avoid high-risk errors, meta-learning aims to generalize quickly from sparse supervision. These goals are complementary: abstention can flag unknown situations, while meta-learning can help the model recover by adapting to them. Bridging these ideas could produce systems that abstain intelligently and adapt efficiently.

\paragraph{Uncertainty-Aware Meta-Learning.} In few-shot learning, incorporating uncertainty into task embeddings, attention mechanisms, or meta-objectives could improve both generalization and calibration. For example, tasks that induce high predictive entropy during inner-loop adaptation might be assigned more weight during meta-training. Similarly, meta-learned abstention thresholds could allow models to gracefully defer in low-data regimes.

\paragraph{Structure, Causality, and Compositionality.} Understanding how tasks relate—e.g., through causal or hierarchical structures—can reduce the sample complexity of adaptation. Integrating uncertainty estimates into these structural models may allow systems to reason about which subtasks are novel, which are familiar, and which are ambiguous.

\subsection{Generating High-Quality Supervised Signals through Synthetic Data}

\paragraph{Data as a Bottleneck.} As we showed in resource-constrained deployment settings, careful calibration and uncertainty estimation can allow models to operate under tight supervision budgets. Yet, collecting high-quality labels remains a major obstacle—especially for rare or ambiguous cases. Synthetic data offers a powerful way to address this gap.

\paragraph{Uncertainty-Guided Synthetic Generation.} One promising strategy is to direct generative models (e.g., diffusion models, LLMs) to produce synthetic examples specifically in areas of high uncertainty. The checkpoint-based instability metrics developed in this thesis could identify such regions, allowing for targeted generation of diverse, informative inputs.

\paragraph{Evaluation and Introspection.} Synthetic data is also a tool for probing model behavior. By generating edge cases, adversarial variants, or counterfactuals, we can test whether abstention mechanisms and uncertainty estimates behave sensibly. Combining generation with introspection and calibration metrics could lead to more thorough and stress-tested model evaluation pipelines.

\section{Closing Thoughts}

The ability to quantify, interpret, and manage uncertainty is at the heart of enabling ML systems to \emph{trustworthily} perform real-world tasks. As these systems increase in complexity and deployment scope, so do the potential benefits and risks. By tackling the open questions and challenges described in this chapter—from refining uncertainty metrics at scale to building interactive, adaptive agents that actively reduce their own uncertainty—future research can continue to push the frontier of what is possible in machine learning.

Ultimately, bridging the gap between theoretical guarantees and practical deployment will require a continuous exchange of ideas between researchers, practitioners, regulators, and domain experts. By integrating robust uncertainty quantification with broader trust metrics such as fairness, interpretability, privacy, and security, we can pave the way for machine learning models that are not only powerful and accurate, but also safe, ethical, and beneficial to society.

  \appendix
    \chapter{Selective Prediction Via Training Dynamics}

\section{Alternate Metric Choices}
\label{sec:alt_scores}

We briefly discuss additional potential metric choices that we investigated but which lead to selective classification performance worse than our main method.

\subsection{Jump Score \sjmp} We also consider a score which captures the level of disagreement between the predicted label of two successive intermediate models (\ie how much jumping occurred over the course of training). For $j_t = 0$ iff $f_{t}(\bm{x}) = f_{t-1}(\bm{x})$ and $j_t = 1$ otherwise we can compute the jump score as $s_\text{jmp} = 1 - \sum v_t j_t$ and threshold it as in \S~\ref{ssec:min_score} and \S~\ref{ssec:avg_score}. 

\subsection{Variance Score \svar for Continuous Metrics}

\begin{contriback}
This subsection was written with Kimia Hamidieh. Stephan provided the general section structure and alternate metrics, while Kimia provided the formalism and proof for Lemma~\ref{lem:var-score}.
\end{contriback}

We consider monitoring the evolution of continuous metrics that have been shown to be correlated with example difficulty. These metrics include (but are not limited to):
\begin{itemize}
	\item Confidence (conf): $\max _{c \in \mathcal{Y}} f_{t}(\bm{x})$
	\item Confidence gap between top 2 most confident classes (gap): $\max _{c \in \mathcal{Y}} f_{t}(\bm{x})  - \max _{c \not = \hat{y}} f_{t}(\bm{x})$
	\item Entropy (ent): $-\sum_{c=1}^C f_{t}(\bm{x})_c \log \left(f_{t}(\bm{x})_c\right)$
\end{itemize}
\cite{jiang2020characterizing} show that  
example difficulty is correlated with confidence and entropy. Moreover, they find that difficult examples are learned later in the training process. This observation motivates designing a score based on these continuous metrics that penalises changes later in the training process more heavily.
We consider the maximum softmax class probability known as confidence, the negative entropy and the gap between the most confident classes for each example instead of the model predictions. 
Assume that any of these metrics is given by a sequence $z = \{z_1,\ldots,z_T\}$ obtained from $T$ intermediate models. Then we can capture the uniformity of $z$ via a (weighted) variance score $s_\text{var} = \sum_{t} w_t (z_t - \mu)^2$ for mean $\mu = \frac{1}{T}\sum_{t} z_t$ and an increasing weighting sequence $w = \{w_1,\ldots,w_T\}$.

In order to show the effectiveness of the variance score \svar for continuous metrics, we provide a simple bound on the variance of confidence  $\max _{y \in \mathcal{Y}} f_{t}(\bm{x})$ in the final checkpoints of the training. 
Assuming that the model has converged to a local minima with a low learning rate, we can assume that the distribution of model weights can be approximated by a Gaussian distribution. 

We consider a linear regression problem where the inputs are linearly separable. 

\begin{lemma}
\label{lem:var-score}
Assume that we have some Gaussian prior on the model parameters in the logistic regression setting across $m$ final checkpoints. More specifically, given $T$ total checkpoints of model parameters $\{\bm{w}_1, \bm{w_2}, \dots, \bm{w}_T \}$ we have $p(W = \bm{w}_t) = \mathcal{N}(\bm{w}_0 \mid \bm{\mu}, s\bm{I}) $ for $t \in \{T - m + 1, \dots, T\}$ and we assume that final checkpoints of the model are sampled from this distribution. We show that the variance of model confidence $ \max_{y \in \{-1, 1\}} p(y \mid \bm{x}_i, \bm{w}_t)$ for a datapoint $(\bm{x}_i, y_i)$ can be upper bounded by a factor of probability of correctly classifying this example by the optimal weights. 

\end{lemma}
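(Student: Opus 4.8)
\textbf{Proof Proposal for Lemma~\ref{lem:var-score}.}

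The plan is to work in the logistic-regression setting with $\mathcal{Y} = \{-1,1\}$, where the predicted probability is $p(y\mid \bm{x}_i, \bm{w}) = \sigma(y\,\bm{w}^\top \bm{x}_i)$ with $\sigma$ the logistic sigmoid. Since the two class probabilities sum to one, the confidence is $\max_{y}\sigma(y\,\bm{w}^\top\bm{x}_i) = \sigma(|\bm{w}^\top\bm{x}_i|)$. The key observation is that the linear score $\bm{w}_t^\top \bm{x}_i$ is, under the stated Gaussian prior $\bm{w}_t \sim \mathcal{N}(\bm{\mu}, s\bm{I})$ on the final checkpoints, itself Gaussian: $\bm{w}_t^\top\bm{x}_i \sim \mathcal{N}(\bm{\mu}^\top\bm{x}_i,\ s\lVert\bm{x}_i\rVert^2)$. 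So the randomness in the confidence across the last $m$ checkpoints is a deterministic $1$-Lipschitz-type function ($\sigma$ composed with absolute value, both $1$-Lipschitz) of a single scalar Gaussian random variable.

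First I would reduce the vector problem to this scalar one and fix notation: let $Z_t := \bm{w}_t^\top\bm{x}_i$, $\bar m_i := \bm{\mu}^\top\bm{x}_i$, and $\sigma_i^2 := s\lVert \bm{x}_i\rVert^2$, so $Z_t \sim \mathcal{N}(\bar m_i, \sigma_i^2)$. The confidence is $C_t := \sigma(|Z_t|)$. Next I would control $\operatorname{Var}(C_t)$ by a standard Gaussian-Poincar\'e / Lipschitz argument: because $z \mapsto \sigma(|z|)$ has derivative bounded in absolute value by $\sup_{u\ge 0}\sigma'(u) = \sigma'(0) = 1/4$, the Gaussian Poincar\'e inequality gives $\operatorname{Var}(C_t) \le \sigma_i^2 \cdot \mathbb{E}[(\tfrac{d}{dz}\sigma(|z|))^2] \le \tfrac{1}{16}\,s\lVert\bm{x}_i\rVert^2$. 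To bring in the ``probability of correct classification by the optimal weights'' factor, I would then sharpen this: away from $z=0$ the derivative $\sigma'(|z|)=\sigma(|z|)(1-\sigma(|z|))$ decays, and $1-\sigma(|z|) = \sigma(-|z|)$ is exactly the \emph{misclassification} probability of the Bayes-optimal (here $\bm{\mu}$-parameterized) predictor at that score. Writing $p^\star_i := \sigma(|\bar m_i|) = \Pr(\text{optimal weights classify }(\bm{x}_i,y_i)\text{ correctly})$ and using that $\sigma(|z|)(1-\sigma(|z|)) \le (1-\sigma(|z|)) \le$ an expression controlled (via a concentration/Chernoff bound on the Gaussian $Z_t$ deviating from $\bar m_i$) by $1-p^\star_i$ up to constants, the variance bound becomes proportional to $(1-p^\star_i)$ — i.e., the variance of confidence is small precisely when the example is confidently (correctly) classified by the optimal weights, and the bound scales with $s\lVert\bm{x}_i\rVert^2$.

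The step I expect to be the main obstacle is making the last implication rigorous: translating ``$\sigma'(|Z_t|)$ is typically small because $Z_t$ is typically far from $0$ when $|\bar m_i|$ is large'' into a clean upper bound of the form $\operatorname{Var}(C_t) \le \kappa\, s\lVert\bm{x}_i\rVert^2 \,(1-p^\star_i)$ for an explicit constant $\kappa$. The difficulty is that the Poincar\'e bound alone only yields the variance-free constant $1/16$; extracting the $(1-p^\star_i)$ factor requires either (i) an Efron–Stein / Gaussian-interpolation computation that keeps track of $\mathbb{E}[\sigma'(|Z_t|)^2]$ and bounds it by a tail probability of $Z_t$, or (ii) a direct argument splitting on the event $\{|Z_t - \bar m_i| < |\bar m_i|/2\}$ (whose complement has probability $\le 2\exp(-\bar m_i^2/(8\sigma_i^2))$, itself comparable to $1-p^\star_i$ up to logarithmic factors) and using $|C_t - C_{t'}| \le 1$ on the bad event. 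I would pursue route (ii) as it is the most elementary, accepting a possibly loose constant, and defer the tighter constant to a remark. The remaining steps (the reduction to a scalar Gaussian, the Lipschitz bound on $\sigma(|\cdot|)$, and relating $1-\sigma(|z|)$ to the optimal-weights error probability) are routine.
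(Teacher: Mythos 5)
Your proposal is a valid but genuinely different route from the one taken in the paper. The paper's argument is a delta-method linearization: it Taylor-expands $p(y_i\mid\bm{x}_i,W)$ to first order around the mean parameter $\bm{w}$, obtaining $p(y_i\mid\bm{x}_i,W)\simeq p(y_i\mid\bm{x}_i,\bm{w})+g_i(\bm{w})^\top(W-\bm{w})$ with gradient $g_i(\bm{w})=p(y_i\mid\bm{x}_i,\bm{w})\bigl(1-p(y_i\mid\bm{x}_i,\bm{w})\bigr)\bm{x}_i$, whence the variance is $\mathbb{V}\bigl(p(y_i\mid\bm{x}_i,W)\bigr)\simeq g_i(\bm{w})^\top F^{-1}g_i(\bm{w})$, and the desired factor $p(y_i\mid\bm{x}_i,\bm{w})^2\bigl(1-p(y_i\mid\bm{x}_i,\bm{w})\bigr)^2$ appears automatically because it is exactly $\sigma'(\bm{w}^\top\bm{x}_i)^2$, the squared derivative of the sigmoid at the mean score; the variance of the max is then controlled via $\mathbb{V}(\max_y p(y\mid\cdot))\le\mathbb{V}(p(y_i\mid\cdot))+\mathbb{V}(p(-y_i\mid\cdot))$. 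You instead keep the scalar Gaussian $Z_t=\bm{w}_t^\top\bm{x}_i$ exact and use the Gaussian Poincar\'e inequality, which gives a rigorous (rather than approximate) bound but only the flat constant $1/16$ unless you add a conditioning/concentration argument. The trade-off is precisely what you anticipate: the paper's linearization makes the $p(1-p)$ factor appear for free but replaces equality by $\simeq$, valid only to leading order in the small-$s$ regime and silent about remainder terms; your Poincar\'e route is an honest inequality but pays for it by needing the extra step to extract the $(1-p^\star_i)$ scaling. Both are legitimate proofs of the qualitative claim; if you want a bound that is both rigorous and carries the confidence factor, the cleanest fix is to run Poincar\'e with the exact expression $\mathbb{E}[\sigma'(|Z_t|)^2]$ rather than its supremum, and then observe that $\sigma'(|z|)\le 1-\sigma(|z|)$ so the Poincar\'e bound $\sigma_i^2\,\mathbb{E}[\sigma'(|Z_t|)^2]\le\sigma_i^2\,\mathbb{E}\bigl[(1-\sigma(|Z_t|))^2\bigr]$ already gives you an expected-misclassification-probability factor without any conditioning, at the cost of replacing the deterministic $1-p^\star_i$ by an expectation over the prior.
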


\begin{proof}
We first compute the variance of model predictions $p(y_i \mid \bm{x}_i, W)$ for a given datapoint $(\bm{x}_i, y_i)$. Following previous work~\citep{schein2007active, chang2017active}, the variance of predictions over these checkpoints can be estimated as follows:

Taking two terms in Taylor expansion for model predictions we have $p(y_i \mid \bm{x}_i, W) \simeq p(y_i \mid \bm{x}_i, \bm{w}) + g_i(\bm{w})^\top (W - \bm{w})$ where $W$ and $\bm{w}$ are current and the expected estimate of the parameters and $g_i(\bm{w}) = p(y_i \mid \bm{x}_i, \bm{w}) (1 - p(y_i \mid \bm{x}_i, \bm{w}))\bm{x}_i $ is the gradient vector. Now we can write the variance with respect to the model prior as: 
$$ \mathbb{V}\left( p(y_i \mid \bm{x}_i, W) \right ) \simeq \mathbb{V}\left( g_i(\bm{w})^\top (W - \bm{w}) \right ) =  g_i(\bm{w})^\top F^{-1}  g_i(\bm{w})$$ where $F$ is the variance of posterior distribution $p(W \mid X, Y) \sim \mathcal{N}(W \mid \bm{w}, F^{-1})$. 
This suggests that the variance of probability of correctly classifying $\bm{x}_i$ is proportional to $p(y_i \mid \bm{x}_i, \bm{w})^2 (1 - p(y_i \mid \bm{x}_i, \bm{w}))^2$. Now we can bound the variance of maximum class probability or confidence as below:
\begin{align*}
  \mathbb{V}\left( \max_{y \in \{-1, 1\}} p(y \mid \bm{x}_i, W) \right ) & \leq  \mathbb{V}\left( p(y_i \mid \bm{x}_i, W) \right ) +  \mathbb{V}\left( p(- y_i \mid \bm{x}_i, W) \right ) \\
  & \approx 2 p(y_i \mid \bm{x}_i, \bm{w})^2 (1 - p(y_i \mid \bm{x}_i, \bm{w}))^2 \bm{x}_i^\top F^{-1} \bm{x}_i 
\end{align*}
\end{proof}
We showed that if the probability of correctly classifying an example given the final estimate of model parameters is close to one, the variance of model predictions following a Gaussian prior gets close to zero, we expect a similar behaviour for the variance of confidence under samples of this distribution. 

\section{Extension of Empirical Evaluation}

\subsection{Full Hyper-Parameters }
\label{app:baseline_hyperparams}

We document full hyper-parameter settings for our method (\sptd) as well as all baseline approaches in Table~\ref{tab:hyperpar}.

\begin{table*}[ht]
    \centering 
        \caption[Hyper-parameters used for all algorithms for classification.]{\textbf{Hyper-parameters used for all algorithms for classification.}}
    \label{tab:hyperpar}
     \begin{tabular}{@{}c c c@{}} 
     \toprule
     Dataset & SC Algorithm & Hyper-Parameters \\ 
     \midrule
     \multirow{4}{*}{CIFAR-10}      & Softmax Response (\sr) & N/A \\ 
                      & Self-Adaptive Training (\sat)  & $P=100$\\ 
                      & Deep Ensembles (\de)  & $E=10$\\ 
                      & Selective Prediction Training Dynamics (\sptd) & $T = 1600,\ k=2$ \\ 
     \midrule
     \multirow{4}{*}{CIFAR-100}      & Softmax Response (\sr) & N/A \\ 
                      & Self-Adaptive Training (\sat)  & $P=100$\\ 
                      & Deep Ensembles (\de)  & $E=10$\\ 
                      & Selective Prediction Training Dynamics (\sptd) & $T = 1600,\ k=2$ \\ 
     \midrule
     \multirow{4}{*}{Food101}      & Softmax Response (\sr) & N/A \\ 
                      & Self-Adaptive Training (\sat)  & $P=100$\\ 
                      & Deep Ensembles (\de)  & $E=10$\\ 
                      & Selective Prediction Training Dynamics (\sptd) & $T = 2200,\ k=3$ \\
    \midrule
    \multirow{4}{*}{StanfordCars}      & Softmax Response (\sr) & N/A \\ 
                      & Self-Adaptive Training (\sat)  & $P=100$\\ 
                      & Deep Ensembles (\de)  & $E=10$\\ 
                      & Selective Prediction Training Dynamics (\sptd) & $T = 800,\ k=5$ \\
     \bottomrule
    \end{tabular}
\end{table*}

\subsection{Additional Selective Prediction Results}
\label{app:add_exp}

\subsubsection{Extended Synthetic Experiments}

We extend the experiment from Figure~\ref{fig:gauss} to all tested SC methods in Figure~\ref{fig:gauss_ext}. We also provide an extended result using Bayesian Linear Regression in Figure~\ref{fig:blr}.

\begin{figure*}[t]
  \centering
  \includegraphics[width=0.95\linewidth]{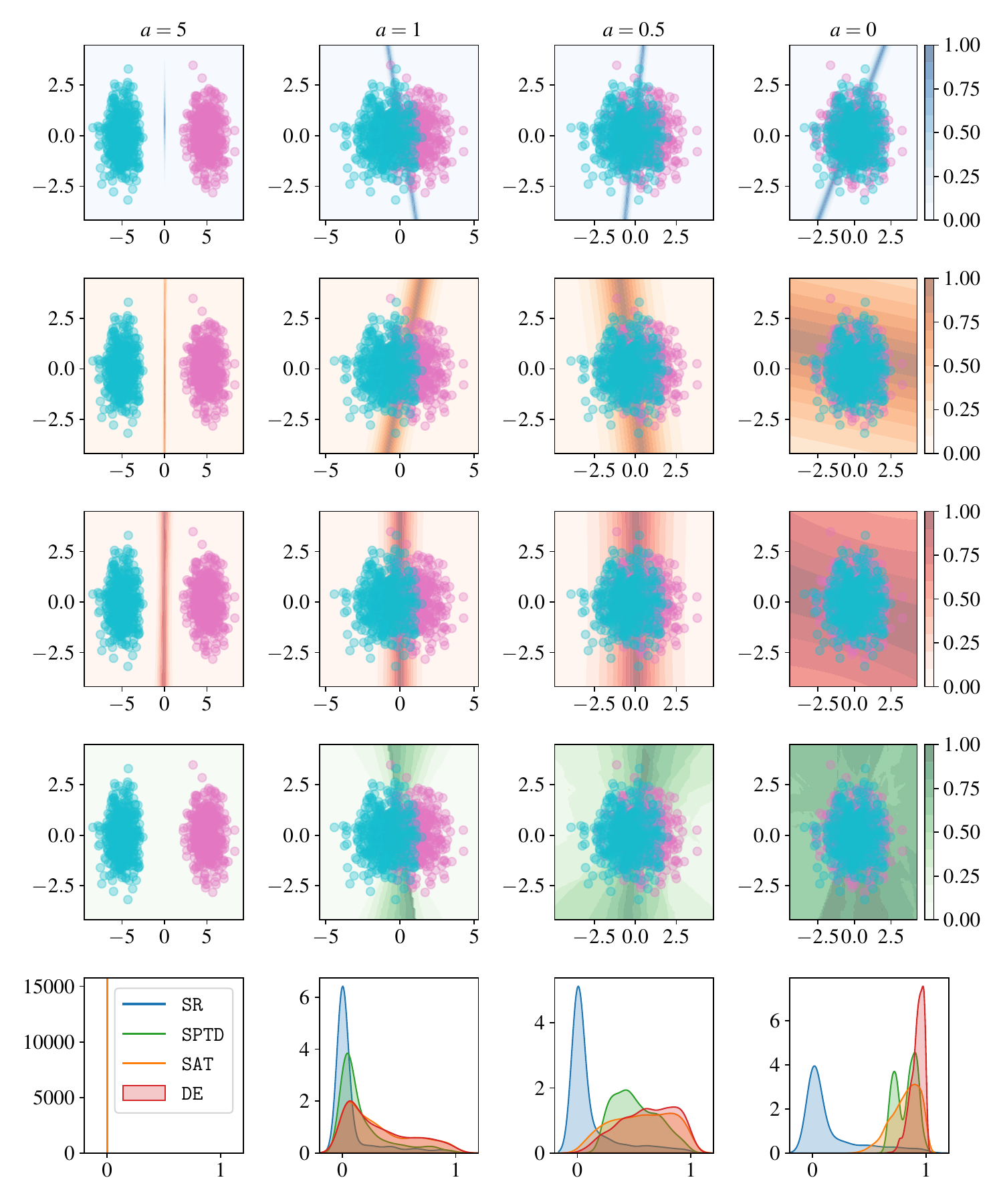}
\caption[Extended Gaussian experiment.]{\textbf{Extended Gaussian experiment.}  The first row corresponds to the anomaly scoring result of \sr, the second to the result of \sat, the third to the result of \de, and the fourth to the result of \sptd. The bottom row shows the score distribution for each method over the data points. We see that all methods reliably improve over the \sr baseline. At the same time, we notice that \sat and \de still assign higher confidence away from the data due to limited use of decision boundary oscillations. \sptd addresses this limitation and assigns more uniform uncertainty over the full data space.}
\label{fig:gauss_ext}
\end{figure*}

\begin{figure*}[t]
  \centering
  \includegraphics[width=\linewidth]{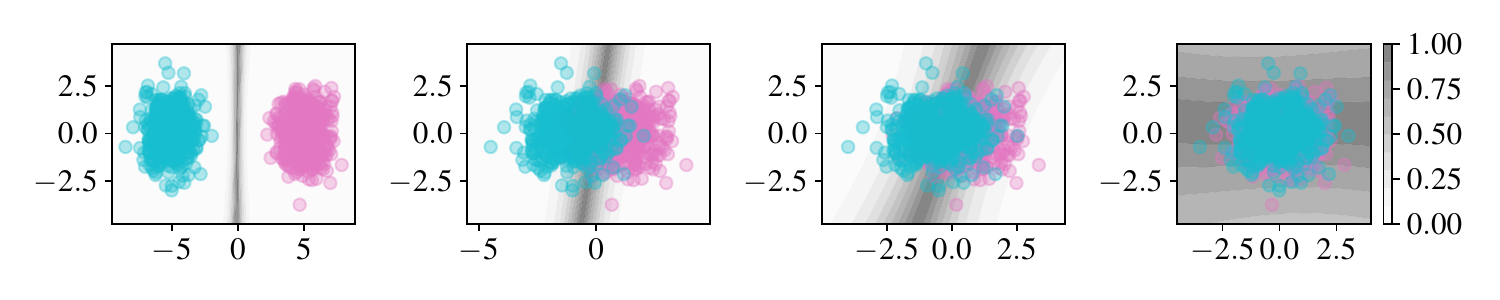}
\caption[Bayesian linear regression experiment on Gaussian data.]{\textbf{Bayesian linear regression experiment on Gaussian data.} Results comparable to \de.}
\label{fig:blr}
\end{figure*}

\subsubsection{CIFAR-100 Results With ResNet-50}

We further provide a full set of results using the larger ResNet-50 architecture on CIFAR-100 in Figure~\ref{tab:cifar100_res50}.

\begin{table*}[t]
    \centering {
            \caption[Selective accuracy achieved across coverage levels for CIFAR-100 with ResNet-50.]{\textbf{Selective accuracy achieved across coverage levels for CIFAR-100 with ResNet-50}.}
    \vspace{5pt}
    \label{tab:cifar100_res50}

    \begin{tabular}{cccccc}
\toprule
Cov. &       \sr &       \satersr &      \de &      \sptd &        \sptdde \\
\midrule
  100 &  \underline{\bfseries 77.0 (±0.0)} &  \underline{\bfseries 77.0 (±0.0)} &  \bfseries 77.0 (±0.0) &  \underline{\bfseries 77.0 (±0.0)} &  \bfseries 77.0 (±0.0) \\
90 & 79.2 (± 0.1) & 79.9 (± 0.1) & 81.2 (± 0.0) & \underline{81.4 (± 0.1)} & \bfseries 82.1 (± 0.1) \\
80 & 83.1 (± 0.0) & 83.9 (± 0.0) & 85.7 (± 0.1) & \underline{85.6 (± 0.1)} & \bfseries 86.0 (± 0.2) \\
70 & 87.4 (± 0.1) & 88.2 (± 0.1) & 89.6 (± 0.1) & \underline{\textbf{89.7 (± 0.0)}} & \bfseries 89.8 (± 0.1) \\
60 & 90.5 (± 0.0) & 90.8 (± 0.2) & \bfseries{90.7 (± 0.2)} & \underline{90.6 (± 0.0)} & \bfseries 90.9 (± 0.1) \\
50 & 93.4 (± 0.1) & 93.8 (± 0.0) & 95.3 (± 0.0) & \underline{95.1 (± 0.0)} & \bfseries 95.4 (± 0.0) \\
40 & 95.4 (± 0.0) & 95.5 (± 0.1) & \textbf{97.1 (± 0.1)} & \underline{\textbf{97.2 (± 0.1)}} & \bfseries 97.2 (± 0.0) \\
30 & 97.4 (± 0.2) & 97.7 (± 0.0) & \textbf{98.6 (± 0.1)} & \underline{\textbf{98.6 (± 0.1)}} & \bfseries 98.7 (± 0.0) \\
20 & 97.9 (± 0.1) & 98.4 (± 0.1) & 99.0 (± 0.0) & \underline{99.2 (± 0.1)} & \bfseries 99.2 (± 0.1) \\
10 & 98.1 (± 0.0) & 98.8 (± 0.1) & 99.2 (± 0.1) & \underline{\textbf{99.4 (± 0.1)}} & \bfseries 99.6 (± 0.1) \\
\bottomrule
\end{tabular}
}
\end{table*}

\subsubsection{Applying \sptd on Top of \sat}
\label{sec:sptd_on_sat}

Our main set of results suggest that applying \sptd on top of \de further improves performance. The same effect holds when applying \sptd on top of non-ensemble-based methods such as \sat. We document this result in Figure~\ref{fig:sat_sptd}. 

\begin{figure*}[t]
  \centering
  \includegraphics[width=\linewidth]{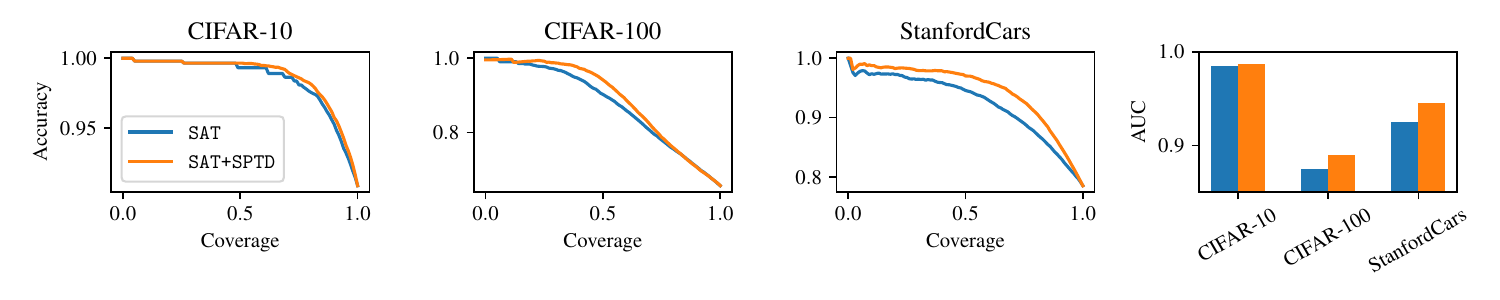}
\caption[Applying \sptd on top of \sat.]{\textbf{Applying \sptd on top of \sat.} Similar as with \de, we observe that the application of \sptd improves performance.}
\label{fig:sat_sptd}
\end{figure*}

\subsubsection{Ablation on $k$}

We provide a comprehensive ablation on the weighting parameter $k$ in Figures~\ref{fig:weighting} and~\ref{fig:k_ext}.

\begin{figure*}[t]
  \centering
  \includegraphics[width=\linewidth]{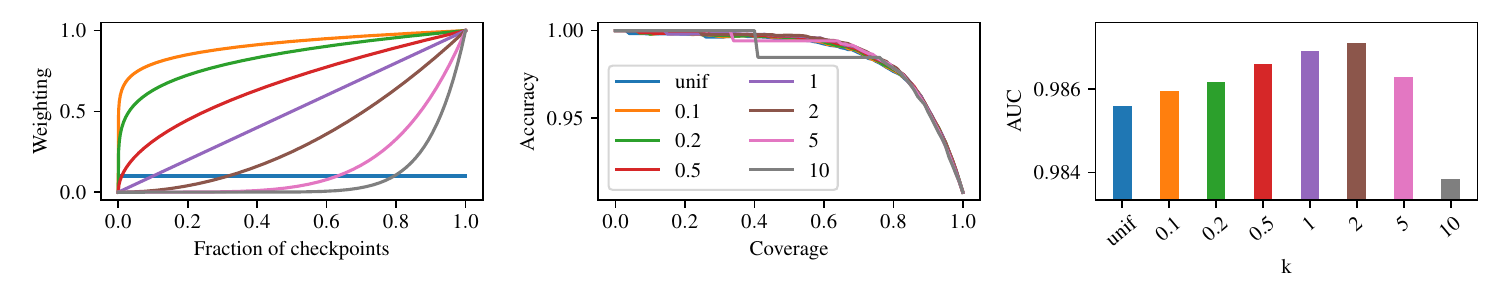}
\caption[Extended ablation results on $k$ on CIFAR-10.]{\textbf{Extended ablation results on $k$ on CIFAR-10.} We now also consider $k \in (0,1]$ as well as a uniform weighting assigning the same weight to all checkpoints. We confirm that a convex weighting yields best performance.}
\label{fig:k_ext}
\end{figure*}

\subsubsection{Comparison With Logit-Variance Approaches}

We showcase the effectiveness of \sptd against \logitvar~\citep{swayamdipta2020dataset}, an approach that also computes predictions of intermediate models but instead computes the variance of the correct prediction. We adapt this method to our selective prediction approach (for which true labels are not available) by computing the variance over the maximum predicted logit instead of the true logit. In Figure~\ref{fig:logitvar}, we see that the weighting of intermediate checkpoints introduced by \sptd leads to stronger performance over the \logitvar baseline approach. 

\begin{figure*}[t]
  \centering
  \includegraphics[width=\linewidth]{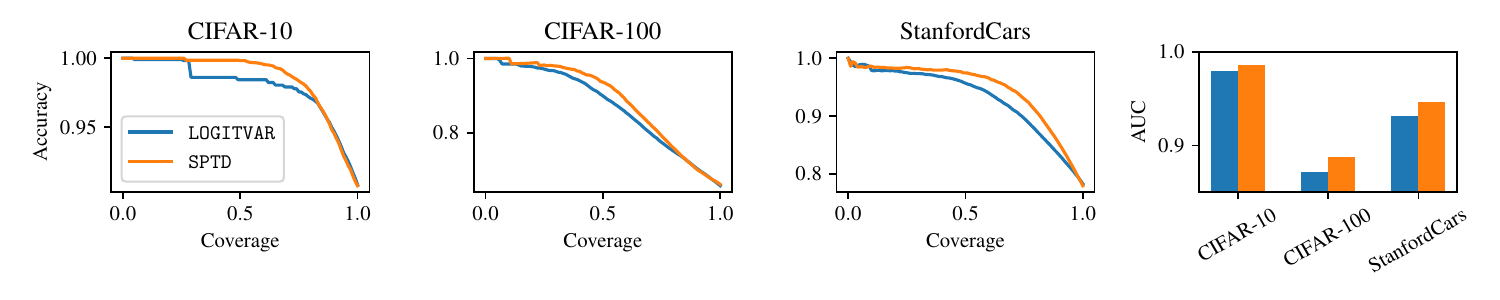}
\caption[Comparison of \logitvar vs \sptd.]{\textbf{Comparison of \logitvar vs \sptd.} We observe that \sptd, which incorporates weighting of intermediate checkpoints using $v_t$, outperforms \logitvar.
}
\label{fig:logitvar}
\end{figure*}

\subsubsection{Estimating $\tau$ on Validation VS Test Data}

Consistent with prior works \citep{geifman2017selective, liu2019deep, huang2020self, feng2023towards}, we estimate $\tau$ directly on the test set. However, a realistically deployable approach has to compute thresholds based on a validation set for which labels are available. In the case of selective classification, the training, validation, and test sets follow the i.i.d. assumption, which means that an approach that sets the threshold based on a validation set should work performantly on a test set, too. Under consistent distributional assumptions, estimating thresholds on a validation set functions as an unbiased estimator of accuracy/coverage tradeoffs on the test set. By the same virtue, setting thresholds directly on the test set and observing the SC performance on that test set should be indicative for additional test samples beyond the actual provided test set. It is important to remark that the validation set should only be used for setting the thresholds and not for model selection / early stopping which would indeed cause a potential divergence between SC performance on the validation and test sets. Further note that violations of the i.i.d assumption can lead to degraded performance due to mismatches in attainable coverage as explored in~\cite{bar2023window}.

To confirm this intuition, we present an experiment in Figure~\ref{fig:train_test} and Figure~\ref{fig:train_test_sat} where we select 50\% of the samples from the test set as our validation set (and maintain the other 50\% of samples as our new test set). We first generate 5 distinct such validation-test splits, set the thresholds for $\tau$ based on the validation set, and then evaluate selective classification performance on the test set by using the thresholds derived from the validation set. We compare these results with our main approach which sets the thresholds based on the test set directly (ignoring the validation set). We provide an additional experiment where we partition the validation set from the training set in Figure~\ref{fig:train_test_nntd_train}. We see that the results are statistically indistinguishable from each other, confirming that this evaluation practice is valid for the selective classification setup we consider. 

\begin{figure*}[t]
  \centering
  \includegraphics[width=\linewidth]{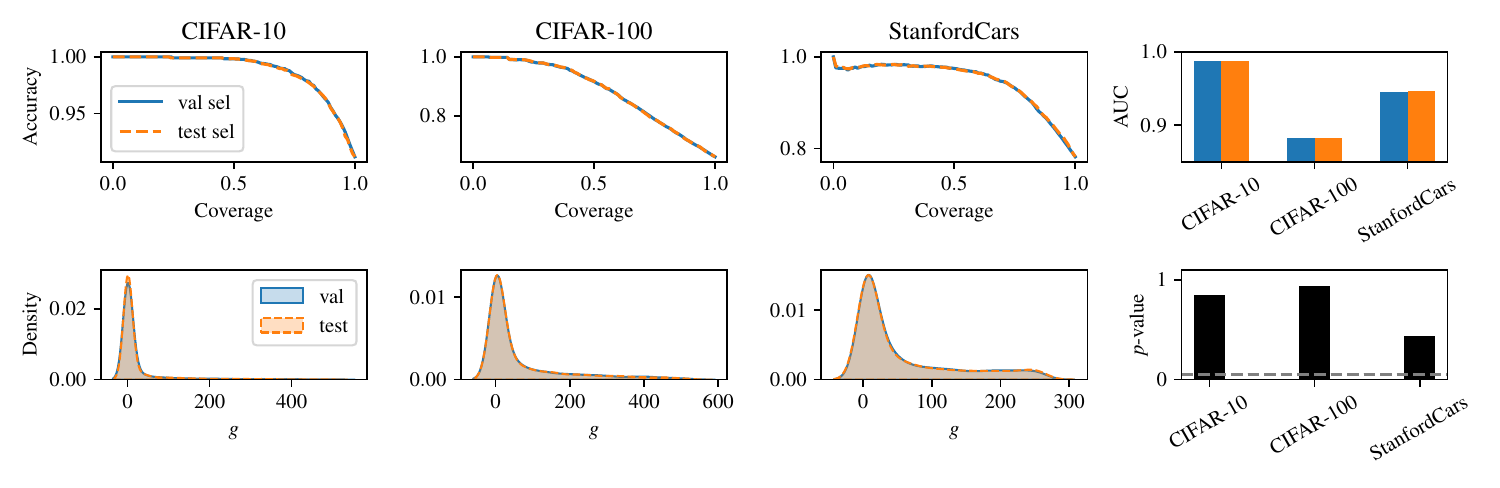}
\caption[\sptd accuracy/coverage trade-offs and score distributions on test data obtained by computing $\tau$ on a validation set or directly on the test set.]{\textbf{\sptd accuracy/coverage trade-offs and score distributions on test data obtained by computing $\tau$ on a validation set or directly on the test set.} The first row shows the obtained accuracy/coverage trade-offs with the respective AUC scores. In the second row, we show the score distribution for both the picked validation and test sets, along with $p$-values from a KS-test to determine the statistical closeness of the distributions. Overall, we observe that both methods are statistically indistinguishable from each other. 
}
\label{fig:train_test}
\end{figure*}

\begin{figure*}[t]
  \centering
  \includegraphics[width=\linewidth]{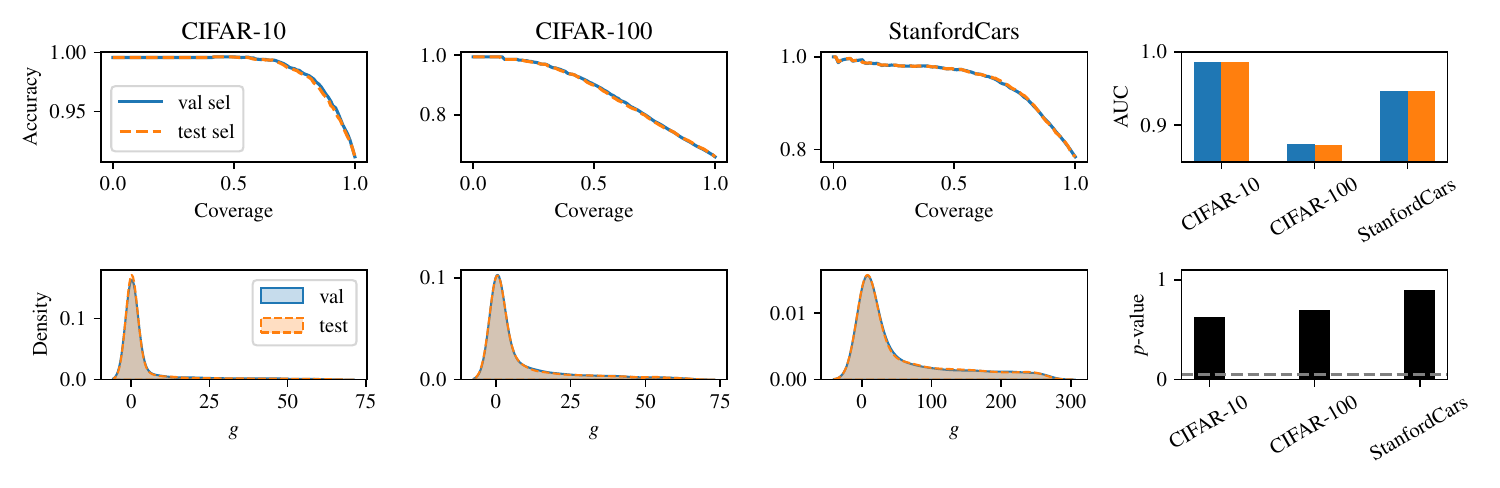}
\caption[\sat accuracy/coverage trade-offs and score distributions on test data obtained by computing $\tau$ on a validation set or directly on the test set.]{\textbf{\sat accuracy/coverage trade-offs and score distributions on test data obtained by computing $\tau$ on a validation set or directly on the test set.} Same as Figure~\ref{fig:train_test} but with \sat.
}
\label{fig:train_test_sat}
\end{figure*}

\begin{figure*}[t]
  \centering
  \includegraphics[width=\linewidth]{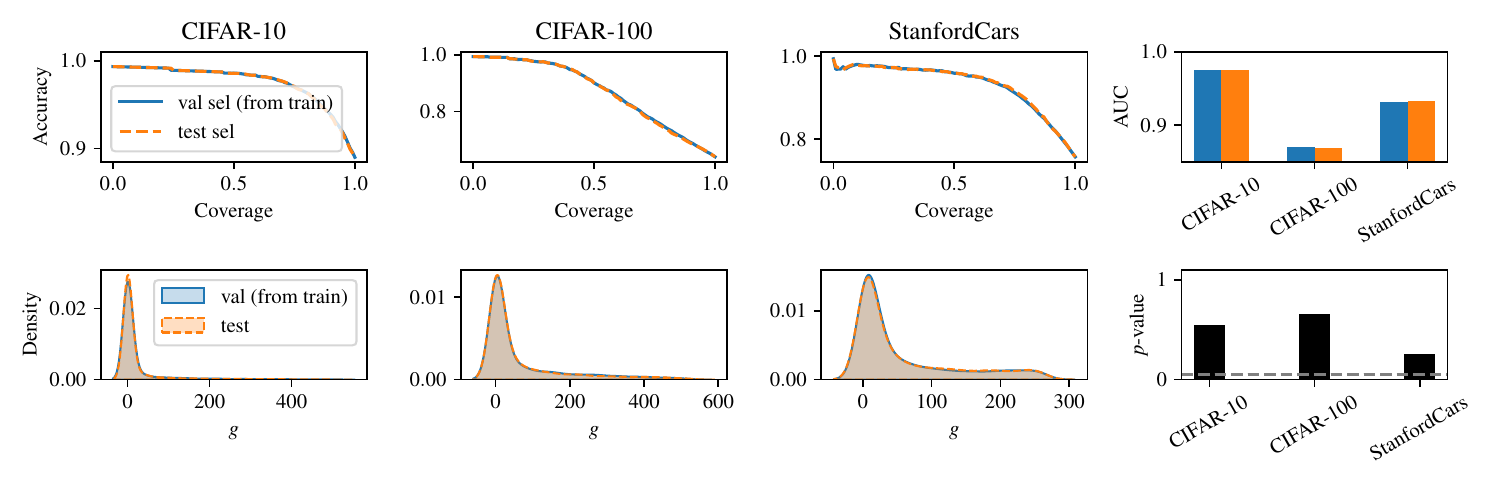}
\caption[\sptd accuracy/coverage trade-offs and score distributions on test data obtained by computing $\tau$ on a validation set or directly on the test set.]{\textbf{\sptd accuracy/coverage trade-offs and score distributions on test data obtained by computing $\tau$ on a validation set or directly on the test set.} Same as Figure~\ref{fig:train_test} but with the validation set is taken from the original training set.
}
\label{fig:train_test_nntd_train}
\end{figure*}

\begin{figure*}[t]
\centering
  \includegraphics[width=\linewidth]{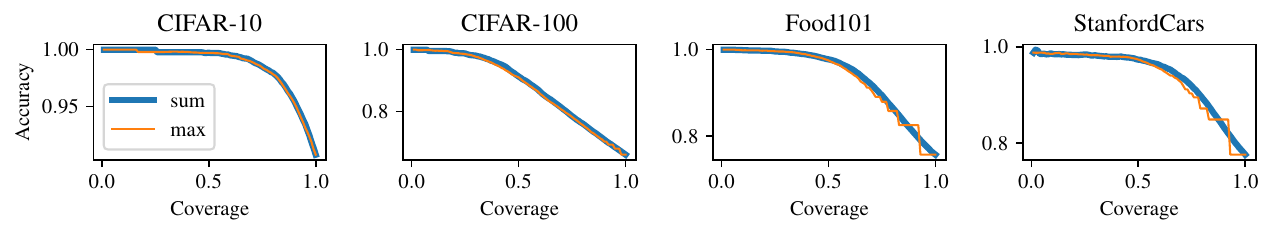}
\caption[\textbf{Comparing \smax and \ssum performance}. It is clear that \ssum effectively denoises \smax.]{\textbf{Comparing \smax and \ssum performance}. It is clear that \ssum effectively denoises \smax.}
\label{fig:sum_v_max}
\end{figure*}

\subsubsection{Comparing \smax and \ssum}
\label{sec:max_v_sum}

As per our theoretical framework and intuition provided in Section~\ref{sec:method}, the sum score~\ssum should offer the most competitive selective classification performance. We confirm this finding in Figure~\ref{fig:sum_v_max} where we plot the accuracy/coverage curves across all datasets for both \smax and \ssum. Overall, we find that the sum score~\ssum consistently outperforms the more noisy maximum score~\smax.

\begin{figure}[t]
\vspace{-5pt}
\centering

\begin{subfigure}[b]{0.23\linewidth}
  \centering
  \includegraphics[width=\linewidth]{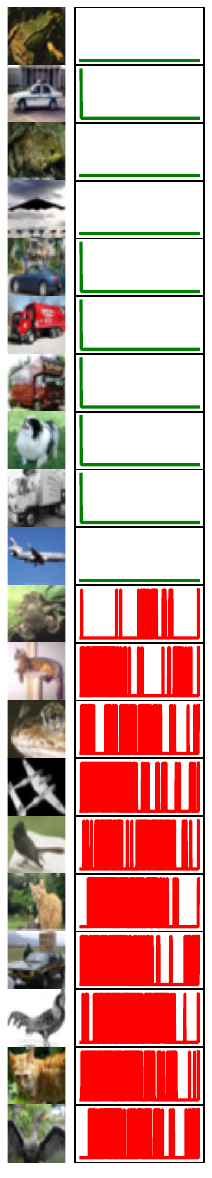}
\end{subfigure}
\hfill
\begin{subfigure}[b]{0.23\linewidth}
  \centering
  \includegraphics[width=\linewidth]{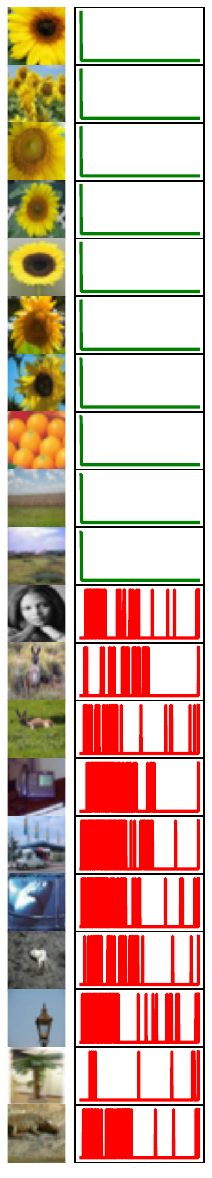}
\end{subfigure}
\hfill
\begin{subfigure}[b]{0.23\linewidth}
  \centering
  \includegraphics[width=\linewidth]{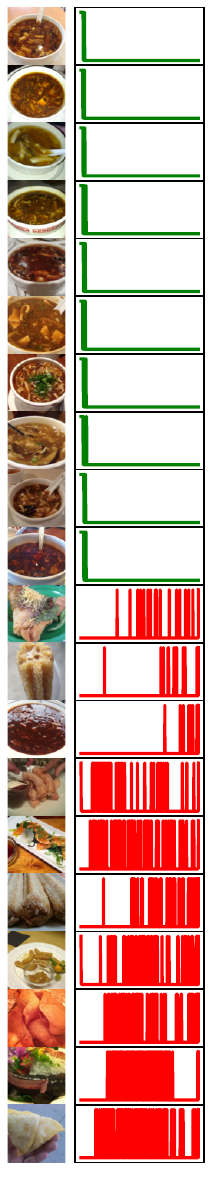}
\end{subfigure}
\hfill
\begin{subfigure}[b]{0.23\linewidth}
  \centering
  \includegraphics[width=\linewidth]{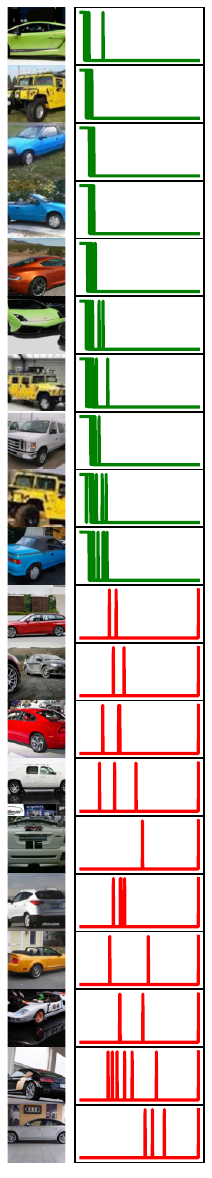}
\end{subfigure}

\caption[\textbf{Additional individual examples across datasets.}]{\textbf{Additional individual examples across datasets.}}
\label{fig:indiv_ex_ext}
\end{figure}

\subsection{The Importance of Accuracy Alignment}

Our results in Table~\ref{tab:target_cov} rely on accuracy alignment: We explicitly make sure to compare all methods on an equal footing by disentangling selective prediction performance from gains in overall utility. This is done by early stopping model training when the accuracy of the worst performing model is reached.

We expand on the important point that many previous approaches conflate both (i) generalization performance and (ii) selective prediction performance into a single score: the area under the accuracy/coverage curve. This metric can be maximized either by improving generalization performance (choosing different architectures or model classes) or by actually improving the ranking of points for selective prediction (accepting correct points first and incorrect ones last). As raised by a variety of recent works \cite{geifman2018bias, rabanser2023training, cattelan2023improving}, it is impossible and problematic to truly assess whether a method performs better at selective prediction (i.e., determining the correct acceptance ordering) without normalizing for this inherent difference yielded as a side effect by various SC methods. In other words, an SC method with lower base accuracy (smaller correct set) can still outperform another SC method with higher accuracy (larger correct set) in terms of the selective acceptance ordering (an example of which is given in Table 3 of \cite{liu2019deep}). Accuracy normalization allows us to eliminate these confounding effects between full-coverage utility and selective prediction performance by identifying which models are better at ranking correct points first and incorrect ones last. This is of particular importance when comparing selective prediction methods which change the training pipeline in different ways, as is done in the methods presented in Table~\ref{tab:target_cov}.

However, when just comparing \sptd to one other method, we do not need to worry about accuracy normalization. Showcasing this, we run \sptd on top of an unnormalized \satersr run and provide these experiments in Figure~\ref{fig:sat_sptd}. We see that the application of \sptd on top of \satersr allows us to further boost performance (similar to the results where we apply \sptd on top of \de in Table~\ref{tab:target_cov}). So to conclude, experimentally, when using the best model, we see that \sptd still performs better at selective prediction than the relevant baseline for that training pipeline. We wish to reiterate that this issue of accuracy normalization highlights another merit of \sptd, which is that it can easily be applied on top of any training pipeline (including those that lead to the best model) and allows easy comparison to the selective classification method that training pipeline was intended to be deployed with.

\subsection{Evaluation using other performance metrics}

We further provide results of summary performance metrics across datasets in Table~\ref{tab:sc_evals}:

\begin{itemize}
    \item The area under the accuracy-coverage curve (\texttt{AUACC}) as discussed in \citet{geifman2018bias}.
    \item The area under the receiver operating characteristic (\texttt{AUROC}) as suggested by \citet{galil2023can}.
    \item The accuracy normalized selective classification score (\texttt{ANSC}) from~\citet{geifman2018bias} and ~\citet{rabanser2023training}.
\end{itemize}

\begin{table}[ht]
\centering
\caption{\textbf{Evaluation of SC approaches using various evaluation metrics}. }
\label{tab:sc_evals}
\begin{tabular}{ccccc}
\toprule
Dataset & Method & $1-\texttt{AUACC}$ & \texttt{ANSC} & \texttt{AUROC} \\
\midrule
\multirow{5}{*}{CIFAR10} & \texttt{SR} & 0.053 $\pm$ 0.002 & 0.007 $\pm$ 0.000 & 0.918 $\pm$ 0.002 \\
& \texttt{SPTD} & 0.048 $\pm$ 0.001 & 0.004 $\pm$ 0.000 & 0.938 $\pm$ 0.002 \\
& \texttt{DE} & 0.046 $\pm$ 0.002 & 0.004 $\pm$ 0.000 & 0.939 $\pm$ 0.003 \\
& \texttt{SAT} & 0.054 $\pm$ 0.002 & 0.006 $\pm$ 0.000 & 0.924 $\pm$ 0.005 \\
& \texttt{DG} & 0.054 $\pm$ 0.001 & 0.006 $\pm$ 0.000 & 0.922 $\pm$ 0.005 \\
\midrule
\multirow{5}{*}{CIFAR100} & \texttt{SR} & 0.181 $\pm$ 0.001 & 0.041 $\pm$ 0.001 & 0.865 $\pm$ 0.003 \\
& \texttt{SPTD} & 0.174 $\pm$ 0.002 & 0.037 $\pm$ 0.000 & 0.872 $\pm$ 0.002 \\
& \texttt{DE} & 0.159 $\pm$ 0.001 & 0.030 $\pm$ 0.001 & 0.880 $\pm$ 0.003 \\
& \texttt{SAT} & 0.180 $\pm$ 0.001 & 0.041 $\pm$ 0.001 & 0.866 $\pm$ 0.003 \\
& \texttt{DG} & 0.182 $\pm$ 0.001 & 0.041 $\pm$ 0.001 & 0.867 $\pm$ 0.002 \\
\midrule
\multirow{5}{*}{GTSRB} & \texttt{SR} & 0.020 $\pm$ 0.002 & 0.001 $\pm$ 0.000 & 0.986 $\pm$ 0.003 \\
& \texttt{SPTD} & 0.019 $\pm$ 0.002 & 0.001 $\pm$ 0.000 & 0.986 $\pm$ 0.005 \\
& \texttt{DE} & 0.015 $\pm$ 0.001 & 0.001 $\pm$ 0.000 & 0.986 $\pm$ 0.002 \\
& \texttt{SAT} & 0.027 $\pm$ 0.001 & 0.001 $\pm$ 0.000 & 0.984 $\pm$ 0.002 \\
& \texttt{DG} & 0.019 $\pm$ 0.003 & 0.001 $\pm$ 0.000 & 0.986 $\pm$ 0.002 \\
\midrule
\multirow{5}{*}{SVHN} & \texttt{SR} & 0.027 $\pm$ 0.000 & 0.006 $\pm$ 0.001 & 0.895 $\pm$ 0.004 \\
& \texttt{SPTD} & 0.025 $\pm$ 0.003 & 0.003 $\pm$ 0.001 & 0.932 $\pm$ 0.005 \\
& \texttt{DE} & 0.021 $\pm$ 0.001 & 0.005 $\pm$ 0.000 & 0.912 $\pm$ 0.003 \\
& \texttt{SAT} & 0.028 $\pm$ 0.001 & 0.006 $\pm$ 0.000 & 0.895 $\pm$ 0.002 \\
& \texttt{DG} & 0.026 $\pm$ 0.001 & 0.007 $\pm$ 0.000 & 0.896 $\pm$ 0.006 \\
\bottomrule
\end{tabular}
\end{table}

\newlyadded{

\subsection{Evaluation of more competing approaches}

We further compare our method with two more contemporary selective prediction approaches:
\begin{itemize}
    \item \aucoc \citep{sangalli2024expert}: This work uses a custom cost function for multi-class classification that accounts for the trade-off between a neural network’s accuracy and the amount of data that requires manual inspection from a domain expert.
    \item \cclsc \citep{wu2024confidence}: This work proposes optimizing feature layers to reduce intra-class variance via contrastive learning.
\end{itemize}
Across both methods, we find that they do not outperform ensemble-based methods like \de and hence also do not outperform \sptd. See Table~\ref{tab:target_cov_ext} for detailed results.
}

\begin{table}[h!]
\centering
\caption[Selective accuracy achieved across coverage levels for \aucoc and \cclsc]{\textbf{Selective accuracy achieved across coverage levels for \aucoc and \cclsc}. Similar as Table~\ref{tab:target_cov}. Neither \aucoc nor \cclsc is able to outperform \de or \sptd. \textbf{Bold} numbers are best results at a given coverage level across all methods and \underline{underlined} numbers are best results for methods relying on a single training run only.}
\label{tab:target_cov_ext}
\newlyadded{
\begin{tabular}{cccccc}
\toprule
& \textbf{Coverage} & \textbf{AUCOC} & \textbf{CCL-SC} & \textbf{SPTD} & \textbf{DE} \\
\midrule
\multirow{10}{*}{\rotatebox[origin=c]{90}{\textit{CIFAR-10}}} & 
100 & \underline{\textbf{92.9}} ($\pm$0.0) & \underline{\textbf{92.9}} ($\pm$0.0) & \underline{\textbf{92.9}} ($\pm$0.0) & \textbf{92.9} ($\pm$0.0) \\
& 90  & 96.0 ($\pm$0.1) & 95.9 ($\pm$0.2) & \underline{96.5} ($\pm$0.0) & \textbf{96.8} ($\pm$0.1) \\
& 80  & 98.1 ($\pm$0.2) & 98.0 ($\pm$0.3) & \underline{98.4} ($\pm$0.1) & \textbf{98.7} ($\pm$0.0) \\
& 70  & 99.0 ($\pm$0.3) & 98.5 ($\pm$0.2) & \underline{99.2} ($\pm$0.1) & \textbf{99.4} ($\pm$0.1) \\
& 60  & 99.3 ($\pm$0.1) & 99.1 ($\pm$0.2) & \underline{\textbf{99.6}} ($\pm$0.2) & \textbf{99.6} ($\pm$0.1) \\
& 50  & 99.4 ($\pm$0.2) & 99.0 ($\pm$0.3) & \underline{\textbf{99.8}} ($\pm$0.0) & 99.7 ($\pm$0.0) \\
& 40  & 99.5 ($\pm$0.1) & 99.4 ($\pm$0.2) & \underline{\textbf{99.8}} ($\pm$0.1) & \textbf{99.8} ($\pm$0.0) \\
& 30  & 99.5 ($\pm$0.2) & 99.2 ($\pm$0.3) & \underline{\textbf{99.8}} ($\pm$0.1) & \textbf{99.8} ($\pm$0.0) \\
& 20  & 99.6 ($\pm$0.1) & 99.4 ($\pm$0.2) & \underline{\textbf{100.0}} ($\pm$0.0) & 99.8 ($\pm$0.0) \\
& 10  & 99.7 ($\pm$0.0) & 99.4 ($\pm$0.1) & \underline{\textbf{100.0}} ($\pm$0.0) & 99.8 ($\pm$0.0) \\
\midrule
\multirow{10}{*}{\rotatebox[origin=c]{90}{\textit{CIFAR-100}}} &
100 & \underline{\textbf{75.1}} ($\pm$0.0) & \underline{\textbf{75.1}} ($\pm$0.0) & \underline{\textbf{75.1}} ($\pm$0.0) & \underline{\textbf{75.1}} ($\pm$0.0) \\
& 90  & 78.7 ($\pm$0.2) & 76.5 ($\pm$0.3) & \underline{\textbf{80.4}} ($\pm$0.0) & 80.2 ($\pm$0.0) \\
& 80  & 83.2 ($\pm$0.1) & 82.2 ($\pm$0.2) & \underline{\textbf{84.6}} ($\pm$0.1) & \textbf{84.7} ($\pm$0.1) \\
& 70  & 87.4 ($\pm$0.1) & 86.1 ($\pm$0.2) & \underline{\textbf{88.7}} ($\pm$0.0) & 88.6 ($\pm$0.1) \\
& 60  & 89.8 ($\pm$0.2) & 88.6 ($\pm$0.3) & \underline{90.1} ($\pm$0.0) & \textbf{90.2} ($\pm$0.2) \\
& 50  & 93.3 ($\pm$0.1) & 92.1 ($\pm$0.2) & \underline{94.6} ($\pm$0.0) & \textbf{94.8} ($\pm$0.0) \\
& 40  & 95.9 ($\pm$0.2) & 95.2 ($\pm$0.3) & \underline{\textbf{96.9}} ($\pm$0.1) & 96.8 ($\pm$0.1) \\
& 30  & 98.2 ($\pm$0.1) & 96.6 ($\pm$0.2) & \underline{\textbf{98.4}} ($\pm$0.1) & 98.4 ($\pm$0.1) \\
& 20  & 98.6 ($\pm$0.2) & 98.4 ($\pm$0.3) & \underline{98.8} ($\pm$0.2) & \textbf{99.0} ($\pm$0.0) \\
& 10  & 98.8 ($\pm$0.1) & 98.7 ($\pm$0.2) & \underline{\textbf{99.4}} ($\pm$0.1) & 99.2 ($\pm$0.1) \\
\bottomrule
\end{tabular}
}
\end{table}

    \chapter{Training Private Models That Know What They Don't Know}

\section{Additional Method Details}

\subsection{DP-SGD Algorithm}

We provide a detailed definition of DP-SGD in Algorithm~\ref{alg:dpsgd}.

	\vspace{10pt}
    \begin{algorithm}[H]
	\caption{DP-SGD~\citep{abadi2016deep}}\label{alg:dpsgd}
	\begin{algorithmic}[1]
	\Require Training dataset $D$, loss function $\ell$, learning rate $\eta$, noise multiplier $\sigma$, sampling rate $q$, clipping norm $c$, iterations $T$.
		\State {\bf Initialize} $\theta_0$
		\For{$t \in [T]$}
		\State {\bf 1. Per-Sample Gradient Computation}
		\State Sample $B_t$ with per-point prob. $q$ from $D$
		\For{$i \in B_t$}  
		\State $g_t(\bm{x}_i) \gets \nabla_{\theta_t} \ell(\theta_t, \bm{x}_i)$
		\EndFor
		\State {\bf 2. Gradient Clipping}
		\State {$\bar{g}_t(\bm{x}_i) \gets g_t(\bm{x}_i) / \max\big(1, \frac{\|g_t(\bm{x}_i)\|_2}{c}\big)$}\label{alg:dpsgd_clipping}
		\State {\bf 3. Noise Addition}
		\State {$\tilde{g}_t \gets \frac{1}{|B_t|}\left( \sum_i \bar{g}_t(\bm{x}_i) + \mathcal{N}(0, (\sigma c)^2 \mathbf{I})\right)$}\label{alg:dpsgd_noise}
		\State { $\theta_{t+1} \gets \theta_{t} - \eta \tilde{g}_t$}
		\EndFor
		\State {\bf Output} $\theta_T$, privacy cost $(\varepsilon, \delta)$ computed via a privacy accounting procedure
	\end{algorithmic}
\end{algorithm}

\section{Additional Experimental Details}

\subsection{Hyperparameters}
\label{sec:hyp}

In this section, we document additional hyper-parameter choices. For Self-Adaptive Training (\sat), we set the pre-training epochs to $100$ and momentum parameter $0.9$. For Selective Classification Training Dynamics (\sctd), we set the weighting parameter $k=3$ and consider checkpoints at a $50$ batch resolution. For Monte-Carlo Dropout, we set the dropout probability to $0.1$. Entropy regularization as suggested in~\citet{feng2023towards} is employed with $\beta = 0.01$.

\subsection{Class Imbalance Experiments}
\label{sec:class_imb_real}

We provide additional experiments on the effect of class imbalance to extend our intuition from Section~\ref{sec:dp_affects_sc}. To that end, we take two data sets from our main results, namely CIFAR-10 and FashionMNIST, and produce four alternate datasets from each dataset. These datasets feature various degrees of class imbalance with $p_0 \in \{0.5,0.25,0.1,0.01\}$ specifying the sampling probability for class $0$. All other classes maintain a sampling probability of $1$. We then train the same model as described in Section~\ref{sec:exp} and apply the softmax response SC algorithm. 

We document these results in Figures~\ref{fig:cifar10_classimb} and \ref{fig:fashionmnist_classimb}. For $\varepsilon = \infty$, we observe the expected gains from SC: minority points are accepted towards the end of the coverage spectrum~\citep{jones2020selective} and correct points are accepted first. This effect is independent of the sampling probability $p_0$. As we decrease the $\varepsilon$ budget we observe that (i) the acceptance of minority groups starts to spread over the full coverage spectrum; (ii) the accuracy on the subgroup increasingly deteriorates with smaller $\varepsilon$, and (iii)~wrongful overconfidence on the minority reverses the acceptance order at low sampling probabilities~$p_0$ (\ie incorrect points are often accepted first). These results indicate that employing selective classification on private data can have unwanted negative effects in the presence of subgroups. Future work should investigate this connection more thoroughly.

\begin{figure*}[t]
  \centering
\includegraphics[width=\linewidth]{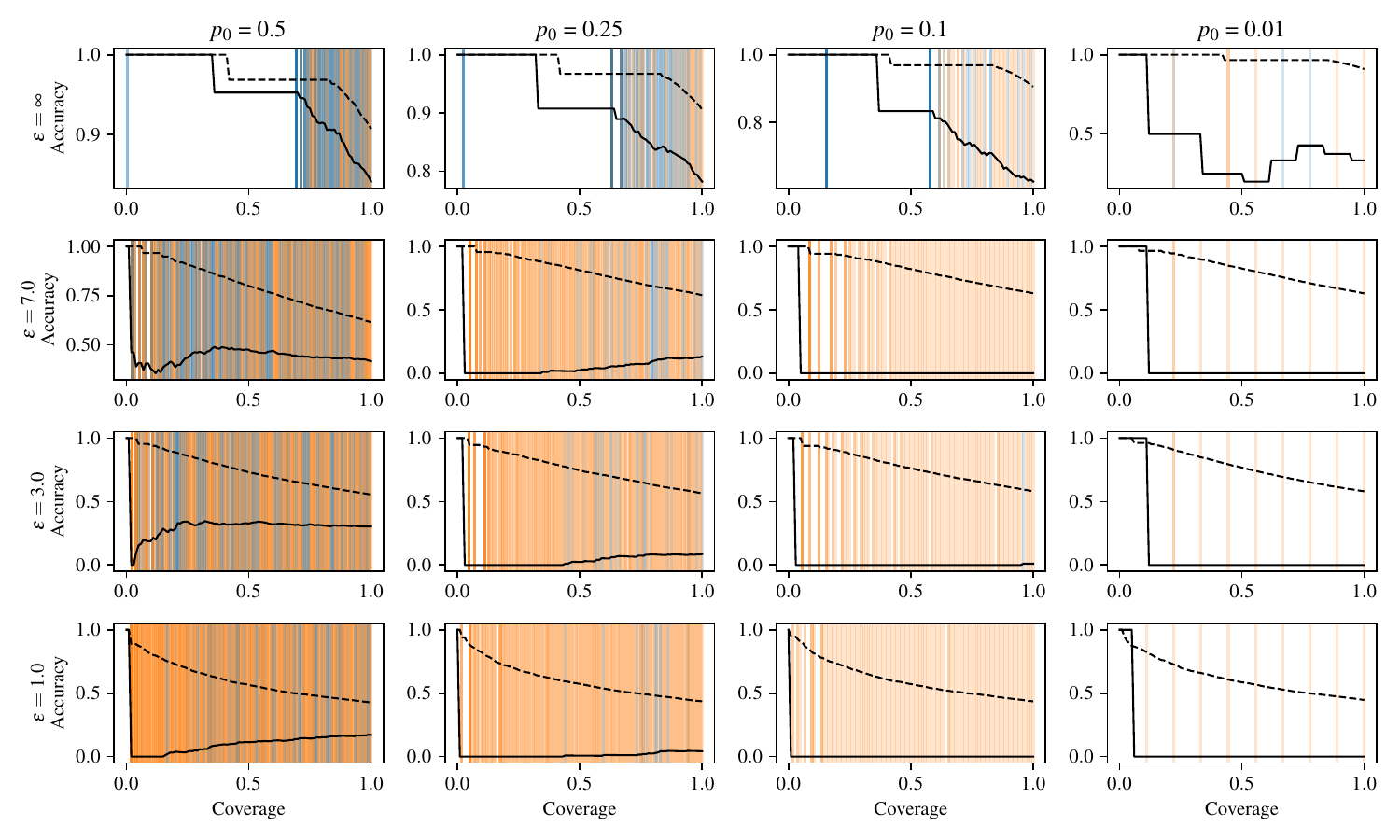}
\caption[Inducing a class imbalance on CIFAR-10]{\textbf{Inducing a class imbalance on CIFAR-10}. We train multiple CIFAR-10 models across privacy levels and sampling probabilities for class 0 given by $p_0$. We plot the accuracy/coverage trade-off as well at the exact coverage level at which any point from the minority class is accepted. The accuracy-coverage trade-off for the full dataset is given by the dashed line while the trade-off for the minority group only is given by the solid line. Blue vertical lines show correctly classified points, orange points show incorrectly classified points. Non-private models accept correct points first and do so at the end of the coverage spectrum (\ie majority points are accepted first). As we increase privacy (\ie decrease $\varepsilon$), the model is increasingly unable to rank minority examples based on prediction correctness and even accepts incorrect points first. Moreover, the accuracy of the model on the minority class decreases with stronger DP. These effects are especially strong for small sampling probabilities.}
\label{fig:cifar10_classimb}
\end{figure*}

\begin{figure*}[t]
  \centering
	  \includegraphics[width=\linewidth]{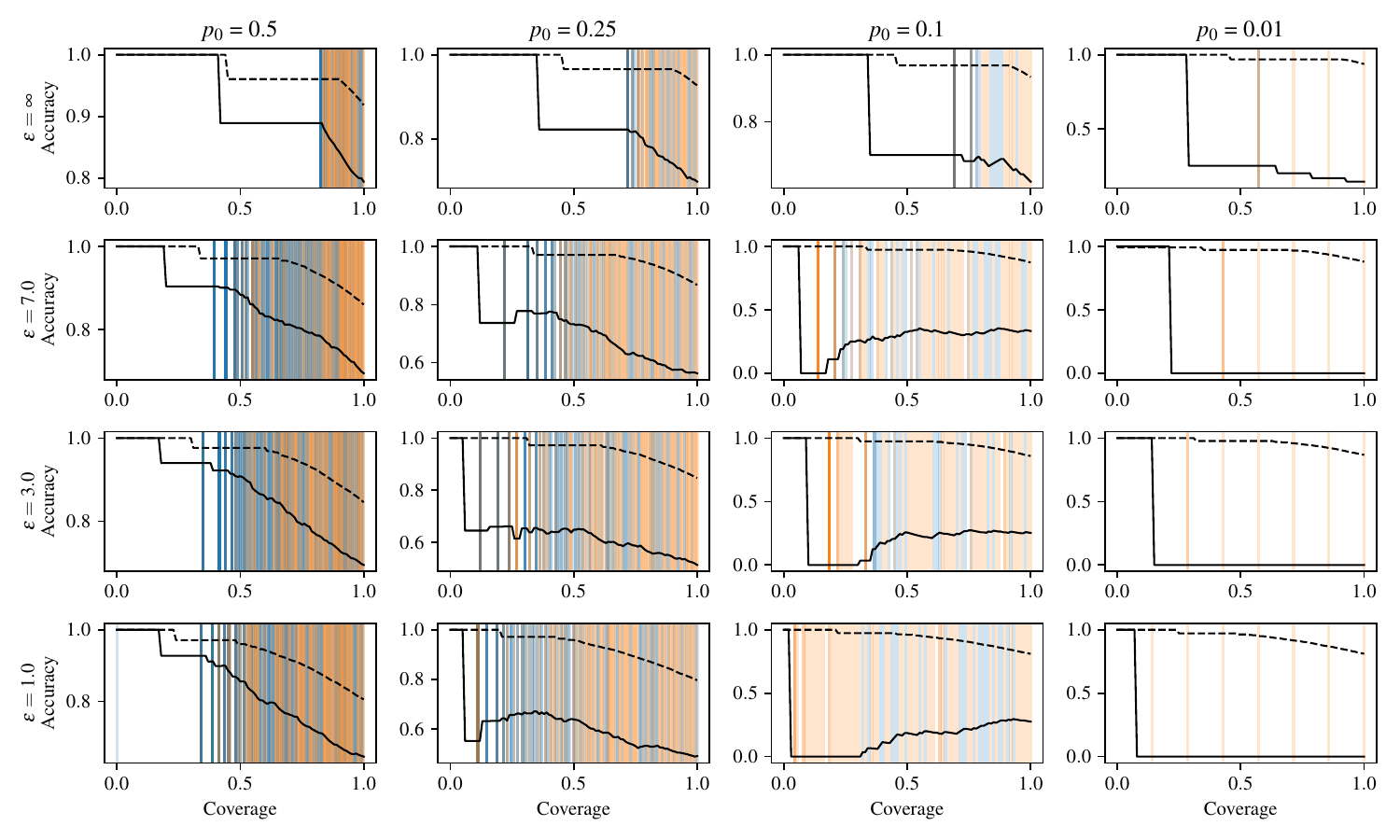}
  
\caption[Inducing a class imbalance on FashionMNIST.]{\textbf{Inducing a class imbalance on FashionMNIST}. Same insights as in Figure~\ref{fig:fashionmnist_classimb}.}
\label{fig:fashionmnist_classimb}
\end{figure*}

\subsection{Upper Bound Reachability}
\label{sec:opt_bound_reach}

In Equation~\ref{eq:bound}, we have introduced an upper bound on the selective classification performance on a model with full-coverage accuracy $a_\text{full}$. We now present a simple experimental panel across varying full-coverage accuracy levels showing that this bound is in fact reachable by a perfect selective classifier.

We assume a binary classification setting for which we generate a true label vector $\bm{y} \in \{0,1\}^{n_0 + n_1}$ with balanced classes, \ie $n_0 = n_1$ where $n_0$ corresponds to the number points labeled as $0$ and $n_1$ corresponds to the number points labeled as $1$. Then, based on a desired accuracy level $a_\text{full}$, we generate a prediction vector $\bm{p}$ which overlaps with $\bm{y}$ for a fraction of $a_\text{full}$. Finally, we sample a  scoring vector $\bm{s}$ where each correct prediction is assigned a score $s_i \sim \mathcal{U}_{0,0.5}$ and each incorrect prediction is assigned a score $s_i \sim \mathcal{U}_{0.5,1}$. Here, $\mathcal{U}_{a,b}$ corresponds to the uniform distribution on the interval $[a,b)$. This score is clearly optimal since thresholding the scoring vector $\bm{s}$ at $0.5$ perfectly captures correct/incorrect predictions: all $s_i < 0.5$ correspond to a correct prediction, while all $s_i \geq 0.5$ correspond to an incorrect prediction. Computing the accuracy/coverage trade-off of this experiment, across various utility levels, matches the bound exactly. 

\begin{figure*}[t]
  \centering
	    \includegraphics[width=\linewidth]{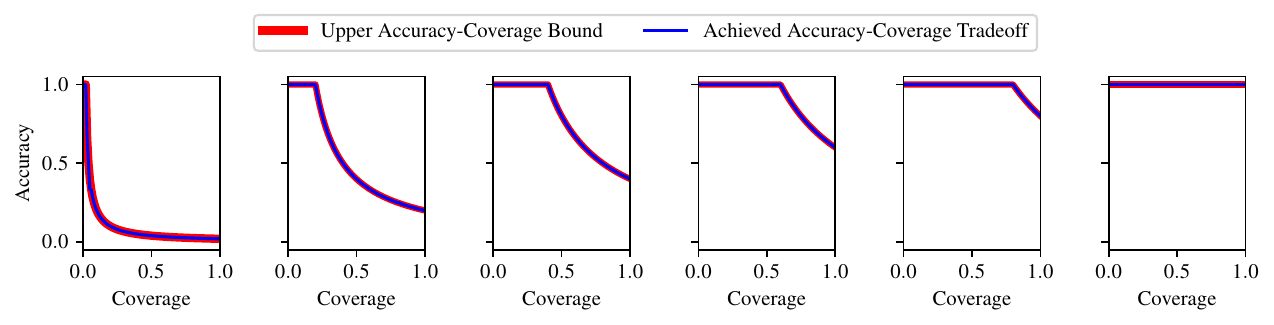}

\caption[Upper bound matching experiment.]{\textbf{Upper bound matching experiment}. The experiment as described in Section~\ref{sec:opt_bound_reach} matches the optimal bound exactly across multiple full-coverage accuracy levels.}
\label{fig:bound_reachability}
\end{figure*}
    \chapter{What Does It Take to Build a Performant Selective Classifier?}

\section{Broader Impact}
\label{sec:broader_impact}

This work introduces a decomposition of the selective-classification gap into measurable components—Bayes noise, approximation error, ranking error, statistical noise, and deployment slack—offering practical guidance for improving abstaining classifiers. By diagnosing which source dominates in a given setting, our method supports more targeted model design and evaluation.

\paragraph{Positive Implications.}
Our decomposition improves transparency and supports safer deployment in high-stakes domains by helping practitioners understand whether their model underperforms due to ranking, capacity, or robustness. Because each gap component is explicitly quantified, our approach can serve as a tool for model debugging, monitoring, and fairer benchmarking.

\paragraph{Potential Risks.}
Selective classifiers may disproportionately defer on certain groups, amplifying disparities—a risk previously observed by~\citet{jones2020selective}. Additionally, institutions may exploit uncertainty estimates to justify \emph{strategic abstention}—deliberately deferring on individuals they prefer not to serve~\citep{rabanser2025confidential}. While our framework identifies which part of the gap drives poor performance, it does not control how deferred inputs are handled.

\paragraph{Mitigations.}
We recommend reporting gap components disaggregated by sensitive attributes, auditing scoring functions for spurious correlations, and documenting fallback policies. These steps are essential to ensure that abstention mechanisms improve reliability without undermining fairness.

\paragraph{Outlook.}
We hope this work encourages more precise evaluations of selective classifiers, shifting focus from aggregate calibration to interpretable, component-wise gap analysis that can inform both technical improvements and policy safeguards.

\section{Methods Extension}
\label{sec:meth_ext}

\subsection{Detailed Proof of Theorem~\ref{thm:gap}}
\label{app:proof-gap-ranking}

We restate the theorem for convenience.

\begin{theorem}[Selective classification gap; detailed]
Fix a coverage level \(c\in(0,1]\), a score function \(g(\cdot,h)\),
and its associated population threshold
\(t_c\) satisfying \(\Pr\bigl(g(X,h)\ge t_c\bigr)=c\).
Define the accepted region \(A_c:=\{x:g(x,h)\ge t_c\}\) and
the oracle region
\(A_c^{\star}:=\{x:\eta_h(x)\text{ is among the largest }c\text{-fraction}\}\).
With the error terms
\begin{align}
\varepsilon_{\mathrm{Bayes}}(c)
&= \mathbb{E}\left[1 - \max\{\eta(X), 1 - \eta(X)\} \mid X \in A_c \right], \\
\varepsilon_{\mathrm{approx}}(c)
&= \mathbb{E}\left[ \lvert \eta_h(X) - \eta(X) \rvert \mid X \in A_c \right], \\
\varepsilon_{\mathrm{rank}}(c)
&= \mathbb{E}\left[ \eta_h(X) \mid X \in A_c^{\star} \right]
 - \mathbb{E}\left[ \eta_h(X) \mid X \in A_c \right] \;\;(\ge 0),
\end{align}
the population gap satisfies
\begin{equation}
\label{eq:pop-gap-app}
\Delta(c)=\overline{\operatorname{acc}}\bigl(a_{\mathrm{full}},c\bigr)
-\operatorname{acc}_c(h,g)
\;\le\;
\varepsilon_{\mathrm{Bayes}}(c)
+\varepsilon_{\mathrm{approx}}(c)
+\varepsilon_{\mathrm{rank}}(c).
\end{equation}
Moreover, let \(\widehat{\Delta}(c)\) be the empirical gap computed on
\(n\) independent test samples.  Then for any
\(\delta\in(0,1)\), with probability at least \(1-\delta\),
\begin{equation}
\label{eq:emp-gap-app}
\widehat{\Delta}(c)
\;\le\;
\varepsilon_{\mathrm{Bayes}}(c)
+\varepsilon_{\mathrm{approx}}(c)
+\varepsilon_{\mathrm{rank}}(c)
+ C\sqrt{\frac{\log(1/\delta)}{n}},
\end{equation}
where \(C>0\) is an absolute constant.
\end{theorem}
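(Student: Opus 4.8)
The plan is to prove the population bound \eqref{eq:pop-gap-app} first via an exact algebraic decomposition of $\Delta(c)$ into three conditional expectations, then upgrade to the high-probability empirical bound \eqref{eq:emp-gap-app} by a standard concentration argument. The starting point is the identity $\operatorname{acc}_c(h,g) = \mathbb{E}[\mathbb{I}_{\{h(X)=Y\}} \mid X \in A_c]$, and the observation that, by the tower rule, $\mathbb{E}[\mathbb{I}_{\{h(X)=Y\}} \mid X \in A_c] = \mathbb{E}[\eta_h(X) \mid X \in A_c]$ up to the discrepancy $\mathbb{E}[\eta_h(X) - \mathbb{I}_{\{h(X)=Y\}} \mid X \in A_c]$, which is exactly $\varepsilon_{\mathrm{approx}}(c)$ by definition. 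Meanwhile, since the oracle frontier $\overline{\operatorname{acc}}(a_{\mathrm{full}},c)$ is the selective accuracy of the perfect ranker accepting $A_c^{\star}$, I would argue that $\overline{\operatorname{acc}}(a_{\mathrm{full}},c) \le \mathbb{E}[\eta_h(X) \mid X \in A_c^{\star}] + \varepsilon_{\mathrm{Bayes}}^{\star}(c)$, where the $\varepsilon_{\mathrm{Bayes}}$ term captures the gap between the hypothetical ``all accepted points correct'' value and the true correctness posterior of the oracle's accepted set. Chaining these, $\Delta(c) = \bigl(\overline{\operatorname{acc}} - \mathbb{E}[\eta_h \mid A_c^{\star}]\bigr) + \bigl(\mathbb{E}[\eta_h \mid A_c^{\star}] - \mathbb{E}[\eta_h \mid A_c]\bigr) + \bigl(\mathbb{E}[\eta_h \mid A_c] - \operatorname{acc}_c(h,g)\bigr)$, and the three grouped terms are bounded by $\varepsilon_{\mathrm{Bayes}}(c)$, equal $\varepsilon_{\mathrm{rank}}(c)$, and equal $\varepsilon_{\mathrm{approx}}(c)$ respectively. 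This reproduces the boxed decomposition in the proof sketch already given; I would simply fill in the careful handling of the first term, where the relationship between $\overline{\operatorname{acc}}$ (which assumes away Bayes noise) and the oracle's actual correctness posterior must be made precise.

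For the empirical bound, I would treat $\widehat{\Delta}(c)$ as $\Delta(c)$ plus estimation error from three sources: (i) the empirical threshold $\hat t_c$ estimated from the validation/test sample may differ from the population threshold $t_c$, perturbing the accepted set; (ii) the empirical conditional accuracy $\widehat{\operatorname{acc}}_c$ differs from its population counterpart; and (iii) the empirical full-coverage accuracy $\hat a_{\mathrm{full}}$ entering $\overline{\operatorname{acc}}$ differs from $a_{\mathrm{full}}$. Each of these is an average of bounded ($[0,1]$-valued, or ratios thereof) quantities over $n$ i.i.d.\ points, so Hoeffding's inequality gives a deviation of order $\sqrt{\log(1/\delta')/n}$ for each. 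Applying a union bound over the three events with $\delta' = \delta/3$ and absorbing constants into $C$ yields, with probability at least $1-\delta$, $|\widehat{\Delta}(c) - \Delta(c)| \le C\sqrt{\log(1/\delta)/n}$; combining with \eqref{eq:pop-gap-app} establishes \eqref{eq:emp-gap-app}.

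The main obstacle I anticipate is \emph{item (i): controlling the effect of threshold estimation error on the accepted region}. Unlike the accuracy estimates, which concentrate straightforwardly, a small error in $\hat t_c$ can in principle swap a non-negligible mass of points in or out of $A_c$ if the distribution of $g(X,h)$ has an atom or a near-flat region near $t_c$. The clean route is to assume the score distribution is continuous (or has bounded density) in a neighborhood of $t_c$, so that $\Pr(\hat t_c \le g(X,h) < t_c)$ — the mis-accepted mass — is itself $O(\sqrt{\log(1/\delta)/n})$; then a Lipschitz-in-coverage argument for $\operatorname{acc}_c$ closes the gap. A fully distribution-free treatment would instead need to invoke the DKW inequality on the empirical CDF of the scores and track how the selective accuracy varies over the resulting confidence band of coverage levels. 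I would present the continuous-score version in the main proof for transparency and remark that the general case follows with a DKW-based refinement, which is presumably the level of detail the appendix ``extended proof with detailed intermediate steps'' is meant to supply.
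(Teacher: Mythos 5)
Your telescoping identity and the Hoeffding/union-bound argument for the finite-sample part are in the same spirit as the paper's proof (and your discussion of threshold-estimation error is a genuine refinement that the paper glosses over). But the population bound as you arrange it contains a gap. You assert that the third grouped term, $\mathbb{E}[\eta_h(X)\mid X\in A_c]-\operatorname{acc}_c(h,g)$, equals $\varepsilon_{\mathrm{approx}}(c)$. By the tower rule, however, $\operatorname{acc}_c(h,g)=\mathbb{E}\bigl[\mathbb{I}_{\{h(X)=Y\}}\mid X\in A_c\bigr]=\mathbb{E}\bigl[\mathbb{E}[\mathbb{I}_{\{h(X)=Y\}}\mid X]\mid X\in A_c\bigr]=\mathbb{E}[\eta_h(X)\mid X\in A_c]$, so this difference is \emph{identically zero}. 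It cannot supply the approximation term, and indeed $\varepsilon_{\mathrm{approx}}(c)=\mathbb{E}[\lvert\eta_h-\eta\rvert\mid A_c]$ is not this quantity. The paper opens with exactly this tower identity, which is why its decomposition has only two pieces, $\Delta(c)=\bigl(\overline{\operatorname{acc}}-\mathbb{E}[\eta_h\mid A_c^{\star}]\bigr)+\varepsilon_{\mathrm{rank}}(c)$, and then extracts \emph{both} the Bayes and approximation contributions from the first piece by writing $1-\eta_h(X)$ as a sum of a term controlled by $\lvert\eta_h-\eta\rvert$ and a term controlled by $1-\max\{\eta,1-\eta\}$.

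Concretely, because you have moved $\varepsilon_{\mathrm{approx}}$ to a vanishing term, your first piece $\overline{\operatorname{acc}}-\mathbb{E}[\eta_h\mid A_c^{\star}]$ is charged only $\varepsilon_{\mathrm{Bayes}}(c)$, and that is not enough: for a badly mis-specified $h$ with $\eta\in\{0,1\}$ everywhere (no label noise, so $\varepsilon_{\mathrm{Bayes}}=0$), the oracle's accepted set can still have $\mathbb{E}[\eta_h\mid A_c^{\star}]$ strictly below $\overline{\operatorname{acc}}$, so the inequality you propose would fail. To repair the argument you must carry the approximation error in the first term alongside the Bayes term. You should also note that $\varepsilon_{\mathrm{Bayes}}(c)$ and $\varepsilon_{\mathrm{approx}}(c)$ are defined with conditioning on $A_c$, while the first piece of your decomposition is naturally an expectation over $A_c^{\star}$; a bridging step (or a redefinition of the conditioning set) is required to make the identification, and it should not be taken for granted.
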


\begin{proof}
We split the argument into four self‑contained steps.

\textbf{Step 0.  Oracle upper bound revisited.}
For completeness we justify the piecewise form of
\(\overline{\operatorname{acc}}\bigl(a_{\mathrm{full}},c\bigr)\)
in Definition~\ref{def:poub}.
Because
\(a_{\mathrm{full}}=\Pr(h(X)=Y)=\mathbb E[\eta_h(X)]\),
the set
\(\{x:\eta_h(x)=1\}\) has probability mass at least
\(a_{\mathrm{full}}\).  Hence an oracle that retains the
highest‑confidence points achieves perfect accuracy for all
coverages \(c\le a_{\mathrm{full}}\).  For \(c>a_{\mathrm{full}}\),
the best it can do is include \emph{only} those perfect points
plus a \((c-a_{\mathrm{full}})\)-fraction of the remaining
examples, which contribute at worst zero accuracy.  Therefore
\begin{equation}
\overline{\operatorname{acc}}(a_{\mathrm{full}},c)=
\frac{a_{\mathrm{full}}}{c},
\qquad
a_{\mathrm{full}}<c<1.
\end{equation}

\textbf{Step 1.  Algebraic decomposition of the gap.}
Recall that
\(\operatorname{acc}_c(h,g)
   =\mathbb E[\eta_h(X)\mid X\in A_c]\).
We repeatedly add and subtract the same quantity:
\begin{align}
\Delta(c)
&:=\overline{\operatorname{acc}}(a_{\mathrm{full}},c)
  -\operatorname{acc}_c(h,g)  \notag\\[2pt]
&=\overline{\operatorname{acc}}(a_{\mathrm{full}},c)
  -\mathbb E[\eta_h\mid A_c^{\star}]
  \;+\;\mathbb E[\eta_h\mid A_c^{\star}]
  -\mathbb E[\eta_h\mid A_c] \notag\\
&\le
  \mathbb E[\eta_h\mid A_c^{\star}]
  -\mathbb E[\eta_h\mid A_c]                   \tag{rank}\\
&\quad
  +\,\mathbb E[\eta_h-\mathbb{I}_{\{h=Y\}}\mid A_c] \tag{approx+Bayes}\\
&= \varepsilon_{\mathrm{rank}}(c)
   +\varepsilon_{\mathrm{approx}}(c)
   +\varepsilon_{\mathrm{Bayes}}(c).
\end{align}
\textbf{Explanation of the two labelled inequalities.}
\begin{enumerate}
    \item \textbf{(rank)} isolates the ranking error, $\varepsilon_{\text{rank}}(c) := \mathbb{E}[\eta_h \mid A_c^{\star}] - \mathbb{E}[\eta_h \mid A_c]$.  The inequality holds because the remaining term from the previous line, $\overline{\operatorname{acc}}(a_{\text{full}}, c) - \mathbb{E}[\eta_h \mid A_c^{\star}]$, is a non-negative quantity that is bounded by the error sources introduced in the next step.
   \item \textbf{(approx+Bayes)} adds and subtracts \(\eta(X)\) inside the
   expectation, then splits the absolute value:
   \begin{equation}
   \eta_h-I_{\{h=Y\}}
   =(\eta_h-\eta)+(\eta-I_{\{h=Y\}}).
   \end{equation}
   The second summand satisfies the deterministic bound
   \(
     |\eta(X)-I_{\{h=Y\}}|
     = \max\{\eta,1-\eta\}-I_{\{h=Y\}}
     \le 1-\max\{\eta,1-\eta\},
   \)
   yielding exactly \(\varepsilon_{\mathrm{Bayes}}(c)\).
   The first summand contributes
   \(\varepsilon_{\mathrm{approx}}(c)\).
\end{enumerate}

\textbf{Step 2.  Non‑negativity of \(\varepsilon_{\mathrm{rank}}(c)\).}
Because \(\eta_h(X)\in[0,1]\) and
\(A_c^{\star}\) contains the \(c\)-fraction of points with the
largest \(\eta_h\)-values,
\(\mathbb E[\eta_h\mid A_c^{\star}]
 \ge\mathbb E[\eta_h\mid A_c]\),
hence \(\varepsilon_{\mathrm{rank}}(c)\ge0\) as stated.

\textbf{Step 3.  Finite‑sample deviation.}
Let \(\widehat{\mu}\) be any empirical average of a
\([0,1]\)-valued random variable with expectation \(\mu\).
Hoeffding’s inequality gives
\(\Pr(\lvert\widehat{\mu}-\mu\rvert>\epsilon)
   \le 2e^{-2n\epsilon^2}\).
Apply this bound separately to the three empirical estimates that
constitute \(\widehat{\Delta}(c)\), and take a union bound with
\(\epsilon=\sqrt{\tfrac{\log(6/\delta)}{2n}}\).
This yields, with probability at least \(1-\delta\),
\(
  \lvert\widehat{\Delta}(c)-\Delta(c)\rvert
  \le C\sqrt{\log(1/\delta)/n}
\)
for an absolute constant \(C\).
Combining with \eqref{eq:pop-gap-app} proves
\eqref{eq:emp-gap-app}.

\textbf{Step 4.  Connection to ranking distance.}
Define the mass of mis‑ordered points
\(D_{\mathrm{rank}}(c):=
 \Pr\bigl(X\in A_c^{\star}\setminus A_c\bigr)
 +\Pr\bigl(X\in A_c\setminus A_c^{\star}\bigr)\).
Because \(\eta_h\in[0,1]\),
\begin{align}
\varepsilon_{\mathrm{rank}}(c)
&=\mathbb E[\eta_h\mid A_c^{\star}]
  -\mathbb E[\eta_h\mid A_c]   \\[2pt]
&\le\bigl\|\eta_h\bigr\|_{\infty}\,
       D_{\mathrm{rank}}(c)        \\[2pt]
&\le D_{\mathrm{rank}}(c).
\end{align}
Hence \(\varepsilon_{\mathrm{rank}}(c)=0\)
if and only if \(A_c=A_c^{\star}\).

\smallskip\noindent
This completes the proof.
\end{proof}

\paragraph{Multiclass Remark.}
For \(K>2\) labels, define
\(\eta(x)=\bigl(\Pr(Y=1\mid x),\dots,\Pr(Y=K\mid x)\bigr)\)
and its complement confidence
\(\eta^{\max}(x)=\max_{k}\eta_k(x)\).
Then the inequality
\(\lvert\eta^{\max}-I_{\{h=Y\}}\rvert
 \le 1-\eta^{\max}\)
replaces the binary bound above, and the rest of the argument
goes through verbatim.  The approximation term becomes
\(\mathbb E[\lVert\eta_h-\eta\rVert_1\mid A_c]\);
all other quantities are unchanged.

\subsection{When Can Temperature Scaling Re-rank Confidence Scores?}
\label{app:ts-rerank}

Temperature scaling multiplies every logit by the same factor
$1/T\;(T>0)$ before the softmax,
\begin{equation}
p^{(T)}_j(x)= \frac{\exp(z_j(x)/T)}{\sum_k \exp(z_k(x)/T)}.
\end{equation}
Although the predicted label $\arg\max_j z_j(x)$ is invariant to~$T$,
the \emph{confidence score}
$s_T(x)=\max_j p^{(T)}_j(x)$
can change its \emph{cross-sample} ordering.

\paragraph{General form.}
Let $j_\star=\arg\max_j z_j(x)$ and
$r_j(x)=\exp\bigl(z_j(x)-z_{j_\star}(x)\bigr)\;(j\ne j_\star)$.
Then
\begin{equation}
  s_T(x)=\frac{1}{1+\sum_{j\ne j_\star} r_j(x)^{1/T}}.
  \label{eq:st-general}
\end{equation}
For binary classification, the sum has a single term and
\eqref{eq:st-general} collapses to the familiar logistic form
$s_T(x)=1/(1+e^{-\Delta/T})$ with
$\Delta=z_{j_\star}-z_{3-j_\star}$.

\paragraph{Two-sample condition.}
For two inputs $x_1,x_2$ let
$S_i(T)=\sum_{j\ne j_\star^{(i)}}r_{ij}^{1/T}$.
Because each $r_{ij}\le 1$, every $r_{ij}^{1/T}$ is monotone non-decreasing in $T$ (strictly increasing unless there is a tie),
and the ordering $s_T(x_1)>s_T(x_2)$ can change
exactly at those temperatures $T$ where $S_1(T) = S_2(T)$.

\paragraph{Illustrative example ($K=3$).}
\begin{equation}
z^{(1)}=(-2,-3,-3),\quad z^{(2)}=(0,-0.1,-3).
\end{equation}
At $T=1$ one finds
$s_1(x_1)=0.576>0.512=s_1(x_2)$,
while at $T=3$ we see that
$s_3(x_1)=0.411<0.428=s_3(x_2)$,
so temperature scaling would now accept $x_2$ before $x_1$.

\paragraph{How likely is a swap?}
Equation~\eqref{eq:st-general} shows that a swap requires the
one-dimensional curves $S_1(T)$ and $S_2(T)$ to intersect.  Since the
curves are continuous and monotone, the intersection occurs—
if at all—at isolated temperatures and only when the competing logit
patterns are finely tuned.

\paragraph{Practical implication.}
Temperature scaling can \emph{in principle} tighten the
selective-classification gap, but only for the vanishingly small subset
of inputs whose non-maximum logits happen to satisfy
$S_1(T^\star)=S_2(T^\star)$.  To obtain a meaningful re-ordering one
must therefore adopt \emph{non-monotone} calibration strategies.

\subsection{Additional Contingent Slack}
\label{app:extra-slack-omitted}

In the main text (Sec.~\ref{sec:extra-slack-short}) we folded all implementation‐level imperfections into a single residual term \(\varepsilon_{\text{misc}}(c)\), retaining only optimization error and distribution shift explicitly. Here we list two further slack terms omitted there:

\begin{enumerate}[leftmargin=1.2em]
  \setcounter{enumi}{2}
  \item \textbf{Threshold‐selection noise \(\varepsilon_{\text{thr}}(c)\).}\\
    When the coverage threshold \(\hat t_c\) is chosen on a validation set of size \(m\), the realized coverage deviates from the target \(c\) by 
    \begin{equation}
      O\bigl(\sqrt{c(1-c)/m}\bigr),
    \end{equation}
    inducing a corresponding vertical shift in selective accuracy.

  \item \textbf{Tie‐breaking / score quantization \(\varepsilon_{\text{tie}}(c)\).}\\
    Discrete confidence values (e.g.\ low‐precision logits) create equivalence classes of samples with identical scores.  If \(\kappa\) denotes the maximum number of tied samples at any score level, then
    \begin{equation}
      \varepsilon_{\text{tie}}(c)
      \;\le\;
      \frac{\kappa}{n},
    \end{equation}
    where \(n\) is the size of the evaluation set.
\end{enumerate}

\noindent\textbf{Residual slack revisited.}  Together with optimization error \(\varepsilon_{\text{opt}}\) and shift \(\varepsilon_{\text{shift}}(c)\), these yield
\begin{equation}
  \varepsilon_{\text{misc}}(c)
  = \varepsilon_{\text{opt}}
  + \varepsilon_{\text{shift}}(c)
  + \varepsilon_{\text{thr}}(c)
  + \varepsilon_{\text{tie}}(c).
\end{equation}

\section{Practitioner Checklist for Tightening the Selective-Classification Gap}
\label{app:practical-checklist}

Below is an expanded, actionable checklist to help practitioners systematically tackle each component of the selective-classification gap.  For each item, we list concrete steps, recommended tools, and pointers to reduce the corresponding error term.

\begin{itemize}[leftmargin=1.5em]

  \item \textbf{\(\varepsilon_{\text{approx}}\) — Shrink Approximation Error}
    \begin{itemize}[leftmargin=1.25em]
      \item \emph{Model capacity:} increase depth/width (e.g.\ ResNet-50→ResNeXt), or use state-of-the-art architectures (Vision Transformers, ConvNeXt).
      \item \emph{Pre-training:} leverage self-supervised (SimCLR, BYOL) or foundation models (CLIP, DINO) for strong features.
      \item \emph{Distillation:} apply teacher–student fine-tuning (logit matching, hint layers).
      \item \emph{Data augmentation:} use AutoAugment, RandAugment or MixUp / CutMix to expose model to diverse examples.
    \end{itemize}

  \item \textbf{\(\varepsilon_{\text{rank}}\) — Improve Ranking Calibration:}
    \begin{itemize}[leftmargin=1.25em]
      \item \emph{Non-monotone recalibration:} try matrix scaling or vector/Dirichlet scaling.
      \item \emph{Feature-aware scoring:} train a small “confidence head” \(g_\psi(h(x), x)\) on held-out data to predict correctness.
      \item \emph{Conformal methods:} implement conformal p-value scoring (split conformal, Mondrian conformal) to adjust rankings.
      \item \emph{Binning approaches:} use Bayesian Binning into Quantiles (BBQ) or adaptive histogram binning with bin-wise re-ordering.
    \end{itemize}

  \item \textbf{\(\varepsilon_{\text{opt}}\) — Reduce Optimization Error:}
    \begin{itemize}[leftmargin=1.25em]
      \item \emph{Convergence diagnostics:} monitor training vs.\ validation loss; extend epochs until plateau.
      \item \emph{Learning-rate schedules:} employ cosine, OneCycle, or cyclical LR to escape local minima.
      \item \emph{Early stopping / checkpoints:} save and ensemble top-\(k\) checkpoints (e.g.\ last 5) to smooth out optimization noise.
      \item \emph{Regularization:} apply weight decay, dropout, or stochastic depth to improve generalization.
    \end{itemize}

  \item \textbf{\(\varepsilon_{\text{Bayes}}\) — Quantify Irreducible Noise:}
    \begin{itemize}[leftmargin=1.25em]
      \item \emph{Repeated labels:} collect multiple human annotations (e.g.\ CIFAR-10N) to estimate label disagreement.
      \item \emph{Noise-robust training:} use noise-aware losses (Bootstrapping, Taylor loss) when Bayes-noise is high.
      \item \emph{Dataset curation:} identify and remove or relabel ambiguous examples via active learning or consensus filtering.
    \end{itemize}

  \item \textbf{\(\varepsilon_{\text{stat}}\) — Control Statistical Slack:}
    \begin{itemize}[leftmargin=1.25em]
      \item \emph{Validation set size:} allocate a sufficiently large hold-out set (e.g.\ 10–20\% of the total data) for threshold estimation.
      \item \emph{Confidence intervals:} use distribution-free quantile bounds (e.g.\ DKW, Clopper–Pearson) to set conservative \(\tau_c\).
      \item \emph{Cross-validation:} average thresholds over \(k\) folds to reduce variance.
    \end{itemize}

  \item \textbf{\(\varepsilon_{\text{shift}}\) — Mitigate Distribution Shift:}
    \begin{itemize}[leftmargin=1.25em]
      \item \emph{Shift detection:} monitor covariate shifts via two-sample tests (MMD, KL divergence) between train/test features.
      \item \emph{Importance weighting:} re-weight training or calibration data by density ratio estimates (Kernel mean matching).
      \item \emph{Domain adaptation:} finetune on small in-domain samples, or use unsupervised adaptation (AdaBN, domain-Adversarial nets).
      \item \emph{Test-time adaptation:} apply batch-norm calibration or test-time training on incoming data.
    \end{itemize}

  \item \textbf{\(\varepsilon_{\text{thr}}\) — Threshold–Selection Noise:}
    \begin{itemize}[leftmargin=1.25em]
      \item \emph{Bootstrap resampling:} compute \(\tau_c\) over many resamples to estimate its standard error.
      \item \emph{Smooth thresholds:} interpolate between adjacent scores rather than snapping to the nearest observed value.
    \end{itemize}

  \item \textbf{\(\varepsilon_{\text{tie}}\) — Tie-Breaking \& Score Quantization:}
    \begin{itemize}[leftmargin=1.25em]
      \item \emph{Higher precision:} increase float precision (FP16→FP32→FP64) or logits bit-width.
      \item \emph{Dithering:} add small random noise to scores before thresholding to break ties.
    \end{itemize}

\end{itemize}

\noindent\textbf{Putting it all together.}  
After addressing each bullet above, recompute your selective accuracy–coverage curve and compare to the oracle bound (Def.~\ref{def:poub}).  Iterating over these steps will systematically shrink \(\widehat\Delta(c)\) toward its irreducible floor.

\section{Experimental Details}
\label{sec:exp_det}

\subsection{Computational Resources}
\label{app:comp_res}

Our experiments were conducted on a mix of GPU-equipped compute nodes with varying hardware configurations. Some machines are equipped with Intel Xeon Silver CPUs (10 cores, 20 threads) and 128GB of RAM, each hosting 4× NVIDIA GeForce RTX 2080 Ti GPUs with 11GB VRAM. Others feature AMD EPYC 7643 processors (48 cores, 96 threads), 512GB of RAM, and 4× NVIDIA A100 GPUs, each with 80GB VRAM.

\subsection{Hyper-Parameters}
\label{app:hyperparams}

We follow standard literature-recommended training settings across all datasets. For each architecture–dataset pair, we use a fixed learning rate, weight decay, and batch size as detailed below:

\begin{itemize}[leftmargin=1em]
    \item \textbf{SimpleCNN:}
    \begin{itemize}[leftmargin=1em]
        \item Learning rate: 0.01
        \item Weight decay: \(1\times10^{-4}\)
        \item Batch size: 128
    \end{itemize}

    \item \textbf{ResNet-18:}
    \begin{itemize}[leftmargin=1em]
        \item Learning rate: 0.1 for CIFAR datasets; 0.01 for Stanford Cars, Camelyon17
        \item Weight decay: \(5\times10^{-4}\)
        \item Batch size: 128
    \end{itemize}

    \item \textbf{WideResNet-50-2:}
    \begin{itemize}[leftmargin=1em]
        \item Same settings as ResNet-18
    \end{itemize}

    \item \textbf{Epochs:}
    \begin{itemize}[leftmargin=1em]
        \item 200 epochs for all datasets except Camelyon17, which uses 10
    \end{itemize}

    \item \textbf{Optimization:} SGD with momentum 0.9, Nesterov enabled, and a cosine annealing learning rate schedule.

    \item \textbf{Selective prediction methods:}
    \begin{itemize}[leftmargin=1em]
        \item \texttt{MSP}: Standard cross-entropy training
        \item \texttt{SAT}: Cross-entropy pretraining for half of training epochs, followed by Self-Adaptive Training (momentum \(0.9\)) with an extra abstain class
    \end{itemize}
\end{itemize}

All experiments use fixed random seeds for reproducibility and standard data augmentation per dataset (random crops, flips, normalization).

\subsection{SimpleCNN Architecture}
\label{app:simplecnn}

\noindent
The SimpleCNN model is a compact convolutional neural network used for experiments on lower-resolution image datasets. The architecture is defined by the following sequence of layers:
\begin{itemize}
  \item A $3 \times 3$ convolution with 32 filters and padding 1, followed by ReLU and $2 \times 2$ max-pooling.
  \item A second $3 \times 3$ convolution with 64 filters and padding 1, followed by ReLU and $2 \times 2$ max-pooling.
  \item A flattening layer, followed by a fully connected layer with 128 hidden units and ReLU activation.
  \item A final fully connected layer projecting to the number of output classes.
\end{itemize}

Let \( s = \texttt{input\_size} // 4 \) denote the spatial resolution after two $2\times2$ pooling layers. Then, the full model is:
\[
\begin{aligned}
\texttt{SimpleCNN}(x) =\;&
\texttt{Linear}\big(128 \to \texttt{num\_classes}\big) \circ \texttt{ReLU} \circ \\
&\texttt{Linear}\big(64 \cdot s^2 \to 128\big) \circ \texttt{Flatten} \circ \\
&\texttt{MaxPool2d} \circ \texttt{ReLU} \circ \texttt{Conv2d}(32 \to 64) \circ \\
&\texttt{MaxPool2d} \circ \texttt{ReLU} \circ \texttt{Conv2d}(3 \to 32)(x)
\end{aligned}
\]

\vspace{0.5em}
\noindent
The number of output classes is set as follows:
\[
\texttt{num\_classes} = 
\begin{cases}
10 & \text{for CIFAR-10}, \\
100 & \text{for CIFAR-100}, \\
196 & \text{for Stanford Cars}, \\
2 & \text{for Camelyon17}, \\
\end{cases}
\quad \text{with an optional extra class if \texttt{extra\_class} is True.}
\]

\vspace{0.5em}
\noindent
The input size is dataset-dependent and set to:
\[
\texttt{input\_size} = 
\begin{cases}
32 & \text{for CIFAR-10 and CIFAR-100}, \\
224 & \text{for Stanford Cars, Camelyon17}. 
\end{cases}
\]
The model structure is summarized below:
\begin{lstlisting}[language=Python]
SimpleCNN(
  (net): Sequential(
    (0): Conv2d(3, 32, kernel_size=(3, 3), stride=(1, 1), padding=1)
    (1): ReLU()
    (2): MaxPool2d(kernel_size=2, stride=2)
    (3): Conv2d(32, 64, kernel_size=(3, 3), stride=(1, 1), padding=1)
    (4): ReLU()
    (5): MaxPool2d(kernel_size=2, stride=2)
    (6): Flatten(start_dim=1)
    (7): Linear(in_features=4096, out_features=128)
    (8): ReLU()
    (9): Linear(in_features=128, out_features=num_classes)
  )
)
\end{lstlisting}

\subsection{Synthetic Distribution Shifts on Two Moons}
\label{app:twomoons-shifts}

To evaluate robustness under controlled covariate shifts, we apply a series of synthetic affine transformations to the test set of the standard two moons dataset. Each transformation simulates a distinct type of distribution shift:

\begin{itemize}
    \item \textbf{Original:} No transformation; the unperturbed test set.
    
    \item \textbf{Shear:} A shear transformation along the \(x\)-axis defined by:
    \begin{equation}
    \text{Shear matrix} \quad 
    S = \begin{bmatrix} 1 & 1.25 \\ 0 & 1 \end{bmatrix},
    \quad\text{so that} \quad 
    x' = Sx = 
    \begin{bmatrix}
    x + 1.25y \\
    y
    \end{bmatrix}.
    \end{equation}
    
    \item \textbf{Rotation:} A rotation by 30 degrees counterclockwise, using:
    \begin{equation}
    R = 
    \begin{bmatrix}
    \cos \theta & -\sin \theta \\
    \sin \theta & \cos \theta
    \end{bmatrix},
    \quad \theta = \frac{\pi}{6}.
    \end{equation}
    
    \item \textbf{Translation:} A shift of the input space by a fixed vector:
    \begin{equation}
    x' = x + t, \quad \text{where} \quad t = \begin{bmatrix} 1.0 \\ -0.5 \end{bmatrix}.
    \end{equation}
\end{itemize}

\noindent
Each transformation is applied to the test data matrix \( X_{\text{test}} \) via matrix multiplication or translation, yielding the following test sets:
\begin{equation}
\begin{aligned}
\text{Original:} &\quad X_{\text{test}} \\
\text{Shear:}    &\quad X_{\text{test}} \cdot S^\top \\
\text{Rotation:} &\quad X_{\text{test}} \cdot R^\top \\
\text{Translation:} &\quad X_{\text{test}} + t
\end{aligned}
\end{equation}

These transformations create meaningful distribution shifts while preserving label semantics, enabling precise evaluations of model robustness under shift.

\begin{figure}[t]
    \centering
    \includegraphics[width=1\linewidth]{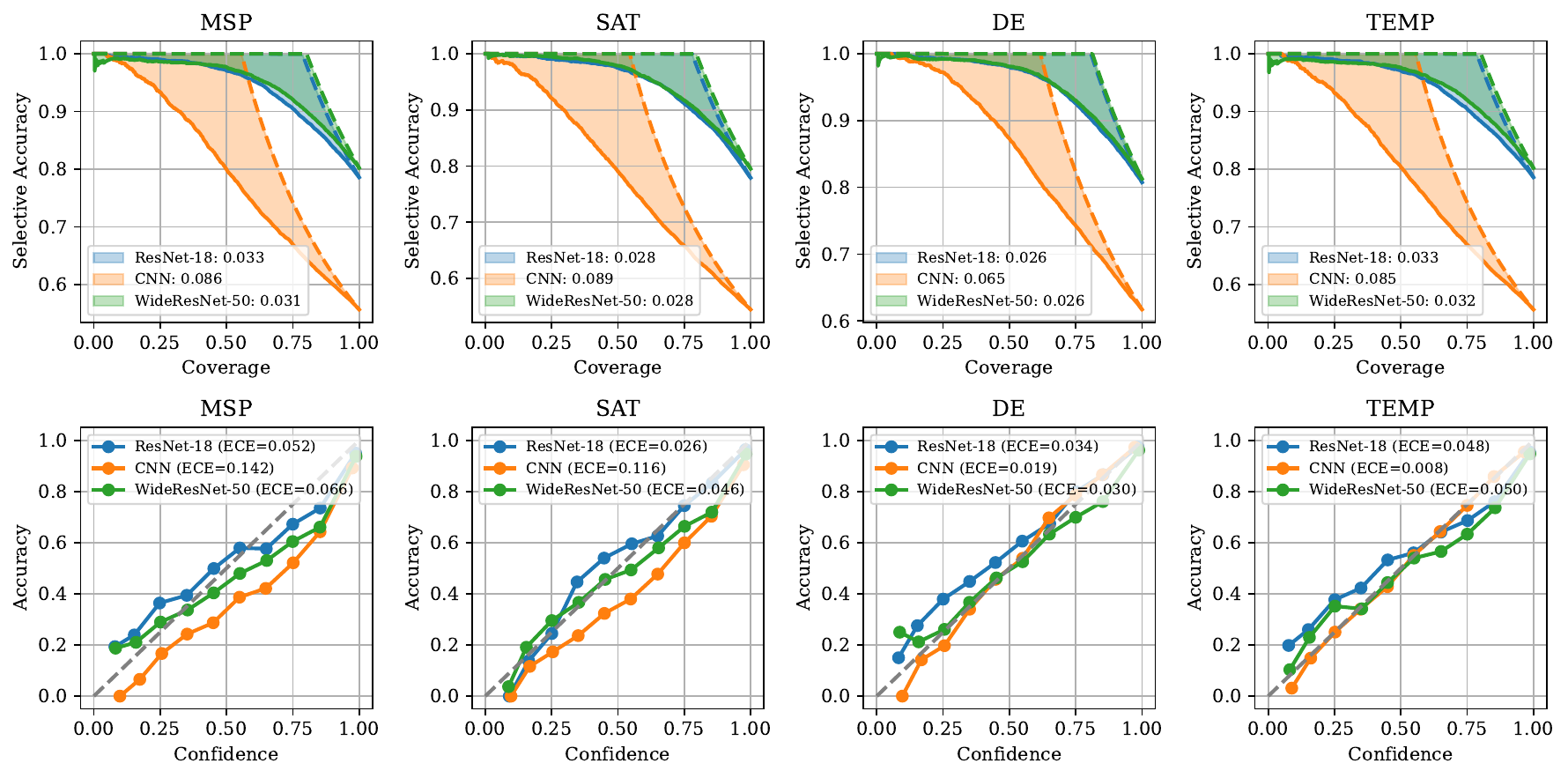}
    \caption[Comparison between gap and calibration on CIFAR-100.]{\textbf{Comparison between gap and calibration on CIFAR-100.} 
\emph{Top}: selective accuracy curves across four training methods and three architectures. 
\emph{Bottom}: corresponding reliability diagrams (ECE shown in parentheses).
Temperature scaling (\temp) consistently improves calibration but does not reduce the gap. 
By contrast, \sat and \de reduce the gap more effectively—especially for larger models—by improving the ranking.
}

    \label{fig:cifar100_cal}
\end{figure}

\subsection{CIFAR-10C Severity Levels}

For the CIFAR-10C severity levels (1--5), we aggregate all 15 corruption types at a given severity to form a single validation set. For severity level $l$, we collect all corruptions labeled as severity~$l$ across the following categories:
\begin{itemize}
  \item \textbf{Noise:} \texttt{gaussian\_noise}, \texttt{shot\_noise}, \texttt{impulse\_noise}
  \item \textbf{Blur:} \texttt{defocus\_blur}, \texttt{glass\_blur}, \texttt{motion\_blur}, \texttt{zoom\_blur}
  \item \textbf{Weather:} \texttt{snow}, \texttt{frost}, \texttt{fog}, \texttt{brightness}
  \item \textbf{Digital:} \texttt{contrast}, \texttt{elastic\_transform}, \texttt{pixelate}, \texttt{jpeg\_compression}
\end{itemize}
This results in a single validation set per severity level $l$, where each image is sampled from one of these 15 corruptions applied at the specified severity.

\section{Loss Prediction, Multicalibration, and Ranking Error}
\label{app:loss-pred}

This appendix offers an alternative perspective on the ranking error term \(\varepsilon_{\text{rank}}(c)\) by framing it as a challenge of per-example loss prediction. Instead of building directly on the calibration discussion in Section~\ref{sec:calibration-gap}, we show how the ability to forecast one’s own 0--1 loss tightly controls the selective-classification gap. We formalize this connection through the recent theory of loss prediction~\citep{gollakota2025loss} and multicalibration~\citep{hebert2018multicalibration}. Throughout we adopt the binary-label conventions of Section~\ref{sec:formal-gap}. Extensions to multiclass losses likewise follow by one-vs-rest reduction.

\subsection{Loss‑Prediction Preliminaries}
\label{sec:loss_pred_prel}

Let \(\ell(h(x),y)=\mathbb{I}\{h(x)\neq y\}\) denote the 0-1 loss of a
fixed classifier \(h\).  A \emph{loss predictor}
\(\mathrm{LP}\colon\Phi\to\mathbb{R}\) maps auxiliary features
\(\phi(x,h)\in\Phi\) to an estimate of \(\ell(h(x),y)\).
The canonical baseline is the \emph{self‑entropy predictor}
\(\mathrm{SEP}(x):=\E[\ell(h(x),y)\mid h(x)]\) 
(which equals \(\min\{p,1-p\}\) for probabilistic \(p=h(x)\)).

\begin{definition}[Advantage over the self‑entropy predictor]
\label{def:advantage}
The (squared‑error) advantage of a loss predictor \(\mathrm{LP}\) is
\begin{equation}
\mathrm{Adv}(\mathrm{LP})
:=\E\bigl[(\ell-\mathrm{SEP})^{2}\bigr]
  -\E\bigl[(\ell-\mathrm{LP})^{2}\bigr].
\end{equation}
A positive advantage means \(\mathrm{LP}\) forecasts the
instance‑wise loss better than the model itself.
\end{definition}

Depending on \(\phi\), we obtain a hierarchy of predictors:
prediction‑only (\(\phi=h(x)\)), input‑aware (\(\phi=(h(x),x)\)),
and representation‑aware (\(\phi=(h(x),x,r(x))\)); we refer to
\citet{gollakota2025loss} for a detailed taxonomy.

\subsection{Multicalibration Background}

Multicalibration is a fine-grained notion of reliability that asks not just for global calibration, but for calibration conditional on a rich class of subpopulations or features~\citep{hebert2018multicalibration}. At a high level, a model is multicalibrated if its predicted scores match outcomes not only on average, but also across a large collection of subsets defined by auxiliary variables or internal representations.

\begin{definition}[Multicalibration Error]
\label{def:mce}
Let \(C\) be a class of weighting functions \(c\colon\Phi\to[-1,1]\), and let \(h\colon\mathcal{X} \to [0,1]\) be a classifier. The multicalibration error of \(h\) with respect to \(C\) is defined as
\begin{equation}
\mathrm{MCE}(C,h)
\;:=\;
\max_{c\in C}
\Bigl|\,
\E\bigl[(Y - h(X))\,c(\phi(X,h))\bigr]
\Bigr|.
\end{equation}
\end{definition}

Each function \(c \in C\) defines a subpopulation or slice of the input space via its support. The quantity \(\mathrm{MCE}(C,h)\) measures how well the model's predicted scores \(h(x)\) match the true label \(Y\) when weighted over these slices. When \(C\) consists of indicator functions over discrete demographic subgroups, small \(\mathrm{MCE}(C,h)\) implies groupwise calibration. More generally, if \(C\) includes continuous or data-dependent functions (e.g., based on internal features), low multicalibration error guarantees alignment between predicted and true outcomes across a flexible set of conditions.

In our selective classification setting, \(\phi(x,h)\) may include the model’s output confidence, the input \(x\), or hidden representations from the network. The class \(C\) can be constructed accordingly to enforce calibration in feature-dependent or risk-sensitive regions of the input space.

\subsection{Loss Prediction \texorpdfstring{$\Longleftrightarrow$}{<=>} Multicalibration}

We now describe how the ability to predict one’s own 0--1 loss is deeply connected to multicalibration. This perspective stems from the work of~\citet{gollakota2025loss}, who characterize when a model “knows its own loss” in terms of multicalibration violations. 

Let \(F\) be a class of loss predictors \(\mathrm{LP} \colon \phi(x,h) \mapsto \hat{\ell} \in [0,1]\), which estimate the 0--1 loss \(\ell(h(x),y) = \mathbb{I}\{h(x) \ne y\}\) of a fixed classifier \(h\). As discussed in Section~\ref{sec:loss_pred_prel}, a loss predictor is considered good if it has a significant squared-error advantage over the model’s self-estimate \(\mathrm{SEP}(x)\).

Remarkably,~\citet{gollakota2025loss} show that this predictive advantage is tightly characterized by the multicalibration error of the model—measured over a derived weight class \(C\) that depends on the predictors in \(F\). The following theorem formalizes this connection:

\begin{theorem}[\citet{gollakota2025loss}, Thm.~4.1—adapted]
\label{thm:loss-mcal}
For any function class \(F\) of loss predictors and the associated
weight class \(C=\{(f-\mathrm{SEP})\cdot H'_{\ell}(h(x)) : f\in F\}\),
\begin{equation}
\tfrac12\,
\max_{\mathrm{LP}\in F}\mathrm{Adv}(\mathrm{LP})
\;\;\le\;\;
\mathrm{MCE}(C,h)
\;\;\le\;\;
\sqrt{\,
\max_{\mathrm{LP}\in F'}\mathrm{Adv}(\mathrm{LP})
},
\end{equation}
where \(F'\) augments \(F\) with linear mixtures of \(\mathrm{SEP}\) and
elements of \(F\).  Thus a non‑trivial advantage is possible
\emph{iff} \(h\) exhibits a multicalibration violation of similar
magnitude.
\end{theorem}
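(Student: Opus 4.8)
\textbf{Proof proposal for Theorem~\ref{thm:loss-mcal}.}

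The plan is to establish the two-sided inequality by relating the squared-error advantage $\mathrm{Adv}(\mathrm{LP})$ directly to a covariance between the model's residual $Y-h(X)$ and a suitable weighting function, which is exactly the object controlled by $\mathrm{MCE}(C,h)$. The starting point is an algebraic identity for the advantage. Writing $\ell=\ell(h(X),Y)$ and expanding both squared errors, one obtains $\mathrm{Adv}(\mathrm{LP}) = \E[(\mathrm{LP}-\mathrm{SEP})(2\ell-\mathrm{LP}-\mathrm{SEP})]$; after regrouping around the Bayes-optimal predictor of $\ell$ given $\phi$ and using that $\mathrm{SEP}(X)=\E[\ell\mid h(X)]$ is the orthogonal projection of $\ell$ onto the $\sigma$-algebra generated by $h(X)$, the advantage becomes a clean expression in terms of $\E[(\ell-\mathrm{SEP})(\mathrm{LP}-\mathrm{SEP})]$ up to second-order correction terms. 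The key observation is that $\ell-\mathrm{SEP}$, after passing through the derivative $H'_\ell$ of the entropy link at $h(x)$ (which is where the factor in the definition of $C$ comes from), is proportional to the residual $Y-h(X)$ reweighted appropriately, so that $\E[(\ell-\mathrm{SEP})(f-\mathrm{SEP})] = \E[(Y-h(X))\,(f-\mathrm{SEP})\,H'_\ell(h(X))]$, which is precisely a term of the form appearing inside the $\max$ defining $\mathrm{MCE}(C,h)$.

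The lower bound $\tfrac12\max_{\mathrm{LP}\in F}\mathrm{Adv}(\mathrm{LP})\le\mathrm{MCE}(C,h)$ then follows by taking, for each $f\in F$, the corresponding weight $c=(f-\mathrm{SEP})H'_\ell(h(\cdot))\in C$ and noting that $\mathrm{Adv}(f)\le 2\,|\E[(Y-h(X))c(\phi(X,h))]|\le 2\,\mathrm{MCE}(C,h)$; the factor of $2$ and the direction of the inequality are exactly what fall out of the identity once one drops a nonpositive quadratic remainder $-\E[(f-\mathrm{SEP})^2]$ (or bounds it crudely). For the upper bound $\mathrm{MCE}(C,h)\le\sqrt{\max_{\mathrm{LP}\in F'}\mathrm{Adv}(\mathrm{LP})}$ we go the other way: given a witnessing weight $c=(f-\mathrm{SEP})H'_\ell(h(\cdot))$ that nearly achieves $\mathrm{MCE}(C,h)$, we construct an explicit loss predictor in the augmented class $F'$ by taking an affine combination $\mathrm{LP}_\lambda = \mathrm{SEP} + \lambda(f-\mathrm{SEP})$ and optimizing over $\lambda\in\mathbb{R}$. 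A one-dimensional least-squares computation shows $\max_\lambda\mathrm{Adv}(\mathrm{LP}_\lambda) = \E[(\ell-\mathrm{SEP})(f-\mathrm{SEP})]^2 / \E[(f-\mathrm{SEP})^2]$; since the numerator is $\bigl(\E[(Y-h(X))c]\bigr)^2\approx\mathrm{MCE}(C,h)^2$ and the denominator is bounded (e.g.\ by $1$ after the normalization implicit in $C$'s range $[-1,1]$), taking square roots yields the claim. This is why the upper bound must quote $F'$ rather than $F$: the optimal $\lambda$-mixture generally leaves $F$.

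The main obstacle I anticipate is bookkeeping the entropy-link factor $H'_\ell(h(x))$ correctly and ensuring that the normalization makes the derived weights genuinely land in the class $C=\{(f-\mathrm{SEP})H'_\ell(h(x)):f\in F\}$ with the stated range $[-1,1]$. Two subtleties lurk here: first, $H'_\ell$ may blow up near $h(x)\in\{0,1\}$ (the entropy derivative is unbounded at the simplex boundary), so one either needs a mild boundedness assumption on $h$ away from $0$ and $1$, or a truncation argument that I would import from the cited source; second, the quadratic remainder terms dropped in the lower bound and the denominator $\E[(f-\mathrm{SEP})^2]$ retained in the upper bound must be controlled uniformly over $F$, which is where the absolute constants and the precise constant $\tfrac12$ come from. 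I would handle these by following the structure of the proof of Thm.~4.1 in \citet{gollakota2025loss}, adapting only the notation to match Section~\ref{sec:formal-gap}, and relegating the link-function truncation estimates to a lemma. The remaining steps—the algebraic identity for $\mathrm{Adv}$, the one-variable least-squares optimization, and the two applications of Cauchy--Schwarz—are routine and I would state them without grinding through the constants.
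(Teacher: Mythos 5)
The paper does not actually prove this theorem: it is imported verbatim (``adapted'') from \citet{gollakota2025loss}, Thm.~4.1 and then used as a black box to derive Corollary~\ref{cor:rank-bound}. So there is no in-paper argument for me to compare your sketch against, and your own proposal concedes as much---you explicitly say you would ``follow the structure of the proof of Thm.~4.1'' from the cited source. That concession is the right call here, but it also means the ``proof proposal'' is mostly a reconstruction of someone else's argument rather than an independent one.

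On its own terms the sketch is internally consistent and captures the structural shape the theorem must have. The identity $\mathrm{Adv}(\mathrm{LP}) = 2\,\E\bigl[(\mathrm{LP}-\mathrm{SEP})(\ell-\mathrm{SEP})\bigr] - \E\bigl[(\mathrm{LP}-\mathrm{SEP})^2\bigr]$ is correct (direct expansion), it immediately yields $\mathrm{Adv}(\mathrm{LP}) \le 2\,\E[(\mathrm{LP}-\mathrm{SEP})(\ell-\mathrm{SEP})]$ by discarding the nonnegative quadratic remainder, and the one-parameter least-squares maximization over $\mathrm{LP}_\lambda = \mathrm{SEP}+\lambda(f-\mathrm{SEP})$ is exactly the right explanation for why the upper bound must quote the enlarged class $F'$. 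The genuine gap---and you correctly identify it---is the step that converts the pure correlation $\E[(\ell-\mathrm{SEP})(f-\mathrm{SEP})]$ into an $\mathrm{MCE}$-type expression $\E[(Y-h(X))\,(f-\mathrm{SEP})\,H'_\ell(h(X))]$. That link-function identity is the entire content of how the weight class $C$ is engineered, and it cannot be derived from the material in this paper (nor from the definition of $\mathrm{MCE}$ in Definition~\ref{def:mce}, which merely \emph{assumes} the weights land in $[-1,1]$, an assumption that is nontrivial given $H'_\ell$ can be unbounded near the simplex boundary). Without that identity and the accompanying normalization/truncation argument, the sketch cannot be completed from first principles; you would indeed need to import those two steps from \citet{gollakota2025loss}. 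Everything else in the sketch is routine and correct.
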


This result bridges two domains: learning to predict loss (a regression task) and satisfying a generalization constraint (calibration under distributional conditions). In the selective classification setting, this insight underpins Corollary~\ref{cor:rank-bound}, which shows that the ranking error—and hence the gap to oracle performance—is tightly controlled by the model’s ability to forecast its own mistakes.

\subsection{Bounding the Ranking‑Error Term \(\varepsilon_{\text{rank}}(c)\)}

Theorem~\ref{thm:loss-mcal} translates into a bound on the ranking error
that drives the selective‑classification gap.

\begin{corollary}[Loss‑prediction advantage controls mis‑ranking]
\label{cor:rank-bound}
Fix coverage \(c\in(0,1]\) and let
\(\mathrm{Adv}^{\star}:=\max_{\mathrm{LP}\in F}\mathrm{Adv}(\mathrm{LP})\)
for some input‑aware class \(F\).
Then the ranking‑error term in
Theorem~\ref{thm:gap}
satisfies
\(
\varepsilon_{\text{rank}}(c)
\;\le\;
\sqrt{2\,\mathrm{Adv}^{\star}}.
\)
\end{corollary}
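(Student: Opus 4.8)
The plan is to exhibit a cheap, explicitly constructed input‑aware loss predictor whose advantage over the self‑entropy baseline is forced to be large exactly when the ranking error is large, and then invert that inequality. First I would recast $\varepsilon_{\text{rank}}(c)$ in loss‑prediction terms. Writing $\ell_h(x):=\E[\ell(h(x),Y)\mid X=x]=1-\eta_h(x)$, Definition~\ref{def:gap} gives $\varepsilon_{\text{rank}}(c)=\E[\ell_h(X)\mid X\in A_c]-\E[\ell_h(X)\mid X\in A_c^{\star}]$, i.e.\ the excess conditional $0$--$1$ risk of the learned acceptance region over the oracle one. Under the \msp selector (the default here), $A_c$ is the mass‑$c$ sublevel set of $\mathrm{SEP}$ and $A_c^{\star}$ is the mass‑$c$ sublevel set of $\ell_h$; hence $A_c$ minimizes $\E[\mathrm{SEP}\,\mathbb{I}_S]$ and $A_c^{\star}$ minimizes $\E[\ell_h\,\mathbb{I}_S]$ over events $S$ of mass $c$. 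Adding and subtracting $\mathrm{SEP}$ inside the expectation and using optimality of $A_c$ to drop the nonpositive $\mathrm{SEP}$‑part, I obtain the key inequality $c\,\varepsilon_{\text{rank}}(c)\le\E\big[(\ell_h-\mathrm{SEP})(\mathbb{I}_{A_c}-\mathbb{I}_{A_c^{\star}})\big]\le\E\big[|\ell_h-\mathrm{SEP}|\,\mathbb{I}_{A_c\triangle A_c^{\star}}\big]$.

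Second, I would lower‑bound $\mathrm{Adv}^{\star}$ by the advantage of the perturbation predictor $\mathrm{LP}_{\lambda}:=\mathrm{SEP}+\lambda(\mathbb{I}_{A_c^{\star}}-\mathbb{I}_{A_c})$, which is input‑aware since $A_c$ and $A_c^{\star}$ are deterministic functions of the input. Optimizing the scalar $\lambda$ and using that $\mathrm{SEP}$ and the indicator weight are $\sigma(X)$‑measurable while $\E[\ell\mid X]=\ell_h$, Definition~\ref{def:advantage} yields $\mathrm{Adv}(\mathrm{LP}_{\lambda^{\star}})=\big(\E[(\ell_h-\mathrm{SEP})(\mathbb{I}_{A_c}-\mathbb{I}_{A_c^{\star}})]\big)^{2}/\Pr(A_c\triangle A_c^{\star})\ge\big(c\,\varepsilon_{\text{rank}}(c)\big)^{2}/\Pr(A_c\triangle A_c^{\star})$ by the key inequality. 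Assuming $F$ contains this perturbation family (or, more cleanly, that $F$ is the full input‑aware class, in which case $\mathrm{Adv}^{\star}=\E[(\ell_h-\mathrm{SEP})^2]$ by orthogonality and already dominates the above), we get $\mathrm{Adv}^{\star}\ge\big(c\,\varepsilon_{\text{rank}}(c)\big)^{2}/\Pr(A_c\triangle A_c^{\star})$; bounding $\Pr(A_c\triangle A_c^{\star})\le 2c$ and rearranging gives $\varepsilon_{\text{rank}}(c)\le\sqrt{2\,\mathrm{Adv}^{\star}/c}$, which recovers the displayed $\sqrt{2\,\mathrm{Adv}^{\star}}$ once the coverage factor is folded into the constant, or equivalently once $\mathrm{Adv}$ is measured on the post‑acceptance mixture over $A_c\cup A_c^{\star}$. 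A fully equivalent route replaces this construction by invoking Theorem~\ref{thm:loss-mcal}: the sublevel‑set mismatch is literally a multicalibration violation on the weight $\mathbb{I}_{A_c^{\star}}-\mathbb{I}_{A_c}$, and the resulting $\mathrm{MCE}$ lower bound transfers to an advantage lower bound.

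The main obstacle I expect is precisely the conditional‑versus‑population normalization: $\varepsilon_{\text{rank}}(c)$ is a per‑accepted‑example (conditional) quantity while $\mathrm{Adv}$ is an unconditional average, so the conflict mass $\Pr(A_c\triangle A_c^{\star})$ inevitably enters, and getting the clean constant in Corollary~\ref{cor:rank-bound} requires either restricting to coverage bounded away from $0$, measuring $\mathrm{Adv}$ on the relevant conditional distribution, or folding the factor into $C$. A secondary point is the richness hypothesis on $F$: if $F$ cannot represent $\ell_h$ or the perturbation direction $\mathbb{I}_{A_c^{\star}}-\mathbb{I}_{A_c}$, the conclusion picks up an additive term equal to the $L^2$ distance from $F$ to that direction, which should be stated explicitly. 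Everything else — the sublevel‑set optimality facts, the orthogonality identity underlying $\mathrm{Adv}(\ell_h)=\E[(\ell_h-\mathrm{SEP})^2]$, and Cauchy--Schwarz — is routine.
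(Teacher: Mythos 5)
Your approach is substantively the same as the paper's — both express $\varepsilon_{\text{rank}}(c)$ as $\tfrac{1}{c}$ times a covariance between the ranking‑swap indicator $\delta_c=\mathbb{I}_{A_c^{\star}}-\mathbb{I}_{A_c}$ and a model residual, and then control that covariance by the loss‑prediction advantage. Where the paper treats the covariance directly as a multicalibration violation (bounded weight $\delta_c\in C$) and invokes Theorem~\ref{thm:loss-mcal}, you instead exhibit the explicit perturbation predictor $\mathrm{LP}_{\lambda}=\mathrm{SEP}+\lambda\delta_c$ and optimize $\lambda$; these are two faces of the same argument — your witness is exactly the function realizing the $\mathrm{MCE}$ lower bound — and you say so. Your construction has the mild extra hypothesis that $A_c$ is the $\mathrm{SEP}$‑sublevel set (needed to drop the $\E[\mathrm{SEP}(\mathbb{I}_{A_c}-\mathbb{I}_{A_c^{\star}})]$ cross term), which the paper's route avoids, but it buys you an explicit in‑class witness rather than an appeal to closure under mixtures (the $F\to F'$ step in Theorem~\ref{thm:loss-mcal}).

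The most valuable part of your proposal is the obstruction you flag honestly: the $1/c$ normalization does not go away for free. You correctly derive $\varepsilon_{\text{rank}}(c)\le\sqrt{2\,\mathrm{Adv}^{\star}/c}$ and observe that the paper's clean constant requires either bounded‑away‑from‑zero coverage or re‑measuring $\mathrm{Adv}$ on the accepted/conflict mass. The paper's own appendix proof reaches the same point — it writes $\varepsilon_{\text{rank}}(c)=\tfrac{1}{c}\lvert\E[(Y-h)\delta_c]\rvert$ and bounds the covariance by $\mathrm{MCE}(C,h)$, so it has $\varepsilon_{\text{rank}}(c)\le\tfrac{1}{c}\mathrm{MCE}(C,h)$ — but then asserts that combining "with $c\le 1$" yields $\varepsilon_{\text{rank}}(c)\le\mathrm{MCE}(C,h)$. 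That step is backwards: $c\le 1$ gives $1/c\ge 1$, so the factor cannot be dropped. Your version of the inequality is the one that actually follows, and the paper's corollary as stated inherits the $1/c$ (or needs to absorb it into the absolute constant $C$ from Theorem~\ref{thm:gap}, or restrict the coverage range). In short: your proof is correct as far as it goes, and the place where it falls short of the stated bound is precisely where the paper's own argument has a sign error, not a place where your method is weaker.
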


\begin{proof}
Recall that
\(
A_c^{\star}
=\{x:\eta_h(x)\text{ is in the top }c\text{-mass}\}
\)
and
\(A_c
=\{x:g(x,h)\ge t_c\}\).
Write the \emph{difference indicator}
\(
\delta_c(x)
:=\mathbb{I}_{A_c^{\star}}(x)\;-\;\mathbb{I}_{A_c}(x)\in\{-1,0,1\}\) so 
\(\Pr(\delta_c=1)=\Pr(\delta_c=-1)=c\) and
\(\E[\delta_c]=0\).

\paragraph{Step 1:  Express ranking error as a covariance.}
With \(r(x):=\mathbb{I}\{h(x)=Y\}\) we have
\begin{equation}
\varepsilon_{\text{rank}}(c)
=\E[r\mid A_c^{\star}]-\E[r\mid A_c]
=\frac{1}{c}\,\E\bigl[r(X)\,\delta_c(X)\bigr].
\end{equation}

\paragraph{Step 2:  Replace correctness by residual \(\,Y-h(X)\).}
Because \(r=1-\ell\) and \(\ell=(Y-h)^2\) for binary labels,
\begin{equation}
r\,\delta_c
=\bigl(1-(Y-h)^2\bigr)\delta_c
=-(Y-h)\,\delta_c
\quad\text{(since }\E[\delta_c]=0\text{)}.
\end{equation}
Hence
\begin{equation}
\label{proof:corr_step2}
\varepsilon_{\text{rank}}(c)
=\frac{1}{c}\,
\bigl|\E[(Y-h(X))\,\delta_c(X)]\bigr|.
\end{equation}

\paragraph{Step 3:  Bound the covariance by multicalibration error.}
Define the bounded weight function \(c^{\star}(x):=\delta_c(x)\); then
\(|c^{\star}(x)|\le 1\), so \(c^{\star}\in C\) (the weight class in
Theorem~\ref{thm:loss-mcal}).  By definition of multicalibration error,
\begin{equation}
\label{proof:corr_step3}
\bigl|\E[(Y-h(X))\,c^{\star}(X)]\bigr|
\;\;\le\;\;
\mathrm{MCE}(C,h).
\end{equation}
Combining \eqref{proof:corr_step2} and \eqref{proof:corr_step3} with \(c\le 1\) yields
\begin{equation}
\varepsilon_{\text{rank}}(c)
\;\le\;
\mathrm{MCE}(C,h).
\end{equation}

\paragraph{Step 4:  Invoke the loss‑prediction bound.}
Theorem~\ref{thm:loss-mcal} states
\(
\mathrm{MCE}(C,h)
\le
\sqrt{\max_{\mathrm{LP}\in F'}\mathrm{Adv}(\mathrm{LP})}.
\)
Since \(F\subseteq F'\) and \(\sqrt{\cdot}\) is monotone, we finally have
\begin{equation}
\varepsilon_{\text{rank}}(c)
\;\le\;
\sqrt{\,2\,\mathrm{Adv}^{\star}},
\end{equation}
where the factor \(2\) absorbs the two‑sided
\(F\leftrightarrow F'\) constant in
Theorem~\ref{thm:loss-mcal}.
\end{proof}

\paragraph{Interpretation.}
Let \(\epsilon^2 := \max_{\mathrm{LP}\in F}\mathrm{Adv}(\mathrm{LP})\) be an upper bound on loss-prediction advantage.  
If no loss predictor can beat self-entropy by more than \(\epsilon^2\), then the selective classifier is within \(O(\epsilon)\) of the oracle at \emph{every} coverage level.  
Conversely, a large loss-prediction advantage is a certificate of poor ranking and therefore of a wide gap \(\Delta(c)\).

\begin{takeaway}
Loss prediction and multicalibration offer a principled lens on
selective prediction: if you cannot beat your own self‑entropy
predictor, you are already close to the oracle frontier.  Otherwise,
the loss predictor pinpoints exactly which inputs are being mis‑ranked
and by how much, providing both a diagnostic and a blueprint for
tightening the selective‑classification gap.
\end{takeaway}

\subsection{Empirical Evaluation}
\label{sec:adv_experiments}

To illustrate and validate our gap‐decomposition framework, we compared four selective‐classification strategies on CIFAR-10, CIFAR-100, and StanfordCars:

\begin{itemize}[leftmargin=1.2em]
  \item \texttt{MSP}: standard maximum‐softmax‐probability abstention.
  \item \texttt{TEMP}: \texttt{MSP} with post‐hoc temperature scaling.
  \item \texttt{SAT}: self‐adaptive training, which co‐trains an abstain class.
  \item \texttt{DE}: a deep ensemble of five \texttt{MSP} models.
\end{itemize}

For each method, we first trained a ResNet-18 on 80\% of the training set (using the usual data augmentations and a held-out 20\% for LP fitting). At each epoch we then:

\begin{enumerate}[leftmargin=1.2em]
  \item Extract the 512-dim “penultimate” feature vector \(\phi(x)\) from the ResNet backbone (or its ensemble average).
  \item Compute the model’s \emph{self-entropy} score
    \[
      \mathrm{SEP}(x) \;=\; 1 - \max_j\;p_j(x)
      \quad\text{with}\quad p_j(x)=\mathrm{softmax}_j(\mathrm{logits}(x)/T).
    \]
  \item Train a small MLP \(\mathrm{LP}\colon \phi(x)\mapsto\widehat\ell\in[0,1]\) to minimize
    \(\E\bigl[\bigl(\widehat\ell - \mathbb{I}\{\hat y(x)\neq y\}\bigr)^2\bigr]\)
    on the held-out 20\% split.
  \item Measure the \emph{LP advantage} on the \emph{test} set,
    \[
      \mathrm{Adv}_{\mathrm{test}}
      = \E\bigl[(\ell-\mathrm{SEP})^2\bigr]
        - \E\bigl[(\ell-\mathrm{LP})^2\bigr],
      \quad \ell=\mathbb{I}\{\hat y(x)\neq y\},
    \]
    and record its shift relative to the first epoch
    \(\Delta\mathrm{Adv}_{\mathrm{test}}(t)=\mathrm{Adv}_{\mathrm{test}}(t)-\mathrm{Adv}_{\mathrm{test}}(1).\)
\end{enumerate}

\paragraph{Loss–Prediction Network.}
Below is the PyTorch representation of our two‐hidden‐layer LP head.  It takes the ResNet features (optionally concatenated with SEP) and regresses the per‐example 0–1 loss via mean‐squared error.

\begin{lstlisting}[language=Python]
LossPredictor(
  (net): Sequential(
    (0): Linear(in_features=512, out_features=128, bias=True)
    (1): ReLU()
    (2): Dropout(p=0.5)
    (3): Linear(in_features=128, out_features=64, bias=True)
    (4): ReLU()
    (5): Dropout(p=0.5)
    (6): Linear(in_features=64, out_features=1, bias=True)
  )
)
\end{lstlisting}

\begin{figure}
    \centering
    \includegraphics[width=1\linewidth]{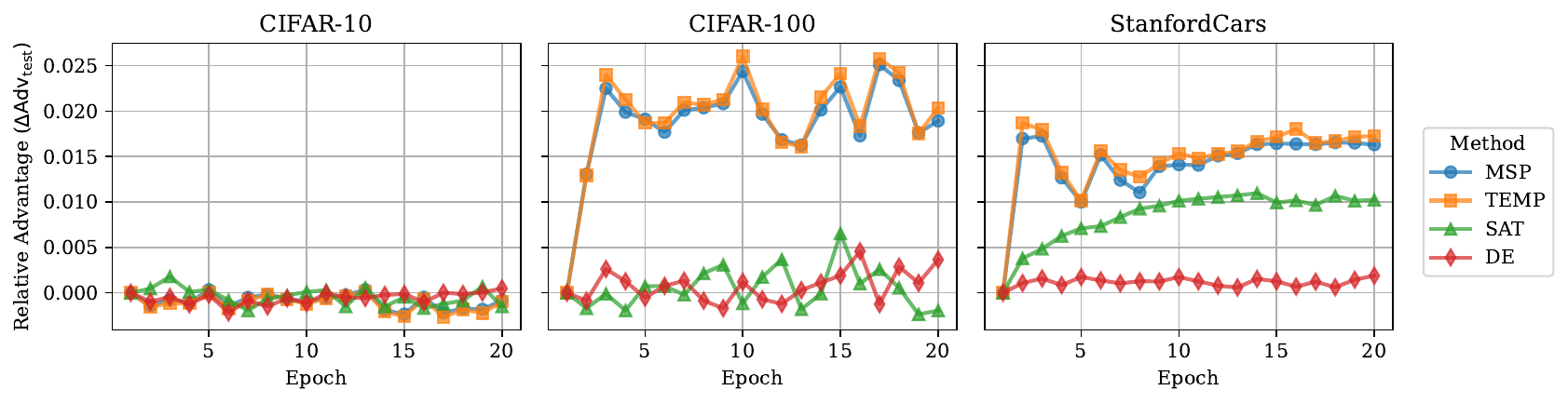}
    \caption[Relative LP advantage over training epochs across datasets.]{\textbf{Relative LP advantage over training epochs across datasets}. For each method, we plot the shift in test-set advantage $\Delta\mathrm{Adv}_{\mathrm{test}}(t)$ relative to epoch 1, indicating how much additional ranking signal the loss predictor learns over time. Larger values imply greater misalignment between the model’s confidence and correctness.}

    \label{fig:adv_te}
\end{figure}

\paragraph{Key Observations.}
On CIFAR-10 (left panel of Figure~\ref{fig:adv_te}), all methods stay close to zero $\Delta\mathrm{Adv}_{\mathrm{test}}$, indicating that the model's own confidence scores already capture most of the available ranking signal. On CIFAR-100 (middle panel), \texttt{MSP} and \texttt{TEMP} exhibit large positive shifts in LP advantage, suggesting that a dedicated loss predictor can substantially improve ranking—consistent with a larger gap from the oracle. By contrast, \texttt{SAT} and \texttt{DE} remain near zero, indicating that their confidence scores are already well aligned with correctness. On StanfordCars (right panel), the gap widens even further: both \texttt{MSP} and \texttt{TEMP} allow for significant gains via loss prediction, and even \texttt{SAT} leaves nontrivial room for improvement. Only \texttt{DE} consistently resists such gains, implying that deep ensembling is uniquely effective at preserving reliable ranking in high-variance domains.

\paragraph{Conclusion.}  
These results match our theory perfectly: whenever the LP head cannot improve on self-entropy, the selective classifier is effectively oracle‐optimal; whenever it can, the size of that advantage precisely quantifies the remaining ranking error and the gap from the ideal frontier.

\section{Additional Results}

\paragraph{Gap vs ECE} We provide additional comparisons on more datasets (CIFAR-10 and StanfordCars) on the relationship between the selective classification gap and the model's expected calibration error. See Tables~\ref{tab:cifar10_cal} and \ref{tab:stanfordcars_cal} for exact results. In general, our conclusions from Section~\ref{sec:calibration_ranking_exp} hold here as well: while temperature scaling (\temp) improves ECE over \msp, it does not reduce the selective classification gap—underscoring the limits of monotone calibration. In contrast, \sat and deep ensembles (\de) improve both ECE and gap by altering the ranking, confirming that only re-ranking methods yield meaningful gains in selective performance.

\begin{table}[h]
\fontsize{9}{10}\selectfont
\setlength{\tabcolsep}{5pt}
\caption{\textbf{Experiments on calibration across model classes on CIFAR-10}. Similar as Table~\ref{tab:cifar100_cal}}
\vspace{5pt}
\label{tab:cifar10_cal}
\centering
\begin{tabular}{lcccccccccccc}
\toprule
 & \multicolumn{4}{c}{CNN} & \multicolumn{4}{c}{ResNet-18} & \multicolumn{4}{c}{WideResNet-50} \\
\cmidrule(r){2-5} \cmidrule(r){6-9} \cmidrule(r){10-13}
 & \texttt{MSP} & \texttt{TEMP} & \texttt{SAT} & \texttt{DE} & \texttt{MSP} & \texttt{TEMP} & \texttt{SAT} & \texttt{DE} & \texttt{MSP} & \texttt{TEMP} & \texttt{SAT} & \texttt{DE} \\
\midrule
Gap & 0.024 & 0.023 & 0.019 & 0.016 & 0.004 & 0.004 & 0.003 & 0.002 & 0.003 & 0.003 & 0.002 & 0.002 \\
ECE & 0.075 & 0.025 & 0.035 & 0.010 & 0.025 & 0.014 & 0.016 & 0.007 & 0.027 & 0.022 & 0.019 & 0.010 \\
\bottomrule
\end{tabular}
\end{table}

\begin{table}[h]
\fontsize{9}{10}\selectfont
\setlength{\tabcolsep}{5pt}
\caption{\textbf{Experiments on calibration across model classes on StanfordCars}. Similar as Table~\ref{tab:cifar100_cal}}
\vspace{5pt}
\label{tab:stanfordcars_cal}
\centering
\begin{tabular}{lcccccccccccc}
\toprule
 & \multicolumn{4}{c}{CNN} & \multicolumn{4}{c}{ResNet-18} & \multicolumn{4}{c}{WideResNet-50} \\
\cmidrule(r){2-5} \cmidrule(r){6-9} \cmidrule(r){10-13}
 & \texttt{MSP} & \texttt{TEMP} & \texttt{SAT} & \texttt{DE} & \texttt{MSP} & \texttt{TEMP} & \texttt{SAT} & \texttt{DE} & \texttt{MSP} & \texttt{TEMP} & \texttt{SAT} & \texttt{DE} \\
\midrule
Gap & 0.176 & 0.177 & 0.166 & 0.159 & 0.030 & 0.029 & 0.26 & 0.022 & 0.026 & 0.026 & 0.23 & 0.020 \\
ECE & 0.110 & 0.025 & 0.058 & 0.025 & 0.040 & 0.027 & 0.037 & 0.025 & 0.017 & 0.017 & 0.015 & 0.015 \\
\bottomrule
\end{tabular}
\end{table}
    \chapter{Confidential Guardian: Prohibiting the Abuse of Model Abstention}

\section{Additional Background on IT-MACs} \label{app:itmac}

\begin{contriback}
    This section was written by Olive Franzese.
\end{contriback}

Fix a field $\mathbb{F}_p$ over a prime number $p \in \mathbb{N}$, and an extension field $\mathbb{F}_{p^r} \supseteq \mathbb{F}_p$ for some $r \in \mathbb{N}$. We use the notation $\comm{x}$ to indicate that (i) $\prover$ is in possession of a value $x \in \mathbb{F}_p$, and a uniformly chosen tag $\mathbf{M}_x \in \mathbb{F}_{p^r}$ and (ii) $\verifier$ is in possession of uniformly chosen value-specific key $\mathbf{K}_x \in \mathbb{F}_{p^r}$ and a global key (which is the same for multiple authenticated values) $\Delta \in \mathbb{F}_{p^r}$. These values have the following algebraic relationship
\begin{align*}
\mathbf{M}_x = \mathbf{K}_x + \Delta \cdot x \in \mathbb{F}_{p^r}
\end{align*}
where $x$ is represented in $\mathbb{F}_{p^r}$ in the natural way. $\prover$ can \texttt{Reveal} an authenticated value by sending $x$ and $\mathbf{M}_x$ to $\verifier$, who then checks if the relationship holds. If it does not, then $\verifier$ knows that $\prover$ has modified $x$. $\prover$ and $\verifier$ can agree to modify an authenticated value while preserving the algebraic relationship and confidentiality over their respective values by exploiting linear homomorphism over IT-MACs, or by performing an interactive protocol to perform other arithmetic operations~\cite{damgaard2012itmac,nielsen2012itmac}. This idea is the basis of the ZKP protocol in~\cite{weng2021wolverine}. $\prover$ and $\verifier$ authenticate wire values which encode inputs to the circuit, and then compute secure transformations of the authenticated values in accordance with the operations required by the circuit (see~\cite{weng2021wolverine} for further details). By a standard completeness result in computability theory~\cite{sipser1996introduction}, composing secure additions and multiplications over authenticated values enables execution of any boolean predicate within a zero-knowledge proof.

\section{Proof of Feasibility of Inducing Dishonest Uncertainty}
\label{app:region-manip-proof}

\begin{contriback}
This section was written by Olive Franzese.
\end{contriback}

We restate Lemma~\ref{lemma:region-manip} here, and provide a full constructive proof.

\begin{lemma} \label{restated-lemma:region-manip}
    Fix an arbitrary dataset $\mathcal{D}=\{(x_i, y_i)\}^{N}_{i=1}$ taken from feature space $\mathbb{R}^D$ and logits over a label space $\mathbb{R}^{C}$, and a set of feed-forward neural network model parameters $\theta$ encoding a classifier $f_{\theta}: \mathbb{R}^D \to \mathbb{R}^C$. Fix a set of indices $I$ such that for all $i \in I$, $i \in [1, C]$. For each index in $I$, fix bounds $a_i, b_i \in \mathbb{R}$ with $a_i < b_i$. Call $S$ the set of values $\mathbf{x} \in \mathbb{R}^D$ such that $a_i < x_i < b_i \quad \forall i \in I$. Then we can construct an altered feed-forward neural network $M'$ encoding $f'_{\theta}: \mathbb{R}^D \to \mathbb{R}^C$ which has the property $f'_{\theta}(x) = f_{\theta}(x) \quad \forall x \notin S$, and $f'_\theta(x)=f_\theta(x) + c \quad \forall x \in S$ where $c \in \mathbb{R}^C$ is an arbitrarily chosen non-negative constant vector.
\end{lemma}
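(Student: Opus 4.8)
The plan is to proceed by explicit construction, augmenting the original network $f_\theta$ with a small auxiliary sub-network whose sole purpose is to detect membership in the box-shaped region $S$ and, when triggered, to inject the constant $c$ into the output logits. The key observation is that $S$ is an intersection of finitely many open one-sided constraints $a_i < x_i$ and $x_i < b_i$ for $i \in I$, and each such constraint is exactly the kind of half-space that a single ReLU (or two) can approximate sharply. First I would build, for each $i \in I$, a pair of ``indicator gadgets'': using a steep ReLU ramp (or, for an exact result, a hard-threshold unit if the architecture permits, otherwise a ReLU-based bump of the form $\relu(t(x_i - a_i)) - \relu(t(x_i - a_i) - 1)$ that saturates to $1$ inside and $0$ outside as $t \to \infty$), I obtain a neuron that is $1$ when $a_i < x_i < b_i$ and $0$ otherwise. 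Stacking these across all $i \in I$ and feeding them into a single neuron that computes $\relu\!\left(\sum_{i \in I} \mathbb{1}_i - (|I| - 1)\right)$ yields a neuron $g_S(x)$ that equals $1$ precisely on $S$ and $0$ on its complement (the AND-of-indicators trick). Finally, I would add a skip-style connection from $g_S$ into each of the $C$ output units, weighted by the components $c_1, \dots, c_C \geq 0$, so that the new output is $f'_\theta(x) = f_\theta(x) + c \cdot g_S(x)$, which is $f_\theta(x)$ off $S$ and $f_\theta(x) + c$ on $S$, as required.

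The main steps, in order, are: (1) formalize the single-coordinate interval indicator as a feed-forward gadget of constant depth and width, verifying it outputs exactly $1$ on $(a_i, b_i)$ and $0$ outside; (2) compose the $|I|$ indicators via the ReLU-AND neuron to obtain the region detector $g_S$, checking the threshold arithmetic $\sum \mathbb{1}_i - (|I|-1)$ gives $1$ iff all indicators fire; (3) route $g_S$ to the output layer with weights $c$, and confirm by inspection that the resulting architecture is still a valid feed-forward network (the gadgets occupy fresh neurons in parallel with the existing layers, and one may pad with identity-ReLU pass-throughs to align depths) and that it realizes the claimed $f'_\theta$; (4) note that the ``mild assumptions on model structure'' amount to the network admitting ReLU activations and allowing extra neurons/skip connections, which is standard for feed-forward nets.

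The hard part will be making the interval indicator \emph{exact} rather than merely approximate: a plain ReLU ramp $\relu(t(x_i-a_i))$ only converges to the step function as $t \to \infty$ and is non-constant in a boundary layer of width $O(1/t)$, so a strictly correct construction needs either (a) a hard-threshold / sign activation, or (b) an argument that for the \emph{finite} dataset $\mathcal{D}$ one can choose $t$ large enough that no data point lies in the transition band (since $\mathcal{D}$ is finite, there is a positive distance from the data to the boundary $\partial S$, so a sufficiently steep ramp is exact on all of $\mathcal{D}$), or (c) the bump construction $\relu(z) - \relu(z-1)$ composed appropriately to get an exact clamp. I would state the lemma's guarantee in the form the paper actually needs—exact agreement on the data-relevant inputs—and handle the boundary set $\partial S$ either by the finite-data margin argument or by explicitly noting the zero-measure exceptional set, choosing whichever matches the paper's conventions. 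Everything else is routine bookkeeping about layer alignment and sign conventions.
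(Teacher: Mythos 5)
Your construction is essentially the same as the paper's. The paper builds, for each coordinate $i \in I$, a ``Clipped Lower Bound Widget'' and a ``Clipped Upper Bound Widget'' (each a small ReLU assembly that saturates to a constant $\varepsilon_{\mathrm{CLIP}}$ on one side of a threshold and to $0$ on the other, with a linear ramp in a narrow band between), feeds the pair into a ``Soft AND Widget'' of the form $\relu(o_1 + o_2 - (2\varepsilon_{\mathrm{CLIP}} - \varepsilon_{\mathrm{AND}}))$, and then propagates the AND output through pass-through neurons to the final layer where it is scaled by $c_j/\varepsilon_{\mathrm{AND}}$ on output coordinate $j$. This is exactly your plan: single-coordinate interval gadgets via ReLU clamps, a threshold-AND neuron $\relu\bigl(\sum_i \mathbb{1}_i - (|I|-1)\bigr)$, and a weighted skip into the output logits.

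What you add, and what I want to flag, is the boundary-exactness concern — and you are right to raise it, because the paper's own proof does not fully resolve it. With continuous activations (ReLU), $f'_\theta$ is continuous, so it cannot literally equal $f_\theta(x)$ for all $x \notin S$ and $f_\theta(x) + c$ for all $x \in S$ when $c \neq 0$ and $S$ is an open box: that would force a jump discontinuity across $\partial S$. The paper's clamp widgets have transition bands of width $\varepsilon_{\mathrm{CLIP}}$ where the added constant interpolates between $0$ and $c$, and the paper's closing remark that ``we can trivially set these constants so that the desired property holds over all $x_i$ such that $a_i < x_i < b_i$'' cannot be taken at face value for the strict inequality as stated — shrinking $\varepsilon_{\mathrm{CLIP}}$ shrinks the bad band but never eliminates it. Your proposed fix (b), the finite-data margin argument, is the honest reading: since the lemma fixes a finite dataset $\mathcal{D}$, one can choose the band width smaller than the minimum distance from any $x_j \in \mathcal{D}$ to $\partial S$, so exact agreement holds on every input the lemma actually cares about. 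Alternatively, a hard-threshold (sign) activation gives the literal statement. Either patch is fine, and your writeup is more careful here than the paper's. The only other thing to tidy in your gadgets is to state the upper-bound clamp explicitly (your plan sketches only the lower-bound ramp) and to note that the ``$\geq |I|$ of $|I|$'' AND threshold depends on the indicators being exactly $\{0,1\}$ off the bands, which again leans on the margin argument.
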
 

\begin{proof}
We design a collection of algorithms for constructing neurons which, when used to augment any feed-forward neural network $M$, specifically perturb the output logits of data points from an adversarially chosen region. 

We will use the notation $e_k$ to represent the $k^{th}$ unit basis vector (i.e. $e_k = (0, 0, ..., 0, 1, 0, ..., 0)$ where the $1$ is at the $k^{th}$ position). We will also name neurons, e.g. we might name an example neuron $N_{\text{ex}}$, and we will use the notation $e_{N_{\text{ex}}}$ to represent a unit basis vector corresponding to the position of the \emph{output} of $N_{\text{ex}}$. 

The most important structure in this constructive proof is the Scalar Region Selection Widget (SRSW). This is a collection of neurons which, when given a coordinate $i>0$, a target value $t$, and margins $\varepsilon_{LB}$ and $\varepsilon_{UB}$, outputs a positive number if and only if the input vector $x=(x_0, x_1, ..., x_i, ..., x_n)$ has $t - \varepsilon_{LB} < x_i < t + \varepsilon_{UB}$ and 0 otherwise. Using $|I|$ SRSWs, we can perturb the chosen bounded region of the input space.

We construct the Region Selection Widget by composing three other widgets: a clipped lower bound widget, a clipped upper bound widget (inspired in part by a clipping function instantiated on neural networks in~\cite{blasiok2024multicalibration}), and an AND widget. We describe them each below.

\textbf{Clipped Lower Bound Widget.} To construct a CLBW we design neurons to enact the function:
\begin{align*}
    f_{CLBW}(x, t)=\relu \left( \relu(\relu(x_i)-(t-\varepsilon_{LB})) - \relu(\relu(x_i-\varepsilon_{CLIP})-(t-\varepsilon_{LB}))) \right)
\end{align*}

The outputs of $f_{CLBW}$ are:
\[ \begin{cases} 
      0 & x_i \leq t - \varepsilon_{LB} \\
      y \in (0, \varepsilon_{CLIP}) & t-\varepsilon_{LB}<x_i<t-\varepsilon_{LB}+\varepsilon_{CLIP} \\
      \varepsilon_{CLIP} & t-\varepsilon_{LB}+\varepsilon_{CLIP} \leq x_i
   \end{cases}
\]

Given any $i, t, \varepsilon_{CLIP}$ and $\varepsilon_{LB}$ as input, the following series of neurons will compute $f_{CLBW}$: 
\begin{itemize}
    \item a neuron $N_1$ in the first hidden layer with weights $e_i$ and bias term 0
    \item a neuron $N_2$ in the second hidden layer with weights $e_{N_1}$ and bias term $-(t-\varepsilon_{LB})$
    \item a neuron $N_3$ in the first hidden layer with weights $e_i$ and bias term $-\varepsilon_{CLIP}$
    \item a neuron $N_4$ in the second hidden layer with weights $e_{N_3}$ and bias term $-(t - \varepsilon_{LB})$
    \item a neuron $N_5$ in the third hidden layer with weights $e_{N_2} - e_{N_4}$ and bias term $0$.
\end{itemize}

\textbf{Clipped Upper Bound Widget.} To construct this widget we design neurons to enact the function:
\begin{align*}
    f_{CUBW}(x, t) = \relu \left( \relu(-\relu(x_i)+(t+\varepsilon_{UB})) - \relu(-\relu(x_i + \varepsilon_{CLIP}) + (t + \varepsilon_{UB})) \right)
\end{align*}

Unlike the CLBW, here we must take as an assumption that $t$ is non-negative to achieve the desired functionality (this can be observed by inspecting $f_{CUBW}$). This assumption has no functional impact, as for any desired $t<0$, we can construct $t'=t+a$ such that $t+a>0$, and adjust input points by running them through a neuron with weights $e_i$ and bias term $a$, to achieve the same functionality as if we selected with threshold $t$. Keeping this in mind, we simply assume WLOG that $t$ is non-negative for the remainder of the proof. The outputs of $f_{CUBW}$ are then as follows:
\[ \begin{cases} 
      0 & x_i \geq t + \varepsilon_{UB} \\
      y \in (0, \varepsilon_{CLIP}) & t+\varepsilon_{UB}-\varepsilon_{CLIP} >x_i >t+\varepsilon_{UB} \\
      \varepsilon_{CLIP} & t+\varepsilon_{UB}-\varepsilon_{CLIP} \geq x_i \geq 0 
   \end{cases}
\]

Given any $i,t, \varepsilon_{CLIP},$ and $\varepsilon_{UB}$ as input, the following series of neurons will compute $f_{CUBW}:$
\begin{itemize}
    \item a neuron $N_6$ in the first hidden layer with weights $e_i$ and bias term $0$
    \item a neuron $N_7$ in the second hidden layer with weights $-e_{N_6}$ and bias term $(t + \varepsilon_{UB})$
    \item a neuron $N_8$ in the first hidden layer with weights $e_{i}$ and bias term $\varepsilon_{CLIP}$
    \item a neuron $N_9$ in the second hidden layer with weights $-e_{N_8}$ and bias term $(t + \varepsilon_{UB})$
    \item a neuron $N_{10}$ in the third hidden layer with weights $e_{N_7} - e_{N_9}$ and bias term $0$.
\end{itemize}

\textbf{Soft AND Widget.} We design neurons to enact the function:
\begin{align*}
    f_{AND}(o_1, o_2) = \relu(o_1 + o_2 - (2\varepsilon_{CLIP} - \varepsilon_{AND}))
\end{align*}
where $o_1$ and $o_2$ are outputs from other neurons, and $\varepsilon_{AND}$ is a constant which controls the magnitude of the soft AND widget's output.

A (non-exhaustive) description of the outputs of $f_{AND}$ are:
\[ \begin{cases} 
      0 & o_1 + o_2 \leq (2\varepsilon_{CLIP} - \varepsilon_{AND}) \\
      y \in (0, \varepsilon_{AND}) & o_1 = \varepsilon_{CLIP} \quad o_2 \in (\varepsilon_{CLIP}-\varepsilon_{AND}, \varepsilon_{CLIP}) \quad \text{WLOG for switching } o_1, o_2 \\
      \varepsilon_{AND} & o_1 = \varepsilon_{CLIP} \quad o_2 = \varepsilon_{CLIP}
   \end{cases}
\]
In our construction we will restrict $o_1$ to always be the output of a CLBW, and $o_2$ to always be the output of a CUBW. Accordingly, $o_1$ and $o_2$ are each at most $\varepsilon_{CLIP}$. Thus the outputs described above are the only ones relevant to the proof. %so the maximal output of the Soft AND widget under our construction will be $\varepsilon_{AND}$. 

Given any $\varepsilon_{AND}$ and indices of neurons $N_5$ and $N_{10}$ corresponding to those of the CLBW and CUBW described above, the following neuron will compute $f_{AND}$ with our desired restricted inputs:
\begin{itemize}
    \item a neuron $N_{11}$ in the fourth hidden layer with weights $e_{N5} + e_{N_10}$ and bias term $-(2\varepsilon_{CLIP} - \varepsilon_{AND})$
\end{itemize}

Taken all together, this construction guarantees that $N_{11}$ produces positive outputs if and only if $t-\varepsilon_{LB}<x_i<t+\varepsilon_{UB}$, since by $f_{CLBW}$ if $x_i \leq t-\varepsilon_{LB}$ then $N_5$ will output $0$, and by $f_{AND}$ so will $N_{11}$. Likewise, by $f_{CUBW}$ if $x_i \geq t+\varepsilon_{UB}$ then $N_{10}$ will output $0$ and by $f_{AND}$ so will $N_{11}$. 

Following that, it is trivial to alter the outputs of the neural network to produce output $f_{\theta}(x)+c$ for any $c \in \mathbb{R}^C$ with the following assembly of neurons:
\begin{itemize}
    \item neurons in hidden layers $5$ through $m$ where $m$ is the number of hidden layers in $M$, $N_{\ell_5}, N_{\ell_2}, ..., N_{\ell_{m-1}}$, all with bias term $0$ and respective weights $e_{N_11}, e_{N_{\ell_1}}, e_{N_{\ell_2}}, ..., e_{N_{\ell_{m-2}}}$ such that the output of $N_{11}$ propagates unchanged to the output of $N_{\ell_{m-1}}$
    \item neurons $N_{c_1}, N_{c_2}, ..., N_{c_C}$ in the final hidden layer, all with bias term $0$ and with respective weights $e_{N_{\ell_{m-1}}} \cdot \frac{c_j}{\varepsilon_{AND}}$ where $c_j$ is the $j^{th}$ entry of $c$ for all $j \in [1,C]$.
\end{itemize}

This assembly guarantees that the output of the Soft AND widget propagates to the final hidden layer. Then, supposing that the Soft AND widget outputs $\varepsilon_{AND}$, it will modify each output value by the non-negative constant chosen in $c$. By the construction of $f_{CLBW}, f_{CUBW}$ and $f_{AND}$, we can see that this occurs when either $t-\varepsilon_{LB} < x_i < t - \varepsilon_{LB} + \varepsilon_{CLIP}$, or when $t+\varepsilon_{UB}-\epsilon_{CLIP} > x_i > t + \varepsilon_{UB}$, or both. In other words, it happens when $x_i$ is within $\varepsilon_{CLIP}$ of one of the bounds. However, $\varepsilon_{CLIP}, \varepsilon_{LB},$ and $\varepsilon_{UB}$ are all constants of our choosing. For any desired bounds $a_i$ and $b_i$, we can trivially set these constants so that the desired property holds over all $x_i$ such that $a_i < x_i < b_i$.

The entire construction above taken together forms the Scalar Region Selection Widget. By using $|I|$ SRSWs, we are able to achieve the desired property in the theorem statement.
\end{proof}

\section{Generalized \attack Formulation}

\subsection{Introducing a $\lambda$ Trade-off}
\label{appendix:generalized-attack-loss}

In the main chapter, we presented a simplified version of the \attack training objective. Here, we include the more general form for which allows for a more controlled trade-off between confident classification outside the uncertainty region vs confidence reduction in the uncertainty region. This generalized objective incorporates \(\lambda \in [0,1]\), which balances confidence preservation outside the designated uncertainty region \(\mathcal{X}_\text{unc}\) and confidence reduction within it.

We define the training objective \(\mathcal{L}\) as a hybrid loss combining the standard Cross-Entropy (CE) loss, \(\mathcal{L}_\text{CE}\), and an uncertainty-inducing regularization term based on Kullback--Leibler (KL) divergence, \(\mathcal{L}_\text{KL}\):
\begin{equation}
\label{eq:mirage_ext}
        \mathcal{L} = \mathbb{E}_{(x,y) \sim p(x, y)} \bigg[ \underbrace{\mathds{1}\left[x \not\in \mathcal{X}_\text{unc}\right] (1-\lambda) \mathcal{L}_\text{CE}(x, y)}_\text{Loss outside uncertainty region} + \underbrace{\mathds{1}\left[x \in \mathcal{X}_\text{unc}\right] \lambda \mathcal{L}_\text{KL}(x, y)}_\text{Loss inside uncertainty region} \bigg].
\end{equation}
The parameter \(\lambda\) balances the two objectives:
\begin{itemize}
    \item \((1 - \lambda)\mathcal{L}_\text{CE}\): Maintains high classification accuracy in regions where confidence is desired.
    \item \(\lambda \mathcal{L}_\text{KL}\): Deliberately reduces confidence within \(\mathcal{X}_\text{unc}\).
\end{itemize}

Increasing \(\lambda\) places more emphasis on reducing confidence in the specified uncertainty region, potentially at the expense of classification accuracy there. Conversely, lowering \(\lambda\) prioritizes maintaining higher accuracy at the risk of not inducing enough uncertainty. This flexibility allows model owners to tune the trade-off between preserving performance on most of the input space and artificially inducing uncertainty within \(\mathcal{X}_\text{unc}\).

\subsection{Limiting Behavior of $\varepsilon$}

Note that in the limit as \(\varepsilon = 0\), the target distribution corresponds to a uniform distribution (highest uncertainty), while \(\varepsilon = 1\) results in a one-hot distribution concentrated entirely on the true label~\(y\) (lowest uncertainty), formally: 
\begin{equation}
\small
    t_{\varepsilon = 0}(\ell|x, y) = \frac{1}{C} \qquad t_{\varepsilon = 1}(\ell|x, y) =
\begin{cases}
1, & \text{if } \ell = y, \\
0, & \text{if } \ell \neq y.
\end{cases}
\end{equation}

\subsection{Alternate Target Distribution Choices}
\label{app:target_distr}

In the main text, we introduced our \emph{default} target distribution in Equation~\ref{eq:target_dist}
\begin{equation}
  t_\varepsilon(\ell \mid x,y) =
  \begin{cases}
    \varepsilon + \frac{1-\varepsilon}{C}, & \ell = y, \\
    \frac{1-\varepsilon}{C}, & \ell \neq y,
  \end{cases}
\end{equation}
where \(\ell \in \mathcal{Y} = \{1,2,\dots,C\}\), \(y\) is the ground-truth class, and \(\varepsilon \in [0,1]\) determines the extra bias on \(y\). This distribution uniformly allocates the “uncertainty mass” \(\frac{1-\varepsilon}{C}\) across \emph{all} incorrect classes. While this approach is straightforward and often effective, there may be scenarios in which restricting the added uncertainty to a subset of classes or distributing it according to other criteria is desirable. Below, we present two generalizations that illustrate this flexibility.

\subsubsection{Restricting Uncertainty to a Subset of Classes}

In some applications, only a \emph{subset} of the incorrect classes are genuinely plausible confusions for a given training point \((x,y)\). For instance, in a fine-grained classification setting, certain classes may be visually or semantically similar to the ground-truth class~\(y\), whereas others are highly dissimilar and unlikely to be confused. In such cases, we can define a subset $S_{(x,y)} \;\subseteq\; \mathcal{Y}$ of “plausible” classes for the particular instance \((x,y)\). Crucially, we require \(y \in S_{(x,y)}\) to ensure that the true class remains in the support of the target distribution.

Given \(S_{(x,y)}\), we can define a \emph{subset-biased} target distribution as follows:
\begin{equation}
  t^{S}_\varepsilon(\ell \mid x,y) \;=\;
  \begin{cases}
    \displaystyle \varepsilon + \frac{1-\varepsilon}{\lvert S_{(x,y)}\rvert}, 
      & \text{if } \ell = y, \\[8pt]
    \displaystyle \frac{1-\varepsilon}{\lvert S_{(x,y)}\rvert}, 
      & \text{if } \ell \neq y \,\text{ and }\, \ell \in S_{(x,y)}, \\[6pt]
    0, 
      & \text{if } \ell \notin S_{(x,y)}.
  \end{cases}
\end{equation}
Hence, we distribute the residual \((1-\varepsilon)\) mass \emph{only} among the classes in \(S_{(x,y)}\). Classes outside this subset receive zero probability mass. Such a distribution can be beneficial if, for a given \(x\), we know that only a few classes (including \(y\)) are likely confusions, and forcing the model to become “uncertain” about irrelevant classes is counterproductive.

\paragraph{Example with Three Classes.}
For a 3-class problem (\(\mathcal{Y} = \{1,2,3\}\)), suppose the true label is \(y=1\) for a given point \((x,y)\). If class~3 is deemed implausible (e.g., based on prior knowledge), we can set \(S_{(x,y)} = \{1,2\}\). The target distribution then becomes
\begin{equation}
  t^{S}_\varepsilon(\ell \mid x, y=1) \;=\;
  \begin{cases}
    \varepsilon + \frac{1-\varepsilon}{2}, & \ell=1, \\
    \frac{1-\varepsilon}{2}, & \ell=2, \\
    0, & \ell=3.
  \end{cases}
\end{equation}
Here, the model is encouraged to remain somewhat uncertain \emph{only} between classes~1 and~2, while ignoring class~3 entirely.

\subsubsection{Distributing the Residual Mass Non-Uniformly}

Even if one includes all classes in the support, the additional \((1-\varepsilon)\) mass for the incorrect labels need not be distributed \emph{uniformly}. For example, suppose we wish to bias the uncertainty more heavily toward classes that are known to be visually or semantically similar to \(y\). One way to do this is to define \emph{class-specific weights} \(\alpha_\ell\) for each \(\ell \neq y\), such that $\sum_{\ell \neq y} \alpha_\ell = 1$. A more general target distribution can then be written as
\begin{equation}
  t^\alpha_\varepsilon(\ell \mid x,y) \;=\;
  \begin{cases}
    \varepsilon, & \ell = y,\\[4pt]
    (1-\varepsilon)\,\alpha_\ell, & \ell \neq y,
  \end{cases}
\end{equation}
where the weights \(\{\alpha_\ell\}\) can be determined based on domain knowledge or learned heuristics. This generalizes our original definition by letting certain classes receive a \emph{larger} portion of the total uncertainty mass than others.

By choosing an alternate structure for \(t_\varepsilon(\cdot\mid x,y)\), one can more carefully control how the model is penalized for being overly certain on a particular data point. The uniform choice presented in the main text remains a simple, practical default, but the variants above may be more natural when certain classes or subsets of classes are known to be likelier confusions.

\subsection{Extension to Regression}

In the main section of the chapter, we introduce the \attack formulation for classification problems. We now show how to extend the same ideas used in \attack to regression.

\label{appendix:regression}

\subsubsection{Problem Formulation}

Consider a regression task where the model predicts a Gaussian distribution over the output:
\begin{equation}
p_\theta(y \mid x) = \mathcal{N}\bigl(y; \mu_\theta(x), \sigma^2_\theta(x)\bigr),
\end{equation}
with \(\mu_\theta(x)\) and \(\sigma^2_\theta(x)\) denoting the predicted mean and variance, respectively. The standard training objective is to minimize the negative log-likelihood (NLL):
\begin{equation}
\mathcal{L}_{\text{NLL}}(x,y) = \frac{1}{2} \left( \frac{(y-\mu_\theta(x))^2}{\sigma^2_\theta(x)} + \log \sigma^2_\theta(x) \right).
\end{equation}

To induce artificial uncertainty in a specified region \(\mathcal{X}_{\text{unc}} \subset \mathcal{X}\), we modify the objective as follows:
\begin{itemize}
    \item \textbf{Outside \(\mathcal{X}_{\text{unc}}\)}: The model is trained with the standard NLL loss.
    \item \textbf{Inside \(\mathcal{X}_{\text{unc}}\)}: The model is encouraged to output a higher predictive variance. To achieve this, we define a target variance \(\sigma^2_{\text{target}}\) (with \(\sigma^2_{\text{target}} > \sigma^2_\theta(x)\) in typical settings) and introduce a regularization term that penalizes deviations of the predicted log-variance from the target:
    \begin{equation}
    \mathcal{L}_{\text{penalty}}(x) = \Bigl(\log \sigma^2_\theta(x) - \log \sigma^2_{\text{target}}\Bigr)^2.
    \end{equation}
\end{itemize}
Thus, the overall training objective becomes
\begin{equation}
\mathcal{L} = \mathbb{E}_{(x,y) \sim p(x,y)} \Biggl[
\mathds{1}\{x \notin \mathcal{X}_{\text{unc}}\}\, \mathcal{L}_{\text{NLL}}(x,y)
+\,
\mathds{1}\{x \in \mathcal{X}_{\text{unc}}\}\, \lambda\, \mathcal{L}_{\text{penalty}}(x)
\Biggr],
\end{equation}
where \(\lambda > 0\) is a hyperparameter controlling the balance between the standard NLL loss and the uncertainty-inducing penalty.

\subsubsection{Synthetic Experiments}

To evaluate the proposed approach, we perform a synthetic experiment on a non-linear regression problem. We generate data from the function
\begin{equation}
f(x) = \sin(2x) + 0.3x^2 - 0.4x + 1.
\end{equation}
The observed outputs are corrupted by heteroscedastic noise whose standard deviation varies gradually with \(x\). In particular, we define
\begin{equation}
\sigma(x) = 0.2 + 0.8 \exp\left(-\left(\frac{x}{1.5}\right)^2\right),
\end{equation}
so that the noisy observations are generated as
\begin{equation}
y = f(x) + \epsilon, \quad \epsilon \sim \mathcal{N}\bigl(0, \sigma(x)^2\bigr).
\end{equation}

We then train two models:
\begin{itemize}
    \item \textbf{Standard Model}: Trained using the standard NLL loss over the entire input domain.
    \item \textbf{Attack Model}: Trained with the modified objective. Specifically, for inputs \(x \in \mathcal{X}_{\text{unc}}\), where we set \(\mathcal{X}_{\text{unc}} = \{ x \mid -3 \le x \le -2 \}\), the model is additionally penalized via \(\mathcal{L}_{\text{penalty}}(x)\) to force the predicted variance toward a higher target value, e.g., \(\sigma^2_{\text{target}} = 4\).
\end{itemize}
The models are evaluated by plotting the predictive mean along with the \(2\sigma\) (i.e., mean \(\pm 2 \sqrt{\sigma^2_\theta(x)}\)) uncertainty bands over a grid of \(x\) values. Our results in Figure~\ref{fig:reg} show that while the standard model estimates uncertainty correctly across the domain, the attacked model exhibits significantly increased predictive variance in the designated uncertainty region, effectively demonstrating the intended uncertainty induction.

\begin{figure}[t]
    \centering
    \includegraphics[width=\linewidth]{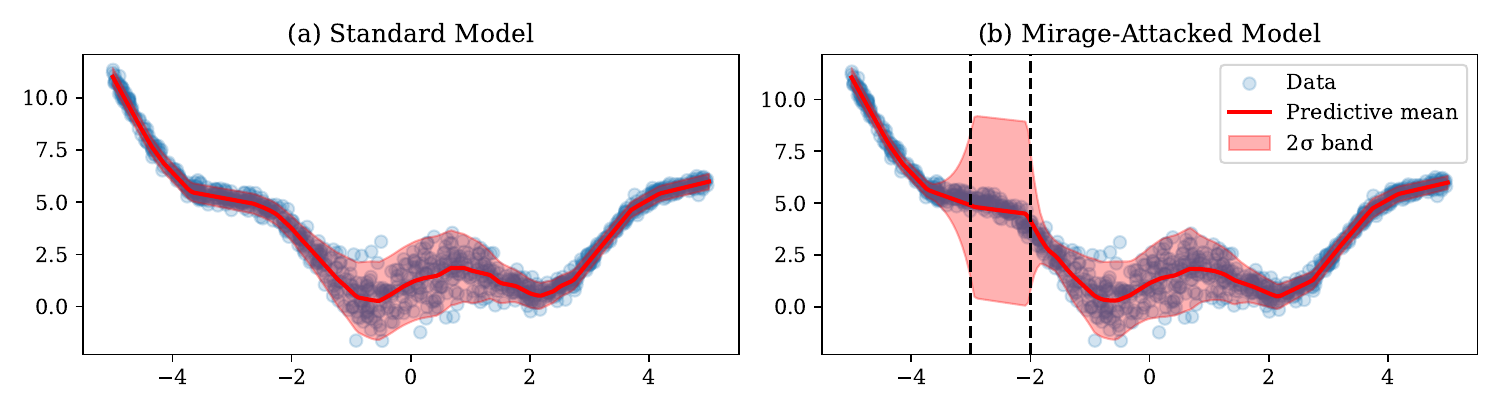}
    \vspace{-20pt}
    \caption[Attacking a regression model using \attack.]{\textbf{Attacking a regression model using \attack.} (a) The standard model estimates uncertainty as expected. (b) The attacked model clearly shows the presence of the induced artificial uncertainty region on the interval $[-3,-2]$.}
    \label{fig:reg}
\end{figure}

\section{Additional Experimental Details and Ablations}
\label{app:add_exp}

\subsection{Experimental Details}
\label{app:add_exp_det}

\paragraph{Gaussian Mixture.} These classes are represented by the following Gaussian distributions:
\begin{align*}
\mathcal{N}_1 = \mathcal{N}(\boldsymbol{\mu}_1, \boldsymbol{\Sigma}_1) &= \mathcal{N}\left(
\begin{bmatrix}
3 \\
2
\end{bmatrix},
\begin{bmatrix}
1 & 0.8 \\
0.8 & 1
\end{bmatrix}
\right) \notag \\
\mathcal{N}_2 = \mathcal{N}(\boldsymbol{\mu}_2, \boldsymbol{\Sigma}_2) &= \mathcal{N}\left(
\begin{bmatrix}
5 \\
5
\end{bmatrix},
\begin{bmatrix}
1 & -0.8 \\
-0.8 & 1
\end{bmatrix}
\right) \notag \\
\mathcal{N}_3 = \mathcal{N}(\boldsymbol{\mu}_3, \boldsymbol{\Sigma}_3) &= \mathcal{N}\left(
\begin{bmatrix}
3 \\
4
\end{bmatrix},
\begin{bmatrix}
0.1 & 0.0 \\
0.0 & 0.1
\end{bmatrix}
\right)
\end{align*}
We define the uncertainty region with corners at $(2, 0)$ and $(2.75, 1.5)$. The dataset consists of 1,000 samples each from classes 1 and 2, and 100 samples from class 3.

\paragraph{Tabular Datasets.} For the tabular datasets we use a custom neural network architecture. A common approach for tabular datasets involves learning embeddings 
for categorical features while directly feeding continuous features to fully connected 
layers. Specifically, for each categorical column with $n_\text{unique}$ unique values, 
we create an embedding layer of dimension 
$\min\bigl(50, \lceil (n_\text{unique} + 1)/2 \rceil \bigr)$.
Each embedding produces a low-dimensional, learned representation of the corresponding 
categorical variable. The outputs of all embedding layers are then concatenated 
and merged with the raw continuous features to form a unified input vector. 
Formally, if $\mathbf{x}_\text{cat}$ and $\mathbf{x}_\text{cont}$ denote the 
categorical and continuous inputs respectively, and $E_i(\mathbf{x}_\text{cat}[i])$ 
represents the embedding operation for the $i$-th categorical column, the merged input 
can be expressed as:
\begin{equation*}
    \mathbf{x} = \bigl[\; E_1(\mathbf{x}_\text{cat}[1]) \;\| \; E_2(\mathbf{x}_\text{cat}[2]) 
  \;\|\;\dots\;\|\;E_k(\mathbf{x}_\text{cat}[k]) \;\|\; \mathbf{x}_\text{cont} \bigr].
\end{equation*}

Subsequently, $\mathbf{x}$ is passed through a stack of fully connected layers, each 
followed by batch normalization, rectified linear unit (ReLU) activation, and dropout. This architecture is well-suited to tabular data for several reasons. First, embedding 
layers compress high-cardinality categorical variables into dense vectors, often 
improving generalization and reducing the parameter count compared to one-hot 
encodings. Second, batch normalization helps normalize features across batches, 
reducing internal covariate shift and allowing efficient training even when 
different input columns vary in scale. Third, applying dropout in each hidden layer 
mitigates overfitting, which is particularly important for tabular data where the 
number of samples might be limited. Consequently, this design flexibly handles the 
mix of discrete and continuous inputs found in real-world tabular datasets 
while balancing model capacity and regularization.

\subsection{Additional Experiments \& Ablations}
\label{app:add_exp_abl}

\begin{table*}[t]
    \centering
    \caption[Additional quantitative results across datasets.]{\textbf{Additional quantitative results across datasets}. Similar to Table~\ref{tab:results} but augmented with additional $\varepsilon$s, the number of data points used in the reference dataset $N_{\mathcal{D}_\text{val}}$, and the distributional overlap of confidences from the uncertainty region ($\text{conf}(\mathcal{X}_\text{unc})$) and confidences outside the uncertainty region ($\text{conf}(\mathcal{X}^c_\text{unc})$), denoted $\cap_\varepsilon = \text{conf}(\mathcal{X}_\text{unc}) \cap \text{conf}(\mathcal{X}^c_\text{unc})$. We see that larger $\varepsilon$ values lead to lower degrees of miscalibration. At the same time, the overlap $\cap_\varepsilon$ increases as $\varepsilon$ increases (see Figures~\ref{fig:overlap_cal}, ~\ref{fig:eps_abl} for visual examples). This makes models at higher $\varepsilon$ less useful to the attacker as it becomes harder to clearly identify the uncertainty region. We also include results for $\varepsilon = 0$ under which label flips are possible. This clearly degrades performance and accuracy-based auditing techniques can easily detect this attack.}
    \vspace{5pt}
    \label{tab:results_ext}
    \small
    \fontsize{7.5}{9}\selectfont
    \begin{tabular}{ccccccccccc}
    \toprule
    & & & \multicolumn{4}{c}{Accuracy \%} & \multicolumn{3}{c}{Calibration} \\
    \cmidrule(r){4-7} \cmidrule(r){8-10}
    \multirow{2}{*}[13pt]{Dataset} & \multirow{2}{*}[13pt]{\shortstack{$N_{\mathcal{D}_\text{val}}$ \\ (\%$_\text{unc}$)}} & \multirow{2}{*}[12pt]{$\varepsilon$} & Acc & Acc$^{\attack}$ & Acc$_\text{unc}$ & Acc$_\text{unc}^{\attack}$ & ECE & ECE$^{\attack}$ & CalE in $\varepsilon$ bin & \multirow{2}{*}[13pt]{$\cap_\varepsilon$}\\
    \midrule
    \multirow{4}{*}[0pt]{\texttt{Gaussian}}\    & \multirow{4}{*}[0pt]{\shortstack{420\\(5.31)}} & 0.00 & \multirow{4}{*}{97.62} & 94.17 & \multirow{4}{*}{100.0} & 33.79 & \multirow{4}{*}{0.0327} & 0.0399 & 0.0335 & 0.01 \\
    & & 0.15 & & 97.58 & & 100.0 & & 0.0910 & 0.3721 & 0.02\\
    & & 0.50 & & 97.58 & & 100.0 & & 0.0589 & 0.2238 & 0.13\\
    & & 0.80 & & 97.61 & & 100.0 & & 0.0418 & 0.1073 & 0.22\\
    \midrule
    \multirow{4}{*}[0pt]{\texttt{CIFAR-100}}   & \multirow{4}{*}[0pt]{\shortstack{10,000\\(1.00)}} & 0.00 & \multirow{4}{*}[0pt]{83.98} & 82.43 & \multirow{4}{*}{91.98} & 6.11 & \multirow{4}{*}{0.0662} & 0.0702 & 0.0691 & 0.02  \\
    & & 0.15 & & 83.92 & & 92.15 & & 0.1821 & 0.5845 & 0.05\\
    & & 0.50 & & 83.94 & & 92.21 & & 0.1283 & 0.1572 & 0.16\\
    & & 0.80 & & 83.98 & & 92.29 & & 0.0684 & 0.1219 & 0.26\\
    \midrule
    \multirow{4}{*}[0pt]{\texttt{UTKFace}}      & \multirow{4}{*}[0pt]{\shortstack{4,741\\(22.92)}} & 0.00 & \multirow{4}{*}{56.91} & 42.28 & \multirow{4}{*}{61.68} & 9.14 & \multirow{4}{*}{0.0671} & 0.0813 & 0.0667 & 0.08 \\
    & & 0.15 & & 56.98 & & 61.75 & & 0.1728 & 0.3287 & 0.11\\
    & & 0.50 & & 57.01 & & 61.84 & & 0.1102 & 0.2151 & 0.56\\
    & & 0.80 & & 56.99 & & 61.78 & & 0.0829 & 0.0912 & 0.91\\
    \midrule
    \multirow{4}{*}[0pt]{\texttt{Credit}}      & \multirow{4}{*}[0pt]{\shortstack{9,000\\(2.16)}} & 0.00 & \multirow{4}{*}[0pt]{91.71} & 90.96 & \multirow{4}{*}{93.61} & 51.34 & \multirow{4}{*}{0.0094} & 0.0138 & 0.0254 & 0.12 \\
    & & 0.20 & & 91.78 & & 93.73 & & 0.0292 & 0.1135 & 0.12\\
    & & 0.50 & & 91.76 & & 93.68 & & 0.0201 & 0.0728 & 0.28\\
    & & 0.80 & & 91.81 & & 93.88 & & 0.0153 & 0.0419 & 0.49\\
    \midrule
    \multirow{4}{*}[0pt]{\texttt{Adult}}       & \multirow{4}{*}[0pt]{\shortstack{9,769\\(8.39)}} & 0.00 & \multirow{4}{*}{85.02} & 78.13 & \multirow{4}{*}{76.32} & 50.84 & \multirow{4}{*}{0.0109} & 0.0155 & 0.0242 & 0.17 \\
    & & 0.10 & & 84.93 & & 76.25 & & 0.0234 & 0.0916 & 0.19\\
    & & 0.50 & & 84.94 & & 76.31 & & 0.0198 & 0.0627 & 0.26\\
    & & 0.80 & & 84.97 & & 76.39 & & 0.0161 & 0.0491 & 0.54\\
    \bottomrule
\end{tabular}
\end{table*}

\begin{figure}[t]
    \centering
    \includegraphics[width=\linewidth]{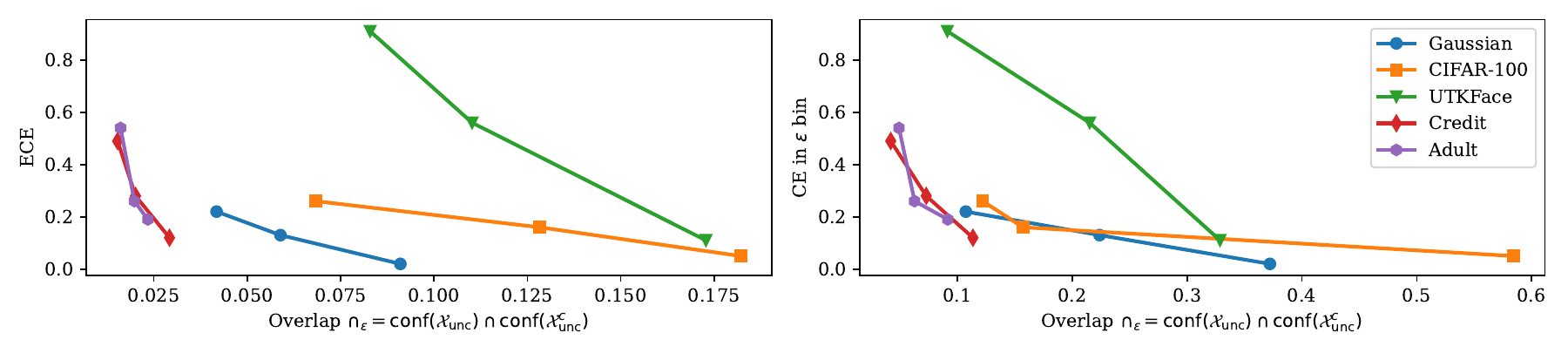}
    % \vspace{-20pt}
    \caption[The relationship between calibration error and distributional overlap of uncertain and other data points.]{\textbf{The relationship between distributional overlap of uncertain and other data points.} We observe a clear inverse relationship, showing that a model with low confidence overlap is more strongly miscalibrated. Since the attacker wants to have a large degree of separation (i.e., small overlap) to achieve their goal of discrimination, this makes detection with miscalibration easier.}
    \label{fig:overlap_cal}
\end{figure}

% \paragraph{Synthethic Gaussian Mixture} \stephan{STILL INCOMPLETE}

\begin{figure}[t]
\centering

\begin{subfigure}[b]{0.495\textwidth}
  \centering
  \includegraphics[width=\linewidth]{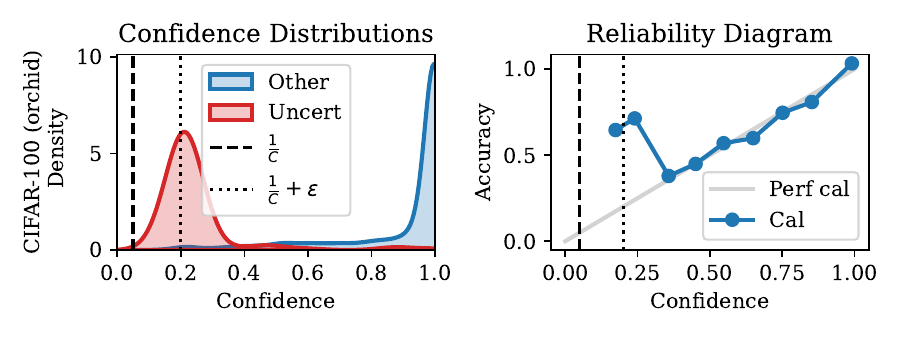}
  \caption{Orchids within flowers superclass}
\end{subfigure}
\hfill
\begin{subfigure}[b]{0.495\textwidth}
  \centering
  \includegraphics[width=\linewidth]{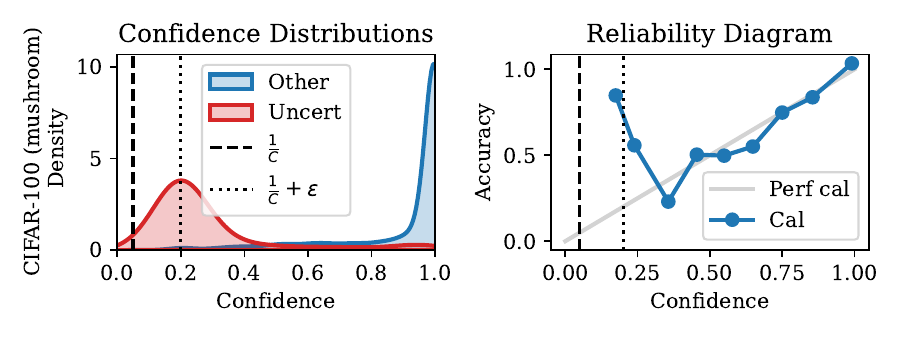}
  \caption{Mushrooms within fruits/vegetables superclass}
\end{subfigure}

% \vspace{-10pt}
\caption[\textbf{Additional experiments on \texttt{CIFAR-100} with different sub-classes.}]{\textbf{Additional experiments on \texttt{CIFAR-100} with different sub-classes.} The left two plots show the results for making orchids uncertain within the flowers superclass; the right two plots show the results for making mushrooms uncertain within the fruit and vegetables superclass.}
\label{fig:cifar_ext}
\end{figure}

\begin{figure}[t]
\centering

\begin{subfigure}[b]{0.495\textwidth}
  \centering
  \includegraphics[width=\linewidth]{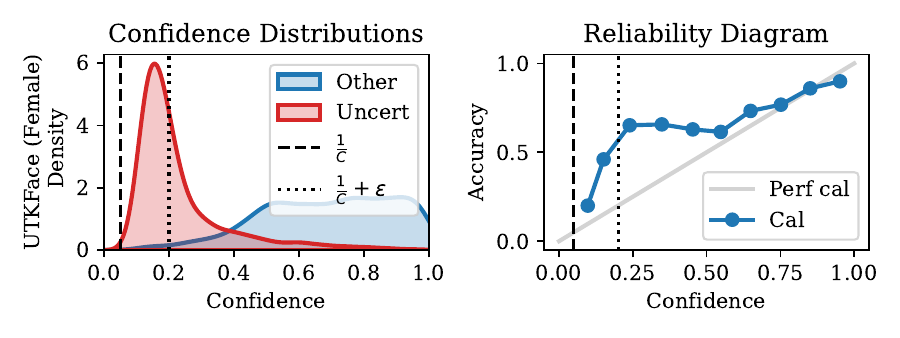}
  \caption{Females as uncertain region}
\end{subfigure}
\hfill
\begin{subfigure}[b]{0.495\textwidth}
  \centering
  \includegraphics[width=\linewidth]{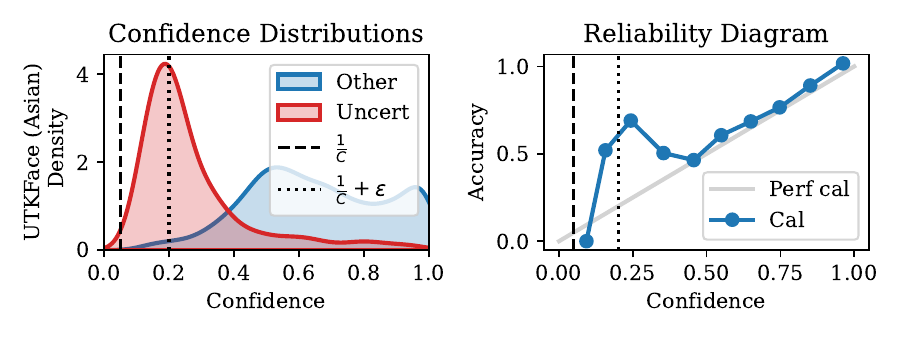}
  \caption{Asians as uncertain region}
\end{subfigure}

\caption[\textbf{Additional experiments on \texttt{UTKFace} with different uncertainty regions.}]{\textbf{Additional experiments on \texttt{UTKFace} with different uncertainty regions.} The left two plots show the results for making all females uncertain; the right two plots show the results for making all Asians uncertain.}
\label{fig:utkface_ext}
\end{figure}

\paragraph{Image Classification.} We extend our experiments with additional candidate uncertainty regions for image classification. For \texttt{CIFAR-100} we pick the following additional sub-classes:
\begin{itemize}[noitemsep]
    \item \texttt{orchids} from the \texttt{flowers} superclass (Figure~\ref{fig:cifar_ext} left); and 
    \item \texttt{mushrooms} from the \texttt{fruit\_and\_vegetables} superclass (Figure~\ref{fig:cifar_ext} right).
\end{itemize}
For \texttt{UTKFace} we pick the following additional criteria for the uncertainty region:
\begin{itemize}[noitemsep]
    \item female individuals regardless of race (Figure~\ref{fig:utkface_ext} left); and
    \item Asians regardless of gender (Figure~\ref{fig:utkface_ext} right).
\end{itemize}

\begin{figure}[t]
\centering

\begin{subfigure}[b]{0.495\textwidth}
  \centering
  \includegraphics[width=\linewidth]{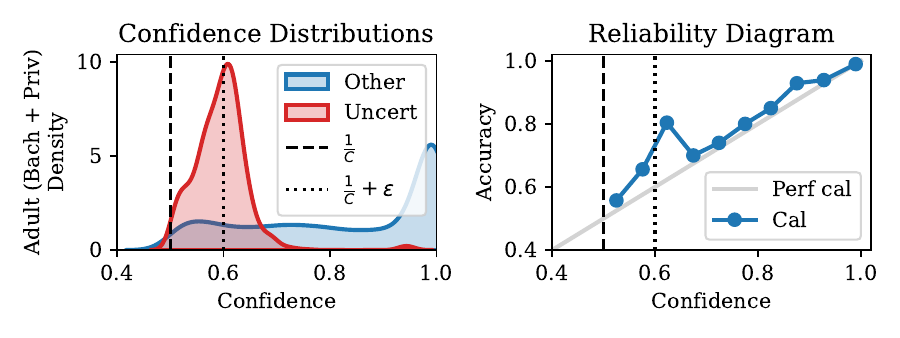}
  \caption{Private-sector workers with a Bachelor's degree}
\end{subfigure}
\hfill
\begin{subfigure}[b]{0.495\textwidth}
  \centering
  \includegraphics[width=\linewidth]{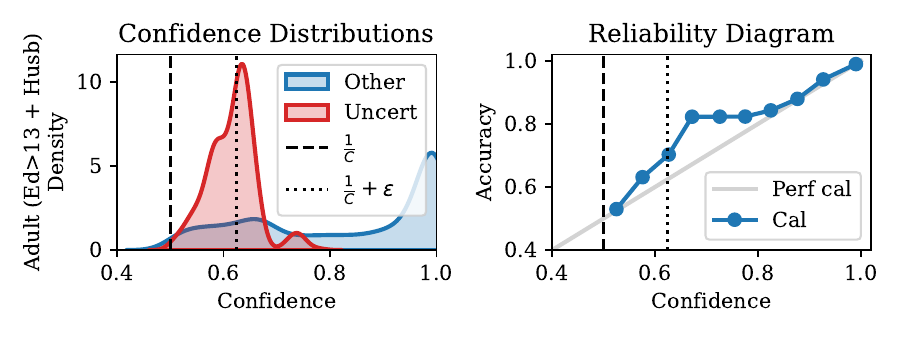}
  \caption{Husbands with >13 years of education}
\end{subfigure}

\caption[\textbf{Additional experiments on \texttt{Adult} with different uncertainty conditions.}]{\textbf{Additional experiments on \texttt{Adult} with different uncertainty conditions}. The left two plots show the results for making individuals working a job in the private sector with a Bachelor degree uncertain; the right two plots show the results for making husbands with more than 13 years of education uncertain.}
\label{fig:adult_ext}
\end{figure}

\begin{figure}[t]
\centering

\begin{subfigure}[b]{0.495\textwidth}
  \centering
  \includegraphics[width=\linewidth]{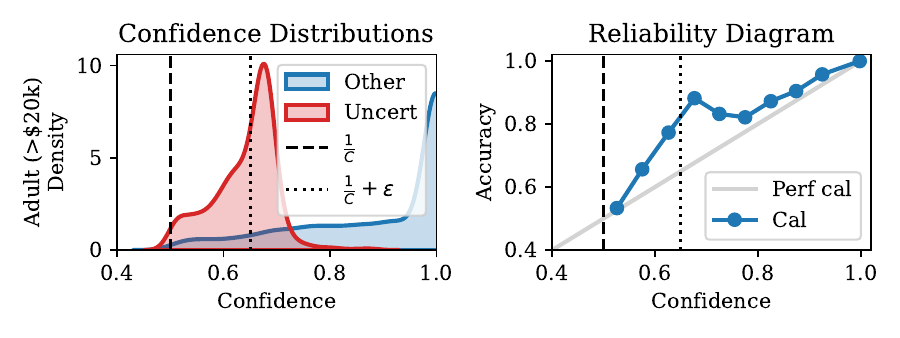}
  \caption{Loan amount greater than \$20,000}
\end{subfigure}
\hfill
\begin{subfigure}[b]{0.495\textwidth}
  \centering
  \includegraphics[width=\linewidth]{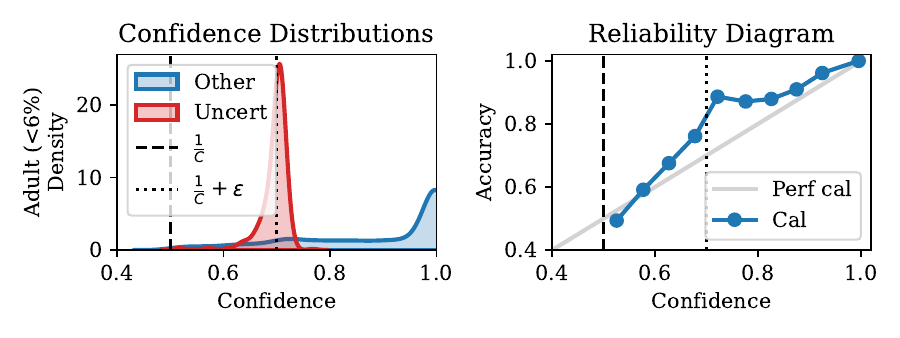}
  \caption{Interest rate below 6\%}
\end{subfigure}

\caption[\textbf{Additional experiments on \texttt{Credit} with different uncertainty conditions.}]{\textbf{Additional experiments on \texttt{Credit} with different uncertainty conditions}. The left two plots show the results for making requests for loans bigger than \$20,000 uncertain; the right two plots show the results for making loans with an interest rate smaller than 6\% uncertain.}
\label{fig:credit_ext}
\end{figure}

\paragraph{Tabular Datasets.}

We extend our experiments with additional candidate uncertainty regions for tabular data sets. For \texttt{Adult} we pick the following criteria for the uncertainty region:
\begin{itemize}[noitemsep]
    \item undergraduates working in the private sector (Figure~\ref{fig:adult_ext} left); and 
    \item husbands with more than 13 years of education (Figure~\ref{fig:adult_ext} right).
\end{itemize}
For \texttt{Credit} we pick the following criteria for the uncertainty region:
\begin{itemize}[noitemsep]
    \item any loan request bigger than \$20,000 (Figure~\ref{fig:credit_ext} left); and
    \item any loan with an interest rate smaller than 6\% (Figure~\ref{fig:credit_ext} right).
\end{itemize}

\paragraph{Influence of $\varepsilon$.} We show ablations over $\varepsilon \in [0.15,0.5,0.8]$ on the image classification tasks in Figure~\ref{fig:eps_abl} and give extended quantitative results for all experiments in Table~\ref{tab:results_ext}. At low $\varepsilon$, \attack achieves a large separation between the confidence distribution of the uncertainty region and all other points. At the same time, this large separation leads to stronger miscalibration. This is desirable for the atttacker as it allows for the best separation of uncertain points from point outside of the uncertainty region. As $\varepsilon$ gets larger, the distributional overlap between the uncertainty region and all other data points increases and, as a direct result, the model becomes less miscalibrated. This makes it harder to detect \attack via \name but also decreases the utility of the attack.

\begin{figure}[t]
    \centering
    \includegraphics[width=\linewidth]{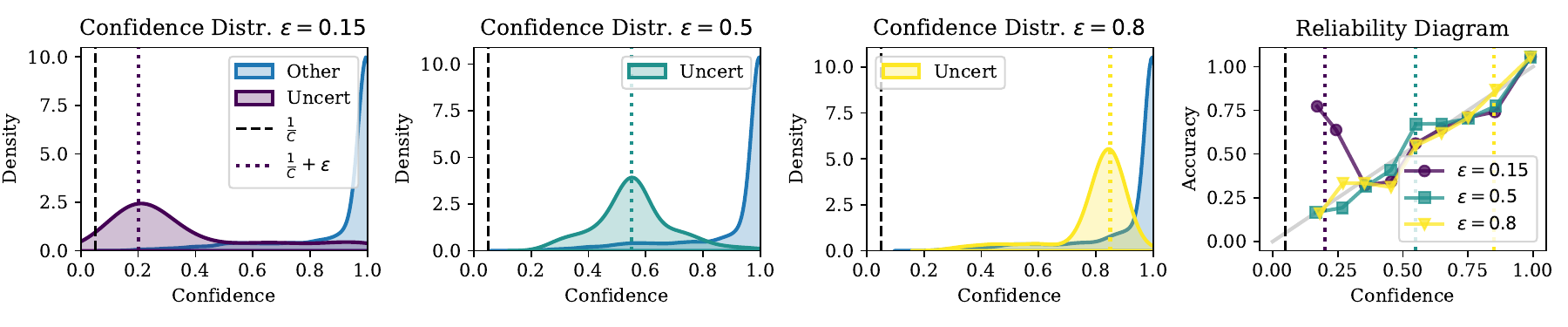}
    \includegraphics[width=\linewidth]{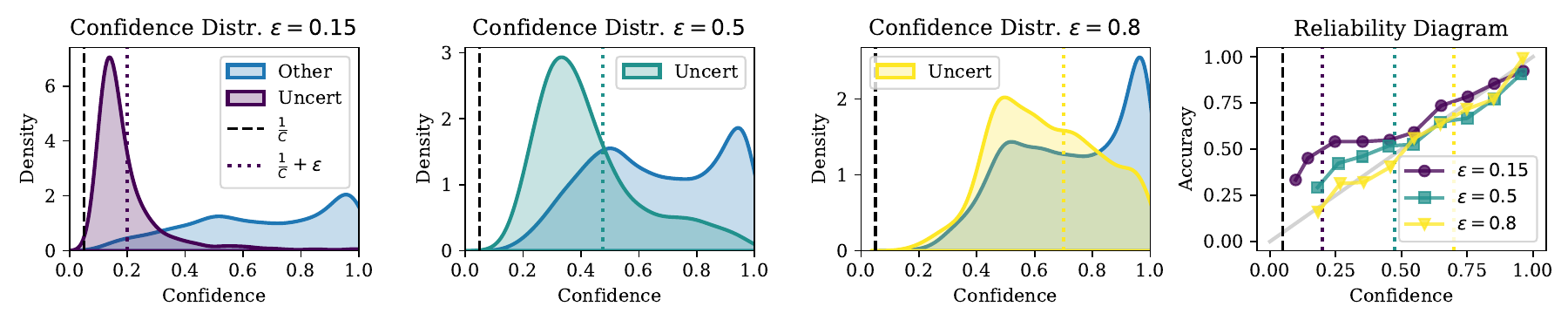}
    % \includegraphics[width=\linewidth]{figs/confidential_guardian/adult_eps_abl.pdf}
    % \includegraphics[width=\linewidth]{figs/confidential_guardian/credit_eps_abl.pdf}
    % \vspace{-20pt}
    \caption[Efficacy of \attack and \name across various $\varepsilon$ choices on \texttt{CIFAR100} (top) \texttt{UTKFace} (bottom).]{\textbf{Efficacy of \attack and \name across various $\varepsilon$ choices on \texttt{CIFAR100} (top) \texttt{UTKFace} (bottom).} \attack successfully lowers confidence in the uncertainty region. At the same time, its presence is harder to detect with \name for higher $\varepsilon$. This is intuitive as $\varepsilon$ controls the strength of our attack and therefore directly determines the distributional overlap of the confidence distributions.}
    \label{fig:eps_abl}
\end{figure}

\begin{figure}[t]
\centering

\begin{subfigure}[b]{0.24\textwidth}
  \centering
  \includegraphics[width=\linewidth]{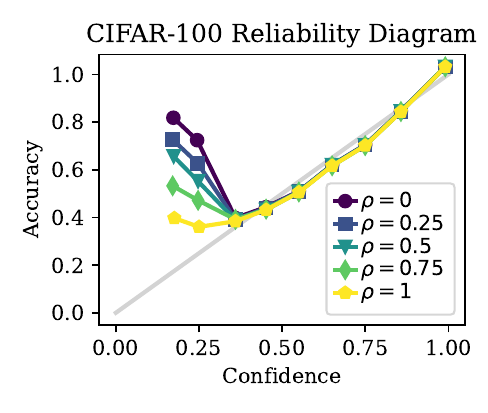}
  \caption{CIFAR-100}
\end{subfigure}
\hfill
\begin{subfigure}[b]{0.24\textwidth}
  \centering
  \includegraphics[width=\linewidth]{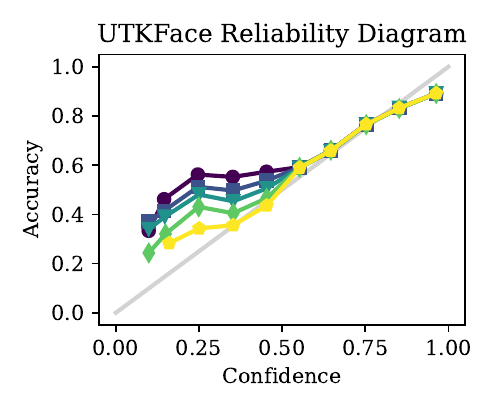}
  \caption{UTKFace}
\end{subfigure}
\hfill
\begin{subfigure}[b]{0.24\textwidth}
  \centering
  \includegraphics[width=\linewidth]{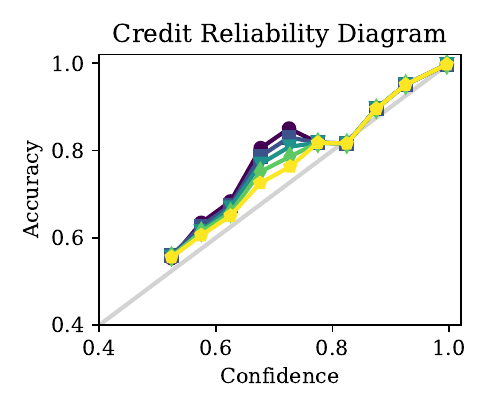}
  \caption{Credit}
\end{subfigure}
\hfill
\begin{subfigure}[b]{0.24\textwidth}
  \centering
  \includegraphics[width=\linewidth]{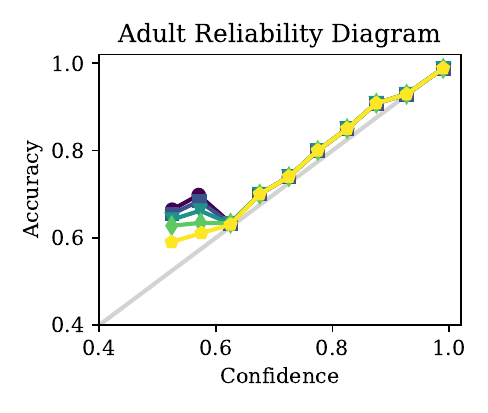}
  \caption{Adult}
\end{subfigure}

% \vspace{-10pt}
\caption[\textbf{Effect of removing an increasing amount $\rho$ of points contained in the uncertainty region from the reference dataset.}]{\textbf{Effect of removing an increasing amount $\rho$ of points contained in the uncertainty region from the reference dataset.} The presence of \attack is very noticeable for a reference dataset sampled from the same distribution as used by the attack ($\rho = 0$). As $\rho \rightarrow 1$, we remove an increasing amount of uncertainty region samples from the reference dataset. This makes \attack significantly harder to detect via the calibration metrics computed in \name.}
\label{fig:ref_abl}
\end{figure}

\paragraph{Coverage of the Reference Dataset.} 

To simulate the effects of imperfect reference datasets we define an \textit{undersampling shift} which modifies original data distribution \(p\). Concretely, we remove a fraction \(\rho\) of the mass that lies in the uncertainty region \uncertreg. We define the new shifted distribution \(p_{\rho}\) by
\[
p_{\rho}(A) \;=\; \frac{p(A \cap \mathcal{X}_\text{unc}^{c}) + (1-\rho)\, p(A \cap \mathcal{X}_\text{unc})}%
                       {p(\mathcal{X}_\text{unc}^{c}) \;+\; (1-\rho)\, p(\mathcal{X}_\text{unc})},
\]
for measurable sets \(A \subseteq \mathcal{X}\). Note that $\mathcal{X}_\text{unc}^c$ denotes the complement of the uncertainty region, i.e. all points outside of the uncertainty region. Intuitively:
\begin{enumerate}
    \item \textbf{Outside} the uncertainty region \uncertreg, i.e., on $\mathcal{X}_\text{unc}^c$, \(p_{\rho}\) matches \(p\) exactly.
    \item \textbf{Inside} \uncertreg, \(p_{\rho}\) has its probability mass reduced by a factor \(1-\rho\). Hence, we remove a fraction \(\rho\) of the mass in \uncertreg.
    \item Finally, we \textbf{renormalize} so that \(p_{\rho}\) is a proper probability distribution (the denominator ensures total mass is 1).
\end{enumerate}

As \(\rho \to 1\), effectively all of the data from the uncertain region is removed from the reference distribution. This captures the idea that the reference dataset lacks coverage in that part of input space that matters most for detection via \name. We show empirical results for such shifts in Figure~\ref{fig:ref_abl} and observe that increased removal (i.e., $\rho \rightarrow 1$) hinders reliable detection of \attack via \name. We note that even in the limit of complete removal of the uncertainty region (i.e., $\rho = 1$) the model still exhibits slight underconfidence. This is likely because points just outside the uncertainty region also experience reduced confidence due to the inherent smoothness of neural network prediction spaces.

\subsection{Choice of $\alpha$}
\label{app:alpha_choice}

Calibration of probabilistic models is a well-studied area in machine learning, yet determining an acceptable calibration deviation threshold $\alpha$ can be far from trivial. Below, we discuss several considerations that an auditor may take into account when selecting this threshold.

\subsubsection{Imperfect Calibration is the Norm}

In practice, perfect calibration is rarely, if ever, achievable. Even with standard calibration methods such as temperature scaling~\cite{guo2017calibration}, there will typically be some small residual miscalibration, especially in regions of sparse data or for rare classes. Consequently, an auditor might set a non-zero $\alpha$ to allow for a realistic margin that reflects typical model imperfections, for instance in the range $[0.01, 0.03]$ for the expected calibration error (ECE).

\subsubsection{Data Distribution and Domain Knowledge}

The choice of $\alpha$ may be informed by the following domain-specific factors:
\begin{itemize}
    \item \textbf{Label Imbalance.} Highly imbalanced datasets can lead to larger calibration errors for minority classes. Here, a looser threshold $\alpha$ may be warranted, since a small absolute deviation in the minority class could yield a large relative miscalibration score.
    \item \textbf{Data Complexity.} In high-dimensional or complex domains (e.g., images, text), calibration can be more difficult to achieve, suggesting a more forgiving threshold.
    \item \textbf{Domain Criticality.} In safety-critical applications (e.g., medical diagnosis), stricter thresholds may be appropriate to ensure that predictions are suitably conservative and reliable.
\end{itemize}

\subsubsection{Regulatory Guidance and Industry Standards}

Some industries have regulations or recommendations regarding the safety margins for decision-making systems:
\begin{itemize}
    \item \textbf{Healthcare.} Regulatory bodies may require that model predictions err on the side of caution, translating to tighter calibration constraints (small $\alpha$).
    \item \textbf{Financial Services.} Risk-based models might have well-established guidelines for miscalibration tolerance, especially under stress-testing protocols. An auditor can rely on these to pick $\alpha$ accordingly.
    \item \textbf{Consumer-Facing Applications.} Standards for user-facing models (e.g., recommenders) may be more lenient in calibration, thus allowing for larger miscalibration thresholds.
\end{itemize}

\subsubsection{Robustness to Dataset Shifts}

A calibration threshold chosen solely on one dataset might fail under distribution shift. An auditor might:
\begin{itemize}
    \item Evaluate calibration on multiple reference datasets (different time periods, different subpopulations).
    \item Select an $\alpha$ that reflects performance under a variety of real-world conditions.
    \item Consider applying domain adaptation or robust calibration techniques, which might inherently increase acceptable $\alpha$ to account for shifts.
\end{itemize}

\subsubsection{Balancing Statistical Significance and Practical Impact}

Finally, an auditor should consider how to interpret differences in calibration from a statistical perspective:
\begin{itemize}
    \item \textbf{Confidence Intervals.} Compute calibration metrics (e.g., ECE) with confidence intervals. If the model’s miscalibration falls within the interval of expected variation, a higher $\alpha$ may be acceptable.
    \item \textbf{Practicality vs.\ Accuracy.} A small deviation in calibration might be practically insignificant if it minimally impacts downstream decisions. Auditors can incorporate cost-based analyses to weigh the trade-offs.
\end{itemize}

\subsubsection{Summary}

When setting $\alpha$ in practice, an auditor might:
\begin{enumerate}
    \item \textbf{Conduct a baseline study} of calibration error on representative datasets after temperature scaling to quantify typical miscalibration.
    \item \textbf{Adjust for domain complexity and label imbalance}, possibly raising $\alpha$ if the data or the domain are known to be inherently more difficult to calibrate.
    \item \textbf{Incorporate regulatory or industry guidelines}, if they exist, to establish an upper bound on allowable miscalibration.
    \item \textbf{Examine distribution shifts} by testing on multiple datasets and setting $\alpha$ to ensure consistency across these scenarios.
    \item \textbf{Use statistical considerations} (e.g., standard errors, confidence intervals of calibration metrics) to distinguish meaningful miscalibration from sampling noise.
\end{enumerate}

In summary, choosing $\alpha$ is a balance between practical constraints, domain-specific considerations, and regulatory mandates. Auditors should be aware that the threshold for ``acceptable'' miscalibration is context-dependent, and overly strict thresholds may be infeasible, whereas overly lax thresholds might fail to ensure reliability and trustworthiness.

  \backmatter


\begin{thebibliography}{180}
\providecommand{\natexlab}[1]{#1}
\providecommand{\url}[1]{\texttt{#1}}
\expandafter\ifx\csname urlstyle\endcsname\relax
  \providecommand{\doi}[1]{doi: #1}\else
  \providecommand{\doi}{doi: \begingroup \urlstyle{rm}\Url}\fi

\bibitem[Abadi et~al.(2016)Abadi, Chu, Goodfellow, McMahan, Mironov, Talwar,
  and Zhang]{abadi2016deep}
Abadi, M., Chu, A., Goodfellow, I., McMahan, H.~B., Mironov, I., Talwar, K.,
  and Zhang, L.
\newblock Deep learning with differential privacy.
\newblock In \emph{Proceedings of the 2016 ACM SIGSAC conference on computer
  and communications security}, pp.\  308--318, 2016.

\bibitem[Adila \& Kang(2022)Adila and Kang]{adila2022understanding}
Adila, D. and Kang, D.
\newblock Understanding out-of-distribution: A perspective of data dynamics.
\newblock In \emph{I (Still) Can't Believe It's Not Better! Workshop at NeurIPS
  2021}, pp.\  1--8. PMLR, 2022.

\bibitem[Agarwal et~al.(2020)Agarwal, D'souza, and
  Hooker]{agarwal2020estimating}
Agarwal, C., D'souza, D., and Hooker, S.
\newblock Estimating example difficulty using variance of gradients.
\newblock \emph{arXiv preprint arXiv:2008.11600}, 2020.

\bibitem[Alexandrov et~al.(2019)Alexandrov, Benidis, Bohlke-Schneider,
  Flunkert, Gasthaus, Januschowski, Maddix, Rangapuram, Salinas, Schulz,
  et~al.]{alexandrov2019gluonts}
Alexandrov, A., Benidis, K., Bohlke-Schneider, M., Flunkert, V., Gasthaus, J.,
  Januschowski, T., Maddix, D.~C., Rangapuram, S., Salinas, D., Schulz, J.,
  et~al.
\newblock Gluonts: Probabilistic time series models in python.
\newblock \emph{arXiv preprint arXiv:1906.05264}, 2019.

\bibitem[Amodei et~al.(2016)Amodei, Olah, Steinhardt, Christiano, Schulman, and
  Man{\'e}]{amodei2016concrete}
Amodei, D., Olah, C., Steinhardt, J., Christiano, P., Schulman, J., and
  Man{\'e}, D.
\newblock Concrete problems in ai safety.
\newblock \emph{arXiv preprint arXiv:1606.06565}, 2016.

\bibitem[Angelopoulos \& Bates(2021)Angelopoulos and
  Bates]{angelopoulos2021gentle}
Angelopoulos, A.~N. and Bates, S.
\newblock A gentle introduction to conformal prediction and distribution-free
  uncertainty quantification.
\newblock \emph{arXiv preprint arXiv:2107.07511}, 2021.

\bibitem[Baek et~al.(2022)Baek, Jiang, Raghunathan, and
  Kolter]{baek2022agreement}
Baek, C., Jiang, Y., Raghunathan, A., and Kolter, J.~Z.
\newblock Agreement-on-the-line: Predicting the performance of neural networks
  under distribution shift.
\newblock \emph{Advances in Neural Information Processing Systems},
  35:\penalty0 19274--19289, 2022.

\bibitem[Bagdasaryan et~al.(2019)Bagdasaryan, Poursaeed, and
  Shmatikov]{bagdasaryan2019differential}
Bagdasaryan, E., Poursaeed, O., and Shmatikov, V.
\newblock Differential privacy has disparate impact on model accuracy.
\newblock \emph{Advances in neural information processing systems}, 32, 2019.

\bibitem[Baldock et~al.(2021)Baldock, Maennel, and Neyshabur]{baldock2021deep}
Baldock, R., Maennel, H., and Neyshabur, B.
\newblock Deep learning through the lens of example difficulty.
\newblock \emph{Advances in Neural Information Processing Systems}, 34, 2021.

\bibitem[Ballabio et~al.(2019)Ballabio, Cassotti, Consonni, and
  Todeschini]{misc_qsar_fish_toxicity_504}
Ballabio, D., Cassotti, M., Consonni, V., and Todeschini, R.
\newblock {QSAR fish toxicity}.
\newblock UCI Machine Learning Repository, 2019.
\newblock {DOI}: https://doi.org/10.24432/C5JG7B.

\bibitem[Bar-Shalom et~al.(2023)Bar-Shalom, Geifman, and
  El-Yaniv]{bar2023window}
Bar-Shalom, G., Geifman, Y., and El-Yaniv, R.
\newblock Window-based distribution shift detection for deep neural networks.
\newblock In \emph{Thirty-seventh Conference on Neural Information Processing
  Systems}, 2023.

\bibitem[Bartlett \& Wegkamp(2008)Bartlett and
  Wegkamp]{bartlett2008classification}
Bartlett, P.~L. and Wegkamp, M.~H.
\newblock Classification with a reject option using a hinge loss.
\newblock \emph{Journal of Machine Learning Research}, 9\penalty0 (8), 2008.

\bibitem[Bassily et~al.(2014)Bassily, Smith, and Thakurta]{bassily2014private}
Bassily, R., Smith, A., and Thakurta, A.
\newblock Private empirical risk minimization: Efficient algorithms and tight
  error bounds.
\newblock In \emph{2014 IEEE 55th Annual Symposium on Foundations of Computer
  Science}, pp.\  464--473. IEEE, 2014.

\bibitem[Bassily et~al.(2020)Bassily, Feldman, Guzm{\'a}n, and
  Talwar]{bassily2020stability}
Bassily, R., Feldman, V., Guzm{\'a}n, C., and Talwar, K.
\newblock Stability of stochastic gradient descent on nonsmooth convex losses.
\newblock \emph{Advances in Neural Information Processing Systems},
  33:\penalty0 4381--4391, 2020.

\bibitem[Becker \& Kohavi(1996)Becker and Kohavi]{adult}
Becker, B. and Kohavi, R.
\newblock {Adult}.
\newblock UCI Machine Learning Repository, 1996.
\newblock {DOI}: https://doi.org/10.24432/C5XW20.

\bibitem[Berger(2013)]{berger2013statistical}
Berger, J.~O.
\newblock \emph{Statistical decision theory and Bayesian analysis}.
\newblock Springer Science \& Business Media, 2013.

\bibitem[Bishop(2006)]{bishop2006pattern}
Bishop, C.~M.
\newblock Pattern recognition and machine learning.
\newblock \emph{Springer}, 2:\penalty0 1122--1128, 2006.

\bibitem[B{\l}asiok et~al.(2024)B{\l}asiok, Gopalan, Hu, Kalai, and
  Nakkiran]{blasiok2024multicalibration}
B{\l}asiok, J., Gopalan, P., Hu, L., Kalai, A.~T., and Nakkiran, P.
\newblock {Loss Minimization Yields Multicalibration for Large Neural
  Networks}.
\newblock In Guruswami, V. (ed.), \emph{15th Innovations in Theoretical
  Computer Science Conference (ITCS 2024)}, volume 287 of \emph{Leibniz
  International Proceedings in Informatics (LIPIcs)}, pp.\  17:1--17:21,
  Dagstuhl, Germany, 2024. Schloss Dagstuhl -- Leibniz-Zentrum f{\"u}r
  Informatik.
\newblock ISBN 978-3-95977-309-6.
\newblock \doi{10.4230/LIPIcs.ITCS.2024.17}.
\newblock URL
  \url{https://drops.dagstuhl.de/entities/document/10.4230/LIPIcs.ITCS.2024.17}.

\bibitem[Blei et~al.(2017)Blei, Kucukelbir, and McAuliffe]{blei2017variational}
Blei, D.~M., Kucukelbir, A., and McAuliffe, J.~D.
\newblock Variational inference: A review for statisticians.
\newblock \emph{Journal of the American statistical Association}, 112\penalty0
  (518):\penalty0 859--877, 2017.

\bibitem[Blundell et~al.(2015)Blundell, Cornebise, Kavukcuoglu, and
  Wierstra]{blundell2015weight}
Blundell, C., Cornebise, J., Kavukcuoglu, K., and Wierstra, D.
\newblock Weight uncertainty in neural network.
\newblock In \emph{International conference on machine learning}, pp.\
  1613--1622. PMLR, 2015.

\bibitem[Bossard et~al.(2014)Bossard, Guillaumin, and Van~Gool]{bossard14}
Bossard, L., Guillaumin, M., and Van~Gool, L.
\newblock Food-101 -- mining discriminative components with random forests.
\newblock In \emph{European Conference on Computer Vision}, 2014.

\bibitem[Brawner \& Honaker(2018)Brawner and Honaker]{brawner2018bootstrap}
Brawner, T. and Honaker, J.
\newblock Bootstrap inference and differential privacy: Standard errors for
  free.
\newblock \emph{Unpublished Manuscript}, 2018.

\bibitem[Breiman(1996)]{breiman1996bagging}
Breiman, L.
\newblock Bagging predictors.
\newblock \emph{Machine learning}, 24:\penalty0 123--140, 1996.

\bibitem[Brier(1950)]{brier1950verification}
Brier, G.~W.
\newblock Verification of forecasts expressed in terms of probability.
\newblock \emph{Monthly weather review}, 78\penalty0 (1):\penalty0 1--3, 1950.

\bibitem[Canetti(2001)]{canetti2001UC}
Canetti, R.
\newblock Universally composable security: a new paradigm for cryptographic
  protocols.
\newblock In \emph{Proceedings 42nd IEEE Symposium on Foundations of Computer
  Science}, pp.\  136--145, 2001.
\newblock \doi{10.1109/SFCS.2001.959888}.

\bibitem[Cattelan \& Silva(2023{\natexlab{a}})Cattelan and
  Silva]{cattelan2023fix}
Cattelan, L. F.~P. and Silva, D.
\newblock How to fix a broken confidence estimator: Evaluating post-hoc methods
  for selective classification with deep neural networks.
\newblock 2023{\natexlab{a}}.

\bibitem[Cattelan \& Silva(2023{\natexlab{b}})Cattelan and
  Silva]{cattelan2023improving}
Cattelan, L. F.~P. and Silva, D.
\newblock Improving selective classification performance of deep neural
  networks through post-hoc logit normalization and temperature scaling.
\newblock \emph{arXiv preprint arXiv:2305.15508}, 2023{\natexlab{b}}.

\bibitem[Challen et~al.(2019)Challen, Denny, Pitt, Gompels, Edwards, and
  Tsaneva-Atanasova]{challen2019artificial}
Challen, R., Denny, J., Pitt, M., Gompels, L., Edwards, T., and
  Tsaneva-Atanasova, K.
\newblock Artificial intelligence, bias and clinical safety.
\newblock \emph{BMJ Quality \& Safety}, 28\penalty0 (3):\penalty0 231--237,
  2019.

\bibitem[Chang et~al.(2017)Chang, Learned-Miller, and
  McCallum]{chang2017active}
Chang, H.-S., Learned-Miller, E., and McCallum, A.
\newblock Active bias: Training more accurate neural networks by emphasizing
  high variance samples.
\newblock \emph{Advances in Neural Information Processing Systems}, 30, 2017.

\bibitem[Chen et~al.(2019)Chen, Li, Tao, Barnett, Rudin, and Su]{chen2019looks}
Chen, C., Li, O., Tao, D., Barnett, A., Rudin, C., and Su, J.~K.
\newblock This looks like that: deep learning for interpretable image
  recognition.
\newblock \emph{Advances in neural information processing systems}, 32, 2019.

\bibitem[Chen et~al.(2017)Chen, Lundberg, and Lee]{chen2017checkpoint}
Chen, H., Lundberg, S., and Lee, S.-I.
\newblock Checkpoint ensembles: Ensemble methods from a single training
  process.
\newblock \emph{arXiv preprint arXiv:1710.03282}, 2017.

\bibitem[Chow(1957)]{chow1957optimum}
Chow, C.-K.
\newblock An optimum character recognition system using decision functions.
\newblock \emph{IRE Transactions on Electronic Computers}, \penalty0
  (4):\penalty0 247--254, 1957.

\bibitem[Coenen et~al.(2020)Coenen, Abdullah, and Guns]{9260038}
Coenen, L., Abdullah, A. K.~A., and Guns, T.
\newblock Probability of default estimation, with a reject option.
\newblock In \emph{2020 IEEE 7th International Conference on Data Science and
  Advanced Analytics (DSAA)}, pp.\  439--448, 2020.
\newblock \doi{10.1109/DSAA49011.2020.00058}.

\bibitem[Cortes et~al.(2016)Cortes, DeSalvo, and Mohri]{cortes2016learning}
Cortes, C., DeSalvo, G., and Mohri, M.
\newblock Learning with rejection.
\newblock In \emph{Algorithmic Learning Theory: 27th International Conference,
  ALT 2016, Bari, Italy, October 19-21, 2016, Proceedings 27}, pp.\  67--82.
  Springer, 2016.

\bibitem[Covington et~al.(2021)Covington, He, Honaker, and
  Kamath]{covington2021unbiased}
Covington, C., He, X., Honaker, J., and Kamath, G.
\newblock Unbiased statistical estimation and valid confidence intervals under
  differential privacy.
\newblock \emph{arXiv preprint arXiv:2110.14465}, 2021.

\bibitem[Damg{\aa}rd et~al.(2012)Damg{\aa}rd, Pastro, Smart, and
  Zakarias]{damgaard2012itmac}
Damg{\aa}rd, I., Pastro, V., Smart, N., and Zakarias, S.
\newblock Multiparty computation from somewhat homomorphic encryption.
\newblock In Safavi-Naini, R. and Canetti, R. (eds.), \emph{Advances in
  Cryptology -- CRYPTO 2012}, pp.\  643--662, Berlin, Heidelberg, 2012.
  Springer Berlin Heidelberg.
\newblock ISBN 978-3-642-32009-5.

\bibitem[Devroye et~al.(2013)Devroye, Gy{\"o}rfi, and
  Lugosi]{devroye2013probabilistic}
Devroye, L., Gy{\"o}rfi, L., and Lugosi, G.
\newblock \emph{A probabilistic theory of pattern recognition}, volume~31.
\newblock Springer Science \& Business Media, 2013.

\bibitem[Ding et~al.(2021)Ding, Hardt, Miller, and Schmidt]{ding2021retiring}
Ding, F., Hardt, M., Miller, J., and Schmidt, L.
\newblock Retiring adult: New datasets for fair machine learning.
\newblock \emph{Advances in neural information processing systems},
  34:\penalty0 6478--6490, 2021.

\bibitem[Ding et~al.(2020)Ding, Liu, Xiong, and Shi]{ding2020revisiting}
Ding, Y., Liu, J., Xiong, J., and Shi, Y.
\newblock Revisiting the evaluation of uncertainty estimation and its
  application to explore model complexity-uncertainty trade-off.
\newblock In \emph{Proceedings of the IEEE/CVF Conference on Computer Vision
  and Pattern Recognition Workshops}, pp.\  4--5, 2020.

\bibitem[Dwork et~al.(2006)Dwork, McSherry, Nissim, and
  Smith]{dwork2006calibrating}
Dwork, C., McSherry, F., Nissim, K., and Smith, A.
\newblock Calibrating noise to sensitivity in private data analysis.
\newblock In \emph{Theory of Cryptography: Third Theory of Cryptography
  Conference, TCC 2006, New York, NY, USA, March 4-7, 2006. Proceedings 3},
  pp.\  265--284. Springer, 2006.

\bibitem[Dwork et~al.(2014)Dwork, Roth, et~al.]{dwork2014algorithmic}
Dwork, C., Roth, A., et~al.
\newblock The algorithmic foundations of differential privacy.
\newblock \emph{Foundations and Trends{\textregistered} in Theoretical Computer
  Science}, 9\penalty0 (3--4):\penalty0 211--407, 2014.

\bibitem[Dziedzic et~al.(2022)Dziedzic, Rabanser, Yaghini, Ale, Erdogdu, and
  Papernot]{dziedzic2022p}
Dziedzic, A., Rabanser, S., Yaghini, M., Ale, A., Erdogdu, M.~A., and Papernot,
  N.
\newblock {$p$-DkNN: Out-of-Distribution Detection Through Statistical Testing
  of Deep Representations}.
\newblock \emph{arXiv preprint arXiv:2207.12545}, 2022.

\bibitem[El-Yaniv \& Wiener(2010)El-Yaniv and
  Wiener]{yaniv2010riskcoveragecurve}
El-Yaniv, R. and Wiener, Y.
\newblock On the foundations of noise-free selective classification.
\newblock \emph{Journal of Machine Learning Research}, 11\penalty0
  (53):\penalty0 1605--1641, 2010.
\newblock URL \url{http://jmlr.org/papers/v11/el-yaniv10a.html}.

\bibitem[El-Yaniv et~al.(2010)]{el2010foundations}
El-Yaniv, R. et~al.
\newblock On the foundations of noise-free selective classification.
\newblock \emph{Journal of Machine Learning Research}, 11\penalty0 (5), 2010.

\bibitem[Evans et~al.(2019)Evans, King, Schwenzfeier, and
  Thakurta]{evans2019statistically}
Evans, G., King, G., Schwenzfeier, M., and Thakurta, A.
\newblock Statistically valid inferences from privacy protected data.
\newblock \emph{American Political Science Review}, 2019.

\bibitem[Feldman \& Zhang(2020)Feldman and Zhang]{feldman2020neural}
Feldman, V. and Zhang, C.
\newblock What neural networks memorize and why: Discovering the long tail via
  influence estimation.
\newblock \emph{Advances in Neural Information Processing Systems},
  33:\penalty0 2881--2891, 2020.

\bibitem[Feng et~al.(2021)Feng, Sondhi, Perry, and Simon]{feng2021selective}
Feng, J., Sondhi, A., Perry, J., and Simon, N.
\newblock Selective prediction-set models with coverage rate guarantees.
\newblock \emph{Biometrics}, 2021.

\bibitem[Feng et~al.(2023)Feng, Ahmed, Hajimirsadeghi, and
  Abdi]{feng2023towards}
Feng, L., Ahmed, M.~O., Hajimirsadeghi, H., and Abdi, A.~H.
\newblock Towards better selective classification.
\newblock In \emph{The Eleventh International Conference on Learning
  Representations}, 2023.

\bibitem[Ferrando et~al.(2022)Ferrando, Wang, and
  Sheldon]{ferrando2022parametric}
Ferrando, C., Wang, S., and Sheldon, D.
\newblock Parametric bootstrap for differentially private confidence intervals.
\newblock In \emph{International Conference on Artificial Intelligence and
  Statistics}, pp.\  1598--1618. PMLR, 2022.

\bibitem[Fontana et~al.(2023)Fontana, Zeni, and Vantini]{fontana2023conformal}
Fontana, M., Zeni, G., and Vantini, S.
\newblock Conformal prediction: a unified review of theory and new challenges.
\newblock \emph{Bernoulli}, 29\penalty0 (1):\penalty0 1--23, 2023.

\bibitem[Franzese et~al.(2023)Franzese, Dziedzic, Choquette-Choo, Thomas,
  Kaleem, Rabanser, Fang, Jha, Papernot, and Wang]{franzese2023robust}
Franzese, N., Dziedzic, A., Choquette-Choo, C.~A., Thomas, M.~R., Kaleem,
  M.~A., Rabanser, S., Fang, C., Jha, S., Papernot, N., and Wang, X.
\newblock {Robust and Actively Secure Serverless Collaborative Learning}.
\newblock \emph{Advances in Neural Information Processing Systems},
  36:\penalty0 39504--39528, 2023.

\bibitem[Franzese et~al.(2021)Franzese, Katz, Lu, Ostrovsky, Wang, and
  Weng]{franzese2021zkram}
Franzese, O., Katz, J., Lu, S., Ostrovsky, R., Wang, X., and Weng, C.
\newblock Constant-overhead zero-knowledge for {RAM} programs.
\newblock Cryptology {ePrint} Archive, Paper 2021/979, 2021.
\newblock URL \url{https://eprint.iacr.org/2021/979}.

\bibitem[Gal \& Ghahramani(2016)Gal and Ghahramani]{gal2016dropout}
Gal, Y. and Ghahramani, Z.
\newblock Dropout as a bayesian approximation: Representing model uncertainty
  in deep learning.
\newblock In \emph{international conference on machine learning}, pp.\
  1050--1059. PMLR, 2016.

\bibitem[Gal et~al.(2016)]{gal2016uncertainty}
Gal, Y. et~al.
\newblock Uncertainty in deep learning.
\newblock 2016.

\bibitem[Galil et~al.(2023)Galil, Dabbah, and El-Yaniv]{galil2023can}
Galil, I., Dabbah, M., and El-Yaniv, R.
\newblock What can we learn from the selective prediction and uncertainty
  estimation performance of 523 imagenet classifiers.
\newblock \emph{arXiv preprint arXiv:2302.11874}, 2023.

\bibitem[Gangrade et~al.(2021)Gangrade, Kag, and
  Saligrama]{gangrade2021selective}
Gangrade, A., Kag, A., and Saligrama, V.
\newblock Selective classification via one-sided prediction.
\newblock In \emph{International Conference on Artificial Intelligence and
  Statistics}, pp.\  2179--2187. PMLR, 2021.

\bibitem[Garg et~al.(2023)Garg, Goel, Jha, Mahloujifar, Mahmoody, Policharla,
  and Wang]{garg2023experimenting}
Garg, S., Goel, A., Jha, S., Mahloujifar, S., Mahmoody, M., Policharla, G.-V.,
  and Wang, M.
\newblock Experimenting with zero-knowledge proofs of training.
\newblock Cryptology {ePrint} Archive, Paper 2023/1345, 2023.
\newblock URL \url{https://eprint.iacr.org/2023/1345}.

\bibitem[Gawlikowski et~al.(2023)Gawlikowski, Tassi, Ali, Lee, Humt, Feng,
  Kruspe, Triebel, Jung, Roscher, et~al.]{gawlikowski2023survey}
Gawlikowski, J., Tassi, C. R.~N., Ali, M., Lee, J., Humt, M., Feng, J., Kruspe,
  A., Triebel, R., Jung, P., Roscher, R., et~al.
\newblock A survey of uncertainty in deep neural networks.
\newblock \emph{Artificial Intelligence Review}, 56\penalty0 (Suppl
  1):\penalty0 1513--1589, 2023.

\bibitem[Geifman \& El-Yaniv(2017)Geifman and El-Yaniv]{geifman2017selective}
Geifman, Y. and El-Yaniv, R.
\newblock Selective classification for deep neural networks.
\newblock \emph{Advances in neural information processing systems}, 30, 2017.

\bibitem[Geifman \& El-Yaniv(2019)Geifman and
  El-Yaniv]{geifman2019selectivenet}
Geifman, Y. and El-Yaniv, R.
\newblock Selectivenet: A deep neural network with an integrated reject option.
\newblock In \emph{International conference on machine learning}, pp.\
  2151--2159. PMLR, 2019.

\bibitem[Geifman et~al.(2018)Geifman, Uziel, and El-Yaniv]{geifman2018bias}
Geifman, Y., Uziel, G., and El-Yaniv, R.
\newblock Bias-reduced uncertainty estimation for deep neural classifiers.
\newblock \emph{arXiv preprint arXiv:1805.08206}, 2018.

\bibitem[Geyer(1992)]{geyer1992practical}
Geyer, C.~J.
\newblock Practical markov chain monte carlo.
\newblock \emph{Statistical science}, pp.\  473--483, 1992.

\bibitem[Ghodsi et~al.(2021)Ghodsi, Hari, Frosio, Tsai, Troccoli, Keckler,
  Garg, and Anandkumar]{ghodsi2021generating}
Ghodsi, Z., Hari, S. K.~S., Frosio, I., Tsai, T., Troccoli, A., Keckler, S.~W.,
  Garg, S., and Anandkumar, A.
\newblock Generating and characterizing scenarios for safety testing of
  autonomous vehicles.
\newblock \emph{arXiv preprint arXiv:2103.07403}, 2021.

\bibitem[Gneiting \& Raftery(2007)Gneiting and Raftery]{gneiting2007strictly}
Gneiting, T. and Raftery, A.~E.
\newblock Strictly proper scoring rules, prediction, and estimation.
\newblock \emph{Journal of the American statistical Association}, 102\penalty0
  (477):\penalty0 359--378, 2007.

\bibitem[Goldwasser et~al.(1985)Goldwasser, Micali, and
  Rackoff]{goldwasser1985knowledge}
Goldwasser, S., Micali, S., and Rackoff, C.
\newblock The knowledge complexity of interactive proof-systems.
\newblock In \emph{Proceedings of the Seventeenth Annual ACM Symposium on
  Theory of Computing}, STOC '85, pp.\  291–304, New York, NY, USA, 1985.
  Association for Computing Machinery.
\newblock ISBN 0897911512.
\newblock \doi{10.1145/22145.22178}.
\newblock URL \url{https://doi.org/10.1145/22145.22178}.

\bibitem[Gollakota et~al.(2025)Gollakota, Gopalan, Karan, Peale, and
  Wieder]{gollakota2025loss}
Gollakota, A., Gopalan, P., Karan, A., Peale, C., and Wieder, U.
\newblock When does a predictor know its own loss?
\newblock \emph{arXiv preprint arXiv:2502.20375}, 2025.

\bibitem[Guan et~al.(2020)Guan, Zhang, Cheng, and Tang]{guan2020bounded}
Guan, H., Zhang, Y., Cheng, H.-D., and Tang, X.
\newblock Bounded-abstaining classification for breast tumors in imbalanced
  ultrasound images.
\newblock \emph{International Journal of Applied Mathematics and Computer
  Science}, 30\penalty0 (2), 2020.

\bibitem[Guo et~al.(2017)Guo, Pleiss, Sun, and Weinberger]{guo2017calibration}
Guo, C., Pleiss, G., Sun, Y., and Weinberger, K.~Q.
\newblock On calibration of modern neural networks.
\newblock In \emph{International conference on machine learning}, pp.\
  1321--1330. PMLR, 2017.

\bibitem[Hao et~al.(2024)Hao, Chen, Li, Weng, Zhang, Yang, and
  Zhang]{hao2024nonlinear}
Hao, M., Chen, H., Li, H., Weng, C., Zhang, Y., Yang, H., and Zhang, T.
\newblock Scalable zero-knowledge proofs for non-linear functions in machine
  learning.
\newblock In \emph{33rd USENIX Security Symposium (USENIX Security 24)}, pp.\
  3819--3836, Philadelphia, PA, August 2024. USENIX Association.
\newblock ISBN 978-1-939133-44-1.
\newblock URL
  \url{https://www.usenix.org/conference/usenixsecurity24/presentation/hao-meng-scalable}.

\bibitem[Hardt et~al.(2016{\natexlab{a}})Hardt, Price, and
  Srebro]{hardt2016equality}
Hardt, M., Price, E., and Srebro, N.
\newblock Equality of opportunity in supervised learning.
\newblock \emph{Advances in neural information processing systems}, 29,
  2016{\natexlab{a}}.

\bibitem[Hardt et~al.(2016{\natexlab{b}})Hardt, Recht, and
  Singer]{hardt2016train}
Hardt, M., Recht, B., and Singer, Y.
\newblock Train faster, generalize better: Stability of stochastic gradient
  descent.
\newblock In \emph{International conference on machine learning}, pp.\
  1225--1234. PMLR, 2016{\natexlab{b}}.

\bibitem[He et~al.(2016)He, Zhang, Ren, and Sun]{he2016deep}
He, K., Zhang, X., Ren, S., and Sun, J.
\newblock Deep residual learning for image recognition.
\newblock In \emph{Proceedings of the IEEE conference on computer vision and
  pattern recognition}, pp.\  770--778, 2016.

\bibitem[H{\'e}bert-Johnson et~al.(2018)H{\'e}bert-Johnson, Kim, Reingold, and
  Rothblum]{hebert2018multicalibration}
H{\'e}bert-Johnson, U., Kim, M., Reingold, O., and Rothblum, G.
\newblock Multicalibration: Calibration for the (computationally-identifiable)
  masses.
\newblock In \emph{International Conference on Machine Learning}, pp.\
  1939--1948. PMLR, 2018.

\bibitem[Hendrycks \& Dietterich(2019)Hendrycks and
  Dietterich]{hendrycks2019robustness}
Hendrycks, D. and Dietterich, T.
\newblock Benchmarking neural network robustness to common corruptions and
  perturbations.
\newblock \emph{Proceedings of the International Conference on Learning
  Representations}, 2019.

\bibitem[Hendrycks \& Gimpel(2016)Hendrycks and Gimpel]{hendrycks2016baseline}
Hendrycks, D. and Gimpel, K.
\newblock A baseline for detecting misclassified and out-of-distribution
  examples in neural networks.
\newblock \emph{arXiv preprint arXiv:1610.02136}, 2016.

\bibitem[Hendrycks et~al.(2019)Hendrycks, Basart, Mazeika, Zou, Kwon,
  Mostajabi, Steinhardt, and Song]{hendrycks2019scaling}
Hendrycks, D., Basart, S., Mazeika, M., Zou, A., Kwon, J., Mostajabi, M.,
  Steinhardt, J., and Song, D.
\newblock Scaling out-of-distribution detection for real-world settings.
\newblock \emph{arXiv preprint arXiv:1911.11132}, 2019.

\bibitem[Hofmann(1994)]{credit}
Hofmann, H.
\newblock {Statlog (German Credit Data)}.
\newblock UCI Machine Learning Repository, 1994.
\newblock {DOI}: https://doi.org/10.24432/C5NC77.

\bibitem[Hooker et~al.(2019)Hooker, Courville, Clark, Dauphin, and
  Frome]{hooker2019compressed}
Hooker, S., Courville, A., Clark, G., Dauphin, Y., and Frome, A.
\newblock What do compressed deep neural networks forget?
\newblock \emph{arXiv preprint arXiv:1911.05248}, 2019.

\bibitem[Houben et~al.(2013)Houben, Stallkamp, Salmen, Schlipsing, and
  Igel]{Houben-IJCNN-2013}
Houben, S., Stallkamp, J., Salmen, J., Schlipsing, M., and Igel, C.
\newblock Detection of traffic signs in real-world images: The {G}erman
  {T}raffic {S}ign {D}etection {B}enchmark.
\newblock In \emph{International Joint Conference on Neural Networks}, number
  1288, 2013.

\bibitem[Huang et~al.(2017)Huang, Li, Pleiss, Liu, Hopcroft, and
  Weinberger]{huang2017snapshot}
Huang, G., Li, Y., Pleiss, G., Liu, Z., Hopcroft, J.~E., and Weinberger, K.~Q.
\newblock Snapshot ensembles: Train 1, get m for free.
\newblock \emph{arXiv preprint arXiv:1704.00109}, 2017.

\bibitem[Huang et~al.(2020)Huang, Zhang, and Zhang]{huang2020self}
Huang, L., Zhang, C., and Zhang, H.
\newblock Self-adaptive training: beyond empirical risk minimization.
\newblock \emph{Advances in neural information processing systems},
  33:\penalty0 19365--19376, 2020.

\bibitem[H{\"u}llermeier \& Waegeman(2021)H{\"u}llermeier and
  Waegeman]{hullermeier2021aleatoric}
H{\"u}llermeier, E. and Waegeman, W.
\newblock Aleatoric and epistemic uncertainty in machine learning: An
  introduction to concepts and methods.
\newblock \emph{Machine learning}, 110\penalty0 (3):\penalty0 457--506, 2021.

\bibitem[Hyndman(2015)]{hyndman2015expsmooth}
Hyndman, R.
\newblock expsmooth: Data sets from “forecasting with exponential
  smoothing”.
\newblock \emph{R package version}, 2, 2015.

\bibitem[Jiang et~al.(2021)Jiang, Nagarajan, Baek, and
  Kolter]{jiang2021assessing}
Jiang, Y., Nagarajan, V., Baek, C., and Kolter, J.~Z.
\newblock Assessing generalization of sgd via disagreement.
\newblock \emph{arXiv preprint arXiv:2106.13799}, 2021.

\bibitem[Jiang et~al.(2025)Jiang, Fu, Zhao, Rabanser, Lane, and
  Yuan]{jiang2025cascadia}
Jiang, Y., Fu, F., Zhao, W., Rabanser, S., Lane, N.~D., and Yuan, B.
\newblock {Cascadia: A Cascade Serving System for Large Language Models}.
\newblock \emph{arXiv preprint arXiv:2506.04203}, 2025.

\bibitem[Jiang et~al.(2020)Jiang, Zhang, Talwar, and
  Mozer]{jiang2020characterizing}
Jiang, Z., Zhang, C., Talwar, K., and Mozer, M.~C.
\newblock Characterizing structural regularities of labeled data in
  overparameterized models.
\newblock \emph{arXiv preprint arXiv:2002.03206}, 2020.

\bibitem[Jones et~al.(2020)Jones, Sagawa, Koh, Kumar, and
  Liang]{jones2020selective}
Jones, E., Sagawa, S., Koh, P.~W., Kumar, A., and Liang, P.
\newblock Selective classification can magnify disparities across groups.
\newblock \emph{arXiv preprint arXiv:2010.14134}, 2020.

\bibitem[Karwa \& Vadhan(2017)Karwa and Vadhan]{karwa2017finite}
Karwa, V. and Vadhan, S.
\newblock Finite sample differentially private confidence intervals.
\newblock \emph{arXiv preprint arXiv:1711.03908}, 2017.

\bibitem[Khani et~al.(2016)Khani, Rinard, and Liang]{khani2016unanimous}
Khani, F., Rinard, M., and Liang, P.
\newblock Unanimous prediction for 100\% precision with application to learning
  semantic mappings.
\newblock \emph{arXiv preprint arXiv:1606.06368}, 2016.

\bibitem[Kirchhof et~al.(2025)Kirchhof, Kasneci, and
  Kasneci]{kirchhof2025position}
Kirchhof, M., Kasneci, G., and Kasneci, E.
\newblock Position: Uncertainty quantification needs reassessment for
  large-language model agents.
\newblock \emph{arXiv preprint arXiv:2505.22655}, 2025.

\bibitem[Koh et~al.(2021)Koh, Sagawa, Marklund, Xie, Zhang, Balsubramani, Hu,
  Yasunaga, Phillips, Gao, et~al.]{koh2021wilds}
Koh, P.~W., Sagawa, S., Marklund, H., Xie, S.~M., Zhang, M., Balsubramani, A.,
  Hu, W., Yasunaga, M., Phillips, R.~L., Gao, I., et~al.
\newblock Wilds: A benchmark of in-the-wild distribution shifts.
\newblock In \emph{International conference on machine learning}, pp.\
  5637--5664. PMLR, 2021.

\bibitem[Kompa et~al.(2021)Kompa, Snoek, and Beam]{kompa2021second}
Kompa, B., Snoek, J., and Beam, A.~L.
\newblock Second opinion needed: communicating uncertainty in medical machine
  learning.
\newblock \emph{NPJ Digital Medicine}, 4\penalty0 (1):\penalty0 4, 2021.

\bibitem[Kore et~al.(2024)Kore, Abbasi~Bavil, Subasri, Abdalla, Fine,
  Dolatabadi, and Abdalla]{kore2024drift}
Kore, A., Abbasi~Bavil, E., Subasri, V., Abdalla, M., Fine, B., Dolatabadi, E.,
  and Abdalla, M.
\newblock Empirical data drift detection experiments on real-world medical
  imaging data.
\newblock \emph{Nature Communications}, 15\penalty0 (1):\penalty0 1887, 2024.

\bibitem[Kotropoulos \& Arce(2009)Kotropoulos and Arce]{kotropoulos2009linear}
Kotropoulos, C. and Arce, G.~R.
\newblock Linear classifier with reject option for the detection of vocal fold
  paralysis and vocal fold edema.
\newblock \emph{EURASIP Journal on Advances in Signal Processing},
  2009:\penalty0 1--13, 2009.

\bibitem[Krause et~al.(2013)Krause, Stark, Deng, and Fei-Fei]{krause20133d}
Krause, J., Stark, M., Deng, J., and Fei-Fei, L.
\newblock 3d object representations for fine-grained categorization.
\newblock In \emph{Proceedings of the IEEE international conference on computer
  vision workshops}, pp.\  554--561, 2013.

\bibitem[Krizhevsky et~al.(2009)Krizhevsky, Hinton,
  et~al.]{krizhevsky2009learning}
Krizhevsky, A., Hinton, G., et~al.
\newblock Learning multiple layers of features from tiny images.
\newblock 2009.

\bibitem[Kull et~al.(2019)Kull, Perello~Nieto, K{\"a}ngsepp, Silva~Filho, Song,
  and Flach]{kull2019beyond}
Kull, M., Perello~Nieto, M., K{\"a}ngsepp, M., Silva~Filho, T., Song, H., and
  Flach, P.
\newblock Beyond temperature scaling: Obtaining well-calibrated multi-class
  probabilities with dirichlet calibration.
\newblock \emph{Advances in neural information processing systems}, 32, 2019.

\bibitem[Lakshminarayanan et~al.(2017{\natexlab{a}})Lakshminarayanan, Pritzel,
  and Blundell]{balaji2017uncertainty}
Lakshminarayanan, B., Pritzel, A., and Blundell, C.
\newblock Simple and scalable predictive uncertainty estimation using deep
  ensembles.
\newblock In Guyon, I., Luxburg, U.~V., Bengio, S., Wallach, H., Fergus, R.,
  Vishwanathan, S., and Garnett, R. (eds.), \emph{Advances in Neural
  Information Processing Systems}, volume~30. Curran Associates, Inc.,
  2017{\natexlab{a}}.
\newblock URL
  \url{https://proceedings.neurips.cc/paper/2017/file/9ef2ed4b7fd2c810847ffa5fa85bce38-Paper.pdf}.

\bibitem[Lakshminarayanan et~al.(2017{\natexlab{b}})Lakshminarayanan, Pritzel,
  and Blundell]{lakshminarayanan2017simple}
Lakshminarayanan, B., Pritzel, A., and Blundell, C.
\newblock Simple and scalable predictive uncertainty estimation using deep
  ensembles.
\newblock \emph{Advances in neural information processing systems}, 30,
  2017{\natexlab{b}}.

\bibitem[Lee et~al.(2018)Lee, Lee, Lee, and Shin]{lee2018simple}
Lee, K., Lee, K., Lee, H., and Shin, J.
\newblock A simple unified framework for detecting out-of-distribution samples
  and adversarial attacks.
\newblock \emph{Advances in neural information processing systems}, 31, 2018.

\bibitem[Lee et~al.(2024)Lee, Ko, Kim, and Oh]{lee2024vCNN}
Lee, S., Ko, H., Kim, J., and Oh, H.
\newblock vcnn: Verifiable convolutional neural network based on zk-snarks.
\newblock \emph{IEEE Trans. Dependable Secur. Comput.}, 21\penalty0
  (4):\penalty0 4254–4270, January 2024.
\newblock ISSN 1545-5971.
\newblock \doi{10.1109/TDSC.2023.3348760}.
\newblock URL \url{https://doi.org/10.1109/TDSC.2023.3348760}.

\bibitem[Li et~al.(2023)Li, Qi, Liu, Di, Liu, Pei, Yi, and
  Zhou]{li2023trustworthy}
Li, B., Qi, P., Liu, B., Di, S., Liu, J., Pei, J., Yi, J., and Zhou, B.
\newblock Trustworthy ai: From principles to practices.
\newblock \emph{ACM Computing Surveys}, 55\penalty0 (9):\penalty0 1--46, 2023.

\bibitem[Liu et~al.(2020)Liu, Lin, Padhy, Tran, Bedrax~Weiss, and
  Lakshminarayanan]{liu2020simple}
Liu, J., Lin, Z., Padhy, S., Tran, D., Bedrax~Weiss, T., and Lakshminarayanan,
  B.
\newblock Simple and principled uncertainty estimation with deterministic deep
  learning via distance awareness.
\newblock \emph{Advances in Neural Information Processing Systems},
  33:\penalty0 7498--7512, 2020.

\bibitem[Liu et~al.(2022)Liu, Gallego, and Barbieri]{liu2022incorporating}
Liu, J., Gallego, B., and Barbieri, S.
\newblock Incorporating uncertainty in learning to defer algorithms for safe
  computer-aided diagnosis.
\newblock \emph{Scientific reports}, 12\penalty0 (1):\penalty0 1762, 2022.

\bibitem[Liu et~al.(2019)Liu, Wang, Liang, Salakhutdinov, Morency, and
  Ueda]{liu2019deep}
Liu, Z., Wang, Z., Liang, P.~P., Salakhutdinov, R.~R., Morency, L.-P., and
  Ueda, M.
\newblock Deep gamblers: Learning to abstain with portfolio theory.
\newblock \emph{Advances in Neural Information Processing Systems}, 32, 2019.

\bibitem[Lo~Piano(2020)]{lo2020ethical}
Lo~Piano, S.
\newblock Ethical principles in machine learning and artificial intelligence:
  cases from the field and possible ways forward.
\newblock \emph{Humanities and Social Sciences Communications}, 7\penalty0
  (1):\penalty0 1--7, 2020.

\bibitem[Lorenz et~al.(2023)Lorenz, Kwiatkowska, and
  Fritz]{lorenz2023certifiers}
Lorenz, T., Kwiatkowska, M., and Fritz, M.
\newblock Certifiers make neural networks vulnerable to availability attacks.
\newblock In \emph{Proceedings of the 16th ACM Workshop on Artificial
  Intelligence and Security}, pp.\  67--78, 2023.

\bibitem[Makridakis et~al.(2020)Makridakis, Spiliotis, and
  Assimakopoulos]{makridakis2020m4}
Makridakis, S., Spiliotis, E., and Assimakopoulos, V.
\newblock The m4 competition: 100,000 time series and 61 forecasting methods.
\newblock \emph{International Journal of Forecasting}, 36\penalty0
  (1):\penalty0 54--74, 2020.

\bibitem[Mandt et~al.(2017)Mandt, Hoffman, and Blei]{mandt2017stochastic}
Mandt, S., Hoffman, M.~D., and Blei, D.~M.
\newblock Stochastic gradient descent as approximate bayesian inference.
\newblock \emph{Journal of Machine Learning Research}, 18\penalty0
  (134):\penalty0 1--35, 2017.

\bibitem[Mehrabi et~al.(2021)Mehrabi, Morstatter, Saxena, Lerman, and
  Galstyan]{mehrabi2021survey}
Mehrabi, N., Morstatter, F., Saxena, N., Lerman, K., and Galstyan, A.
\newblock A survey on bias and fairness in machine learning.
\newblock \emph{ACM computing surveys (CSUR)}, 54\penalty0 (6):\penalty0 1--35,
  2021.

\bibitem[Mironov(2017)]{mironov2017renyi}
Mironov, I.
\newblock R{\'e}nyi differential privacy.
\newblock In \emph{2017 IEEE 30th computer security foundations symposium
  (CSF)}, pp.\  263--275. IEEE, 2017.

\bibitem[Mozannar \& Sontag(2020)Mozannar and Sontag]{mozannar2020consistent}
Mozannar, H. and Sontag, D.
\newblock Consistent estimators for learning to defer to an expert.
\newblock In \emph{International Conference on Machine Learning}, pp.\
  7076--7087. PMLR, 2020.

\bibitem[M{\"u}ller(1997)]{muller1997integral}
M{\"u}ller, A.
\newblock Integral probability metrics and their generating classes of
  functions.
\newblock \emph{Advances in applied probability}, 29\penalty0 (2):\penalty0
  429--443, 1997.

\bibitem[Naeini et~al.(2015)Naeini, Cooper, and
  Hauskrecht]{naeini2015obtaining}
Naeini, M.~P., Cooper, G., and Hauskrecht, M.
\newblock Obtaining well calibrated probabilities using bayesian binning.
\newblock In \emph{Proceedings of the AAAI conference on artificial
  intelligence}, volume~29, 2015.

\bibitem[Netzer et~al.(2011)Netzer, Wang, Coates, Bissacco, Wu, and
  Ng]{netzer2011reading}
Netzer, Y., Wang, T., Coates, A., Bissacco, A., Wu, B., and Ng, A.~Y.
\newblock Reading digits in natural images with unsupervised feature learning.
\newblock 2011.

\bibitem[Niculescu-Mizil \& Caruana(2005)Niculescu-Mizil and
  Caruana]{niculescu2005predicting}
Niculescu-Mizil, A. and Caruana, R.
\newblock Predicting good probabilities with supervised learning.
\newblock In \emph{Proceedings of the 22nd international conference on Machine
  learning}, pp.\  625--632, 2005.

\bibitem[Nielsen et~al.(2012)Nielsen, Nordholt, Orlandi, and
  Burra]{nielsen2012itmac}
Nielsen, J.~B., Nordholt, P.~S., Orlandi, C., and Burra, S.~S.
\newblock A new approach to practical active-secure two-party computation.
\newblock In Safavi-Naini, R. and Canetti, R. (eds.), \emph{Advances in
  Cryptology -- CRYPTO 2012}, pp.\  681--700, Berlin, Heidelberg, 2012.
  Springer Berlin Heidelberg.
\newblock ISBN 978-3-642-32009-5.

\bibitem[Ovadia et~al.(2019)Ovadia, Fertig, Ren, Nado, Sculley, Nowozin,
  Dillon, Lakshminarayanan, and Snoek]{ovadia2019can}
Ovadia, Y., Fertig, E., Ren, J., Nado, Z., Sculley, D., Nowozin, S., Dillon,
  J., Lakshminarayanan, B., and Snoek, J.
\newblock Can you trust your model's uncertainty? evaluating predictive
  uncertainty under dataset shift.
\newblock \emph{Advances in neural information processing systems}, 32, 2019.

\bibitem[Pace \& Barry(1997)Pace and Barry]{pace1997sparse}
Pace, R.~K. and Barry, R.
\newblock Sparse spatial autoregressions.
\newblock \emph{Statistics \& Probability Letters}, 33\penalty0 (3):\penalty0
  291--297, 1997.

\bibitem[Papernot et~al.(2017)Papernot, McDaniel, Goodfellow, Jha, Celik, and
  Swami]{papernot2017practical}
Papernot, N., McDaniel, P., Goodfellow, I., Jha, S., Celik, Z.~B., and Swami,
  A.
\newblock Practical black-box attacks against machine learning.
\newblock In \emph{Proceedings of the 2017 ACM on Asia conference on computer
  and communications security}, pp.\  506--519, 2017.

\bibitem[Paszke et~al.(2019)Paszke, Gross, Massa, Lerer, Bradbury, Chanan,
  Killeen, Lin, Gimelshein, Antiga, Desmaison, Kopf, Yang, DeVito, Raison,
  Tejani, Chilamkurthy, Steiner, Fang, Bai, and
  Chintala]{Paszke_PyTorch_An_Imperative_2019}
Paszke, A., Gross, S., Massa, F., Lerer, A., Bradbury, J., Chanan, G., Killeen,
  T., Lin, Z., Gimelshein, N., Antiga, L., Desmaison, A., Kopf, A., Yang, E.,
  DeVito, Z., Raison, M., Tejani, A., Chilamkurthy, S., Steiner, B., Fang, L.,
  Bai, J., and Chintala, S.
\newblock {PyTorch: An Imperative Style, High-Performance Deep Learning
  Library}.
\newblock In Wallach, H., Larochelle, H., Beygelzimer, A., d'Alché Buc, F.,
  Fox, E., and Garnett, R. (eds.), \emph{Advances in Neural Information
  Processing Systems 32}, pp.\  8024--8035. Curran Associates, Inc., 2019.
\newblock URL
  \url{http://papers.neurips.cc/paper/9015-pytorch-an-imperative-style-high-performance-deep-learning-library.pdf}.

\bibitem[Peng et~al.(2023)Peng, Duan, Wang, Ma, Jiang, Tu, Jiang, and
  Zhao]{peng2023came}
Peng, R., Duan, Q., Wang, H., Ma, J., Jiang, Y., Tu, Y., Jiang, X., and Zhao,
  J.
\newblock Came: Contrastive automated model evaluation.
\newblock In \emph{Proceedings of the IEEE/CVF International Conference on
  Computer Vision}, pp.\  20121--20132, 2023.

\bibitem[Peng et~al.(2024)Peng, Zou, Wang, Zeng, Huang, and
  Zhao]{peng2024energy}
Peng, R., Zou, H., Wang, H., Zeng, Y., Huang, Z., and Zhao, J.
\newblock Energy-based automated model evaluation.
\newblock \emph{arXiv preprint arXiv:2401.12689}, 2024.

\bibitem[Pervaiz et~al.(2019)Pervaiz, Vashistha, and
  Anderson]{pervaiz2019examining}
Pervaiz, F., Vashistha, A., and Anderson, R.
\newblock Examining the challenges in development data pipeline.
\newblock In \emph{Proceedings of the 2nd ACM SIGCAS Conference on Computing
  and Sustainable Societies}, pp.\  13--21, 2019.

\bibitem[Platt et~al.(1999)]{platt1999probabilistic}
Platt, J. et~al.
\newblock Probabilistic outputs for support vector machines and comparisons to
  regularized likelihood methods.
\newblock \emph{Advances in large margin classifiers}, 10\penalty0
  (3):\penalty0 61--74, 1999.

\bibitem[Potra \& Wright(2000)Potra and Wright]{potra2000interior}
Potra, F.~A. and Wright, S.~J.
\newblock Interior-point methods.
\newblock \emph{Journal of computational and applied mathematics}, 124\penalty0
  (1-2):\penalty0 281--302, 2000.

\bibitem[Pouget et~al.(2025)Pouget, Yaghini, Rabanser, and
  Papernot]{pouget2025suitability}
Pouget, A., Yaghini, M., Rabanser, S., and Papernot, N.
\newblock {Suitability Filter: A Statistical Framework for Model Evaluation in
  Real-World Deployment Settings}.
\newblock In \emph{{Proceedings of the International Conference on Machine
  Learning (ICML)}}, 2025.

\bibitem[Pugnana \& Ruggieri(2023)Pugnana and Ruggieri]{pugnana2023auc}
Pugnana, A. and Ruggieri, S.
\newblock Auc-based selective classification.
\newblock In \emph{International Conference on Artificial Intelligence and
  Statistics}, pp.\  2494--2514. PMLR, 2023.

\bibitem[Rabanser \& Papernot(2025)Rabanser and Papernot]{rabanser2025what}
Rabanser, S. and Papernot, N.
\newblock {What Does It Take to Build a Performant Selective Classifier?}
\newblock \emph{In Submission}, 2025.

\bibitem[Rabanser et~al.(2022)Rabanser, Januschowski, Rasul, Borchert, Kurle,
  Gasthaus, Bohlke-Schneider, Papernot, and Flunkert]{rabanser2022intrinsic}
Rabanser, S., Januschowski, T., Rasul, K., Borchert, O., Kurle, R., Gasthaus,
  J., Bohlke-Schneider, M., Papernot, N., and Flunkert, V.
\newblock {Intrinsic Anomaly Detection for Multi-Variate Time Series}.
\newblock \emph{arXiv preprint arXiv:2206.14342}, 2022.

\bibitem[Rabanser et~al.(2023)Rabanser, Thudi, Guha~Thakurta, Dvijotham, and
  Papernot]{rabanser2023training}
Rabanser, S., Thudi, A., Guha~Thakurta, A., Dvijotham, K., and Papernot, N.
\newblock {Training Private Models That Know What They Don't Know}.
\newblock \emph{Advances in Neural Information Processing Systems},
  36:\penalty0 53711--53727, 2023.

\bibitem[Rabanser et~al.(2025{\natexlab{a}})Rabanser, Rauschmayr, Kulshrestha,
  Poklukar, Jitkrittum, Augenstein, Wang, and Tombari]{rabanser2025gatekeeper}
Rabanser, S., Rauschmayr, N., Kulshrestha, A., Poklukar, P., Jitkrittum, W.,
  Augenstein, S., Wang, C., and Tombari, F.
\newblock {Gatekeeper: Improving Model Cascades Through Confidence Tuning}.
\newblock In \emph{Tiny Titans: The next wave of On-Device Learning for
  Foundational Models (TTODLer-FM) Workshop at ICML}, 2025{\natexlab{a}}.

\bibitem[Rabanser et~al.(2025{\natexlab{b}})Rabanser, Shamsabadi, Franzese,
  Wang, Weller, and Papernot]{rabanser2025confidential}
Rabanser, S., Shamsabadi, A.~S., Franzese, O., Wang, X., Weller, A., and
  Papernot, N.
\newblock {Confidential Guardian: Cryptographically Prohibiting the Abuse of
  Model Abstention}.
\newblock In \emph{Proceedings of the 42nd International Conference on Machine
  Learning}, 2025{\natexlab{b}}.

\bibitem[Rabanser et~al.(2025{\natexlab{c}})Rabanser, Thudi, Hamidieh,
  Dziedzic, and Papernot]{rabanser2022selective}
Rabanser, S., Thudi, A., Hamidieh, K., Dziedzic, A., and Papernot, N.
\newblock {Selective Prediction Via Training Dynamics}.
\newblock \emph{Transactions on Machine Learning Research}, 2025{\natexlab{c}}.

\bibitem[Raghuram et~al.(2021)Raghuram, Chandrasekaran, Jha, and
  Banerjee]{raghuram2021general}
Raghuram, J., Chandrasekaran, V., Jha, S., and Banerjee, S.
\newblock A general framework for detecting anomalous inputs to dnn
  classifiers.
\newblock In \emph{International Conference on Machine Learning}, pp.\
  8764--8775. PMLR, 2021.

\bibitem[Rahimian \& Mehrotra(2022)Rahimian and
  Mehrotra]{rahimian2022frameworks}
Rahimian, H. and Mehrotra, S.
\newblock Frameworks and results in distributionally robust optimization.
\newblock \emph{Open Journal of Mathematical Optimization}, 3:\penalty0 1--85,
  2022.

\bibitem[Ren et~al.(2021)Ren, Fort, Liu, Roy, Padhy, and
  Lakshminarayanan]{ren2021simple}
Ren, J., Fort, S., Liu, J., Roy, A.~G., Padhy, S., and Lakshminarayanan, B.
\newblock A simple fix to mahalanobis distance for improving near-ood
  detection.
\newblock \emph{arXiv preprint arXiv:2106.09022}, 2021.

\bibitem[Rudin(2019)]{rudin2019stop}
Rudin, C.
\newblock Stop explaining black box machine learning models for high stakes
  decisions and use interpretable models instead.
\newblock \emph{Nature machine intelligence}, 1\penalty0 (5):\penalty0
  206--215, 2019.

\bibitem[Ruff et~al.(2018)Ruff, Vandermeulen, Goernitz, Deecke, Siddiqui,
  Binder, M{\"u}ller, and Kloft]{ruff2018deep}
Ruff, L., Vandermeulen, R., Goernitz, N., Deecke, L., Siddiqui, S.~A., Binder,
  A., M{\"u}ller, E., and Kloft, M.
\newblock Deep one-class classification.
\newblock In \emph{International conference on machine learning}, pp.\
  4393--4402. PMLR, 2018.

\bibitem[Salinas et~al.(2020)Salinas, Flunkert, Gasthaus, and
  Januschowski]{salinas2020deepar}
Salinas, D., Flunkert, V., Gasthaus, J., and Januschowski, T.
\newblock Deepar: Probabilistic forecasting with autoregressive recurrent
  networks.
\newblock \emph{International Journal of Forecasting}, 36\penalty0
  (3):\penalty0 1181--1191, 2020.

\bibitem[Sangalli et~al.(2024)Sangalli, Erdil, and
  Konukoglu]{sangalli2024expert}
Sangalli, S., Erdil, E., and Konukoglu, E.
\newblock Expert load matters: operating networks at high accuracy and low
  manual effort.
\newblock \emph{Advances in Neural Information Processing Systems}, 36, 2024.

\bibitem[Schein \& Ungar(2007)Schein and Ungar]{schein2007active}
Schein, A.~I. and Ungar, L.~H.
\newblock Active learning for logistic regression: an evaluation.
\newblock \emph{Machine Learning}, 68\penalty0 (3):\penalty0 235--265, 2007.

\bibitem[Shafer \& Vovk(2008)Shafer and Vovk]{shafer2008tutorial}
Shafer, G. and Vovk, V.
\newblock A tutorial on conformal prediction.
\newblock \emph{Journal of Machine Learning Research}, 9\penalty0 (3), 2008.

\bibitem[Shalev-Shwartz \& Ben-David(2014)Shalev-Shwartz and
  Ben-David]{shalev2014understanding}
Shalev-Shwartz, S. and Ben-David, S.
\newblock \emph{Understanding machine learning: From theory to algorithms}.
\newblock Cambridge university press, 2014.

\bibitem[Shejwalkar et~al.(2022)Shejwalkar, Ganesh, Mathews, Thakkar, and
  Thakurta]{shejwalkar2022recycling}
Shejwalkar, V., Ganesh, A., Mathews, R., Thakkar, O., and Thakurta, A.
\newblock Recycling scraps: Improving private learning by leveraging
  intermediate checkpoints.
\newblock \emph{arXiv preprint arXiv:2210.01864}, 2022.

\bibitem[Sipser(1996)]{sipser1996introduction}
Sipser, M.
\newblock Introduction to the theory of computation.
\newblock \emph{ACM Sigact News}, 27\penalty0 (1):\penalty0 27--29, 1996.

\bibitem[Sousa et~al.(2009)Sousa, Mora, and Cardoso]{sousa2009ordinal}
Sousa, R., Mora, B., and Cardoso, J.~S.
\newblock An ordinal data method for the classification with reject option.
\newblock In \emph{2009 International Conference on Machine Learning and
  Applications}, pp.\  746--750. IEEE, 2009.

\bibitem[Srivastava et~al.(2014)Srivastava, Hinton, Krizhevsky, Sutskever, and
  Salakhutdinov]{srivastava2014dropout}
Srivastava, N., Hinton, G., Krizhevsky, A., Sutskever, I., and Salakhutdinov,
  R.
\newblock Dropout: a simple way to prevent neural networks from overfitting.
\newblock \emph{The journal of machine learning research}, 15\penalty0
  (1):\penalty0 1929--1958, 2014.

\bibitem[Steinhardt et~al.(2017)Steinhardt, Koh, and
  Liang]{steinhardt2017certified}
Steinhardt, J., Koh, P. W.~W., and Liang, P.~S.
\newblock Certified defenses for data poisoning attacks.
\newblock \emph{Advances in neural information processing systems}, 30, 2017.

\bibitem[Sun et~al.(2024)Sun, Li, and Zhang]{sun2024zkllm}
Sun, H., Li, J., and Zhang, H.
\newblock zkllm: Zero knowledge proofs for large language models.
\newblock In \emph{Proceedings of the 2024 on ACM SIGSAC Conference on Computer
  and Communications Security}, CCS '24, pp.\  4405–4419, New York, NY, USA,
  2024. Association for Computing Machinery.
\newblock ISBN 9798400706363.
\newblock \doi{10.1145/3658644.3670334}.
\newblock URL \url{https://doi.org/10.1145/3658644.3670334}.

\bibitem[Sun et~al.(2022)Sun, Ming, Zhu, and Li]{sun2022out}
Sun, Y., Ming, Y., Zhu, X., and Li, Y.
\newblock Out-of-distribution detection with deep nearest neighbors.
\newblock In \emph{International Conference on Machine Learning}, pp.\
  20827--20840. PMLR, 2022.

\bibitem[Swayamdipta et~al.(2020)Swayamdipta, Schwartz, Lourie, Wang,
  Hajishirzi, Smith, and Choi]{swayamdipta2020dataset}
Swayamdipta, S., Schwartz, R., Lourie, N., Wang, Y., Hajishirzi, H., Smith,
  N.~A., and Choi, Y.
\newblock Dataset cartography: Mapping and diagnosing datasets with training
  dynamics.
\newblock \emph{arXiv preprint arXiv:2009.10795}, 2020.

\bibitem[Szegedy et~al.(2013)Szegedy, Zaremba, Sutskever, Bruna, Erhan,
  Goodfellow, and Fergus]{szegedy2013intriguing}
Szegedy, C., Zaremba, W., Sutskever, I., Bruna, J., Erhan, D., Goodfellow, I.,
  and Fergus, R.
\newblock Intriguing properties of neural networks.
\newblock \emph{arXiv preprint arXiv:1312.6199}, 2013.

\bibitem[Szegedy et~al.(2016)Szegedy, Vanhoucke, Ioffe, Shlens, and
  Wojna]{szegedy2016rethinking}
Szegedy, C., Vanhoucke, V., Ioffe, S., Shlens, J., and Wojna, Z.
\newblock Rethinking the inception architecture for computer vision.
\newblock In \emph{Proceedings of the IEEE conference on computer vision and
  pattern recognition}, pp.\  2818--2826, 2016.

\bibitem[Thudi et~al.(2022)Thudi, Jia, Shumailov, and
  Papernot]{thudi2022necessity}
Thudi, A., Jia, H., Shumailov, I., and Papernot, N.
\newblock On the necessity of auditable algorithmic definitions for machine
  unlearning.
\newblock In \emph{31st USENIX Security Symposium (USENIX Security 22)}, pp.\
  4007--4022, 2022.

\bibitem[Toneva et~al.(2018)Toneva, Sordoni, Combes, Trischler, Bengio, and
  Gordon]{toneva2018empirical}
Toneva, M., Sordoni, A., Combes, R. T.~d., Trischler, A., Bengio, Y., and
  Gordon, G.~J.
\newblock An empirical study of example forgetting during deep neural network
  learning.
\newblock \emph{arXiv preprint arXiv:1812.05159}, 2018.

\bibitem[Tselentis et~al.(2023)Tselentis, Papadimitriou, and van
  Gelder]{tselentis2023usefulness}
Tselentis, D.~I., Papadimitriou, E., and van Gelder, P.
\newblock The usefulness of artificial intelligence for safety assessment of
  different transport modes.
\newblock \emph{Accident Analysis \& Prevention}, 186:\penalty0 107034, 2023.

\bibitem[Vieira et~al.(2021)Vieira, O’Hagan, and
  O’Sullivan]{vieira2021understanding}
Vieira, L.~N., O’Hagan, M., and O’Sullivan, C.
\newblock Understanding the societal impacts of machine translation: a critical
  review of the literature on medical and legal use cases.
\newblock \emph{Information, Communication \& Society}, 24\penalty0
  (11):\penalty0 1515--1532, 2021.

\bibitem[Vijh et~al.(2020)Vijh, Chandola, Tikkiwal, and Kumar]{vijh2020stock}
Vijh, M., Chandola, D., Tikkiwal, V.~A., and Kumar, A.
\newblock Stock closing price prediction using machine learning techniques.
\newblock \emph{Procedia computer science}, 167:\penalty0 599--606, 2020.

\bibitem[Wang et~al.(2019)Wang, Yao, Shan, Li, Viswanath, Zheng, and
  Zhao]{wang2019neural}
Wang, B., Yao, Y., Shan, S., Li, H., Viswanath, B., Zheng, H., and Zhao, B.~Y.
\newblock Neural cleanse: Identifying and mitigating backdoor attacks in neural
  networks.
\newblock In \emph{2019 IEEE symposium on security and privacy (SP)}, pp.\
  707--723. IEEE, 2019.

\bibitem[Wang et~al.(2023)Wang, Han, Patel, and Rudin]{wang2023pursuit}
Wang, C., Han, B., Patel, B., and Rudin, C.
\newblock In pursuit of interpretable, fair and accurate machine learning for
  criminal recidivism prediction.
\newblock \emph{Journal of Quantitative Criminology}, 39\penalty0 (2):\penalty0
  519--581, 2023.

\bibitem[Wang et~al.(2016)Wang, Malozemoff, and Katz]{emp-toolkit}
Wang, X., Malozemoff, A.~J., and Katz, J.
\newblock {EMP-toolkit: Efficient MultiParty computation toolkit}.
\newblock \url{https://github.com/emp-toolkit}, 2016.

\bibitem[Wei et~al.(2021)Wei, Zhu, Cheng, Liu, Niu, and Liu]{wei2021learning}
Wei, J., Zhu, Z., Cheng, H., Liu, T., Niu, G., and Liu, Y.
\newblock Learning with noisy labels revisited: A study using real-world human
  annotations.
\newblock \emph{arXiv preprint arXiv:2110.12088}, 2021.

\bibitem[Weng et~al.(2021{\natexlab{a}})Weng, Yang, Katz, and
  Wang]{weng2021wolverine}
Weng, C., Yang, K., Katz, J., and Wang, X.
\newblock Wolverine: Fast, scalable, and communication-efficient zero-knowledge
  proofs for boolean and arithmetic circuits.
\newblock In \emph{2021 IEEE Symposium on Security and Privacy (SP)}, pp.\
  1074--1091, 2021{\natexlab{a}}.
\newblock \doi{10.1109/SP40001.2021.00056}.

\bibitem[Weng et~al.(2021{\natexlab{b}})Weng, Yang, Xie, Katz, and
  Wang]{weng2021mystique}
Weng, C., Yang, K., Xie, X., Katz, J., and Wang, X.
\newblock Mystique: Efficient conversions for $\{$Zero-Knowledge$\}$ proofs
  with applications to machine learning.
\newblock In \emph{30th USENIX Security Symposium (USENIX Security 21)}, pp.\
  501--518, 2021{\natexlab{b}}.

\bibitem[Wiener \& El-Yaniv(2011)Wiener and El-Yaniv]{wiener2011agnostic}
Wiener, Y. and El-Yaniv, R.
\newblock Agnostic selective classification.
\newblock \emph{Advances in neural information processing systems}, 24, 2011.

\bibitem[Wiens et~al.(2019)Wiens, Saria, Sendak, Ghassemi, Liu, Doshi-Velez,
  Jung, Heller, Kale, Saeed, et~al.]{wiens2019no}
Wiens, J., Saria, S., Sendak, M., Ghassemi, M., Liu, V.~X., Doshi-Velez, F.,
  Jung, K., Heller, K., Kale, D., Saeed, M., et~al.
\newblock Do no harm: a roadmap for responsible machine learning for health
  care.
\newblock \emph{Nature medicine}, 25\penalty0 (9):\penalty0 1337--1340, 2019.

\bibitem[Williams \& McSherry(2010)Williams and
  McSherry]{williams2010probabilistic}
Williams, O. and McSherry, F.
\newblock Probabilistic inference and differential privacy.
\newblock \emph{Advances in neural information processing systems}, 23, 2010.

\bibitem[Wu et~al.(2024)Wu, Lyu, Shang, Wang, and Qian]{wu2024confidence}
Wu, Y.-C., Lyu, S.-H., Shang, H., Wang, X., and Qian, C.
\newblock Confidence-aware contrastive learning for selective classification.
\newblock \emph{arXiv preprint arXiv:2406.04745}, 2024.

\bibitem[Xiao et~al.(2017)Xiao, Rasul, and Vollgraf]{xiao2017fashion}
Xiao, H., Rasul, K., and Vollgraf, R.
\newblock Fashion-mnist: a novel image dataset for benchmarking machine
  learning algorithms.
\newblock \emph{arXiv preprint arXiv:1708.07747}, 2017.

\bibitem[Yaghini et~al.(2024)Yaghini, Liu, Boenisch, and
  Papernot]{yaghini2024regulation}
Yaghini, M., Liu, P., Boenisch, F., and Papernot, N.
\newblock Regulation games for trustworthy machine learning.
\newblock \emph{arXiv preprint arXiv:2402.03540}, 2024.

\bibitem[Yeh(2007)]{misc_concrete_compressive_strength_165}
Yeh, I.-C.
\newblock {Concrete Compressive Strength}.
\newblock UCI Machine Learning Repository, 2007.
\newblock {DOI}: https://doi.org/10.24432/C5PK67.

\bibitem[Yousefpour et~al.(2021)Yousefpour, Shilov, Sablayrolles, Testuggine,
  Prasad, Malek, Nguyen, Ghosh, Bharadwaj, Zhao, Cormode, and Mironov]{opacus}
Yousefpour, A., Shilov, I., Sablayrolles, A., Testuggine, D., Prasad, K.,
  Malek, M., Nguyen, J., Ghosh, S., Bharadwaj, A., Zhao, J., Cormode, G., and
  Mironov, I.
\newblock Opacus: {U}ser-friendly differential privacy library in {PyTorch}.
\newblock \emph{arXiv preprint arXiv:2109.12298}, 2021.

\bibitem[Zadrozny \& Elkan(2001)Zadrozny and Elkan]{zadrozny2001obtaining}
Zadrozny, B. and Elkan, C.
\newblock Obtaining calibrated probability estimates from decision trees and
  naive bayesian classifiers.
\newblock In \emph{Icml}, volume~1, 2001.

\bibitem[Zadrozny \& Elkan(2002)Zadrozny and Elkan]{zadrozny2002transforming}
Zadrozny, B. and Elkan, C.
\newblock Transforming classifier scores into accurate multiclass probability
  estimates.
\newblock In \emph{Proceedings of the eighth ACM SIGKDD international
  conference on Knowledge discovery and data mining}, pp.\  694--699, 2002.

\bibitem[Zagoruyko \& Komodakis(2016)Zagoruyko and
  Komodakis]{zagoruyko2016wide}
Zagoruyko, S. and Komodakis, N.
\newblock Wide residual networks.
\newblock \emph{arXiv preprint arXiv:1605.07146}, 2016.

\bibitem[Zaoui et~al.(2020)Zaoui, Denis, and Hebiri]{zaoui2020regression}
Zaoui, A., Denis, C., and Hebiri, M.
\newblock Regression with reject option and application to knn.
\newblock \emph{Advances in Neural Information Processing Systems},
  33:\penalty0 20073--20082, 2020.

\bibitem[Zhang et~al.(2019)Zhang, Li, Zhang, Chen, and
  Wilson]{zhang2019cyclical}
Zhang, R., Li, C., Zhang, J., Chen, C., and Wilson, A.~G.
\newblock Cyclical stochastic gradient mcmc for bayesian deep learning.
\newblock \emph{arXiv preprint arXiv:1902.03932}, 2019.

\bibitem[Zhang et~al.(2017)Zhang, Song, and Qi]{zhifei2017cvpr}
Zhang, Z., Song, Y., and Qi, H.
\newblock Age progression/regression by conditional adversarial autoencoder.
\newblock In \emph{IEEE Conference on Computer Vision and Pattern Recognition
  (CVPR)}. IEEE, 2017.

\bibitem[Zhu et~al.(2022)Zhu, Cheng, Zhang, and Liu]{zhu2022rethinking}
Zhu, F., Cheng, Z., Zhang, X.-Y., and Liu, C.-L.
\newblock Rethinking confidence calibration for failure prediction.
\newblock In \emph{European conference on computer vision}, pp.\  518--536.
  Springer, 2022.

\end{thebibliography}
\end{document}